\def\eqref#1{equation~\ref{#1}}
\def\1{\bm{1}}
\def\vzero{{\bm{0}}}
\def\vmu{{\bm{\mu}}}
\def\vtheta{{\bm{\theta}}}
\def\va{{\bm{a}}}
\def\vb{{\bm{b}}}
\def\ve{{\bm{e}}}
\def\vf{{\bm{f}}}
\def\vg{{\bm{g}}}
\def\vp{{\bm{p}}}
\def\vu{{\bm{u}}}
\def\vv{{\bm{v}}}
\def\vw{{\bm{w}}}
\def\vx{{\bm{x}}}
\def\vy{{\bm{y}}}
\def\vz{{\bm{z}}}
\DeclareMathAlphabet{\mathsfit}{\encodingdefault}{\sfdefault}{m}{sl}
\SetMathAlphabet{\mathsfit}{bold}{\encodingdefault}{\sfdefault}{bx}{n}
\def\gL{{\mathcal{L}}}
\def\gR{{\mathcal{R}}}
\def\sP{{\mathbb{P}}}
\newcommand{\E}{\mathbb{E}}
\newcommand{\R}{\mathbb{R}}
\DeclareMathOperator*{\argmin}{arg\,min}
\newcommand{\vbeta}{\boldsymbol{\beta}}
\newcommand{\bF}{\mathbf{F}}
\newcommand{\bO}{\mathbf{O}}
\newcommand{\bQ}{\mathbf{Q}}
\newcommand{\bX}{\mathbf{X}}
\newcommand{\bW}{\mathbf{W}}
\newcommand{\bV}{\mathbf{V}}
\newcommand{\bD}{\mathbf{D}}
\newcommand{\bM}{\mathbf{M}}
\newcommand{\tr}{\operatorname{tr}}
\newcommand{\bR}{\mathbf{R}}
\newcommand{\bS}{\mathbf{S}}
\newcommand{\bU}{\mathbf{U}}
\newcommand{\bK}{\mathbf{K}}
\newcommand{\bT}{\mathbf{T}}
\newcommand{\bG}{\mathbf{G}}
\newcommand{\vnu}{\boldsymbol \nu}
\newcommand{\bSigma}{\boldsymbol \Sigma}
\newcommand{\vepsilon}{\boldsymbol{\epsilon}}
\newcommand{\bDelta}{\mathbf{\Delta}}
\newcommand{\bZ}{\mathbf{Z}}
\newcommand{\bI}{\mathbf{I}}
\newcommand{\normal}{{\sf N}}
\newcommand{\ep}{\varepsilon}
\newcommand\scalemath[2]{\scalebox{#1}{\mbox{\ensuremath{\displaystyle #2}}}}
\newcommand*\colourcheck[1]{%
  \expandafter\newcommand\csname #1check\endcsname{\textcolor{#1}{\ding{52}}}%
}
\newcommand\independent{\protect\mathpalette{\protect\independenT}{\perp}}
\def\independenT#1#2{\mathrel{\rlap{$#1#2$}\mkern2mu{#1#2}}}
\newtheorem{theorem}{Theorem}[section]
\newtheorem{lemma}[theorem]{Lemma}
\newtheorem{corollary}[theorem]{Corollary}
\newtheorem{condition}[theorem]{Condition}
\renewcommand{\epsilon}{\varepsilon}
\newcommand{\op}{\textnormal{op}}
\title{A Theory of Non-Linear Feature Learning
with One Gradient Step
in Two-Layer Neural Networks} 
\author{
Behrad Moniri\footnote{Equal Contribution.} \footnote{Department of Electrical and Systems Engineering, University of Pennsylvania.}
\quad
Donghwan Lee\footnotemark[1]\,\,\footnote{Graduate Group in Applied Mathematics and Computational Science, University of Pennsylvania.} 
\quad 
Hamed Hassani\footnotemark[2]
\quad 
Edgar Dobriban\footnote{Department of Statistics and Data Science, University of Pennsylvania.\newline\hspace*{10pt}
\texttt{\{bemoniri, hassani\}@seas.upenn.edu, dh7401@sas.upenn.edu, \;\;
dobriban@wharton.upenn.edu}.}
}
\date{}
\def\hmath$#1${\texorpdfstring{{\rmfamily\textit{#1}}}{#1}}
\begin{document}
\maketitle

\begin{abstract}
Feature learning is thought to be one of the fundamental reasons for the success of deep neural networks. 
It is rigorously known that in two-layer fully-connected neural networks  under certain conditions, one step of gradient descent on the first layer can lead to feature learning; characterized by the appearance of a separated rank-one component---spike---in the spectrum of the feature matrix.
However, with a constant gradient descent step size, this spike only carries information from the linear component of the target function and therefore learning non-linear components is impossible.
We show that with a learning rate that grows with the sample size,
such training in fact introduces 
multiple rank-one components, 
each corresponding to a specific polynomial feature.
We further prove that the limiting large-dimensional and large sample training and test errors of the updated neural networks are fully characterized by these spikes. 
By precisely analyzing the improvement in the training and test errors, we demonstrate that these non-linear features can enhance learning.

\end{abstract}

\section{Introduction}
Learning non-linear features---or representations---from data is thought to be one of the fundamental reasons for the success of deep neural networks  (see e.g., \cite{bengio2013representation,donahue2016adversarial,yang2020feature_learn, shi2022theoretical,radhakrishnan2022feature}, etc.). 
This has been observed in a wide range of domains, including computer vision and natural language processing.
At the same time, the current theoretical understanding of feature learning is incomplete. 
In particular, among many theoretical approaches to study neural networks, much work has focused on 
two-layer fully-connected neural networks with a randomly generated, untrained first layer weights
and a trained second layer---or \textit{random features models} \cite{RahimiRecht}. 
Despite their simplicity, 
random features models can capture various empirical properties of
deep neural networks, and have been used to study generalization,
overparametrization and ``double descent",
adversarial robustness, transfer learning, estimation of out-of-distribution performance, and uncertainty quantification (see e.g., \cite{mei2022generalization,hassani2022curse,tripuraneni2021covariate,disagreement,bombari2023stability,clarte2023double, lin2021causes, adlam2019random}, etc.).

Nevertheless, feature learning is absent in random features models, because the first layer weights are assumed to be randomly generated, and then fixed. 
Although these models can represent non-linear functions of the data,
in the commonly studied setting where the sample size, dimension, and hidden layer size are proportional, 
under certain reasonable conditions they can only learn the \textit{linear} component of the true model---or, 
teacher function---and other components of the teacher function effectively behave as Gaussian noise.
Thus, in this setting, learning in a random features model is equivalent to learning in a \textit{noisy linear model} with Gaussian features and Gaussian noise. This property is known as the \textit{Gaussian equivalence property} (see e.g., \cite{goldt2022gaussian,adlam2019random,adlam2020neural,mei2022generalization,montanari2022universality,hu2022universality}).
While other models such as the neural tangent kernel \cite{jacot2018neural,du2018gradient} can be more expressive, they also lack feature learning.
    \begin{figure*}
        \centering
        \includegraphics{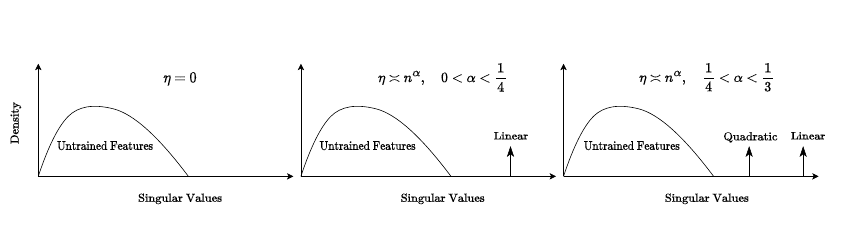}
        \caption{Spectrum of the updated feature matrix for different regimes of the gradient step size $\eta$. Spikes corresponding to monomial features are added to the spectrum of the initial matrix. The number of spikes depends on the range of $\alpha$. See Theorems \ref{thm:spectrum_of_feature_matrix} and \ref{thm:subspace} for details.}
        \label{fig:three-plot-cartoon}
    \end{figure*}
    
To bridge the gap between random features models and feature learning, several recent approaches have shown provable feature learning for neural networks under certain conditions; see Section \ref{relw} for details.
In particular, the recent pioneering work of \cite{ba2022high} analyzed two-layer neural networks, trained with one gradient step on the first layer. 
They showed that when the step size is small, after one gradient step, the resulting two-layer neural network can learn linear features.
However, it still behaves as a noisy linear model and does not capture non-linear components of a teacher function. Moreover, they showed that for a sufficiently large step size, 
under certain conditions,
the one-step updated random features model can outperform linear and kernel predictors.
However, the effects of a large gradient step size on the features is unknown.
Moreover, what happens in the intermediate step size regime also remains unexplored. 
In this paper, we focus on the following key questions in this area:

\begin{quote}
    \textit{What nonlinear features are learned by a two-layer neural network after one gradient update? 
    How are these features reflected in the singular values and vectors of the feature matrix, and how does this depend on the scaling of the step size? 
    What is the improvement in the training and test errors due to 
    the nonlinear features learned?
    }
\end{quote}

\paragraph{Main Contributions.}
Toward answering the above questions, we make the following contributions:
\begin{itemize}
    \item We study feature learning in two-layer neural networks.
    Specifically, we follow the training procedure introduced in \cite{damian2022neural,ba2022high} where one step of gradient descent with step size $\eta$ is applied to the first layer weights, and the second layer weights are found by solving ridge regression on the updated features.
    We consider a step size $\eta \asymp n^\alpha, \alpha \in (0, \frac{1}{2})$ that grows with the sample size $n$ and examine how the learned features change with $\alpha$ (Section \ref{sec:problemsetting}).

    \item In Section \ref{sec:analfeature}, we present a spectral analysis of the updated feature matrix.
    We first show that the spectrum of the feature matrix undergoes phase transitions depending on the range of $\alpha$.
    In particular, we find that if $\alpha \in (\frac{\ell - 1}{2\ell}, \frac{\ell}{2\ell + 2})$ for some $\ell \in \{1,2,\ldots\}$, then $\ell$ separated singular values---\emph{spikes}---will be added to the spectrum of the initial feature matrix (Theorem \ref{thm:spectrum_of_feature_matrix}).
    Figure \ref{fig:three-plot-cartoon} illustrates this finding. 

    \item Building on perturbation theory for singular vectors, we argue that the left singular vectors (principal components) associated with the $\ell$ spikes are asymptotically aligned with polynomial features of different degrees (Theorem \ref{thm:subspace}).
    In other words, the updated feature matrix will contain information about the degree-$\ell$ polynomial component of the target function. 

    \item The Gaussian equivalence property \cite{hu2022universality, mei2022generalization}, an essential tool to analyze random features models, fails after a gradient update with a large step size $\eta$.
    To overcome this difficulty, we establish equivalence theorems (Theorem \ref{thm:training_equivalence} and \ref{thm:test_equivalence}) stating that the trained features $\bF$ can be replaced by sum of the untrained features $\bF_0$ and $\ell$ spikes without changing the training and test errors. Then, by applying the Gaussian equivalence to the untrained component $\bF_0$, we provide a precise characterization of the training and test errors in the high-dimensional proportional regime (Theorem \ref{thm:general_ell_simplified} and \ref{thm:test_risk}).

    \item  From the derived results, we show that in the simple case where $\ell = 1$, the neural network does not learn non-linear functions. However, in the  $\ell = 2$ regime, the neural network in fact learns  quadratic components of the target function.
\end{itemize}
\subsection{Related Works}
\label{relw}

\paragraph{Theory of shallow neural networks.}
Random features models \cite{RahimiRecht} have been used to study various aspects of deep learning, such as generalization
\cite{mei2022generalization, adlam2019random, lin2021causes, mel2021anisotropic}, adversarial robustness \cite{hassani2022curse,bombari23robustness}, transfer learning  \cite{tripuraneni2021covariate}, out-of-distribution performance estimation \cite{disagreement}, uncertainty quantification \cite{clarte2023double}, stability, and privacy \cite{bombari2023stability}.  This line of work builds upon nonlinear random matrix theory (see e.g., \cite{pennington2017nonlinear,louart2018random, fan2020spectra, benigni2021eigenvalue}, etc.) studying the spectrum of the feature matrix of two-layer neural networks at initialization.  

Two-layer neural networks have been studied extensively in the mean-field regime (see e.g., \cite{chizat2018global, mei2018mean,pmlr-v99-mei19a,sirignano2020mean,rotskoff2022trainability}, etc.), and the neural tangent kernel (NTK) regime (see e.g., \cite{jacot2018neural,lee2019wide,huang2020dynamics}, etc.).  However, these results often require the neural net to have an extremely large width. In particular, in the NTK regime, this large width will result in features not evolving over the course of training and the model behaves similar to classic kernel methods.
\cite{ghorbani2021linearized} show that for NTKs and other kernel methods, with a sample size linear in size of the input, non-linear functions cannot be learned. See also \cite{misiakiewicz2022spectrum,xiao2022precise, lu2022equivalence}. Perturbative corrections to the large-width regime to capture feature learning have also been studied in the literature (see e.g., \cite{yaida2020non,hanin2019finite,seroussi2023separation,naveh2021self}, etc.). See Section \ref{sec:additional_related} for more discussion on related work in deep learning theory.

\paragraph{Feature learning.} The problem of feature learning has been gaining a lot of attention recently. 
\cite{damian2022neural} study the problem of learning polynomials with only a few relevant directions and show a sample complexity improvement over kernel methods. \cite{collins2023provable} extend these results and analyze multi-task feature learning in two-layer ReLU networks.
\cite{wang2022spectral} empirically show that if learning rate is sufficiently large, an outlier in the spectrum of the weight and feature matrix emerges with the corresponding singular vector aligned to the structure of the training data. 
\cite{nichani2023provable,wang2023learning} provide theoretical evidence that three-layer neural networks have provably richer feature learning capabilities than their two-layer counterparts.

Recently, \cite{ba2022high} show that in two-layer neural networks, when the dimension, sample size and hidden layer size are proportional, one gradient step with a constant step size on the first layer weights can lead to feature learning. 
However, non-linear components of a single-index target function are still not learned.  They further show that with a sufficiently large step size, when the teacher function has a non-zero first Hermite coefficient,  and under certain conditions, the updated neural networks can outperform linear and kernel methods. 
However, the precise effects of large gradient step sizes on learning nonlinear features, and their precise effects on the loss remain unexplored. \cite{dandi2023learning} show that with a sample size proportional to the input dimension $d$, it is only possible to learn a single direction of multi-index teacher function using gradient updates on the first layer of a two-layer neural network. They also show that for single index models with information exponent (the index of the first non-zero Hermite coefficient) $\kappa$, there are hard directions whose learning requires a sample size of order $\Theta(d^\kappa)$. See also \cite{arous2021online}.

\paragraph{High-dimensional asymptotics.}
We use tools developed in work on high-dimensional asymptotics, which dates back at least to the 1960s
\cite{raudys1967determining,deev1970representation,raudys1972amount}. Recently, these tools have been used in a wide range of areas such as wireless communications  (e.g., \cite{tulino2004random,couillet2011random},  etc.),
high-dimensional statistics (e.g., \cite{raudys2004results,serdobolskii2007multiparametric,paul2014random,yao2015large,dobriban2018high}, etc.), and machine learning  (e.g., \cite{gyorgyi1990statistical,opper1995statistical,opper1996statistical,couillet2022random,engel2001statistical},  etc.).
In particular, the spectrum of 
 so-called information plus noise random matrices that arise in Gaussian equivalence results
has been studied in
\cite{dozier2007empirical,peche}
and its spikes in
\cite{capitaine2014exact}.

\section{Preliminaries}

{\bf Notation}.
We let $\mathbb{N} = \{1,2,\ldots\}$ be the set of positive integers.
For a positive integer $d\ge 1$, we denote $[d] = \{1,\ldots,d\}$.
We use $O(\cdot)$ and $o(\cdot)$ for the standard big-O and little-o notation.
For a matrix $\mathbf{A}$ and a non-negative integer $k$, $\mathbf{A}^{\circ k}=\mathbf{A}\circ\mathbf{A}\circ\ldots\circ\mathbf{A}$ is the matrix of the $k$-th powers of the elements of $\mathbf{A}$.
For positive sequences $(A_n)_{n\ge1}, (B_n)_{n\ge1}$,
we write $A_n = \Theta(B_n)$
or $A_n \asymp B_n$
or $A_n \equiv B_n$
if 
there is $C,C'>0$ such that
$CB_n \ge A_n \ge C'B_n$ for all $n$.
We use $O_\sP(\cdot)$, $o_\sP(\cdot)$,  and $\Theta_\sP(\cdot)$
for the same notions holding in probability.
The symbol $\to_P$ denotes convergence in probability.

\subsection{Problem Setting}\label{sec:problemsetting}
In this paper, we study a supervised learning problem with training data $(\vx_i, y_i) \in \R^d\times \R,$ for $i \in [2n]$, where $d$ is the feature dimension and $n\ge 2$ is the sample size. We assume that the data is generated according to
\begin{align}\label{eqn:datagen}
\vx_i \stackrel{\text{i.i.d.}}{\sim} \normal(0, \bI_d), \text{ and } y_i = f_\star(\vx_i) + \epsilon_i,
\end{align}
in which 
$f_\star$ is the ground truth or \emph{teacher function}, and
$ \ep_i \stackrel{\text{i.i.d.}}{\sim} \normal(0, \sigma_\ep^2)$ is additive noise. 

We fit a model to the data in order to predict outcomes for unlabeled examples at test time; using a two-layer neural network.  We let the width of the internal layer be $N \in \mathbb{N}$. For 
a weight matrix $\bW_{\rm NN} \in \R^{N \times d}$, an activation function $\sigma: \R \to \R$ applied element-wise, and the weights $\va_{\rm NN} \in \R^{N}$ of a linear layer, we define the two-layer neural network  as
$f_{\bW_{\rm NN},\va_{\rm NN}}(\vx) = \va_{\rm NN}^\top \sigma\left(\bW_{\rm NN} \vx\right).$

Following \cite{damian2022neural,ba2022high}, for the convenience of the theoretical analysis, 
we split the training data into two parts: $\bX = [\vx_1, \dots, \vx_n]^\top \in \mathbb{R}^{n \times d}, \vy = (y_1, \dots, y_n)^\top \in \R^n$ and $\tilde \bX = [\vx_{n + 1}, \dots, \vx_{2n}]^\top \in \mathbb{R}^{n \times d}, \tilde \vy = (y_{n + 1}, \dots, y_{2n})^\top \in \R^n$.

We train the two layer neural network as follows.
First, we initialize $\va_{\rm NN}$ with $N^{-1/2}\,\va$ where $\va = (a_1, \dots, a_N)^\top$ in which $a_i \stackrel{\text{i.i.d.}}{\sim} \normal\left(0, 1/{N}\right)$, and initialize $\bW_{\rm NN}$ with $\bW_0 = \begin{bmatrix} \vw_{0, 1}, \dots, \vw_{0, N} \end{bmatrix}^\top \in \mathbb{R}^{N \times d},$ $\vw_{0, i} \stackrel{\text{i.i.d.}}{\sim} \operatorname{Unif}(\mathbb{S}^{d - 1})$
where $\mathbb{S}^{d - 1}$ is the unit sphere in $\R^d$ and $\operatorname{Unif}(\mathbb{S}^{d - 1})$ is the uniform measure over it.
Although we choose this initialization for a simpler analysis,  many arguments can be shown to hold if we switch from the uniform distribution over the sphere to a Gaussian; for example, see Section \ref{pflemma:l1_limits}.
Fixing $\va_{\rm NN}$ at initialization, we perform \emph{one step of gradient descent} on $\bW_{\rm NN}$ with respect to the squared loss computed on $(\bX, \vy)$.
Recalling that $\circ$ denotes element-wise multiplication,
the (rescaled) negative gradient can be written as
\begin{align*}
    \bG&:=-\sqrt{N}\frac{\partial}{\partial \bW_{\rm NN}} \left[ \frac{1}{2n} \left\Vert \vy - \frac{1}{\sqrt{N}}\sigma(\bX \bW_{\rm NN}^\top) \va \right\Vert_2^2 \right]_{\bW_{\rm NN} = \bW_0}\\[0.3cm] &= \frac{1}{n}  \left[ \left(\va \vy^\top - \frac{1}{\sqrt{N}}\va \va^\top \sigma(\bW_0 \bX^\top)\right) \circ \sigma'(\bW_0 \bX^\top) \right] \bX,
\end{align*}
and the one-step update is $\bW  = \begin{bmatrix} \vw_{1}, \dots, \vw_{N} \end{bmatrix}^\top =  \bW_0 + \eta\, \bG$ for a \emph{learning rate} or \emph{step size} $\eta$. 

After the update on $\bW_{\rm NN}$, we perform ridge regression on $\va_{\rm NN}$ using $(\tilde \bX, \tilde \vy)$.
Let $\bF = \sigma(\tilde \bX \bW^\top) \in \R^{n \times N}$ be the feature matrix 
after the one-step update.
For a regularization parameter $\lambda > 0$, we set
\begin{align}\label{eqn:ridgea}
    \hat\va = \hat\va(\bF) = \argmin_{_{\rm NN} \in \R^N} \frac{1}{n} \left\Vert \tilde\vy - \bF \va_{\rm NN} \right\Vert_2^2 + \lambda \Vert \va_{\rm NN} \Vert_2^2 = \left(\bF^\top \bF + \lambda n \bI_N\right)^{-1}\bF^\top \tilde \vy.
\end{align}
Then, 
for a test datapoint with features $\vx$,
we predict the outcome $\hat y = f_{\bW, \hat \va}(\vx) = \hat{\va}^\top \sigma\left(\bW \vx\right)$.

\subsection{Conditions}
Our theoretical analysis applies under the following conditions:
\begin{condition}[Asymptotic setting]
\label{cond:limit}
    We assume that the sample size $n$, dimension $d$, and width of hidden layer $N$ all tend to infinity with 
    \begin{align*}
        d / n \to \phi >0, \quad \text{ and } \quad d / N \to \psi>0.
    \end{align*}    
\end{condition}
We further consider the following model for the teacher function, leading to a single-index model.
\begin{condition}\label{cond:te}
    We let $f_\star: \R^d \to \R$ be $f_\star(\vx) = \sigma_\star(\vx^\top\vbeta_\star)$ for all $\vx$, 
    where $\vbeta_\star \in \R^d$ is an unknown parameter with $\vbeta_\star \sim \normal(0, \frac{1}{d} \bI_d)$ and $\sigma_\star:\R\to \R$ is a 
    $\Theta(1)$-Lipschitz
    \emph{teacher activation} function.
    \end{condition}
This condition is in line with prior work (see e.g., \cite{ba2022high,hu2022universality,goldt2022gaussian}, etc.). Recently, \cite{dandi2023learning} showed that under Condition~\ref{cond:limit}, two-layer neural networks trained with one gradient descent step can only learn a single-index approximation of a multi-index model. This shows that when studying  a single-step update, \ref{cond:te} is not restrictive.

We let $H_k$, $k\ge 1$ be the (probabilist's) Hermite polynomials on $\R$ defined by 
\begin{align*}
H_k(x) = (-1)^k \exp(x^2/2) \frac{d^k}{dx^k} \exp(-x^2/2),    
\end{align*}
for any $x\in \R$.
These polynomials form an orthogonal basis in 
the Hilbert space $L^2$ of measurable functions $f:\R\to\R$ such that
$\int f^2(x) \exp(-x^2/2 )dx<\infty$
with inner product $\langle f,g\rangle = \int f(x) g(x) \exp(-x^2/2)dx$.
The first few Hermite polynomials are $H_0(x) = 1, H_1(x) = x,$ and $H_2(x) = x^2 - 1$.

\begin{condition}\label{cond:he}
    The activation function $\sigma: \R \to \R$ has the following Hermite expansion in $L^2$:
    \begin{align*}
        \sigma(z) = \sum_{k = 1}^\infty c_k H_k(z), \quad c_k = \frac{1}{k!}\E_{Z \sim \normal(0, 1)}[\sigma(Z) H_k(Z)].
    \end{align*}
    The coefficients satisfy $c_1 \neq 0$ and $c_k^2 k! \leq C k^{-\frac{3}{2} - \omega}$ for some $C, \omega > 0$ and for all $k \ge 1$.
    Moreover, the first three derivatives of $\sigma$ exist almost surely, and are bounded.
\end{condition}

Note that in this paper, unlike \cite{hu2022universality}, we do not require the activation function to be odd. The reason is that here, unlike \cite{hu2022universality}, we do not analyze the problem for a general loss function and use a proof technique specialized for squared loss. We remark that the above condition requires
$c_0 = 0$, i.e., that $\E\sigma(Z) = 0$ for $Z \sim \normal(0, 1)$. 
This condition is in line with prior work in the area (e.g., \cite{adlam2020neural,ba2022high}, etc.), and could be removed at the expense of more complicated formulas and theoretical analysis.
The smoothness assumption on $\sigma$
is also in line with prior work in the area (see e.g., \cite{hu2022universality,ba2022high}, etc.). 
Note that the above condition is satisfied by many popular activation functions (after shifting) such as the ReLU $\sigma(x) = \max\{x, 0\} - \frac{1}{\sqrt{2\pi}}$, hyperbolic tangent $\sigma(x) = \frac{e^x - e^{-x}}{e^x + e^{-x}}$, and sigmoid $\sigma(x) = \frac{1}{1 + e^{-x}} - \frac{1}{2}$, for all $x$.

We also make similar assumptions on the teacher activation:
\begin{condition}\label{cond:tehe}
    The teacher activation $\sigma_\star: \R \to \R$ has the following Hermite expansion in $L^2$:
    \begin{align*}
        \sigma_\star(z) = \sum_{k = 1}^\infty c_{\star,k} H_k(z), \; c_{\star,k} = \frac{1}{k!}\E_{Z}[\sigma_\star(Z) H_k(Z)],
    \end{align*}
    with $Z \sim \normal(0, 1)$.
    Also, we define $c_\star = (\sum_{k = 1}^\infty k! c_{\star,k}^2)^\frac{1}{2}$.
\end{condition}

\section{Analysis of the Feature Matrix}\label{sec:analfeature}
The first step in analyzing the spectrum of the feature matrix $\bF$ is to study the matrix rescaled negative gradient $\bG$. It is shown in \cite[Proposition 2]{ba2022high} that $\bG = c_1 \va\vbeta^\top + \boldsymbol{\Delta}$, where $\|c_1 \va\vbeta^\top\|_{\rm op} = \Theta_{\sP}(1)$ and $\|\boldsymbol{\Delta}\|_{\rm op} = \tilde{O}_{\sP}(1/\sqrt{N})$, where the Hermite coefficient $c_1$ of the activation $\sigma$
is defined in Condition \ref{cond:he}, and $\vbeta = \frac{1}{n} \bX^\top \vy \in \R^d$; i.e., the matrix $\bG$ can be approximated (in operator norm) by the rank-one matrix $c_1 \va\vbeta^\top$ with high probability. Moreover, under Conditions \ref{cond:limit}-\ref{cond:tehe}, \cite{ba2022high} show that 
$\vbeta$ can be understood as a noisy estimate of $\vbeta_\star$, namely 
    \begin{align}
    \label{prop:alignment}
        \frac{\vert \vbeta_\star^\top \vbeta \vert}{\Vert \vbeta_\star \Vert_2 \Vert \vbeta \Vert_2} \to_P \frac{\vert c_{\star, 1}\vert}{\sqrt{c_{\star, 1}^2 + \phi (c_\star^2 + \sigma_\ep^2)}}.
    \end{align}
See also Lemma \ref{lemma:turn_beta_to_beta_star}.
In particular, if the sample size 
used for the gradient update is very large; i.e., $\phi \to 0$, $\vbeta$ will converge to being completely aligned to $\vbeta_\star$. 

\begin{figure*}
    \centering
    \includegraphics[width=1\textwidth]{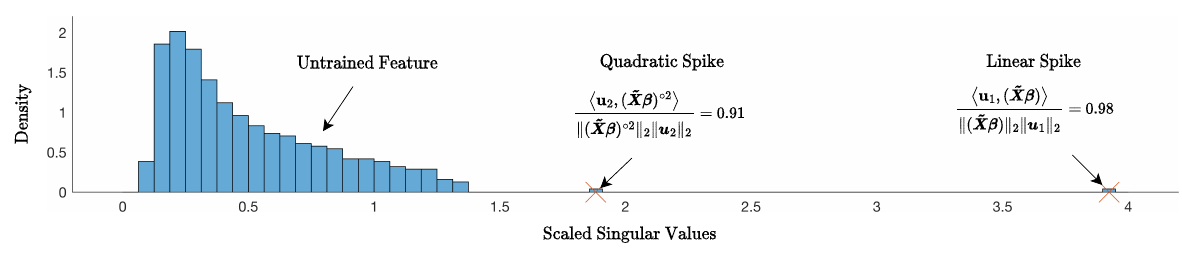}
    \caption{Histogram of the scaled singular values (divided by $\sqrt{n}$) of the feature matrix $\bF = \sigma(\tilde\bX\bW^\top)$ after the update with step size $\eta = n^{0.29}$ $(\ell = 2)$. In this regime, two isolated spikes appear in the spectrum as stated in Theorem~\ref{thm:spectrum_of_feature_matrix}. The top two left singular vectors $\vu_1$ and $\vu_2$ are aligned with $\tilde\bX\vbeta$ and $(\tilde\bX\vbeta)^{\circ 2}$, respectively. See Section \ref{sec:numerical} for the simulation details.}
    \label{fig:detailed-spikes}
\end{figure*}

Building on this result, we show the spectrum of the feature matrix $\bF  = \sigma(\tilde\bX(\bW_0+\eta\mathbf{G})^\top)$ will consist of a bulk of singular values that stick close together---given by the spectrum of the initial feature matrix $\bF_0 = \sigma(\tilde \bX \bW_0^\top)$---and $\ell$ separated spikes\footnote{Using terminology from random matrix theory \cite{bai2010spectral,yao2015large}.}, 
where $\ell$ is an integer that depends on the step size used in the gradient update.
Specifically, when the step size is $\eta \asymp n^\alpha$ with $\frac{\ell - 1}{2\ell} < \alpha < \frac{\ell}{2\ell + 2}$ for some $\ell \in \mathbb{N}$, the feature matrix $\bF$ can be approximated in operator norm by the untrained features $\bF_0 = \sigma(\tilde \bX \bW_0^\top)$ plus  $\ell$ rank-one terms, where the left singular vectors of the rank-one terms are aligned with the non-linear features $\tilde \bX\mapsto (\tilde \bX \vbeta)^{\circ k}$, for $k \in [\ell]$. See Figure \ref{fig:detailed-spikes}.

\begin{theorem}[Spectrum of feature matrix]\label{thm:spectrum_of_feature_matrix}
   Let $\eta \asymp n^\alpha$ with $\frac{\ell - 1}{2\ell} < \alpha < \frac{\ell}{2\ell + 2}$ for some $\ell \in \mathbb{N}$.
    If Conditions \ref{cond:limit}-\ref{cond:tehe} hold, then
    for $c_k$ from Condition \ref{cond:he}
    and $\bF_0 = \sigma(\tilde \bX \bW_0^\top)$,
    \begin{align}        \label{eqn:Fell}
        &\bF = \bF_\ell+\bDelta,\quad\,\textnormal{with }\quad
    \bF_\ell := \bF_0 + \sum_{k = 1}^\ell c_1^k c_k \eta^k (\tilde \bX \vbeta)^{\circ k} (\va^{\circ k})^\top,
    \end{align}
    where $\Vert \bDelta \Vert_\textnormal{op}= o(\sqrt{n})$ with probability $1 - o(1)$.
\end{theorem}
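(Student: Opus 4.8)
The plan is to expand the feature matrix $\bF = \sigma(\tilde\bX\bW^\top)$ around the initialization by writing $\bW^\top = \bW_0^\top + \eta\bG^\top$ and Taylor-expanding $\sigma$ entrywise. Using the result quoted from \cite{ba2022high} that $\bG = c_1\va\vbeta^\top + \bDelta_\bG$ with $\|c_1\va\vbeta^\top\|_\op = \Theta_\sP(1)$ and $\|\bDelta_\bG\|_\op = \tilde O_\sP(N^{-1/2})$, I would first argue that one may replace $\bG$ by its rank-one part $c_1\va\vbeta^\top$ throughout, at the cost of an operator-norm error that is $o(\sqrt n)$ in the relevant regime $\alpha < 1/2$; this requires controlling how the nonlinearity $\sigma$ (with bounded derivatives, Condition \ref{cond:he}) propagates the $\tilde O_\sP(N^{-1/2})$ perturbation, using $\|\sigma(\bA)-\sigma(\bB)\|_\op$-type bounds together with $\eta\asymp n^\alpha$. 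After this reduction, the $i$-th row of the argument is $\bW_0\vx_i + \eta c_1(\va^\top\vx_i^{(\bullet)}\cdots)$; more precisely the $(i,j)$ entry of $\tilde\bX\bW^\top$ is $\langle\vw_{0,j},\vx_{n+i}\rangle + \eta c_1 a_j\,\langle\vbeta,\vx_{n+i}\rangle$, so the perturbation to entry $(i,j)$ is $\eta c_1 a_j (\tilde\bX\vbeta)_i$, a rank-one (outer-product) perturbation of the pre-activation matrix.

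The second step is the entrywise Taylor expansion: for each entry, $\sigma(z_0 + t) = \sum_{k=0}^{\ell} \frac{\sigma^{(k)}(z_0)}{k!} t^k + R_{\ell+1}$, applied with $z_0 = (\tilde\bX\bW_0^\top)_{ij}$ and $t = \eta c_1 a_j(\tilde\bX\vbeta)_i$. The degree-$k$ term contributes $\frac{1}{k!}\sigma^{(k)}\!\big((\tilde\bX\bW_0^\top)_{ij}\big)\,(\eta c_1)^k a_j^k (\tilde\bX\vbeta)_i^k$. Here is the key probabilistic/concentration heuristic: $\sigma^{(k)}$ evaluated at the near-Gaussian pre-activations concentrates (a law-of-large-numbers / Hermite argument) around its expectation $\E[\sigma^{(k)}(Z)] = k!\,c_k$ for $Z\sim\normal(0,1)$ — this is exactly where $c_k$ from Condition \ref{cond:he} enters — so that $\frac{1}{k!}\sigma^{(k)}((\tilde\bX\bW_0^\top)_{ij})\approx c_k$ after averaging; the fluctuation part, being "mean-zero" and spread out, has small operator norm relative to $\sqrt n$. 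This turns the degree-$k$ term into $c_1^k c_k\,\eta^k\,(\tilde\bX\vbeta)^{\circ k}(\va^{\circ k})^\top$ plus a controlled error, giving precisely the $\bF_\ell$ of \eqref{eqn:Fell}.

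The third step is the bookkeeping of magnitudes, which dictates the cutoff $\ell$. Since $\|(\tilde\bX\vbeta)^{\circ k}\|_2 \asymp_\sP \sqrt n$ (entries are roughly $\Theta_\sP(1)$ Gaussians to the $k$-th power, after normalizing $\vbeta$) and $\|\va^{\circ k}\|_2 \asymp_\sP N^{-(k-1)/2}\cdot(\text{const})$ up to the $k=1$ normalization — one tracks that $\E a_j^2 = 1/N$ — the $k$-th rank-one term has operator norm of order $\eta^k\sqrt n\cdot N^{-(k-1)/2}\asymp n^{\alpha k}\cdot n^{1/2}\cdot n^{-(k-1)/2}$ (using $N\asymp n$). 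This is $\Theta(\sqrt n)$ exactly when $\alpha k = (k-1)/2$, i.e.\ at the thresholds $\alpha = \frac{k-1}{2k}$; it is $\gg\sqrt n$ for $k\le\ell$ and the Taylor remainder $R_{\ell+1}$ together with the discarded term $k=\ell+1$ is $o(\sqrt n)$ precisely when $\alpha < \frac{\ell}{2\ell+2}$. So the window $\frac{\ell-1}{2\ell}<\alpha<\frac{\ell}{2\ell+2}$ is the regime in which exactly the terms $k=1,\dots,\ell$ survive above the $\sqrt n$ scale and everything else is absorbed into $\bDelta$.

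I expect the main obstacle to be the second step: rigorously justifying that the random "coefficient matrices" $\big[\sigma^{(k)}((\tilde\bX\bW_0^\top)_{ij})\big]_{i,j}$ — entrywise applications of $\sigma^{(k)}$ to a matrix of dependent (though asymptotically Gaussian) entries — can be replaced by the constant matrix $k!\,c_k\,\vone\vone^\top$ up to an $o(\sqrt n)$ operator-norm error, uniformly over $k\le\ell$, and similarly bounding the Hadamard-product structure of the remainder. This is a nonlinear random-matrix-theory estimate in the spirit of \cite{pennington2017nonlinear,fan2020spectra}: one needs that the centered version $\sigma^{(k)}((\tilde\bX\bW_0^\top)_{ij}) - \E[\cdot]$ behaves like a noise matrix with bounded operator norm, which requires the summability/decay condition on the Hermite coefficients and the boundedness of the first three derivatives in Condition \ref{cond:he}. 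Handling the Hadamard products $(\tilde\bX\vbeta)^{\circ k}(\va^{\circ k})^\top\circ(\text{noise})$ — rather than plain matrix products — and propagating the $\bDelta_\bG$ error from \cite{ba2022high} through $\ell$ orders of the expansion are the places where the argument is most delicate; the magnitude counting in step three, by contrast, is essentially a routine exponent calculation once the concentration statements are in hand.
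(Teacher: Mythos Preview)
Your overall architecture matches the paper's: replace $\bG$ by its rank-one part $c_1\va\vbeta^\top$, expand, isolate the spike terms, and count exponents to determine which survive. Your step~3 (the magnitude bookkeeping showing that the $k$-th spike has size $\asymp n^{\alpha k + 1/2 - (k-1)/2}$) is exactly the paper's calculation.

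The difference is in step~2, and it dissolves what you flag as the main obstacle. The paper does \emph{not} Taylor-expand $\sigma$ and then argue that the random coefficient matrices $\sigma^{(k)}(\tilde\bX\bW_0^\top)$ concentrate around $k!\,c_k$. Instead it first truncates the Hermite expansion, $\sigma_L=\sum_{k\le L}c_kH_k$ with $L$ slowly growing in $n$ (the decay condition $c_k^2 k!\le Ck^{-3/2-\omega}$ in Condition~\ref{cond:he} controls $\|(\sigma-\sigma_L)(\tilde\bX\bW^\top)\|_\op$), and then applies to each polynomial $H_k$ the \emph{exact algebraic identity}
\[
H_k(x+y)=\sum_{j=0}^k\binom{k}{j}x^jH_{k-j}(y).
\]
The $j=k$ term has $H_0\equiv 1$, so the spike $c_k c_1^k\eta^k(\tilde\bX\vbeta)^{\circ k}(\va^{\circ k})^\top$ appears with the correct constant immediately---no concentration step is needed. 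What you call the ``fluctuation part'' is precisely the cross terms $1\le j\le k-1$, which have the clean form
\[
c_k\binom{k}{j}\eta^j\,H_{k-j}(\tilde\bX\bW_0^\top)\circ\bigl[c_1^j(\tilde\bX\vbeta)^{\circ j}(\va^{\circ j})^\top\bigr],
\]
and are bounded using an operator-norm estimate $\|H_m(\tilde\bX\bW_0^\top)\|_\op=O(\sqrt{n\,m!}\,(\log n)^m)$ combined with the entrywise bounds $\max_i|a_i|=\tilde O(n^{-1/2})$ and $\max_i|(\tilde\bX\vbeta)_i|=\tilde O(1)$. So the Hadamard structure you worry about is handled by simply pulling out the maximum of the rank-one factor entrywise.

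There is also a genuine gap in your route as stated: Condition~\ref{cond:he} only assumes that $\sigma',\sigma'',\sigma'''$ exist and are bounded, so for $\ell\ge 3$ the order-$\ell$ Taylor expansion of $\sigma$ with Lagrange remainder $R_{\ell+1}$ is not available. The paper's Hermite truncation works for all $\ell$ because it uses only the $L^2$ decay of the coefficients $c_k$, not higher pointwise derivatives of $\sigma$.
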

To understand $(\tilde \bX \vbeta)^{\circ k} (\va^{\circ k})^\top$, notice that for a datapoint with features $\tilde \vx_i$, the activation of each neuron is proportional to the polynomial feature
$(\tilde \vx_i^\top \vbeta)^k$, with coefficients given by $\va^{\circ k}$ for the neurons.
The spectrum of the initial feature matrix $\bF_0$ is fully characterized in {\cite{pennington2017nonlinear, benigni2021eigenvalue, benigni2022largest, louart2018random, fan2020spectra}}, and its operator norm is known to be $\Theta_\sP(\sqrt{n})$. Moreover, it follows from the proof that the operator norm of each of the terms $c_1^k c_k  \eta^k (\tilde \bX \vbeta)^{\circ k} (\va^{\circ k})^\top$, $k\in [\ell]$ is with high probability of order larger than  $\sqrt{n}$.
Thus, \Cref{thm:spectrum_of_feature_matrix} identifies the spikes in the spectrum of the feature matrix.

\paragraph{Proof Idea.}
We approximate the feature matrix $\bF  = \sigma(\tilde\bX\bW_0^\top+\tilde\bX\mathbf{G}^\top)$ by a polynomial using its Hermite expansion. Next, recalling that $\mathbf{G} \approx c_1 \vbeta \va^\top$, we set
$(\tilde \bX \bG^\top)^{\circ k}$ to $c_1^k (\tilde \bX \vbeta)^{\circ k} (\va^{\circ k})^\top$.
We show that the spike terms with $k \geq \ell + 1$ are negligible since we can show that their norm is $O_\sP(n^{k\alpha + \frac{1}{2} - \frac{k - 1}{2}}) = o_\sP(\sqrt{n})$.

The special case where $\alpha = 0$ is discussed in \cite[Section 3]{ba2022high},
which focuses on the spectrum of the updated weight matrix $\bW = \bW_0 + \eta \bG$. However, here we study the updated feature matrix $\bF = \sigma (\tilde \bX (\bW_0 + \eta \bG)^\top)$ because that is more directly related to the learning problem---as we will discuss in the consequences for the training and test errors below.

In the following theorem, we argue that the subspace spanned by the non-linear features $\{\sigma(\tilde \bX \vw_i)\}_{i \in [N]}$ can be approximated by the subspace spanned by the monomials $\{(\tilde \bX \vbeta)^{\circ k}\}_{k \in [\ell]}$.
For 
two $\ell$-dimensional subspaces $\mathcal{U}_1, \mathcal{U}_2 \subseteq \R^n$,
with orthonormal bases
$\mathbf{U}_1, \mathbf{U}_2 \in \R^{n \times \ell}$,
recall the principal angle distance  between $\mathcal{U}_1, \mathcal{U}_2$ defined by
$d(\mathcal{U}_1, \mathcal{U}_2) = \min_{\mathbf{Q}} \Vert \mathbf{U}_1 -  \mathbf{U}_2 \mathbf{Q} \Vert_\textnormal{op}$,
where the minimum is over $\ell \times \ell$ orthogonal matrices \cite{stewart1990matrix}. 
This definition is invariant to the choice of $\mathbf{U}_1, \mathbf{U}_2$. 

\begin{theorem}\label{thm:subspace}
    Let $\mathcal{F}_\ell$ be the $\ell$-dimensional subspace of $\R^n$ spanned by top-$\ell$ left singular vectors (principal components) of $\bF$.
    Under the conditions of Theorem \ref{thm:spectrum_of_feature_matrix}, we have
    \begin{align*}
        d(\mathcal{F}_\ell, \operatorname{span}\{(\tilde \bX \vbeta)^{\circ k}\}_{k \in [\ell]}) \to_P 0.
    \end{align*}
\end{theorem}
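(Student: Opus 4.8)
The plan is to leverage Theorem \ref{thm:spectrum_of_feature_matrix} together with the Davis–Kahan $\sin\Theta$ theorem for singular subspaces (Wedin's theorem). By Theorem \ref{thm:spectrum_of_feature_matrix}, we have $\bF = \bF_\ell + \bDelta$ with $\bF_\ell = \bF_0 + \sum_{k=1}^\ell c_1^k c_k \eta^k (\tilde\bX\vbeta)^{\circ k}(\va^{\circ k})^\top$ and $\|\bDelta\|_\textnormal{op} = o(\sqrt n)$ with high probability. The first step is to understand the top-$\ell$ left singular subspace of $\bF_\ell$ itself, and then to transfer this to $\bF$ via a perturbation argument, since $\bDelta$ is negligible relative to the spike magnitudes (each of order strictly larger than $\sqrt n$) but the bulk $\bF_0$ has operator norm $\Theta_\sP(\sqrt n)$.

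\emph{Step 1: Top-$\ell$ singular subspace of $\bF_\ell$.} Write $\bF_\ell = \bF_0 + \sum_{k=1}^\ell \bm{v}_k \bm{u}_k^\top$ where $\bm{v}_k = c_1^k c_k \eta^k (\tilde\bX\vbeta)^{\circ k}$ (the left factor, up to normalization) and $\bm{u}_k = \va^{\circ k}$. I would first argue that the $\ell$ vectors $\{(\tilde\bX\vbeta)^{\circ k}\}_{k\in[\ell]}$ are, after normalization, asymptotically linearly independent with a well-conditioned Gram matrix: since the entries of $\tilde\bX\vbeta$ are approximately i.i.d.\ Gaussian (scaled), the normalized monomials $(\tilde\bX\vbeta)^{\circ k}/\|(\tilde\bX\vbeta)^{\circ k}\|$ have pairwise inner products concentrating around $\E[Z^{j+k}]/\sqrt{\E[Z^{2j}]\E[Z^{2k}]}$ for $Z$ Gaussian, and the resulting $\ell\times\ell$ Gram matrix is nonsingular (this is a moment-matrix/Hankel determinant positivity fact). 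Hence $\mathcal{S} := \operatorname{span}\{(\tilde\bX\vbeta)^{\circ k}\}_{k\in[\ell]}$ is genuinely $\ell$-dimensional with smallest singular value of its orthonormal-basis Gram bounded below. Next, since $\|\bF_0\|_\textnormal{op} = \Theta_\sP(\sqrt n)$ while each $\|\bm{v}_k\bm{u}_k^\top\|_\textnormal{op} = \Theta_\sP(\eta^k \sqrt n \cdot 1)$ with $\eta^k \to \infty$, the rank-$\ell$ perturbation dominates $\bF_0$ in the relevant directions. A clean way to make this rigorous: project onto $\mathcal{S}$ and its complement. On $\mathcal{S}^\perp$, $\bF_\ell$ acts like $\bF_0$ restricted there, contributing singular values $O_\sP(\sqrt n)$; the component of $\bF_\ell$ mapping into $\mathcal{S}$ has $\ell$ singular values each of order $\eta^k\sqrt n \gg \sqrt n$ (after checking the $\va^{\circ k}$ are also asymptotically independent, by the same Gaussian-moment argument). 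An application of the Davis–Kahan/Wedin bound with gap $\asymp \eta^\ell \sqrt n$ (the smallest spike) versus perturbation $O_\sP(\sqrt n)$ (the part of $\bF_0$ not captured) shows that the top-$\ell$ left singular subspace of $\bF_\ell$ is within $O_\sP(\eta^{-\ell}) = o_\sP(1)$ of $\mathcal{S}$.

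\emph{Step 2: From $\bF_\ell$ to $\bF$.} Apply Wedin's theorem once more to the pair $(\bF_\ell, \bF)$ with $\|\bF - \bF_\ell\|_\textnormal{op} = \|\bDelta\|_\textnormal{op} = o(\sqrt n)$. The $\ell$-th largest singular value of $\bF_\ell$ is $\Theta_\sP(\eta^\ell \sqrt n)$ and its $(\ell+1)$-st largest is $O_\sP(\sqrt n)$, so the singular gap is $\Theta_\sP(\eta^\ell\sqrt n)$; Wedin then gives $d(\mathcal{F}_\ell, \mathcal{F}_\ell') = O_\sP(\|\bDelta\|_\textnormal{op}/(\eta^\ell\sqrt n)) = o_\sP(1)$, where $\mathcal{F}_\ell'$ is the top-$\ell$ left singular subspace of $\bF_\ell$. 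Combining with Step 1 and the triangle inequality for the principal angle distance $d(\cdot,\cdot)$ yields $d(\mathcal{F}_\ell, \mathcal{S}) \to_P 0$.

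\emph{Main obstacle.} The crux is Step 1: rather than a single spike (classical BBP/perturbation), we must handle $\ell$ spikes of \emph{different} magnitudes $\eta^k\sqrt n$ simultaneously, and the left factors $(\tilde\bX\vbeta)^{\circ k}$ are \emph{not} orthogonal. The delicate point is verifying that the top-$\ell$ singular subspace of $\bF_\ell$ captures \emph{all} of $\mathcal S$ — including the direction of the weakest spike $k=\ell$, whose magnitude $\eta^\ell\sqrt n$ must still exceed the bulk $\|\bF_0\|_\textnormal{op} = \Theta_\sP(\sqrt n)$ by a growing factor; this is exactly why the hypothesis $\alpha > \frac{\ell-1}{2\ell}$ (equivalently $\eta^\ell \gg \sqrt n^{\,?}$, i.e.\ $\eta^\ell \to\infty$ fast enough) is needed. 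One must carefully quantify the interaction: write $\bF_\ell$ in a basis adapted to $\mathcal S \oplus \mathcal S^\perp$, bound the off-diagonal blocks $P_{\mathcal S^\perp}\bF_0 P_{\mathcal S}$ and $P_{\mathcal S}\bF_0 P_{\mathcal S^\perp}$ by $O_\sP(\sqrt n)$, and invoke a block-perturbation ($\sin\Theta$) estimate with the spike block having smallest singular value $\Theta_\sP(\eta^\ell\sqrt n)$. The Gaussian-moment computations establishing nondegeneracy of the two Gram matrices (for $\{(\tilde\bX\vbeta)^{\circ k}\}$ and for $\{\va^{\circ k}\}$) are routine but must be done to guarantee the gap does not collapse.
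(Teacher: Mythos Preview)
Your strategy is correct and would work, but it is more elaborate than needed, and contains a computational slip. The paper collapses your two steps into one: it writes $\bF = \bS + \mathbf{E}$ with $\bS := \sum_{k=1}^\ell c_1^k c_k \eta^k (\tilde\bX\vbeta)^{\circ k}(\va^{\circ k})^\top$ the pure rank-$\ell$ spike and $\mathbf{E} := \bF_0 + \bDelta$ the \emph{entire} perturbation. Since $\bS$ has rank exactly $\ell$, its left singular subspace is $\mathcal S = \operatorname{span}\{(\tilde\bX\vbeta)^{\circ k}\}_{k\in[\ell]}$ by construction, so there is nothing to prove about ``the top-$\ell$ subspace of $\bF_\ell$ being close to $\mathcal S$''---one simply applies Wedin once to the pair $(\bS,\bF)$, with gap $\sigma_\ell(\bS)$ versus $\sigma_{\ell+1}(\bS)=0$ and perturbation $\|\mathbf{E}\|_{\op}=O_\sP(\sqrt n)$. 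Your two-stage route (first $\bS$ vs.\ $\bF_\ell$, then $\bF_\ell$ vs.\ $\bF$) reaches the same place but duplicates the perturbation argument and forces you into the block-decomposition discussion you flag as the ``main obstacle''; that obstacle disappears once $\bF_0$ is absorbed into the perturbation rather than the signal.

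Separately, your spike magnitudes are off: you write $\|\bm v_k \bm u_k^\top\|_{\op} = \Theta_\sP(\eta^k\sqrt n)$, implicitly taking $\|\va^{\circ k}\|_2=\Theta_\sP(1)$. In fact $a_i\sim\normal(0,1/N)$ gives $\|\va^{\circ k}\|_2 = \Theta_\sP(N^{(1-k)/2})$, so the $k$-th spike has size $\Theta_\sP(\eta^k n^{1-k/2})$, and the smallest (the $\ell$-th) is $\Theta_\sP(\eta^\ell n^{1-\ell/2})$, not $\Theta_\sP(\eta^\ell\sqrt n)$. This is exactly the quantity $\eta^\ell n^{\frac12-\frac{\ell-1}{2}}$ appearing in the paper's Wedin bound, and it is $\gg\sqrt n$ precisely because $\alpha>\frac{\ell-1}{2\ell}$. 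Your error is harmless for the conclusion (it only overstates the gap), but it would need correcting in a final write-up.
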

This result shows that after one step of gradient descent with step size $\eta \asymp n^\alpha$ with $\frac{\ell - 1}{2\ell} < \alpha < \frac{\ell}{2\ell + 2}$, the subspace of the top-$\ell$ left singular vectors carries information from the polynomials $\{(\tilde \bX \vbeta)^{\circ k}\}_{k \in [\ell]}$. Also, recall that by \eqref{prop:alignment}, the vector $\vbeta$ is aligned with $\vbeta_\star$. Hence, it is shown that $\mathcal{F}_\ell$ carries information from the first $\ell$ polynomial components of the teacher function.

\paragraph{Proof Idea.}
We use Wedin's theorem
\cite{wedin1972perturbation}
to characterize the distance between the left singular vector space of $\sum_{k = 1}^\ell c_1^k c_k \eta^k (\tilde \bX \vbeta)^{\circ k} (a^{\circ k})^\top$ and that of $\bF$.
Here, we consider the matrix $\bF_0 + \bDelta$ as the perturbation term.

\section{Learning Higher-Degree Polynomials}\label{sec:learninghigher}

In the previous section, we studied the feature matrix $\bF$ and showed that when $\eta \asymp n^\alpha$ with $\frac{\ell - 1}{2\ell} < \alpha < \frac{\ell}{2\ell + 2}$, it can be approximated by $\bF_0 = \sigma(\tilde \bX \bW_0^\top)$ plus $\ell$ rank-one or spike terms. 
We also saw that the left singular vectors of the spike terms are aligned with the non-linear functions $\tilde \bX \mapsto (\tilde \bX \vbeta)^{\circ k}$. 
Intuitively, this result suggests that after the gradient update, the trained weights are becoming aligned with the teacher model and we should expect the ridge regression estimator on the learned features to achieve better performance. 
In particular, when $\alpha>0$, we expect the ridge regression estimator to capture the non-linear part of the teacher function. 
This is impossible for $\eta = O(1)$ \cite{ba2022high} or $\eta = 0$ \cite{hu2022universality,mei2022generalization}. 

In this section, we aim to make this intuition rigorous and show that the spikes in the feature matrix lead to 
a decrease in the error achieved by the estimator. 
Moreover, for large enough step sizes, 
the model can learn non-linear components of the teacher function.
For this, we first need to prove
\textit{equivalence theorems} 
showing that 
instead of the true feature matrix $\bF$, 
the approximations from Theorem \ref{thm:spectrum_of_feature_matrix} can be used to compute error terms (i.e., the effect of $\bDelta$ on the error is negligible).

\subsection{Equivalence Theorems}
\label{eqthm}
{The Gaussian equivalence property (\cite{goldt2022gaussian,hu2022universality,montanari2022universality}, etc.) implies that the training and test errors of a random features model are asymptotically the same  as that of a noisy linear model. In other words, the limiting behavior of these quantities is unchanged if 
we replace the untrained feature matrix $\bF_0 = \sigma(\tilde \bX \bW_0^\top)$ with
$\bF_0 = c_1 \tilde\bX \bW_0^\top + c_{>1} \bZ$,
where $\bZ \in \R^{n \times N}$ is an independent random matrix with i.i.d. $\normal(0, 1)$ entries. 

This property has been used extensively in work on random features models  (see e.g., \cite{adlam2020understanding, adlam2020neural,tripuraneni2021covariate,mel2021anisotropic}, etc.) as it provides a powerful tool to analyze non-linear random matrices.
However, the Gaussian equivalence property fails when the weight matrix $\bW$ is updated with a large gradient descent step \cite{ba2022high}, posing a significant challenge to the analysis.

In this section, we first prove that we can replace the trained features $\bF$ with their approximation $\bF_\ell$ from Theorem~\ref{thm:spectrum_of_feature_matrix} in terms of $\bF_0$ and spikes,
without changing the limiting training and test errors.
Then, in the next sections we will see that the training and test errors can be derived by applying the Gaussian equivalence property 
to the untrained features $\bF_0$ only.

Given a regularization parameter $\lambda >0$, 
recalling the ridge estimator $\hat{\va}(\bF)$ from \eqref{eqn:ridgea}, we define the training loss
\begin{align*}
    \gL_\textnormal{tr}(\bF) = \frac{1}{n} \Vert \tilde \vy - \bF\hat{\va}(\bF)\Vert_2^2 + \lambda \Vert \hat{\va}(\bF)\Vert_2^2.
\end{align*}
In the next theorem, we show that when $\eta \asymp n^\alpha$ with $\frac{\ell - 1}{2\ell} < \alpha < \frac{\ell}{2\ell + 2}$, the training loss $\gL_\textnormal{tr}(\bF)$ can be approximated with negligible error by $ \gL_\textnormal{tr}(\bF_\ell)$. 
In other words, the approximation of the feature matrix from Theorem \ref{thm:spectrum_of_feature_matrix} can be used to derive the asymptotics of the training loss.

\begin{theorem}[Training loss equivalence]
\label{thm:training_equivalence}
    Let $\eta \asymp n^\alpha$ with $\frac{\ell - 1}{2\ell} < \alpha < \frac{\ell}{2\ell + 2}$ for some $\ell \in \mathbb{N}$ and recall  $\bF_\ell$ from \eqref{eqn:Fell}. 
    If Conditions \ref{cond:limit}-\ref{cond:tehe} hold, then
    for any fixed $\lambda >0$, with probability $1 - o(1)$ we have    
    $$\gL_\textnormal{tr}(\bF) - \gL_\textnormal{tr}(\bF_\ell) = o(1).$$   
\end{theorem}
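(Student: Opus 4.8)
The plan is to reduce the ridge training loss to the standard closed form for the optimal value of a ridge objective, and then estimate a difference of two $n\times n$ resolvents. From the normal equations defining $\hat\va(\bF)$ in \eqref{eqn:ridgea} together with the push-through identity $\bF(\bF^\top\bF+\lambda n\bI_N)^{-1}\bF^\top=\bF\bF^\top(\bF\bF^\top+\lambda n\bI_n)^{-1}$, a short computation gives
\[
\gL_\textnormal{tr}(\bF)\;=\;\lambda\,\tilde\vy^\top\big(\bF\bF^\top+\lambda n\bI_n\big)^{-1}\tilde\vy ,
\]
and the identical identity with $\bF$ replaced by $\bF_\ell$. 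Hence the claim reduces to showing that $\lambda\,\tilde\vy^\top\big[(\bF\bF^\top+\lambda n\bI_n)^{-1}-(\bF_\ell\bF_\ell^\top+\lambda n\bI_n)^{-1}\big]\tilde\vy=o(1)$ with probability $1-o(1)$, where by Theorem~\ref{thm:spectrum_of_feature_matrix} we may write $\bF=\bF_\ell+\bDelta$ with $\|\bDelta\|_\op=o(\sqrt n)$ on an event of probability $1-o(1)$.

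To compare the two resolvents I would interpolate along the segment $\bF(t)=\bF_\ell+t\,\bDelta$, $t\in[0,1]$. Writing $\bA(t)=\bF(t)\bF(t)^\top+\lambda n\bI_n\succeq\lambda n\bI_n$ and $g(t)=\lambda\,\tilde\vy^\top\bA(t)^{-1}\tilde\vy$, one has $g(0)=\gL_\textnormal{tr}(\bF_\ell)$, $g(1)=\gL_\textnormal{tr}(\bF)$, so $\gL_\textnormal{tr}(\bF)-\gL_\textnormal{tr}(\bF_\ell)=\int_0^1 g'(t)\,dt$. Differentiating the resolvent, $\tfrac{d}{dt}\bA(t)^{-1}=-\bA(t)^{-1}\dot\bA(t)\bA(t)^{-1}$ with $\dot\bA(t)=\bDelta\bF(t)^\top+\bF(t)\bDelta^\top$, and since the two cross terms that appear are mutual transposes (hence equal as scalars, as $\bA(t)$ is symmetric),
\[
g'(t)\;=\;-2\lambda\,\tilde\vy^\top\bA(t)^{-1}\bF(t)\,\bDelta^\top\bA(t)^{-1}\tilde\vy .
\]
It then remains to bound $\sup_{t\in[0,1]}|g'(t)|$, after which $|\gL_\textnormal{tr}(\bF)-\gL_\textnormal{tr}(\bF_\ell)|\le\sup_t|g'(t)|$ closes the argument.

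The one genuine obstacle is that the naive submultiplicative bound $\|\bA(t)^{-1}\bF(t)\|_\op\le\|\bA(t)^{-1}\|_\op\,\|\bF(t)\|_\op$ is far too weak here: by Theorem~\ref{thm:spectrum_of_feature_matrix} and the remark following it, $\bF_\ell$ — and hence $\bF(t)$ — contains spike terms whose operator norm is of order $n^{\ell\alpha+1-\ell/2}$, which is strictly larger than $\sqrt n$ in the range $\tfrac{\ell-1}{2\ell}<\alpha<\tfrac{\ell}{2\ell+2}$, so this route cannot close. The fix is to keep each bare factor of $\bF(t)$ glued to a resolvent: from the singular value decomposition $\bF(t)=\sum_j s_j\vu_j\vv_j^\top$ one reads off the \emph{deterministic} bound $\|\bA(t)^{-1}\bF(t)\|_\op=\|\bF(t)^\top\bA(t)^{-1}\|_\op=\max_j\frac{s_j}{s_j^2+\lambda n}\le\frac{1}{2\sqrt{\lambda n}}$, valid no matter how large the singular values $s_j$ are, together with the trivial $\|\bA(t)^{-1}\|_\op\le(\lambda n)^{-1}$. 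By Cauchy--Schwarz this gives, uniformly in $t$,
\[
|g'(t)|\;\le\;2\lambda\cdot\frac{1}{2\sqrt{\lambda n}}\cdot\|\bDelta\|_\op\cdot\frac{1}{\lambda n}\cdot\|\tilde\vy\|_2^2\;=\;\frac{\|\bDelta\|_\op\,\|\tilde\vy\|_2^2}{\sqrt{\lambda}\,n^{3/2}} .
\]

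To conclude I would plug in the two probabilistic inputs. Theorem~\ref{thm:spectrum_of_feature_matrix} gives $\|\bDelta\|_\op\le\epsilon_n\sqrt n$ for some deterministic $\epsilon_n\to0$ on an event of probability $1-o(1)$. For $\tilde\vy$, since $\sigma_\star$ is $\Theta(1)$-Lipschitz with $\E[\sigma_\star(Z)^2]=c_\star^2<\infty$ and $\|\vbeta_\star\|_2^2\to_P 1$ (Conditions~\ref{cond:te}--\ref{cond:tehe}), each $\E[y_i^2]=O(1)$, so $\E\|\tilde\vy\|_2^2=O(n)$ and $\|\tilde\vy\|_2^2=O_\sP(n)$ by Markov's inequality; in particular $\|\tilde\vy\|_2^2\le Mn$ with probability at least $1-O(1/M)$ for any constant $M$. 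On the intersection of these two events the displayed bound yields $|\gL_\textnormal{tr}(\bF)-\gL_\textnormal{tr}(\bF_\ell)|\le M\epsilon_n/\sqrt{\lambda}\to0$, and since $M$ is arbitrary this establishes $\gL_\textnormal{tr}(\bF)-\gL_\textnormal{tr}(\bF_\ell)=o(1)$ with probability $1-o(1)$ for every fixed $\lambda>0$. Everything except the spike-shielding estimate in the previous paragraph is routine bookkeeping.
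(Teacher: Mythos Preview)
Your proof is correct and takes a genuinely different route from the paper. The paper argues by an optimality ``sandwich'': it first bounds $\|\hat\va(\bF)\|_2,\|\hat\va(\bF_\ell)\|_2=O_\sP(1)$ and the corresponding residual norms by plugging in $\va=\vzero$, then shows that $\tfrac{1}{n}\|\tilde\vy-\bF\va\|_2^2-\tfrac{1}{n}\|\tilde\vy-\bF_\ell\va\|_2^2=o_\sP(1)$ for $\va\in\{\hat\va(\bF),\hat\va(\bF_\ell)\}$ via $\|\bF-\bF_\ell\|_\op=o_\sP(\sqrt n)$, and finally uses the minimality of each $\hat\va(\cdot)$ to close the two-sided inequality. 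You instead pass directly to the closed form $\gL_\textnormal{tr}(\bF)=\lambda\tilde\vy^\top(\bF\bF^\top+\lambda n\bI_n)^{-1}\tilde\vy$ (the paper's Lemma~\ref{lemma:train_loss}), interpolate the resolvent, and control the derivative with the deterministic SVD bound $\|(\bF\bF^\top+\lambda n\bI_n)^{-1}\bF\|_\op\le(2\sqrt{\lambda n})^{-1}$, which neatly neutralizes the large spike singular values. Your argument is shorter and more quantitative for this particular statement; the paper's sandwich argument, on the other hand, makes no use of a closed form for the minimum and therefore transfers verbatim to the augmented objective $\gR_\zeta(\va,\bF)=\tfrac{1}{n}\|\tilde\vy-\bF\va\|_2^2+\lambda\|\va\|_2^2+\zeta\gL_\textnormal{te}(\va)$ in Lemma~\ref{lem:general_test_risk}, where the added $\zeta\gL_\textnormal{te}$ term destroys the simple resolvent representation that your approach relies on.
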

Similar equivalence results can also be proved for the test error, i.e., the average test loss. 
For any $\va \in \R^N$, we define the test error of $\va$ as
$\gL_\textnormal{te}(\va) = \E_{\vf, y}(y - \vf^\top \va)^2$,
in which the expectation is taken over
$(\vx,y)$ where
$\vf = \sigma(\bW\vx)$ with $\vx \sim \normal(0, \bI_d)$ and $y = f_\star(\vx) + \epsilon$ with $\epsilon \sim \normal(0, \sigma_\epsilon^2)$. The next theorem shows that one can also use the  approximation of the feature matrix from Theorem \ref{thm:spectrum_of_feature_matrix} to derive the asymptotics of the test error.

\begin{theorem}[Test error equivalence]
\label{thm:test_equivalence}

       Let $\eta \asymp n^\alpha$ with $\frac{\ell - 1}{2\ell} < \alpha < \frac{\ell}{2\ell + 2}$ for some $\ell \in \mathbb{N}$, and $\bF_\ell$ be defined as in \eqref{eqn:Fell}.
       If Conditions \ref{cond:limit}-\ref{cond:tehe} hold, then
       for any $\lambda > 0$, 
       if $\gL_\textnormal{te}(\hat\va(\bF)) \to_P \gL_\bF$ and
       $\gL_\textnormal{te}(\hat\va(\bF_\ell)) \to_P \gL_{\bF_\ell}$, we have $\gL_{\bF} = \gL_{\bF_\ell}$.
\end{theorem}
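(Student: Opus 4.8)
\emph{Reduction.} Since the hypotheses already guarantee $\gL_\textnormal{te}(\hat\va(\bF))\to_P\gL_\bF$ and $\gL_\textnormal{te}(\hat\va(\bF_\ell))\to_P\gL_{\bF_\ell}$ with finite limits, it suffices to prove $\gL_\textnormal{te}(\hat\va(\bF))-\gL_\textnormal{te}(\hat\va(\bF_\ell))\to_P 0$; the two limits then coincide. Expanding the square, $\gL_\textnormal{te}(\va)=\E[y^2]-2\vr^\top\va+\va^\top\bM\va$ with $\vr:=\E_\vx[f_\star(\vx)\sigma(\bW\vx)]\in\R^N$ and $\bM:=\E_\vx[\sigma(\bW\vx)\sigma(\bW\vx)^\top]\in\R^{N\times N}$, both fixed given the training data. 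Writing $\Delta\va:=\hat\va(\bF)-\hat\va(\bF_\ell)$, the quantity to bound is $-2\vr^\top\Delta\va+\Delta\va^\top\bM(\hat\va(\bF)+\hat\va(\bF_\ell))$.

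\emph{Two routine bounds.} From $\|(\bA^\top\bA+\lambda n\bI)^{-1}\bA^\top\|_\textnormal{op}\le(2\sqrt{\lambda n})^{-1}$ and $\|\tilde\vy\|_2=O_\sP(\sqrt n)$ we get $\|\hat\va(\bF)\|_2,\|\hat\va(\bF_\ell)\|_2=O_\sP(1)$. Next, since the ridge objectives $J_\bA(\va):=n^{-1}\|\tilde\vy-\bA\va\|_2^2+\lambda\|\va\|_2^2$ are $2\lambda$-strongly convex with minimizers $\hat\va(\bA)$,
\[\lambda\|\Delta\va\|_2^2\le\big[J_\bF(\hat\va(\bF_\ell))-J_{\bF_\ell}(\hat\va(\bF_\ell))\big]+\big[J_{\bF_\ell}(\hat\va(\bF))-J_\bF(\hat\va(\bF))\big].\]
Each bracket is, up to sign, $n^{-1}\big(\|\bDelta\va\|_2^2-2\langle\tilde\vy-\bF_\ell\va,\bDelta\va\rangle\big)$ evaluated at $\va\in\{\hat\va(\bF_\ell),\hat\va(\bF)\}$; using $\|\bDelta\|_\textnormal{op}=o(\sqrt n)$ w.h.p.\ (Theorem~\ref{thm:spectrum_of_feature_matrix}), $\|\hat\va(\cdot)\|_2=O_\sP(1)$, and the fitted-residual bounds $\|\tilde\vy-\bF\hat\va(\bF)\|_2,\|\tilde\vy-\bF_\ell\hat\va(\bF_\ell)\|_2\le\|\tilde\vy\|_2=O_\sP(\sqrt n)$, each bracket is $o_\sP(1)$, so $\|\Delta\va\|_2=o_\sP(1)$. (This is essentially the content of Theorem~\ref{thm:training_equivalence}.)

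\emph{Isolating the spike directions.} The difficulty is that $\|\bM\|_\textnormal{op}$ and $\|\vr\|_2$ are \emph{not} bounded: the large step $\eta\asymp n^\alpha$ forces $\|\bW\|_\textnormal{op}=\Theta_\sP(\eta)$, hence $\|\bM\|_\textnormal{op}$ grows polynomially in $\eta$. I would split off the part of $\bM,\vr$ supported on the spike directions. Writing $\bW_\textnormal{low}:=\bW-\eta c_1\va\vbeta^\top$, the decomposition $\bG=c_1\va\vbeta^\top+\bDelta'$ with $\|\bDelta'\|_\textnormal{op}=\tilde O_\sP(N^{-1/2})$ from \cite{ba2022high} gives $\|\bW_\textnormal{low}\|_\textnormal{op}=O_\sP(1)$, while $\eta c_1(\vbeta^\top\vx)\,a_i=O_\sP(\eta/\sqrt N)=o_\sP(1)$ entrywise; Taylor expanding $\sigma(\bW_\textnormal{low}\vx+\eta c_1(\vbeta^\top\vx)\va)$ in this small quantity and taking expectations, one finds $\bM=\bM_\textnormal{sp}+\bM_\flat$ and $\vr=\vr_\textnormal{sp}+\vr_\flat$, where (as bilinear forms) $\bM_\textnormal{sp},\vr_\textnormal{sp}$ are supported on $\operatorname{span}\{\va^{\circ k}:k\in[\ell]\}$ with $(k,k')$-entry of order $\eta^{k+k'}$ (resp.\ $k$-th entry of order $\eta^k$), and $\|\bM_\flat\|_\textnormal{op},\|\vr_\flat\|_2=O_\sP(1)$ (the flat parts being governed, as for the untrained features, by nonlinear random matrix theory). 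By $\|\Delta\va\|_2=o_\sP(1)$ the $\bM_\flat,\vr_\flat$-contributions to the target difference are $o_\sP(1)$. A resolvent computation (below) shows $\langle\va^{\circ k},\hat\va(\bF)\rangle,\langle\va^{\circ k},\hat\va(\bF_\ell)\rangle=O_\sP(\eta^{-k})$ with \emph{bulk-independent} leading terms; granting this, writing $\hat\va(\bA)^\top\bM_\textnormal{sp}\hat\va(\bA)$ and $\vr_\textnormal{sp}^\top\hat\va(\bA)$ in the basis $\{\va^{\circ k}\}$ reduces everything to
\[\langle\va^{\circ k},\,\hat\va(\bF)-\hat\va(\bF_\ell)\rangle=o_\sP(\eta^{-k})\qquad\text{for each }k\in[\ell].\]

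\emph{The scalar estimates, and the main obstacle.} By the push-through identity, $\langle\va^{\circ k},\hat\va(\bA)\rangle=(\bA\va^{\circ k})^\top(\bA\bA^\top+\lambda n\bI)^{-1}\tilde\vy$. The vectors $\bF\va^{\circ k}$ and $\bF_\ell\va^{\circ k}$ differ only by $\bDelta\va^{\circ k}$ and otherwise agree with the same (bulk-free) combination of $(\tilde\bX\vbeta)^{\circ j}$, $j\le k$; and each of $\bF\bF^\top+\lambda n\bI$, $\bF_\ell\bF_\ell^\top+\lambda n\bI$ has, among its top $\ell$ eigendirections, one that is $(\tilde\bX\vbeta)^{\circ j}/\|(\tilde\bX\vbeta)^{\circ j}\|_2$ up to an $n^{-\Omega(1)}$ sin-angle error---by Wedin's theorem, as in Theorem~\ref{thm:subspace}, using the eigenvalue $\Theta_\sP(\eta^{2j}n^{2-j})\gg\lambda n$. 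Feeding this in, $\langle\va^{\circ k},\hat\va(\bA)\rangle$ equals a functional of $(\tilde\bX,\vbeta,\tilde\vy,\eta)$ alone up to $o_\sP(\eta^{-k})$, for $\bA\in\{\bF,\bF_\ell\}$, so the leading terms cancel in the difference. I expect this cancellation to be the crux: the operator-norm bound $\|\bDelta\|_\textnormal{op}=o(\sqrt n)$ of Theorem~\ref{thm:spectrum_of_feature_matrix} is, over part of the range $\alpha\in(\tfrac{\ell-1}{2\ell},\tfrac{\ell}{2\ell+2})$, not sharp enough to push the residual corrections below $\eta^{-k}=n^{-k\alpha}$, so one must instead exploit the explicit form of $\bDelta$---a polynomial in $\tilde\bX\bG^\top$ with the rank-one approximation $c_1\va\vbeta^\top$ removed---to bound $\|\bDelta\va^{\circ k}\|_2$ and, more delicately, the inner product of $\bDelta$ with the resolvent-weighted vector $(\bF\bF^\top+\lambda n\bI)^{-1}\tilde\vy$ along $(\tilde\bX\vbeta)^{\circ k}$. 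Tracking the interactions between the spike components, the bulk $\bF_0$, and the error $\bDelta$ is where the work concentrates; the remaining ingredients are routine perturbation and concentration estimates.
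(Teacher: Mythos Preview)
Your approach is genuinely different from the paper's, and the paper's is considerably simpler. You attack the difference $\gL_{\rm te}(\hat\va(\bF))-\gL_{\rm te}(\hat\va(\bF_\ell))$ head-on by decomposing $\bM,\vr$ into a spike part (supported on $\{\va^{\circ k}\}_{k\le\ell}$, with unbounded norms) and a flat part, and then arguing that the spike projections $\langle\va^{\circ k},\hat\va(\bF)\rangle$ and $\langle\va^{\circ k},\hat\va(\bF_\ell)\rangle$ agree to order $o_\sP(\eta^{-k})$. You correctly identify that this last step is the crux and that the operator-norm bound on $\bDelta$ is insufficient; you defer it to ``exploiting the explicit form of $\bDelta$'' without carrying it out. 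That step is not routine: it would require resolvent perturbation arguments of the same depth as the paper's later computation of the test error itself (Theorem~\ref{thm:test_risk}), essentially doing the detailed spike-by-spike Woodbury analysis twice.

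The paper sidesteps all of this with a \emph{free-energy trick}. It first extends the training-equivalence argument to the augmented objective
\[
\gR_\zeta(\va,\bA)=\tfrac1n\|\tilde\vy-\bA\va\|_2^2+\lambda\|\va\|_2^2+\zeta\,\gL_{\rm te}(\va)
\]
for any fixed $\zeta>0$: since $\gL_{\rm te}$ is a fixed (in $\bA$) convex quadratic, the very same bounds you used for $J_\bA$ give $|\min_\va\gR_\zeta(\va,\bF)-\min_\va\gR_\zeta(\va,\bF_\ell)|=o_\sP(1)$. Then one argues by contradiction: if $\gL_\bF\neq\gL_{\bF_\ell}$, set $\gL=\tfrac12(\gL_\bF+\gL_{\bF_\ell})$ and consider the constrained problems $\min_{\gL_{\rm te}(\va)\le\gL}J_\bA(\va)$ for $\bA\in\{\bF,\bF_\ell\}$; these have different limits (for one of the two the constraint binds strictly, and $\lambda$-strong convexity forces a gap), yet by Lagrangian duality each equals $\max_{\zeta>0}\{-\zeta\gL+\min_\va\gR_\zeta(\va,\bA)\}$, and pointwise convergence of concave functions forces the maxima to the same limit---contradiction. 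No spectral decomposition of $\bM$ or $\vr$, and no $\eta^{-k}$-scale cancellation, is ever needed.

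Your strong-convexity bound $\|\Delta\va\|_2=o_\sP(1)$ is correct and is morally what makes the paper's augmented-loss argument work too. But the direct route you chose leaves the hardest piece unproved, while the paper's variational detour removes that piece entirely.
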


\paragraph{Proof Idea.} 
To prove Theorem~\ref{thm:test_equivalence}, we show first show that the norm of the trained second layer weight $\hat\va$, is $O_\sP(1)$. Then, we use Theorem~\ref{thm:spectrum_of_feature_matrix} to conclude the proof. To prove Theorem~\ref{thm:test_equivalence}, we will use a \emph{free-energy trick} \cite{abbasi2019universality,hu2022universality,hassani2022curse}. We first extend Theorem \ref{thm:training_equivalence} and show that for any $\lambda, \zeta>0$, the minima over $\va$ of
\begin{align*}
\gR_{\zeta}(\va, \bar\bF) = \frac{1}{n}\Vert \tilde\vy - \bar\bF \va\Vert_2^2 + \lambda \Vert \va\Vert_2^2  + \zeta \gL_{\rm te}(\va),
\end{align*}
for $\bar\bF = \bF$ and $\bar\bF = \bF_\ell$ are close. Then, we use this to argue that the limiting test errors are also close.

With Theorem \ref{thm:training_equivalence} and \ref{thm:test_equivalence} in hand, for $\eta \asymp n^\alpha$, we can  use the approximation $\bF_\ell$---with the appropriate $\ell$---of the feature matrix $\bF$ to analyze the training and  test error.

\subsection{Analysis of Training and Test Errors}
\label{section:analysis_of_train_loss}
In this section, we quantify the discrepancy between the training loss of the ridge estimator trained on the new---learned---feature matrix $\bF$ and the same ridge estimator trained on the untrained feature matrix $\bF_0$. We will do this for the step size $\eta \asymp n^\alpha$ with $\frac{\ell - 1}{2\ell} < \alpha < \frac{\ell}{2\ell + 2}$ for  various $\ell \in \mathbb{N}$.

Our results depend on the limits of 
traces of the matrices
$(\bF_0\bF_0^\top + \lambda n \bI_n)^{-1}$
and
$\tilde \bX^\top (\bF_0\bF_0^\top + \lambda n \bI_n)^{-1} \tilde \bX$.
These limits have been determined in \cite{adlam2019random,adlam2020neural},
see also \cite{pennington2017nonlinear,peche},
and 
depend on the values $m_1, m_2 > 0$, which are the unique solutions of the following system of coupled equations, for $\lambda > 0$:
    \begin{align}\label{fpe}
        &\phi\left(m_1-m_2\right)\left(c_{>1}^2 m_1+c_1^2 m_2\right)+ \Psi(m_1, m_2)=0,\nonumber \\
        &\frac{\phi}{\psi}\left(c_1^2 m_1 m_2+\phi\left(m_2-m_1\right)\right)+ \Psi(m_1, m_2)=0,
    \end{align}
    where $\Psi(m_1, m_2) = c_1^2 m_1 m_2\left(\lambda \psi m_1/\phi-1\right)$ and $c_{>1} = (\sum_{k = 2}^\infty k! c_k^2)^{1/2}$.
Here, $m_1$ is the limiting Stieltjes transform of the matrix $\frac{1}{n} \mathbf{F}_0 \mathbf{F}_0^\top$ and $m_2$ is an auxiliary transform. 
For instance, we leverage that under Condition~\ref{cond:limit}, we have
\begin{align*}
    &\lim_{d, n, N \to \infty} \tr(\tilde \bX^\top (\bF_0\bF_0^\top + \lambda n \bI_n)^{-1} \tilde \bX)/d =  \psi m_2/\phi > 0,\\
    &\lim_{d, n, N \to \infty} \tr((\bF_0\bF_0^\top + \lambda n \bI_n)^{-1}) = \psi m_1/\phi > 0.
\end{align*}
See Lemma \ref{lemma:l1_limits} and its proof for more details. 
For instance, as argued in \cite{pennington2017nonlinear,adlam2019random}, these can be reduced to a quartic equation for $m_1$ and are convenient to solve numerically.
However, the existence of these limits does not imply our results; on the contrary, the proofs of our results require extensive additional calculations and several novel ideas.

\begin{theorem}
\label{thm:general_ell_simplified}
    If Conditions \ref{cond:limit}-\ref{cond:tehe} are satisfied, 
    and we have $c_1,\cdots,c_\ell \neq 0$,
    as well as
    $\eta \asymp n^\alpha$ with $\frac{\ell - 1}{2\ell} < \alpha < \frac{\ell}{2\ell + 2}$,
     then
     for the learned feature map $\bF$ and the untrained feature map $\bF_0$, we have $$\gL_{\rm tr}(\bF_0) - 
         \gL_{\rm tr}(\bF) \to_P \Delta_\ell\ge 0,$$ where $\Delta_\ell$ can be found in Section \ref{sec:general_ell}. 
\end{theorem}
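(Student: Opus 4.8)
The plan is to reduce everything to an explicit computation with the rank-$(\ell+?)$ perturbation $\bF_\ell = \bF_0 + \sum_{k=1}^\ell c_1^k c_k \eta^k (\tilde\bX\vbeta)^{\circ k}(\va^{\circ k})^\top$ and then use the known Stieltjes-transform limits (the quantities $m_1,m_2$ from \eqref{fpe} and Lemma \ref{lemma:l1_limits}) to evaluate the resulting traces. By the training-loss equivalence (Theorem \ref{thm:training_equivalence}), $\gL_{\rm tr}(\bF) = \gL_{\rm tr}(\bF_\ell) + o_\sP(1)$, so it suffices to analyze $\gL_{\rm tr}(\bF_0) - \gL_{\rm tr}(\bF_\ell)$. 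First I would write the ridge training loss in closed form: for any feature matrix $\bar\bF$, $\gL_{\rm tr}(\bar\bF) = \lambda\,\tilde\vy^\top(\bar\bF\bar\bF^\top + \lambda n\bI_n)^{-1}\tilde\vy \cdot(\text{const})$ — more precisely, substituting $\hat\va(\bar\bF)$ one gets $\gL_{\rm tr}(\bar\bF) = \lambda n\,\tilde\vy^\top(\bar\bF\bar\bF^\top+\lambda n\bI_n)^{-2}\tilde\vy/n + \lambda\,\tilde\vy^\top\bar\bF(\bar\bF^\top\bar\bF+\lambda n\bI_n)^{-2}\bar\bF^\top\tilde\vy$, which after the push-through identity collapses to $\lambda\,\tilde\vy^\top(\bar\bF\bar\bF^\top+\lambda n\bI_n)^{-1}\tilde\vy$. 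So the whole problem becomes controlling $\lambda\,\tilde\vy^\top\big[(\bF_0\bF_0^\top+\lambda n\bI_n)^{-1} - (\bF_\ell\bF_\ell^\top+\lambda n\bI_n)^{-1}\big]\tilde\vy$.

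Next I would exploit that $\bF_\ell\bF_\ell^\top - \bF_0\bF_0^\top$ is low rank. Writing $\bF_\ell = \bF_0 + \bU\bV^\top$ with $\bU = [(\tilde\bX\vbeta)^{\circ 1},\dots,(\tilde\bX\vbeta)^{\circ\ell}]$ and $\bV$ carrying the coefficients $c_1^k c_k \eta^k \va^{\circ k}$, the difference of resolvents is a rank-$O(\ell)$ matrix that can be expanded by the Woodbury/Sherman–Morrison identity in terms of $\bF_0$-resolvent quadratic forms involving the vectors $(\tilde\bX\vbeta)^{\circ k}$ and $\bF_0(\va^{\circ j})$. The key deterministic-equivalent inputs are: (i) the monomials $\{(\tilde\bX\vbeta)^{\circ k}\}$ are asymptotically orthogonal after centering, with $\|(\tilde\bX\vbeta)^{\circ k}\|_2^2/n \to_P k!\|\vbeta\|^{2k}$ by Gaussian-moment / Hermite arguments (since $\tilde\bX$ is independent of $\vbeta$ and $\tilde\bX\vbeta\mid\vbeta$ has i.i.d. $\normal(0,\|\vbeta\|^2)$ entries); (ii) quadratic forms $(\tilde\bX\vbeta)^{\circ k\top}(\bF_0\bF_0^\top+\lambda n\bI_n)^{-1}(\tilde\bX\vbeta)^{\circ j}$ concentrate around $\delta_{kj}\cdot k!\|\vbeta\|^{2k}\cdot\tfrac1n\tr(\bF_0\bF_0^\top+\lambda n\bI_n)^{-1} \to \delta_{kj}k!\|\vbeta\|^{2k}\psi m_1/\phi$, because $\bF_0$ depends on $\tilde\bX$ only through $\tilde\bX\bW_0^\top$, which is "almost independent" of $\tilde\bX\vbeta$ — this is where a leave-one-column-out / Hermite-expansion decoupling argument is needed, analogous to the Gaussian-equivalence machinery already invoked for $\bF_0$; and (iii) the cross terms $\tilde\vy^\top(\bF_0\bF_0^\top+\lambda n\bI_n)^{-1}(\tilde\bX\vbeta)^{\circ k}$, which is where the signal enters: since $\tilde\vy = \sigma_\star(\tilde\bX\vbeta_\star) + \tilde\vepsilon$ and $\vbeta$ aligns with $\vbeta_\star$ per \eqref{prop:alignment}, the Hermite expansion of $\sigma_\star$ produces an overlap proportional to $c_{\star,k}$ times the $k$-th Hermite-monomial correlation between $\tilde\bX\vbeta_\star$ and $\tilde\bX\vbeta$. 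Collecting these, $\Delta_\ell$ becomes a sum over $k\in[\ell]$ of terms of the form (signal overlap)$^2$ divided by (spike norm $+$ resolvent-diagonal), which is manifestly $\ge 0$; the $o(1)$ error from Theorem \ref{thm:training_equivalence} and from the concentration statements finishes the convergence in probability.

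The main obstacle I expect is step (ii)/(iii): the decoupling between $\bF_0 = \sigma(\tilde\bX\bW_0^\top)$ and the directions $(\tilde\bX\vbeta)^{\circ k}$. The matrix $\bF_0$ and the spike directions both live on the same random $\tilde\bX$, so the resolvent quadratic forms are not a priori of the naive product form; one must show that projecting out the single direction $\vbeta$ from $\tilde\bX$ perturbs $\bF_0$'s resolvent only by $O_\sP(n^{-1/2})$ in the relevant bilinear forms, and that the residual $\bF_0$ behaves (via Gaussian equivalence) like $c_1\tilde\bX\bW_0^\top + c_{>1}\bZ$ with $\bZ$ genuinely independent of $\tilde\bX\vbeta$. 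This requires carefully combining the Hermite expansion of $\sigma$, anisotropic local-law / deterministic-equivalent estimates for $(\bF_0\bF_0^\top+\lambda n\bI_n)^{-1}$ in the direction $(\tilde\bX\vbeta)^{\circ k}$, and the norm bound $\|\bDelta\|_{\rm op}=o(\sqrt n)$ from Theorem \ref{thm:spectrum_of_feature_matrix}. The remaining steps — the Woodbury expansion, the monomial moment computations, and assembling the nonnegative formula for $\Delta_\ell$ — are then routine algebra given the limits $m_1,m_2$; the explicit form is recorded in Section \ref{sec:general_ell}.
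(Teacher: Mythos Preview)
Your overall plan matches the paper's: invoke Theorem~\ref{thm:training_equivalence} to replace $\bF$ by $\bF_\ell$, use the closed form $\gL_{\rm tr}(\bar\bF)=\lambda\,\tilde\vy^\top(\bar\bF\bar\bF^\top+\lambda n\bI_n)^{-1}\tilde\vy$, write $\bF_\ell\bF_\ell^\top-\bF_0\bF_0^\top$ as a rank-$2\ell$ update, apply Woodbury, and reduce to quadratic forms in $\bar\bR_0=(\bF_0\bF_0^\top+\lambda n\bI_n)^{-1}$. The paper organizes the Woodbury with the $2\ell$-column matrix $\bU=[\,\bF_0\va\mid\cdots\mid N^{(\ell-1)/2}\bF_0\va^{\circ\ell}\mid(\tilde\bX\vbeta)\mid\cdots\mid(\tilde\bX\vbeta)^{\circ\ell}\,]$ and shows that $\bT=(\bK^{-1}+\bU^\top\bar\bR_0\bU)^{-1}$ is asymptotically block-diagonal, after which the $\bF_0\va^{\circ j}$ block drops out of $\tilde\vy^\top\bar\bR_0\bU\bT\bU^\top\bar\bR_0\tilde\vy$ because $\tilde\vy^\top\bar\bR_0\bF_0\va^{\circ j}=o_\sP(1)$. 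You do not spell out this last piece, but it is routine once the block structure is established.

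The genuine gap is in your step~(ii). First a bookkeeping error: the raw monomials $(\tilde\bX\vbeta)^{\circ k}$ are \emph{not} asymptotically orthogonal (e.g.\ $(\tilde\bX\vbeta)$ and $(\tilde\bX\vbeta)^{\circ3}$ share the $H_1$ component), and $\|(\tilde\bX\vbeta)^{\circ k}\|_2^2/n\to(2k-1)!!\,\|\vbeta\|^{2k}$, not $k!\,\|\vbeta\|^{2k}$. The paper therefore passes to the Hermite basis $H_k(\tilde\bX\vbeta/\|\vbeta\|)$, where orthogonality does hold (Lemma~\ref{lemma:general_orthogonality_21}), and the change-of-basis coefficients $\xi_{i,k}$ survive into the final formula for $\Delta_\ell$ (Theorem~\ref{thm:general_ell}). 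More seriously, your assertion that every such quadratic form concentrates at $(\cdots)\cdot\psi m_1/\phi$ because ``$\tilde\bX\bW_0^\top$ is almost independent of $\tilde\bX\vbeta$'' fails precisely for the linear component. Under the Gaussian equivalent $\bF_0\approx c_1\tilde\bX\bW_0^\top+c_{>1}\bZ$, the vector $\tilde\bX\vbeta$ lies in the column space of $\tilde\bX$ and is genuinely coupled to the $c_1\tilde\bX\bW_0^\top$ part; one gets $\vbeta^\top\tilde\bX^\top\bar\bR_0\tilde\bX\vbeta\to\|\vbeta\|^2\,\psi m_2/\phi$ with $m_2=\lim d^{-1}\tr(\tilde\bX^\top\bar\bR_0\tilde\bX)\neq m_1=\lim\tr\bar\bR_0$ (Lemma~\ref{lemma:l1_limits}). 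This is exactly why $\Delta_1$ in Corollary~\ref{corollay} carries $m_2$ while the quadratic piece of $\Delta_2$ carries $m_1$. The paper handles $k\ge2$ by your projection idea, setting $\hat\bX=\tilde\bX-\tilde\bX\tilde\vbeta\tilde\vbeta^\top$ and controlling $\bar\bR_0-\hat\bR_0$ via a second Woodbury step (Lemmas~\ref{lemma:hermite_limit_general} and~\ref{t2rt2}); but for $k=1$ that rank-two correction is $\Theta_\sP(1)$ in the bilinear form, not $O_\sP(n^{-1/2})$ as you posit---it is precisely what shifts $m_1$ to $m_2$. Without distinguishing the $k=1$ direction you will get the wrong constant for $\Delta_\ell$.
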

The  expression for $\Delta_\ell$ is complex and given in Section \ref{sec:general_ell} due to space limitations. 
For a better understanding of Theorem \ref{thm:general_ell_simplified}, we consider two specific cases,
$\ell =1$ and $\ell = 2$.
\begin{corollary} 
\label{corollay}
Under the assumptions of Theorem \ref{thm:general_ell_simplified}, for $\ell = 1$, we have $\gL_{\rm tr}(\bF_0) -
         \gL_{\rm tr}(\bF)  \to_P \Delta_1$ with
\begin{align}
    \Delta_1 := \frac{\psi\lambda c_{\star,1}^4 m_2}{\phi[c_{\star,1}^2 + \phi(c_{\star}^2 + \sigma_\ep^2)]} \ge 0.
\end{align}
For $\ell = 2$, we have $\gL_{\rm tr}(\bF_0) - \gL_{\rm tr}(\bF)  \to_P \Delta_2 $ with
\begin{align}
    \Delta_2  := \Delta_1 
         + \frac{4\psi\lambda c_{\star,1}^4c_{\star,2}^2m_1}
    {3\phi[\phi(c_\star^2 + \sigma_\ep^2) + c_{\star,1}^2]^2} \ge 0.
\end{align}
\end{corollary}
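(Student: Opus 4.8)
The plan is to specialize the general formula for $\Delta_\ell$ from Theorem \ref{thm:general_ell_simplified} (derived in Section \ref{sec:general_ell}) to the cases $\ell = 1$ and $\ell = 2$. By the equivalence theorems (Theorems \ref{thm:training_equivalence} and \ref{thm:general_ell_simplified}), it suffices to compare $\gL_{\rm tr}(\bF_0)$ with $\gL_{\rm tr}(\bF_\ell)$, where $\bF_\ell = \bF_0 + \sum_{k=1}^\ell c_1^k c_k \eta^k (\tilde\bX\vbeta)^{\circ k}(\va^{\circ k})^\top$. First I would set up the ridge problem on $\bF_\ell$ as a rank-$\ell$ additive perturbation of the ridge problem on $\bF_0$, and use the block matrix inversion (Woodbury) identity to express the change in training loss in terms of: (i) the limiting resolvent traces $\psi m_1/\phi$ and $\psi m_2/\phi$ recalled above, (ii) the norms and inner products of the spike vectors $(\tilde\bX\vbeta)^{\circ k}$, and (iii) the alignment of these spikes with the label vector $\tilde\vy$. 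The key reductions are that $\|\tilde\bX\vbeta\|_2^2/n \to_P \|\vbeta\|_2^2$ (and similarly higher moments of a near-Gaussian vector), that $\langle \tilde\vy, (\tilde\bX\vbeta)^{\circ k}\rangle/n$ concentrates on a Hermite-type coefficient picking out $c_{\star,k}$, and that cross terms between $(\tilde\bX\vbeta)^{\circ j}$ and $(\tilde\bX\vbeta)^{\circ k}$ for $j \ne k$ vanish in the limit by orthogonality of Hermite polynomials (odd versus even parity, etc.). Crucially, by \eqref{prop:alignment} the vector $\vbeta$ is only partially aligned with $\vbeta_\star$, which is where the factor $c_{\star,1}^2/[c_{\star,1}^2 + \phi(c_\star^2 + \sigma_\ep^2)]$ (and its powers) enters.

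For $\ell = 1$, the spike $(\tilde\bX\vbeta)^{\circ 1} = \tilde\bX\vbeta$ is a linear feature; since $\bF_0$ already contains the linear part $c_1\tilde\bX\bW_0^\top$, adding this rank-one term only refines the linear fit, and the improvement $\Delta_1$ is governed by the residual variance in the linear direction $\vbeta_\star$, scaled by the resolvent trace $\psi m_2/\phi$ and the regularization $\lambda$. I would verify that the general formula collapses to $\Delta_1 = \psi\lambda c_{\star,1}^4 m_2 / \{\phi[c_{\star,1}^2 + \phi(c_\star^2 + \sigma_\ep^2)]\}$, where the fourth power $c_{\star,1}^4$ arises as the square of the product of the alignment numerator $c_{\star,1}^2$ with the label-spike inner product, which also contributes $c_{\star,1}$. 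For $\ell = 2$, the second spike $(\tilde\bX\vbeta)^{\circ 2}$ is a genuinely quadratic feature orthogonal (in the limit) to everything in $\bF_0$ and to the first spike; hence its contribution to the loss decrease is additive, yielding $\Delta_2 = \Delta_1 + (\text{quadratic term})$. The quadratic term involves $c_{\star,2}^2$ (the teacher's degree-two Hermite coefficient, since $H_2$ picks out exactly the $(\tilde\bX\vbeta)^{\circ 2}$ component of $\tilde\vy$), the combinatorial factor $4/3$ coming from $\E[H_2(Z)^2] = 2$ and normalization of $(\tilde\bX\vbeta)^{\circ 2}$ versus its centered version $H_2(\tilde\bX\vbeta)$ (i.e. $\Var[(Z^2)] = 2$ but $\E[Z^2]=1$ shifts things, producing the $1/3$-type constants after the projection onto the orthogonal complement of the all-ones direction and $\bF_0$), the fourth power $c_{\star,1}^4$ again from the degree-one alignment appearing squared in the relevant denominators, the resolvent trace $m_1$ rather than $m_2$ (since the quadratic feature interacts with $(\bF_0\bF_0^\top + \lambda n \bI)^{-1}$ directly, not sandwiched by $\tilde\bX$), and the squared denominator $[\phi(c_\star^2+\sigma_\ep^2)+c_{\star,1}^2]^2$ from the alignment factor entering twice.

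Nonnegativity of $\Delta_1$ and $\Delta_2$ is immediate from the displayed formulas since every factor ($\psi, \lambda, \phi, m_1, m_2$, the squared/fourth-power Hermite coefficients, and $\sigma_\ep^2 \ge 0$) is nonnegative; alternatively it follows from Theorem \ref{thm:general_ell_simplified}, which already asserts $\Delta_\ell \ge 0$. The main obstacle I anticipate is the bookkeeping in the Woodbury expansion: tracking which limiting resolvent quantity ($m_1$ versus $m_2$, possibly their derivatives in $\lambda$) multiplies each spike contribution, and correctly computing the limiting Gram matrix of the centered monomial features $\{H_k(\tilde\bX\vbeta)\}_{k\in[\ell]}$ together with their cross-correlations with the columns of $\bF_0$ — this is where the exact constants like $4/3$ are pinned down, and an error in the orthogonalization against the bulk of $\bF_0$ would propagate into a wrong numerical coefficient. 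Once the general $\Delta_\ell$ of Section \ref{sec:general_ell} is established, however, the corollary is a direct substitution of $\ell \in \{1,2\}$ together with these moment identities.
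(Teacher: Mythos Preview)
Your proposal is correct and follows essentially the same approach as the paper: write $\bF_\ell\bF_\ell^\top = \bF_0\bF_0^\top + \bU\bK\bU^\top$ for a low-rank $\bU$, apply Woodbury to get $\gL_{\rm tr}(\bF_0)-\gL_{\rm tr}(\bF_\ell) = \lambda\,\tilde\vy^\top\bar\bR_0\bU\bT\bU^\top\bar\bR_0\tilde\vy$, and compute the limits of the entries of $\bT$ and of $\bU^\top\bar\bR_0\tilde\vy$ via Hermite orthogonality, the alignment result \eqref{prop:alignment}, and the resolvent traces $\psi m_1/\phi$, $\psi m_2/\phi$. The one place to be careful (which you flag as an obstacle) is that the vanishing of $H_p(\tilde\bX\vu)^\top\bar\bR_0 H_q(\tilde\bX\vu)$ for $p\neq q$ is \emph{not} immediate from Hermite orthogonality alone, since $\bar\bR_0$ depends on $\tilde\bX$; the paper handles this by first invoking Gaussian equivalence for $\bF_0$ and then a secondary Woodbury step to decouple $\bar\bR_0$ from the direction $\vu$ (Lemma \ref{lemma:general_orthogonality_21}), and the $4/3$ indeed emerges as $2^2/3$ from $(H_2(\tilde\vtheta_\star)^\top\bar\bR_0\tilde\vtheta^{\circ2})^2/(\tilde\vtheta^{\circ2\top}\bar\bR_0\tilde\vtheta^{\circ2})$ via Lemmas \ref{lemma:s12} and \ref{t2rt2}.
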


The above result shows that after one gradient step with sufficiently large step size, the model 
can fit nonlinear components of the teacher function. 
This is impossible with a small step size. 
For example, when $\ell = 1$, the improvement in the loss is 
\emph{increasing in the strength of the linear component $c_{\star, 1}$}, keeping the signal strength $c_{\star}$ fixed.
This is not the case for the 
strength of the non-linear component $c_{\star, >1}^2 = c_{\star}^2-c_{\star, 1}^2$. 

When we further increase the step size to the $\ell = 2$ regime, the loss of the trained model will drop by an additional positive value,
depending on the strength $c_{\star, 2}$ of the quadratic signal, 
which shows that the quadratic component of the target function is  being fit. 
Also, note that if $c_{\star, 1} = 0$; i.e., if the \textit{information exponent} (the index of the first non-zero Hermite coefficient) of $\sigma_\star$ is greater than one, 
the gradient step does not change the limiting loss. 
In this case, according to \eqref{prop:alignment}, the alignment between the learned direction $\vbeta$ and the true direction $\vbeta_\star$ will converge to zero. It is known that learning single-index functions with information exponent greater than one requires a sample size 
of order larger than $d$
\cite{dandi2023learning}.

The limiting value of the test error can be analyzed similarly. 
\begin{theorem}
    \label{thm:test_risk}
     Let Conditions \ref{cond:limit}-\ref{cond:tehe}
     and the assumptions of Theorem \ref{thm:test_equivalence} hold.
     If $c_1 \neq 0$, then for $\ell = 1$, we have $\gL_{\rm te}(\hat\va(\bF_0)) -
         \gL_{\rm te}(\hat\va{(\bF)})  \to_P \Lambda_1$ with
    \begin{align}
        \Lambda_1 := \frac{c_{\star,1}^4 \Gamma_1}{[c_{\star,1}^2 + \phi(c_{\star}^2 + \sigma_\ep^2)]} \ge 0, 
    \end{align}
    where $\Gamma_1$ does not depend on the target function.

    If further $c_2\neq 0$, then for $\ell = 2$, we have $\gL_{\rm te}(\hat\va(\bF_0)) -
         \gL_{\rm te}(\hat\va{(\bF)})  \to_P \Lambda_2$ with
    \begin{align}
        \Lambda_2 := \Lambda_1 + \frac{c_{\star,1}^4 c_{\star,2}^2\Gamma_2}{[c_{\star,1}^2 + \phi(c_{\star}^2 + \sigma_\ep^2)]^2} \ge 0,
    \end{align}
    where $\Gamma_2$ does not depend on the target function. The complete expressions for $\Gamma_1$ and 
     $\Gamma_2$ can be found in \eqref{eq:final_l1_test} and \eqref{eq:final_l2_test}, respectively.
\end{theorem}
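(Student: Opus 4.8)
The plan is to follow the same route as the proof of Theorem~\ref{thm:general_ell_simplified} for the training error, but additionally tracking the test-time feature map: reduce to the rank-$\ell$ perturbed matrix $\bF_\ell$ via Theorem~\ref{thm:test_equivalence}, compute the test error of the ridge estimator on $\bF_\ell$ through a block/Woodbury decomposition that isolates the $\ell$ spike directions from the bulk, and then read off the improvement over $\bF_0$ using the deterministic equivalents for $\bF_0$ (Gaussian equivalence and the $m_1,m_2$ fixed point equations \eqref{fpe}) together with Hermite identities and the alignment \eqref{prop:alignment}.

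\emph{Step 1 (reduction and set-up).} By Theorem~\ref{thm:test_equivalence} it suffices to analyze $\gL_{\rm te}(\hat\va(\bF_\ell))$ and $\gL_{\rm te}(\hat\va(\bF_0))$. Writing $\mu_0 = \E_\vx[f_\star(\vx)^2]$, $\vv = \E_\vx[f_\star(\vx)\,\sigma(\bW\vx)]\in\R^N$ and $\bSigma = \E_\vx[\sigma(\bW\vx)\sigma(\bW\vx)^\top]\in\R^{N\times N}$, we have $\gL_{\rm te}(\va) = \mu_0 + \sigma_\ep^2 - 2\va^\top\vv + \va^\top\bSigma\va$, so $\gL_{\rm te}(\hat\va(\bF_0)) - \gL_{\rm te}(\hat\va(\bF_\ell))$ is a quadratic functional of the two estimators and of $(\vv,\bSigma)$. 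Mirroring the proof of Theorem~\ref{thm:spectrum_of_feature_matrix}, a Taylor expansion in $\eta$ of $\sigma(\bW\vx) = \sigma(\bW_0\vx + \eta c_1 (\vbeta^\top\vx)\va + \eta\bDelta\vx)$ shows that, for the purpose of evaluating $\bSigma$ and $\vv$ against estimators of bounded norm, $\sigma(\bW\vx)$ may be replaced by $\sigma(\bW_0\vx) + \sum_{k=1}^{\ell} \tfrac{c_1^k}{k!}\eta^k(\vbeta^\top\vx)^k\,[\sigma^{(k)}(\bW_0\vx)\circ\va^{\circ k}]$; the point is that $\E_Z[\sigma^{(k)}(Z)] = k!\,c_k$ for $Z\sim\normal(0,1)$, which is what generates the coefficients $c_1^k c_k$ of the spikes, while the remainder (the $\bDelta\vx$ contribution and the orders $k\ge \ell+1$) is negligible by $\|\bDelta\|_{\textnormal{op}} = \tilde O_\sP(N^{-1/2})$ and $\eta\asymp n^\alpha$ with $\alpha < \tfrac{\ell}{2\ell+2}$.

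\emph{Step 2 (the ridge estimator on $\bF_\ell$).} Write $\bF_\ell = \bF_0 + \mathbf{P}\mathbf{Q}^\top$ where $\mathbf{P}\in\R^{n\times\ell}$ has $k$-th column $(\tilde\bX\vbeta)^{\circ k}$ and $\mathbf{Q}\in\R^{N\times\ell}$ has $k$-th column $c_1^k c_k\eta^k\va^{\circ k}$, a rank-$\le\ell$ update. Applying the Woodbury identity to $(\bF_\ell^\top\bF_\ell + \lambda n\bI_N)^{-1}$ and using that the $k$-th spike has squared operator norm $\asymp n^{2+2k(\alpha-1/2)}$, which for $k\le\ell$ dominates both $\lambda n$ and $\|\bF_0\|_{\textnormal{op}}^2 = \Theta_\sP(n)$ (because $\alpha > \tfrac{\ell-1}{2\ell}$), the estimator $\hat\va(\bF_\ell)$ splits---up to $o_\sP(1)$ effects on $\gL_{\rm te}$---into a component in $\operatorname{span}\{\va^{\circ k}\}_{k\in[\ell]}$ that fits the projection of $\tilde\vy$ onto $\operatorname{span}\{(\tilde\bX\vbeta)^{\circ k}\}_{k\in[\ell]}$ essentially as if unregularized, plus a component behaving like the ridge estimator on $\bF_0$ deflated by those directions. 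The spike coefficients solve an explicit $\ell\times\ell$ system whose data are the Gram matrix of the $(\tilde\bX\vbeta)^{\circ k}$ and their inner products with $\tilde\vy$; these concentrate, via Hermite product formulas, to quantities involving $\|\vbeta\|^2$, $\langle\vbeta,\vbeta_\star\rangle$ and the teacher coefficients $c_{\star,j}$.

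\emph{Step 3 (evaluating the limits and nonnegativity).} The bulk part reproduces $\gL_{\rm te}(\hat\va(\bF_0))$, whose limit follows from Gaussian equivalence (replacing $\bF_0$ by $c_1\tilde\bX\bW_0^\top + c_{>1}\bZ$) and the deterministic equivalents of $\tr(\bF_0\bF_0^\top+\lambda n\bI_n)^{-1}$ and $\tr\,\tilde\bX^\top(\bF_0\bF_0^\top+\lambda n\bI_n)^{-1}\tilde\bX$ in terms of $m_1,m_2$ from \eqref{fpe} (Lemma~\ref{lemma:l1_limits}); this contribution cancels in the difference. The spike part is the improvement: for the degree-$k$ spike, $\gL_{\rm te}$ gains a term proportional to the squared correlation of $(\tilde\bX\vbeta)^{\circ k}$ with the degree-$k$ teacher component, times a factor $\Gamma_k$ that depends only on $\phi,\psi,\lambda,\sigma_\ep,c_1,c_{>1},m_1,m_2$ and not on $f_\star$. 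By the identity $\E[H_k(U)H_k(V)] = k!\,\rho^k$ for jointly standard Gaussians with correlation $\rho$, and by \eqref{prop:alignment}/Lemma~\ref{lemma:turn_beta_to_beta_star}, which gives $\langle\vbeta,\vbeta_\star\rangle^2/(\|\vbeta\|^2\|\vbeta_\star\|^2)\to_P c_{\star,1}^2/(c_{\star,1}^2+\phi(c_\star^2+\sigma_\ep^2)) =: \rho_\star^2$, this squared correlation equals $c_{\star,k}^2\,\rho_\star^{2k}$ up to constants absorbed in $\Gamma_k$. Thus $\ell=1$ yields $\Lambda_1 = c_{\star,1}^2\rho_\star^2\Gamma_1 = c_{\star,1}^4\Gamma_1/[c_{\star,1}^2+\phi(c_\star^2+\sigma_\ep^2)]$, and $\ell=2$ adds the degree-$2$ term $c_{\star,2}^2\rho_\star^4\Gamma_2 = c_{\star,1}^4 c_{\star,2}^2\Gamma_2/[c_{\star,1}^2+\phi(c_\star^2+\sigma_\ep^2)]^2$; one checks $\Gamma_1,\Gamma_2\ge 0$ either from the closed forms or from the variational/free-energy characterization of Theorem~\ref{thm:test_equivalence}, since enlarging the span over which the ridge objective is minimized cannot increase the limiting test error.

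\emph{Main obstacle.} The delicate point is not the bulk (which is classical) but the interaction between the spike directions and the \emph{test} feature map: one must control the direction of $\hat\va(\bF_\ell)$---in particular its components along each $\va^{\circ k}$---precisely enough that the higher-order Taylor terms of $\sigma(\bW\vx)$ from Step~1 contribute only $o_\sP(1)$ to $\gL_{\rm te}$, despite carrying prefactors $\eta^k$ that grow with $n$. This needs a joint deterministic-equivalent analysis of the rank-$\ell$ perturbed resolvent tested simultaneously against $\tilde\vy$ and against $\bSigma$, including the cross-terms between distinct spikes (which first appear at $\ell=2$) and between the spikes and the bulk, together with the subleading Hermite contributions that feed into the degree-$2$ coefficient; handling these is the bulk of the work.
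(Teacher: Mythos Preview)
Your high-level route---Theorem~\ref{thm:test_equivalence} to replace $\bF$ by $\bF_\ell$, the spiked expansion of the test feature $\sigma(\bW\vx)$, and Woodbury to isolate the spikes---matches the paper's. Two points in your Steps~2--3, however, are genuine gaps rather than details to fill in.

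First, the claim that the estimator ``splits'' into an essentially unregularized fit along $\operatorname{span}\{\va^{\circ k}\}$ plus a deflated bulk, with the bulk reproducing $\gL_{\rm te}(\hat\va(\bF_0))$ so that it ``cancels in the difference,'' is too coarse to produce the closed forms of $\Gamma_1,\Gamma_2$. In the paper the component of $\hat\va(\bF_\ell)$ along $\va$ is $\eta K$ with $K=\Theta_\sP(1/\eta^2)$, obtained by expanding the relevant $\bT$-matrix entries to \emph{second} order in $1/\eta$; since $\bSigma_\vf$ contains a term $\eta^2\|\vbeta\|_2^2\,\va\va^\top$ and $\vmu_\vf$ a term $c_{\star,1}^2\eta\,\va$, the powers of $\eta$ cancel only after tracking all cross terms $\vp_i^\top\bSigma_\vf\vp_j$ between the bulk piece $\bF_0^\top\bar\bR_0\vf_\star$, the Woodbury correction $-\bF_0^\top\bar\bR_0\bU\bT\bU^\top\bar\bR_0\vf_\star$, and the spike piece $\eta K\va$. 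The computation brings in new deterministic equivalents beyond $m_1,m_2$---namely $M=\lim\vbeta_\star^\top\tilde\bX^\top\bar\bR_0\bF_0\bSigma_\vf^0\bF_0^\top\bar\bR_0\tilde\bX\vbeta_\star$, $\bar M=\lim\vbeta_\star^\top\tilde\bX^\top\bar\bR_0\bF_0\bW_0\vbeta_\star$, and (for $\ell=2$) $\hat M=\lim n^{-1}\tr(\bar\bR_0\bF_0\bSigma_\vf^0\bF_0^\top\bar\bR_0)$---which evaluate (via cited results) to $1-2m_2/m_1-m_2'/m_1^2$, $1-m_2/m_1$, and $-m_1'/m_1^2-1$, yielding $\Gamma_1=-m_2'/m_1^2$ and $\Gamma_2=-\tfrac{4}{3}\,m_1'/m_1^2$. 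Your sketch does not generate these $\lambda$-derivative terms.

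Second, your nonnegativity argument is incorrect: ``enlarging the span over which the ridge objective is minimized cannot increase the limiting test error'' is true for the \emph{training} loss but false for the test error (adding features can overfit), and Theorem~\ref{thm:test_equivalence} asserts equality of limits for $\bF$ and $\bF_\ell$, not monotonicity in $\ell$. The paper obtains $\Gamma_1,\Gamma_2\ge0$ directly from the formulas above, using that $m_1,m_2$ are (limits of) traces of resolvents and hence nonincreasing in $\lambda$, so $\partial m_1/\partial\lambda,\partial m_2/\partial\lambda\le0$.
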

Similar to the training loss result, when $\ell = 1$, the improvement in the test error is increasing in $c_{\star, 1}$; keeping the signal strength $c_\star$ fixed. Moreover, the improvement in the test error for $\ell = 2$ depends on the strength $c_{\star, 2}$ of the quadratic signal, showing that the nonlinear component is being learned.

\paragraph{Proof Idea.} Using Theorem \ref{thm:training_equivalence} and \ref{thm:test_equivalence},  in the expression for the training and test error, we replace the trained features matrix $\bF$ with its approximation $\bF_\ell$ from Theorem~\ref{thm:spectrum_of_feature_matrix}. Then, by applying the Woodbury formula, we express the training and test errors in terms of $\bar \bR_0 = (\bF_0 \bF_0^\top + \lambda n \bI_n)^{-1}$ and the non-linear spikes from Theorem~\ref{thm:spectrum_of_feature_matrix}. Using the Gaussian equivalence property (Appendix \ref{sec:gec}) for the untrained features $\bF_0$, we show that the  interaction between the first $\ell$ Hermite components of $\tilde\vy$ and the spike terms will result in non-vanishing terms corresponding to learning different components of the target function. Finally, we compute the limiting value of these terms in terms of $m_1, m_2$ and their derivatives using tools from random matrix theory.

\subsection{Staircase Property}
Recently, \cite{abbe2021staircase,abbe2022merged} show that when learning Boolean functions, under certain conditions on the teacher function, a two-layer trained by SGD in
the mean-field regime will learn the target function incrementally; i.e., Fourier coefficients of higher order are sequentially learned over time. \cite{berthier2023learning} study the problem of learning a single-index function using a wide two-layer neural network trained using gradient flow, and show that in a specific training setting where the stepsizes for the first layer are much smaller than those of the second layer, the decrease rate of training error is non-monotone; there are long plateaus where there is barely any progress, and there are intervals of rapid decrease.

In the one-step updated two layer neural network, we  observe a similar phenomenon. Theorems~\ref{thm:test_risk} and Theorem~\ref{thm:general_ell} show that given $\ell \in \mathbb{N}$,  the errors of the trained model is 
asymptotically constant for all $\eta = c n^\alpha$ with
$\frac{\ell - 1}{2\ell} < \alpha < \frac{\ell}{2\ell + 2}$ and $c \in \R$. There are sharp jumps at the edges between regimes of $\alpha$, 
whose size is precisely characterized above. This shows that a non-monotone rate of decrease in training and test error can also be seen after one step of gradient descent, as a function of step size. For an illustration of this phenomenon, see Figure~\ref{fig:three_figures} \textbf{(Right)}.

\section{Numerical Simulations}
\label{sec:numerical}

To support and illustrate our theoretical results, we present some numerical simulations.
We use the shifted ReLU activation $\sigma(x) = \max(x, 0) - 1 / \sqrt{2\pi}$, $n = 1000$, $N = 500$, $d = 300$, and the regularization parameter $\lambda = 0.01$.

\begin{figure*}
    \centering
    \includegraphics[width = \textwidth]{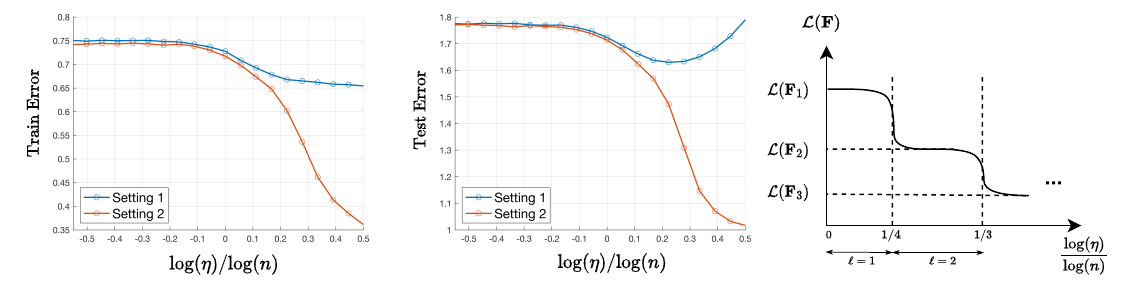}
    \caption{\textbf{(Left, Middle) }Training and test errors after one gradient as functions of $\log (\eta) / \log (n)$. \textbf{(Right)} A toy plot illustrating the theoretical training/test error curve as a function of $\log(\eta) / \log(n)$.}
    \label{fig:three_figures}
\end{figure*}
\paragraph{Singular Value Spectrum of $\bF$.}
We let the 
 the teacher function be $f_\star(\vx) = H_1(\vbeta_\star^\top \vx) + H_2(\vbeta_\star^\top \vx),$
set the noise variance
$\sigma_\ep^2 = 0.5$,
and the step size to $\eta = n^{0.29}$, so $\ell = 2$.
We plot the histogram of singular values of the updated feature matrix $\bF$.
In Figure \ref{fig:detailed-spikes}, we see two spikes corresponding to $\tilde \bX \vbeta, (\tilde \bX \vbeta)^{\circ 2}$ as suggested by Theorem \ref{thm:spectrum_of_feature_matrix} and \ref{thm:subspace}.
Since $f_\star$ has a linear component $H_1$ and a quadratic component $H_2$, these spikes will lead to feature learning.

\vspace{-0.2cm}
\paragraph{Quadratic Feature Learning.}
To support the findings of Corollary \ref{corollay} and Theorem \ref{thm:test_risk} for $\ell = 2$, we consider the following two settings:

\begin{align*}
    &\textbf{Setting 1}: y = H_1(\vbeta_\star^\top \vx) + \varepsilon, \quad \varepsilon \sim \normal(0, 1),\\ &\textbf{Setting 2}: y = H_1(\vbeta_\star^\top \vx) + \frac{1}{\sqrt{2}} H_2(\vbeta_\star^\top \vx).
\end{align*}
\vspace{-0.7cm}

Note that $c_{\star, 1}$ and $c_{\star} + \sigma_\ep^2$ are same in these two settings.
This ensures that the improvement due to learning the linear component is the same.
We plot the training and test errors of the two-layer neural networks trained with the procedure described in Section~\ref{sec:problemsetting} as functions of $\log(\eta)/\log(n)$.
In Figure \ref{fig:three_figures} \textbf{(Left)}, we see that the errors decrease in the range $\log (\eta) / \log (n) \in (0, \frac{1}{4})$ as the model learns the linear component $H_1(\vbeta_\star^\top \vx)$.
In the range $\log (\eta) / \log (n) \in (\frac{1}{4}, \frac{1}{3})$, the model starts to learn the quadratic feature.
However since the quadratic feature is not present in Setting 1, the errors under the two settings diverge. Although the proofs reveal that the convergence rates of the training/test errors after one step can be slow,
these results are consistent with Corollary \ref{corollay} and Theorem \ref{thm:test_risk}.

\section{Conclusion}

We have studied feature learning in two-layer neural networks under one-step gradient descent with step size $\eta \asymp n^\alpha, \alpha \in (0, \frac{1}{2})$. 
We showed that if $\alpha \in ((\ell - 1)/(2\ell), \ell/(2\ell + 2))$, 
the gradient update will add $\ell$ separated singular values to the initial feature matrix spectrum corresponding to different nonlinear features. We then proved equivalence theorems and used them to derive a precise characterization of the training and test errors in the high-dimensional proportional limit. Using this, we showed that in certain examples, non-linear components of the teacher function are learned.

\vspace{-0.3cm}
\paragraph{Future Work.}
In this paper, we only study the problem when $\eta \asymp n^\alpha$ with $\alpha \in ((\ell - 1)/(2\ell), \ell/(2\ell + 2))$. The boundary case where $\eta \asymp n^{(\ell - 1)/(2\ell)}$ is an interesting problem and is left as future work. 

Also, following prior work in the area (see e.g., \cite{damian2022neural, ba2022high, dandi2023learning, nichani2023provable,wang2023learning}, etc.), we use sample splitting in our two-step training procedure. Although this setting is natural for the analysis of pretrained models, it does not cover the case where feature learning and ridge regression use the same
data. We leave this setting as a future direction. 

In this paper, we focused on the case where $\alpha < 1/2$. The behaviour of the feature matrix can be significantly different when $\alpha = 1/2$. When $\alpha < 1/2$, as proved in this paper, the spectrum of the feature matrix will consist of a finite number of spikes added to the spectrum of the untrained feature matrix. However, when $\alpha = 1/2$, the behaviour can deviate from the spectrum of the untrained model in other ways. Note that according to Theorem~\ref{thm:spectrum_of_feature_matrix}, the number of spikes in the spectrum of $\mathbf{F}$ will increase as we increase $\alpha$ from $0$ to $1/2$ and will diverge as we approach $1/2$. The limiting empirical singular-value distribution of the feature matrix and the training and test errors of the network  when $\alpha = 1/2$ is an open problem and we leave it as future work.

\section*{Acknowledgements}
The work of Behrad Moniri is supported by The Institute for Learning-enabled Optimization at Scale (TILOS), under award number NSF-CCF-2112665.  Donghwan Lee and Edgar Dobriban were supported in part by ARO W911NF-20-1-0080, ARO W911NF-23-1-0296, NSF 2031895, NSF DMS 2046874, ONR N00014-21-1-2843, and the Sloan Foundation.  Hamed Hassani, Donghwan Lee and Behrad Moniri are also supported by EnCORE: Institute for Emerging CORE Methods in Data Science, under NSF award number 2217058. The authors would like to thank Doan Dai Nguyen and Simone Bombari for bringing to our attention a small mistake in the proof of Theorem~\ref{thm:spectrum_of_feature_matrix} in the previous version of the paper which has now been fixed.

{\small
\bibliography{full_references}
\bibliographystyle{alpha}
}

\appendix
\onecolumn

\section{Additional Related Work}
\label{sec:additional_related}
\cite{goel2019learning} provide a polynomial time algorithm that learns neural networks with two non-linear layers. 
Our setting is different because we do not apply a non-linear activation after the second layer.
\cite{chen2022hardness} show that learning two-hidden-layer neural networks from noise-free Gaussian data requires superpolynomially many statistical queries. \cite{zhenmei2022theoretical} show that neural networks trained by gradient
descent can succeed on problems where the labels are determined by a set of class-relevant patterns and if these patterns are removed, no polynomial algorithm in the Statistical Query model can learn even weakly.

\section{Additional Notation and Terminology}

In the appendix, we use the following additional notations.
We let 
$\mathbb{N}_0 = \{0,1,2,\ldots\}$ be the set of non-negative integers.
For a set $X$ and $x_1,x_2\in X$,
$\delta_{x_1, x_2}$ is the Kronecker delta, which equals unity if $x_1=x_2$, and is zero otherwise.
We use $\tilde O(\cdot)$ for the standard big-O notation up to logarithmic factors in $n$.
For a positive integer $k$, $k!!$
is the product of all the positive integers up to $n$ with the same parity as $n$.
For two random quantities 
$X,Y$,
$X \independent Y$
denotes that $X$ is independent of $Y$.
By orderwise analysis, we mean bounding a term by the triangle inequality
and the inequality $\|Ab\|_2 \le \|A\|_{\mathrm{op}}\|b\|_2$ for a conformable matrix-vector pair $A,b$,
to reduce it to operator norms of matrices and Euclidean norms of vectors, and then use simple bounds for those quantities. 
Constants such as $C,c'$, etc., can change from line to line unless specified otherwise.
For two random quantities $A,B$, $A=_dB$ denotes that $A$ and $B$ have the same distribution.
Limits of random variables are understood in probability.
For two matrices $\mathbf{A}, \mathbf{B}$ with equal shape, we write $\mathbf{A} \circ \mathbf{B}$ to denote their entry-wise (Hadamard) product.

We denote
$\bX \vbeta = \vtheta$,
$\tilde \bX \vbeta = \tilde\vtheta$,
$\bX \vbeta_\star = \vtheta_\star$,
and $\tilde \bX \vbeta_\star = \tilde\vtheta_\star$.
 We also define 
 $\bar\bR_0 = (\bF_0 \bF_0^\top + \lambda n \bI_n)^{-1}$
 and
 $\bR_0 = (\bF_0^\top \bF_0 + \lambda n \bI_N)^{-1}$.

\section{Basic Lemmas}
\begin{lemma}[Orthogonality of Hermite polynomials]\label{lem:twohermite}
    Let $(Z_1, Z_2)$ be jointly Gaussian with $\E[Z_1] = \E[Z_2] =0$, $\E[Z_1^2] = \E[Z_2^2] = 1$, and $\E[Z_1 Z_2] = \rho$.
    Then for any $k_1, k_2 \in \mathbb{N}_0$,
    \begin{align*}
        \E [H_{k_1}(Z_1) H_{k_2}(Z_2)] = k_1! \rho^{k_1} \delta_{k_1, k_2}.
    \end{align*}
    In particular, if for some positive integer $d$,
    $\mathbf{Z} \sim \normal(0, \bI_d)$, and if $\va,\vb \in \mathbb{S}^{d - 1}$, then 
    \begin{align*}
        \E [H_{k_1}(\va^\top \mathbf{Z}) H_{k_2}(\vb^\top \mathbf{Z})] = k_1! (\va^\top \vb)^{k_1} \delta_{k_1, k_2}.
    \end{align*}
\end{lemma}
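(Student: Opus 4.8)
The plan is to deduce both statements from the classical exponential generating function of the probabilist's Hermite polynomials,
\[
\sum_{k=0}^\infty \frac{t^k}{k!}\,H_k(x) = \exp\!\Big(tx - \tfrac{t^2}{2}\Big), \qquad x,t\in\R,
\]
combined with the Gaussian moment generating function. First I would record the one-variable orthogonality $\E[H_{k_1}(Z)H_{k_2}(Z)] = k_1!\,\delta_{k_1,k_2}$ for $Z\sim\normal(0,1)$ (the orthogonal-basis property noted in the preliminaries), which gives $\sum_{k\ge 0}\frac{t^{2k}}{(k!)^2}\,\E[H_k(Z)^2] = \sum_{k\ge 0}\frac{t^{2k}}{k!} = e^{t^2}<\infty$. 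Hence, for each fixed $t$, the partial sums $S_m^t(Z_2):=\sum_{k=0}^m \frac{t^k}{k!}H_k(Z_2)$ converge in $L^2$ to $\exp(tZ_2-\tfrac{t^2}{2})$, and similarly $S_m^s(Z_1)\to\exp(sZ_1-\tfrac{s^2}{2})$ in $L^2$. Since $(X,Y)\mapsto\E[XY]$ is continuous on $L^2\times L^2$ by Cauchy--Schwarz, this lets me interchange the two infinite sums with the expectation:
\[
\E\!\left[\exp\!\Big(sZ_1-\tfrac{s^2}{2}\Big)\exp\!\Big(tZ_2-\tfrac{t^2}{2}\Big)\right]
= \sum_{k_1,k_2\ge 0}\frac{s^{k_1}t^{k_2}}{k_1!\,k_2!}\,\E\big[H_{k_1}(Z_1)H_{k_2}(Z_2)\big].
\]

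Next I would evaluate the left-hand side in closed form. Because $(Z_1,Z_2)$ is jointly Gaussian with the stated mean and covariance, $sZ_1+tZ_2\sim\normal(0,\,s^2+2st\rho+t^2)$, so $\E[\exp(sZ_1+tZ_2)] = \exp\!\big(\tfrac12(s^2+2st\rho+t^2)\big)$, and therefore the left-hand side equals $\exp(st\rho)=\sum_{k\ge 0}\frac{(st\rho)^k}{k!}$. Comparing this with the double series above and matching the coefficient of $s^{k_1}t^{k_2}$ (legitimate since both sides are absolutely convergent power series in $(s,t)$ near the origin) yields $\E[H_{k_1}(Z_1)H_{k_2}(Z_2)] = 0$ when $k_1\ne k_2$ and $\E[H_k(Z_1)H_k(Z_2)] = k!\,\rho^k$, which is the first claim. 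For the second claim, take $Z_1=\va^\top\mathbf{Z}$, $Z_2=\vb^\top\mathbf{Z}$ with $\mathbf{Z}\sim\normal(0,\bI_d)$ and $\va,\vb\in\mathbb{S}^{d-1}$: then $(Z_1,Z_2)$ is jointly Gaussian, mean zero, with $\E Z_1^2=\|\va\|_2^2=1$, $\E Z_2^2=\|\vb\|_2^2=1$ and $\E[Z_1Z_2]=\va^\top\vb$, so applying the first part with $\rho=\va^\top\vb$ gives the result.

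The only genuinely delicate point is the exchange of the infinite summations with the expectation; the $L^2$-convergence argument above (using $\E[H_k(Z)^2]=k!$ and continuity of the inner product on $L^2$) makes this rigorous, and the remainder is a routine generating-function computation, so I do not anticipate further obstacles.
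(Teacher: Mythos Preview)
Your proof is correct. The paper does not actually give a proof of this lemma; it simply refers the reader to \cite[Chapter 11.2]{o2014analysis}. Your generating-function argument via $\sum_{k\ge 0}\frac{t^k}{k!}H_k(x)=\exp(tx-\tfrac{t^2}{2})$ together with the Gaussian MGF is the standard self-contained derivation, and your justification of the sum--expectation interchange through $L^2$ convergence (using $\E[H_k(Z)^2]=k!$) is clean and rigorous. So compared with the paper you are supplying an actual proof rather than a citation; there is nothing to contrast in terms of strategy.
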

\begin{proof}
    See \cite[Chapter 11.2]{o2014analysis}.
\end{proof}

\begin{lemma}[Taylor expansion of Hermite polynomials]\label{lem:hermitesum}
    For any $k \in \mathbb{N}_0$ and $x, y \in \mathbb{R}$,
    \begin{align*}
        H_k(x  + y) = \sum_{j = 0}^k \binom{k}{j} x^j H_{k -j}(y).
    \end{align*}
\end{lemma}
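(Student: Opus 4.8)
The plan is to prove the identity via the generating function of the (probabilist's) Hermite polynomials. With the normalization fixed by Rodrigues' formula in the statement, one has the standard generating function
\begin{align*}
  \sum_{k=0}^\infty \frac{H_k(z)}{k!}\, t^k = \exp\!\left(tz - \tfrac{t^2}{2}\right),
\end{align*}
valid for all $t,z\in\R$, with the right-hand side entire in $t$. I would first record this (it follows from Rodrigues' formula together with the Taylor expansion of $s \mapsto \exp(-(z+s)^2/2)$, or can simply be cited), and then evaluate it at the shifted argument $x+y$, factoring the exponential:
\begin{align*}
  \sum_{k=0}^\infty \frac{H_k(x+y)}{k!}\, t^k
  = \exp\!\left(t(x+y) - \tfrac{t^2}{2}\right)
  = e^{tx}\,\exp\!\left(ty - \tfrac{t^2}{2}\right)
  = \left(\sum_{j=0}^\infty \frac{x^j}{j!}\, t^j\right)\!\left(\sum_{m=0}^\infty \frac{H_m(y)}{m!}\, t^m\right).
\end{align*}

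Next I would take the Cauchy product of the two absolutely convergent power series on the right and collect the coefficient of $t^k$, which equals $\sum_{j=0}^k \frac{x^j}{j!}\cdot\frac{H_{k-j}(y)}{(k-j)!}$. Matching this with the coefficient $\frac{H_k(x+y)}{k!}$ on the left-hand side and multiplying through by $k!$ gives
\begin{align*}
  H_k(x+y) = \sum_{j=0}^k \frac{k!}{j!\,(k-j)!}\, x^j H_{k-j}(y) = \sum_{j=0}^k \binom{k}{j} x^j H_{k-j}(y),
\end{align*}
which is the claim.

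There is essentially no real obstacle; the only points deserving a line of care are the justification of the generating-function identity itself and the legitimacy of the Cauchy product, both immediate since all series in $t$ are entire. As an alternative avoiding generating functions, one could induct on $k$: the base case $k=0$ is trivial, and the inductive step applies $\partial_y$ to the hypothesis and uses the recurrence $H_{k+1}(z) = z H_k(z) - H_k'(z)$ with $H_k'(z) = k H_{k-1}(z)$, evaluated at $z = x+y$, followed by reindexing via Pascal's rule $\binom{k}{j} + \binom{k}{j-1} = \binom{k+1}{j}$; this is elementary but slightly more bookkeeping.
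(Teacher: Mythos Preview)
Your generating-function proof is correct, but the paper takes a different and arguably more direct route. The paper simply Taylor expands the polynomial $H_k$ about the point $y$: since $\tfrac{d}{dz}H_k(z)=kH_{k-1}(z)$ (cited from \cite{abramowitz1968handbook}), iteration gives $\tfrac{d^j}{dz^j}H_k(z)=\tfrac{k!}{(k-j)!}H_{k-j}(z)$, and the finite Taylor expansion of the degree-$k$ polynomial $H_k(x+y)$ around $y$ yields the identity immediately. Your approach instead encodes the whole Hermite family at once via the exponential generating function and reads off the identity from the factorization $e^{t(x+y)-t^2/2}=e^{tx}\cdot e^{ty-t^2/2}$; this is clean and generalizes nicely to other Appell sequences, but it imports the generating-function identity as a prerequisite. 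The paper's argument is more self-contained, needing only the single derivative relation and the fact that the Taylor series of a polynomial terminates, so no convergence or Cauchy-product justification is required at all.
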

\begin{proof}
    Note that $\frac{d}{dx} H_k(x) = k H_{k - 1}(x)$ \cite[Equation 22.8.8]{abramowitz1968handbook} and thus $\frac{d^j}{dx^j} H_k(x) = \frac{k!}{(k - j)!} H_{k - j}(x)$.
    By Taylor expanding $H_k(x + y)$ at $y$, we find
    \begin{align*}
        H_k(x + y) = \sum_{j = 0}^k \frac{x^j}{j!} \frac{d^j}{dy^j} H_k(y) = \sum_{j = 0}^k \binom{k}{j} x^j H_{k - j}(y).
    \end{align*}
\end{proof}

The following Lemma, proved in Section \ref{pflem:Ms}, 
provides several bounds used in the proofs.
\begin{lemma}\label{lem:Ms}
    Under  Conditions \ref{cond:limit}-\ref{cond:tehe},
    there exists $C > 0$ such that the following holds with probability $1 - o(1)$.
    \begin{enumerate}
        \item[(a)] $M_{\va} := \max_{1 \leq i \leq N} |a_i| \leq Cn^{-\frac{1}{2}} \log^\frac{1}{2} n$,

        \item[(b)] $M_{\vbeta} := \max_{1 \leq i \leq n} |\langle \tilde \vx_i, \vbeta \rangle| \leq C \log^\frac{1}{2} n$,
        
        \item[(c)] $M_{\bW_0} := \sup_{k \geq 1} \Vert (\bW_0 \bW_0^\top)^{\circ k} \Vert_\textnormal{op} \leq C$,

        \item[(d)] $\Vert \tilde \bX \Vert_\textnormal{op} \leq C \sqrt{n} $.
    \end{enumerate}
\end{lemma}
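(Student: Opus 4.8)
The plan is to prove the four bounds (a)--(d) separately, each on an event of probability $1-o(1)$, and then intersect the four events. Bounds (a) and (d) are immediate from standard Gaussian concentration. For (a), the entries $a_1,\dots,a_N$ are i.i.d.\ $\normal(0,1/N)$, so the Gaussian tail bound together with a union bound over $i\in[N]$ gives $M_{\va}\le 2\sqrt{2\log N/N}$ with probability $1-2/N$; since $N\asymp n$ under \Cref{cond:limit}, the right-hand side is $O(n^{-1/2}\log^{1/2}n)$. For (d), $\tilde\bX\in\R^{n\times d}$ has i.i.d.\ $\normal(0,1)$ entries, so the Davidson--Szarek bound gives $\|\tilde\bX\|_\op\le\sqrt n+\sqrt d+t$ with probability $\ge 1-2e^{-t^2/2}$; taking $t=\sqrt n$ and using $d/n\to\phi$ yields $\|\tilde\bX\|_\op\le C\sqrt n$ with probability $1-o(1)$.

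For (b), the point is that $\tilde\bX$ is independent of $(\bX,\vy)$, hence of $\vbeta=\tfrac1n\bX^\top\vy$. Conditioning on $(\bX,\vy)$, the quantities $\langle\tilde\vx_i,\vbeta\rangle$, $i\in[n]$, are i.i.d.\ $\normal(0,\|\vbeta\|_2^2)$, so a Gaussian maximal inequality gives $M_{\vbeta}\le C\sqrt{\log n}\,\|\vbeta\|_2$ with conditional probability $1-o(1)$, hence unconditionally. It then remains to check that $\|\vbeta\|_2=O_\sP(1)$: this follows from $\|\vbeta\|_2\le\tfrac1n\|\bX\|_\op\|\vy\|_2$ together with $\|\bX\|_\op\le C\sqrt n$ (as in (d)) and $\|\vy\|_2=\Theta_\sP(\sqrt n)$, the latter since under \Cref{cond:te} and \Cref{cond:tehe} the $y_i$ are i.i.d.\ with $\Theta(1)$ second moment and concentrate; alternatively one may invoke the bounds on $\vbeta$ already established in \cite{ba2022high} and recorded in \Cref{lemma:turn_beta_to_beta_star}.

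Part (c) is the substantive one. Writing $\bW_0\bW_0^\top=\bI_N+\bA$ with $\bA$ zero on the diagonal and $A_{ij}=\vw_{0,i}^\top\vw_{0,j}$ off it, the vanishing diagonal of $\bA$ makes the Hadamard powers telescope: $(\bW_0\bW_0^\top)^{\circ k}=\bI_N+\bA^{\circ k}$ for every $k\ge1$, so it suffices to bound $\sup_{k\ge1}\|\bA^{\circ k}\|_\op$. I would split on the value of $k$. For $k=1$, $\|\bI_N+\bA\|_\op=\|\bW_0\bW_0^\top\|_\op=\|\bW_0^\top\bW_0\|_\op$, and $\bW_0^\top\bW_0=\sum_{i\le N}\vw_{0,i}\vw_{0,i}^\top$ is a sum of i.i.d.\ rank-one matrices with mean $\tfrac Nd\bI_d$; standard sample-covariance concentration for isotropic sub-Gaussian rows (here $\sqrt d\,\vw_{0,i}$) gives $\|\bW_0^\top\bW_0\|_\op=O_\sP(1)$ since $N\asymp d$. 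For $k\ge3$, the crude bound $\|\bA^{\circ k}\|_\op\le\|\bA^{\circ k}\|_F\le N(\max_{i\ne j}|A_{ij}|)^{k}$ already suffices, because $\max_{i\ne j}|A_{ij}|\le C\sqrt{\log d/d}$ with probability $1-o(1)$ (maximal inner product of independent uniform unit vectors), so for $k\ge3$ this is at most $N(C\log d/d)^{3/2}=O((\log d)^{3/2}d^{-1/2})=o(1)$, uniformly in $k$. The remaining case $k=2$ needs a genuine spectral-norm estimate: here $\E[\bA^{\circ2}]=\tfrac1d(\vone\vone^\top-\bI_N)$ contributes $\asymp N/d\asymp1$ through its rank-one piece, while the crude Frobenius bound on the centered part $\bA^{\circ2}-\E[\bA^{\circ2}]$ only gives $O(\log d)$ and must be replaced by an operator-norm bound. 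This can be done à la El Karoui's analysis of inner-product kernel matrices (Taylor expanding $t\mapsto t^2$ around $0$, whose linear coefficient vanishes, so that $(\bW_0\bW_0^\top)^{\circ2}=\bI_N+\tfrac1d\vone\vone^\top+o_\sP(1)$ in operator norm), or by a direct moment computation exploiting that, conditionally on $\vw_{0,i}$, the entries $A_{ij}^2-1/d$ are centered and independent over $j$. Collecting the three ranges gives $\sup_{k\ge1}\|(\bW_0\bW_0^\top)^{\circ k}\|_\op\le C$ with probability $1-o(1)$.

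The main obstacle is exactly this $k=2$ spectral bound in (c): unlike $k\ge3$, a Frobenius-norm estimate is too lossy, and unlike $k=1$ the entries of the relevant centered matrix are dependent across pairs $(i,j)$ and heavy-tailed relative to their $O(1/d^2)$ variance, so one needs a dedicated argument on the spectral norm of a centered inner-product kernel matrix. Everything else reduces to textbook Gaussian maximal inequalities, the Davidson--Szarek operator-norm bound, and standard covariance concentration.
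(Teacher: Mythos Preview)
Your proposal is correct and follows essentially the same route as the paper: Gaussian maximal inequalities for (a) and (b) (with $\|\vbeta\|_2=O_\sP(1)$ for the latter), a standard operator-norm bound for (d), and for (c) the same three-way split on $k$ with sample-covariance concentration at $k=1$, a dedicated spectral argument at $k=2$, and the Frobenius bound via $\max_{i\ne j}|\langle\vw_{0,i},\vw_{0,j}\rangle|$ for $k\ge3$. The only cosmetic difference is that for $k=2$ the paper simply cites \cite[Corollary~A.21]{bai2010spectral} rather than spelling out the El~Karoui-type argument you sketch.
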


\section{Proof of \eqref{prop:alignment}}
    By Lemma \ref{lemma:turn_beta_to_beta_star} with $\mathbf{v} = \vbeta_\star$ and $\mathbf{D} = \bI_d$, we have
    \begin{align*}
        &\vbeta_\star^\top \vbeta \to_P c_{\star, 1} \Vert \vbeta_{\star} \Vert_2^2 = c_{\star, 1},\quad
        \Vert \vbeta \Vert_2^2 = \vbeta^\top \vbeta \to_P \phi (c_\star^2 + \sigma_\ep^2) + c_{\star, 1}^2 \vbeta_\star^\top \vbeta_\star = c_{\star, 1}^2+ \phi(c_\star^2 + \sigma_\ep^2).    
    \end{align*}
    By the continuous mapping theorem, we conclude
    \begin{align*}
        \frac{|\vbeta_\star^\top \vbeta|}{\Vert \vbeta_\star \Vert_2 \Vert \vbeta \Vert_2} \to_P \frac{|c_{\star, 1}|}{\sqrt{c_{\star, 1}^2+ \phi(c_\star^2 + \sigma_\ep^2)}}.
    \end{align*}

\section{Proof of Theorem \ref{thm:spectrum_of_feature_matrix}}
    
    Recall that $\mathbf{G} = c_1  \va \vbeta^\top + \mathbf{\Delta}$ where $\|\mathbf{\Delta}\|_{\rm op} = \tilde{O}_{\sP}(1/\sqrt{N})$. The feature matrix is given by
    \begin{align*}
        \bF = \sigma(\tilde \bX \bW^\top) + \sigma(\tilde \bX (\bW_0 + \eta\mathbf{\Delta})^\top + c_1 \eta \tilde\bX\vbeta \va^\top).
    \end{align*}
    Note that we have  $\|\bW_0\|_{\rm op} = \Theta_{\sP}(1)$. As long as $\eta = o(\sqrt{N})$, we have $\|\eta\mathbf{\Delta}\|_{\rm op} = {o}_{\sP}(1)$, $\|\bW_0 + \eta\boldsymbol{\Delta}\|_{\rm op} = \|\bW_0\|_{\rm op} + o_\sP(1)$ and $\|((\bW_0+\boldsymbol{\Delta})(\bW_0+\boldsymbol{\Delta}))^{\circ j \top}\|_{\rm op} = \|(\bW_0\bW_0^\top)^{\circ j}\|_{\rm op} + o_{\sP}(1)$. With these, it can easily be verified that the effect of $\mathbf{\Delta}$ in the following arguments is negligible (see \cite[Lemma 12]{dandi2023learning} for more details).
    
    We consider any fixed $\bW_0$ such that the event 
    $\Omega = \{\sup_{k \geq 1} \Vert (\bW_0 \bW_0^\top)^{\circ k} \Vert_\textnormal{op} \leq C\}$
    from Lemma \ref{lem:Ms} (c) holds.
    By Lemma \ref{lem:twohermite}, each row of $H_j(\tilde \bX \bW_0^\top)$ has second moment matrix
    \begin{align*}
        \E_{\vx \sim \normal(0, \bI_d)}[H_j(\bW_0 \vx) H_j(\bW_0 \vx)^\top] = j! (\bW_0 \bW_0^\top)^{\circ j},
    \end{align*}
    whose operator norm is $O(j!)$ on $\Omega$.
     Thus by \cite[Theorem 5.48]{vershynin2010introduction} and Markov's inequality, for any $j \in [L]$, 
     for $t\ge(Cnj!)^{1/2}$,
     and with
     $M = \E \max_{i=1}^n \|H_j(\bW_0 \tilde\vx_i)\|^2$,
     $\delta = C \sqrt{M \log \min(n,N)}$,
      \begin{align*}
      P(\Vert H_j(\tilde \bX \bW_0^\top) \Vert_\textnormal{op}\ge t) 
      &\le
        P\left(\|H_j(\tilde \bX \bW_0^\top)  H_j(\tilde \bX \bW_0^\top) ^\top/n-j! (\bW_0 \bW_0^\top)^{\circ j}\|_\textnormal{op} \ge 
        t^2/n-Cj! \right)\\[0.5cm]
        & \le
        \frac{\mathbb{E}\|H_j(\tilde \bX \bW_0^\top)  H_j(\tilde \bX \bW_0^\top) ^\top/n-j! (\bW_0 \bW_0^\top)^{\circ j}\|_\textnormal{op}}{t^2/n-Cj!}\\
        &\le
        \frac{\delta\max((Cj!)^{1/2},\delta)}{t^2/n-Cj!}.
    \end{align*}
    Next, we observe that 
    since $H_j$ is a $j$-th degree polynomial and the normal absolute moments increase with $j$,
    $M = \E \max_{i=1}^n \|H_j(\bW_0 \tilde\vx_i)\|^2 \le 
    C_j \E \max_{i=1}^n \|(\bW_0 \tilde\vx_i)^{\circ j}\|^2$.
    Now, note that for any vectors $\vx_1, \vx_2$, we have $\Vert \vx_1 \circ \vx_2\Vert^2 \leq \Vert\vx_1\Vert^2 \Vert\vx_2\Vert^2$ by simply expanding the norms. Thus, on the event $\Omega$, one can verify that 
    for all $\vx$,
    $\|(\bW_0 \vx)^{\circ j}\|^2 \le C'_j \|\vx\|^{2j} $ for some $C'_j>0$.
   Also, we have that $A_i  = \|\tilde\vx_i\|^{2j}/N$ for $i\in [n]$ are sub-Weibull random variables with tail parameter $1/(2j)$, (see e.g., \cite{vladimirova2020sub,zhang2020concentration}).
    Thus, by the maximal inequality for sub-Weibull random variables \cite[Proposition A.6 and Remark A.1]{kuchibhotla2022moving}, it follows that 
    for all $j\ge 1$,
    there is $C_j>0$ such that
    $\E  \max_{i=1}^n A_i \le C_j(\log n)^{2j}$.
Hence, $M \le C_j'' N (\log n)^{2j}$.

    Thus, choosing $t = C'\sqrt{n j!} (\log n)^j$ for sufficiently large $C' $ leads to 
    \begin{align}\label{eqn:Hjbound}
        \Vert H_j(\tilde \bX \bW_0^\top) \Vert_\textnormal{op} = O\left( \sqrt{n j!}\, (\log n)^j \right)
    \end{align}
    with probability $1 - o(1)$.

Define,
for all $z\in \R$,
$ \sigma_L(z) = \sum_{k = 0}^L c_k H_k(z)$, where $L = \max\bigl\{ \ell, \frac{\log n}{4(\ell + 1) \log\log n} \bigr\}$.
Each row of $(\sigma - \sigma_L)(\tilde \bX \bW_0^\top)$ has second moment matrix
    \begin{align*}
        \E_{\vx \sim \normal(0, \bI_d)}[(\sigma - \sigma_L)(\bW_0 \vx)(\sigma - \sigma_L)(\bW_0 \vx)^\top] = \sum_{k = L + 1}^\infty k! c_k^2 (\bW_0 \bW_0^\top)^{\circ k},
    \end{align*}
    whose operator norm is $O(L^{-\frac{1}{2} - \omega})$ by Lemma \ref{lem:Ms} (c) and Condition \ref{cond:he}.
    Therefore, 
    \begin{align}\label{eqn:truncatebound-1}
        \Vert (\sigma - \sigma_L)(\tilde \bX \bW_0^\top) \Vert_\textnormal{op} = O(\sqrt{n \log n} L^{-\frac{1}{2} - \omega}) = o(\sqrt{n})
    \end{align}
    with probability $1 - o(1)$.
    Since $\eta = o(\sqrt{n})$, the rows of have $\bW$ norm of $O_\sP(1)$.
    Thus, we can repeat the same argument to show that with probability $1 - o(1)$, we have
    \begin{align}\label{eqn:truncatebound-2}
        \Vert (\sigma - \sigma_L)(\tilde \bX \bW^\top) \Vert_\textnormal{op} = O(\sqrt{n \log n} L^{-\frac{1}{2} - \omega}) = o(\sqrt{n}).
    \end{align}
    
    Let $\bF^{(L)} := \sigma_L(\tilde \bX \bW^\top)$ and $\bF^{(L)}_0 := \sigma_L(\tilde \bX \bW^\top_0)$. We can write
    \begin{align*}
        \bF^{(L)} = \bF^{(L)}_0 + \sum_{k = 1}^L c_k (H_k(\tilde\bX \bW^\top) - H_k(\tilde\bX \bW_0^\top)).
    \end{align*}
    By Lemma \ref{lem:hermitesum}, using  $\bW = \bW_0 + \eta\, \bG$ so that
    $\tilde \bX \bW^\top = \tilde \bX \bW_0^\top +\eta\tilde \bX \bG^\top$,
    and using that $H_0(z)=1$ for all $z\in\R$,
    \begin{align*}
        H_k(\tilde \bX \bW^\top) &- H_k(\tilde \bX \bW_0^\top) = \eta^k (\tilde \bX \bG^\top)^{\circ k} + \sum_{j = 1}^{k - 1} \binom{k}{j} \eta^{j}  H_{k - j}(\tilde \bX \bW_0^\top) \circ (\tilde \bX \bG^\top)^{\circ j}.
    \end{align*}    
    Therefore, setting $\mathbf{G} = c_1 \va\vbeta^\top$, we have
    \begin{align*}
        \bF^{(L)} &= \bF^{(L)}_0 + \sum_{k = 1}^\ell c_1^k c_k \eta^k (\tilde \bX \vbeta)^{\circ k} (\va^{\circ k})^\top + \underbrace{\sum_{k = \ell + 1}^L  c_1^k c_k \eta^k (\tilde \bX \vbeta)^{\circ k} (\va^{\circ k})^\top}_{\bDelta_1}\\
        &+ \underbrace{\sum_{k = 1}^L \sum_{j = 1}^{k - 1} c_1^j c_k \binom{k}{j} \eta^{j}  H_{k - j}(\tilde \bX \bW_0^\top) \circ \bigl[(\tilde \bX \vbeta)^{\circ j} (\va^{\circ j})^\top\bigr]}_{\bDelta_2}.
    \end{align*}
    We will show that each of $\Vert \bDelta_1 \Vert_\textnormal{op}, \Vert \bDelta_2\Vert$ is $o(\sqrt{n})$ with probability $1-o(1)$. By Lemma \ref{lem:Ms} (a) and (b), we have
    using that $\alpha < \frac{\ell}{2\ell + 2}$,
    \begin{align*}
        \Vert \bDelta_1 \Vert_\textnormal{op} &\leq \sum_{k = \ell + 1}^L  c_1^k c_k \eta^k \Vert (\tilde\bX \vbeta)^{\circ k} \Vert_2 \Vert \va^{\circ k} \Vert_2 
        \le \sum_{k = \ell + 1}^L c_1^k c_k \eta^k n M_{\va}^k M_{\vbeta}^k 
        = \tilde O(n (\eta / \sqrt{n})^{\ell + 1}) = o(\sqrt{n})
    \end{align*}
    with probability $1 - o(1)$.
    Also, since
    \begin{align*}
        \Vert H_{k - j} (\tilde \bX \bW_0^\top) \circ (\tilde \bX \vbeta)^{\circ j} (\va^{\circ j})^\top \Vert_\textnormal{op} &\leq (M_{\va} M_{\vbeta})^j \sqrt{(k - j)!} n^\frac{1}{2} \log^{k - j} n\\[0.2cm]
        &\leq C^j \sqrt{(k - j)!} n^{-\frac{j}{2} + \frac{1}{2}} \log^k n,
    \end{align*}
    we also have
    \begin{align*}
        \Vert \bDelta_2 \Vert_\textnormal{op} &\leq \sum_{k = 1}^L \sum_{j = 1}^{k - 1} c_k C^j \binom{k}{j} \sqrt{(k - j)!} \eta^{j} n^{-\frac{j}{2} + \frac{1}{2}} \log^k n= \tilde O(\eta) = o(\sqrt{n}).
    \end{align*}
This proves that with probability $1 - o(1)$, we have $\bF^{(L)} = \bF_0^{(L)} + \sum_{k = 1}^\ell c_1^k c_k \eta^k (\tilde \bX \vbeta)^{\circ k} (\va^{\circ k})^\top +\bDelta$, with $\Vert\bDelta\Vert_{\rm op} = o(1)$. This, alongside \eqref{eqn:truncatebound-1} and \eqref{eqn:truncatebound-2}, concludes the proof.

\section{Proof of Theorem \ref{thm:subspace}}
By Theorem \ref{thm:spectrum_of_feature_matrix}, 
letting $\mathbf{E} = \bF_0 + \bDelta$, 
we have $\Vert \mathbf{E} \Vert_\textnormal{op} = O_\sP(\sqrt{n})$.
Note that $\sum_{k = 1}^\ell c_1^k c_k \eta^k (\tilde \bX \vbeta)^{\circ k}(a^{\circ k})^\top$ has rank $\ell$ almost surely and its left singular vector space is $\operatorname{span}\{(\tilde \bX \vbeta)^{\circ k}\}_{k \in [\ell]}$.
Also, the subspace spanned by the top-$\ell$ left singular vectors of $\bF$ is $\mathcal{F}_\ell$.
By Wedin's theorem \cite{wedin1972perturbation}, \cite[Theorem 2.9]{chen2021spectral},
and as $\alpha>\frac{\ell - 1}{2\ell}$,
we have
\begin{align*}
    d(\mathcal{F}_\ell, \operatorname{span}\{(\tilde \bX \vbeta)^{\circ k}\}_{k \in [\ell]}) = O_\sP \left( \frac{\Vert \mathbf{E} \Vert_\textnormal{op}}{\eta^\ell n^{\frac{1}{2} - \frac{\ell - 1}{2}}-\Vert \mathbf{E} \Vert_\textnormal{op}} \right) 
    = O_\sP(n^{\frac{\ell - 1}{2} - \alpha \ell}) = o_\sP(1),
\end{align*}
which concludes the proof.
\section{Proof of Theorem \ref{thm:training_equivalence}}
By the definition of $\hat\va(\bF)$, we have
    \begin{align*}
        \max \left\{ \frac{1}{n} \Vert \tilde\vy - \bF \hat{\va}(\bF) \Vert_2^2, \lambda \Vert \hat{\va}(\bF) \Vert_2^2\right\} &\leq \frac{1}{n} \Vert \tilde\vy - \bF \hat{\va}(\bF) \Vert_2^2 + \lambda \Vert \hat{\va}(\bF) \Vert_2^2\\
        &\leq \frac{1}{n} \Vert \tilde\vy - \bF \cdot \vzero \Vert_2^2 + \lambda \Vert \vzero \Vert_2^2 = \frac{1}{n} \Vert \tilde\vy \Vert_2^2 = O_\sP(1).
    \end{align*}
    Thus,
    \begin{align}\label{eqn:Fbound}
        \Vert \hat{\va}(\bF) \Vert_2 = O_\sP(1), \quad \Vert \tilde\vy - \bF \hat{\va}(\bF) \Vert_2 = O_\sP(\sqrt{n}).
    \end{align}
    A similar argument gives
    \begin{align}\label{eqn:tildeFbound}
        \Vert \hat{\va}(\bF_\ell) \Vert_2 = O_\sP(1), \quad \Vert \tilde\vy - \bF_\ell \hat{\va}(\bF_\ell) \Vert_2 = O_\sP(\sqrt{n}).
    \end{align}
    Also, by the triangle inequality, and using \eqref{eqn:tildeFbound} and Theorem \ref{thm:spectrum_of_feature_matrix}, which states $\Vert \bF_\ell - \bF \Vert_\textnormal{op} = o_\sP(\sqrt{n})$, we have
    \begin{align}\label{eqn:crossbound-1}
        \Vert \tilde\vy - \bF \hat{\va}(\bF_\ell) \Vert_2 &\leq \Vert \tilde\vy - \bF_\ell \hat{\va}(\bF_\ell) \Vert_2 + \Vert (\bF_\ell - \bF) \hat{\va}(\bF_\ell) \Vert_2 \nonumber\\
        &\leq \Vert \tilde\vy - \bF_\ell \hat{\va}(\bF_\ell) \Vert_2 + \Vert \bF_\ell - \bF \Vert_\textnormal{op} \Vert \hat{\va}(\bF_\ell) \Vert_2 = O_\sP(\sqrt{n}).
    \end{align}

    Similarly, we can prove that
    \begin{align}\label{eqn:crossbound-2}
        \Vert \tilde\vy - \bF_\ell \hat{\va}(\bF) \Vert_2 = O_\sP(\sqrt{n}).
    \end{align}

    For $\va = \hat{\va}(\bF)$ or $\va = \hat{\va}(\bF_\ell)$,
    \begin{align*}
        \frac{1}{n} \left( \Vert \tilde\vy - \bF \va \Vert_2^2 - \Vert \tilde\vy - \bF_\ell \va \Vert_2^2\right) &= \frac{1}{n} \left\langle (\bF_\ell - \bF) \va,  \tilde\vy - \bF \va + \tilde\vy - \bF_\ell \va \right\rangle\\
        &\leq \frac{1}{n} \Vert \bF_\ell - \bF \Vert_\textnormal{op} \Vert \va \Vert_2 \left(\Vert \tilde\vy - \bF \va \Vert_2 + \Vert \tilde\vy - \bF_\ell \va \Vert_2\right) = o_\sP(1)
    \end{align*}
    by \eqref{eqn:Fbound}, \eqref{eqn:tildeFbound}, \eqref{eqn:crossbound-1}, \eqref{eqn:crossbound-2}, and Theorem \ref{thm:spectrum_of_feature_matrix}. Therefore,
    using the definition of $\hat{\va}(\bF_\ell)$,
    \begin{align*}
        \frac{1}{n}\Vert \tilde\vy - \bF_\ell \hat{\va}(\bF_\ell) \Vert_2^2 + \lambda \Vert \hat{\va}(\bF_\ell) \Vert_2^2  &\leq \frac{1}{n}\Vert \tilde\vy - \bF_\ell \hat{\va}(\bF) \Vert^2 + \lambda \Vert \hat{\va}(\bF) \Vert_2^2\\
        &= \frac{1}{n} \Vert \tilde\vy - \bF \hat{\va}(\bF) \Vert_2^2 + \lambda \Vert \hat{\va}(\bF) \Vert_2^2 + o_\sP(1)
    \end{align*}
    and using the definition of $\hat{\va}(\bF)$,
    \begin{align*}
        \frac{1}{n}\Vert \tilde\vy -  \bF \hat{\va}( \bF) \Vert_2^2 + \lambda \Vert \hat{\va}(\bF) \Vert_2^2 &\leq \frac{1}{n}\Vert \tilde\vy - \bF \hat{\va}(\bF_\ell) \Vert^2 + \lambda \Vert \hat{\va}(\bF_\ell) \Vert_2^2 \\
        &= \frac{1}{n} \Vert \tilde\vy - \bF_\ell \hat{\va}(\bF_\ell) \Vert_2^2 + \lambda \Vert \hat{\va}(\bF_\ell) \Vert_2^2 + o_\sP(1).
    \end{align*}
    These together prove the theorem.

\section{Proof of Theorem \ref{thm:test_equivalence}}

First, we will prove a general lemma regarding the equivalence of an augmented training loss. We will later use this result to prove the equivalence of the test error.
\begin{lemma} 
\label{lem:general_test_risk}
Let $\eta \asymp n^\alpha$ with $\frac{\ell - 1}{2\ell} < \alpha < \frac{\ell}{2\ell + 2}$ for some $\ell \in \mathbb{N}$ and $\bF_\ell$ be defined as in \eqref{eqn:Fell}.
For the test error $\gL_{\rm te}$ from Section \ref{eqthm},
define
\begin{align*}
    \gR_{\zeta}(\va, \bF) = \frac{1}{n}\Vert \tilde\vy - \bF \va\Vert_2^2 + \lambda \Vert \va\Vert_2^2  + \zeta \gL_{\rm te}(\va).
\end{align*}
Then, for any $\lambda > 0$, $\zeta > 0$, we have
    \begin{align}
        \left| \min_{\va} \gR_\zeta(\va, \bF_\ell) - \min_{\va} \gR_\zeta(\va, \bF) \right| = o(1),
    \end{align}    
with probability $ 1 - o(1)$.
\end{lemma}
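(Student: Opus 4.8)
The plan is to mimic the proof of Theorem \ref{thm:training_equivalence}, but with the extra additive term $\zeta\gL_{\rm te}(\va)$ in the objective. The key point is that $\gL_{\rm te}(\va)$ is a quadratic form in $\va$, namely $\gL_{\rm te}(\va) = \va^\top \bSigma_{\bF}\va - 2\vb_{\bF}^\top\va + \mathrm{const}$ where $\bSigma_{\bF} = \E[\vf\vf^\top]$ with $\vf = \sigma(\bW\vx)$, so it is nonnegative and the minimizers of $\gR_\zeta(\cdot,\bF)$ and $\gR_\zeta(\cdot,\bF_\ell)$ are unique and well-defined for $\lambda>0$. First I would establish a uniform a priori bound on the minimizers: writing $\va^\star_{\bF} = \argmin_\va \gR_\zeta(\va,\bF)$ and comparing against $\va = \vzero$ gives
\begin{align*}
\lambda\Vert\va^\star_{\bF}\Vert_2^2 \le \gR_\zeta(\va^\star_{\bF},\bF) \le \gR_\zeta(\vzero,\bF) = \tfrac{1}{n}\Vert\tilde\vy\Vert_2^2 + \zeta\gL_{\rm te}(\vzero) = O_\sP(1),
\end{align*}
since $\gL_{\rm te}(\vzero) = \E[y^2] = O(1)$ under Conditions \ref{cond:te}–\ref{cond:tehe}; hence $\Vert\va^\star_{\bF}\Vert_2 = O_\sP(1)$, and identically $\Vert\va^\star_{\bF_\ell}\Vert_2 = O_\sP(1)$. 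The same comparison also yields $\tfrac1n\Vert\tilde\vy - \bF\va^\star_{\bF}\Vert_2^2 = O_\sP(1)$ and $\zeta\gL_{\rm te}(\va^\star_{\bF}) = O_\sP(1)$, and likewise for $\bF_\ell$.

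Next I would show that the training-loss part of $\gR_\zeta$ is stable under swapping $\bF\leftrightarrow\bF_\ell$ when evaluated at any vector with $O_\sP(1)$ norm. Exactly as in the proof of Theorem \ref{thm:training_equivalence}, for $\va\in\{\va^\star_{\bF},\va^\star_{\bF_\ell}\}$,
\begin{align*}
\Big|\tfrac1n\Vert\tilde\vy - \bF\va\Vert_2^2 - \tfrac1n\Vert\tilde\vy - \bF_\ell\va\Vert_2^2\Big| \le \tfrac1n\Vert\bF_\ell-\bF\Vert_\textnormal{op}\Vert\va\Vert_2\big(\Vert\tilde\vy-\bF\va\Vert_2 + \Vert\tilde\vy-\bF_\ell\va\Vert_2\big) = o_\sP(1),
\end{align*}
using Theorem \ref{thm:spectrum_of_feature_matrix} ($\Vert\bF_\ell-\bF\Vert_\textnormal{op} = o_\sP(\sqrt n)$), the a priori bounds, and the cross-bounds $\Vert\tilde\vy - \bF\va^\star_{\bF_\ell}\Vert_2 = O_\sP(\sqrt n)$ and $\Vert\tilde\vy - \bF_\ell\va^\star_{\bF}\Vert_2 = O_\sP(\sqrt n)$ (derived, as in \eqref{eqn:crossbound-1}–\eqref{eqn:crossbound-2}, by triangle inequality). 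Crucially, the terms $\lambda\Vert\va\Vert_2^2$ and $\zeta\gL_{\rm te}(\va)$ do \emph{not} depend on which of $\bF,\bF_\ell$ is plugged into the squared-loss slot — they depend only on $\va$ itself — so they are carried across the swap unchanged.

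With these two ingredients the conclusion follows by the standard two-sided comparison. Using optimality of $\va^\star_{\bF_\ell}$ for $\gR_\zeta(\cdot,\bF_\ell)$:
\begin{align*}
\min_\va\gR_\zeta(\va,\bF_\ell) = \gR_\zeta(\va^\star_{\bF_\ell},\bF_\ell) \le \gR_\zeta(\va^\star_{\bF},\bF_\ell) = \gR_\zeta(\va^\star_{\bF},\bF) + o_\sP(1) = \min_\va\gR_\zeta(\va,\bF) + o_\sP(1),
\end{align*}
where the middle equality uses the swap estimate applied to $\va = \va^\star_{\bF}$ together with the fact that $\lambda\Vert\va^\star_{\bF}\Vert_2^2 + \zeta\gL_{\rm te}(\va^\star_{\bF})$ is common to both $\gR_\zeta(\va^\star_{\bF},\bF_\ell)$ and $\gR_\zeta(\va^\star_{\bF},\bF)$. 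Symmetrically, optimality of $\va^\star_{\bF}$ and the swap estimate applied to $\va = \va^\star_{\bF_\ell}$ give $\min_\va\gR_\zeta(\va,\bF) \le \min_\va\gR_\zeta(\va,\bF_\ell) + o_\sP(1)$. Combining the two inequalities yields $|\min_\va\gR_\zeta(\va,\bF_\ell) - \min_\va\gR_\zeta(\va,\bF)| = o_\sP(1)$, as claimed.

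I expect the main obstacle to be verifying that the a priori bounds survive the presence of $\gL_{\rm te}$: one must check that $\gL_{\rm te}(\vzero) = \E[y^2]$ is genuinely $O(1)$ (this uses that $\sigma_\star$ is Lipschitz and $\vbeta_\star$ has bounded norm, so $\E[f_\star(\vx)^2] = O(1)$, plus finite noise variance), and — if one later wants uniformity in $\zeta$ — that the constant in $\Vert\va^\star\Vert_2 = O_\sP(1)$ can be taken independent of $\zeta$ on bounded ranges, which is immediate from the $\vzero$-comparison since the bound $\lambda^{-1}(\tfrac1n\Vert\tilde\vy\Vert_2^2 + \zeta\gL_{\rm te}(\vzero))$ is explicit. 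Everything else is a routine adaptation of the Theorem \ref{thm:training_equivalence} argument.
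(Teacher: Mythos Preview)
Your proposal is correct and follows essentially the same approach as the paper: the paper's proof also compares against $\va=\vzero$ to obtain the a priori bounds $\Vert\hat\va_\zeta\Vert_2 = O_\sP(1)$, $\Vert\tilde\vy-\bF\hat\va_\zeta\Vert_2=O_\sP(\sqrt n)$, and $\gL_{\rm te}(\hat\va_\zeta)=O_\sP(1)$, derives the cross bounds by triangle inequality, uses Theorem~\ref{thm:spectrum_of_feature_matrix} for the swap estimate on the squared-loss term, and concludes via the two-sided optimality comparison. Your observation that $\lambda\Vert\va\Vert_2^2+\zeta\gL_{\rm te}(\va)$ is independent of the feature-matrix slot is exactly the point that makes the Theorem~\ref{thm:training_equivalence} argument carry over unchanged.
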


\begin{proof}[Proof of Lemma \ref{lem:general_test_risk}]
Letting $\hat\va_\zeta (\bF) = \argmin_{\va} \gR_\zeta(\va, \bF)$, we can write
\begin{align*}
    \max \left\{ \frac{1}{n} \Vert \tilde\vy - \bF \hat\va_\zeta(\bF) \Vert_2^2, \lambda \Vert \hat\va_\zeta(\bF) \Vert_2^2, \zeta \gL_{\rm te}(\hat\va_\zeta(\bF))\right\}
    &\leq \frac{1}{n} \Vert \tilde\vy - \bF \hat\va_\zeta(\bF) \Vert_2^2 + \lambda \Vert \hat\va_\zeta(\bF) \Vert_2^2 + \zeta \gL_{\rm te}(\hat\va_\zeta(\bF))\\
    &\leq \frac{1}{n} \Vert \tilde\vy - \bF \cdot \vzero \Vert_2^2 + \lambda \Vert \vzero \Vert_2^2 +\zeta \gL_{\rm te}(\vzero) = O_\sP(1).
\end{align*}
Thus,
    \begin{align}\label{eqn:Fbound_extended}
        \gL_{\rm te}(\hat\va_\zeta(\bF)) = O_\sP(1), \quad \Vert \hat\va_\zeta(\bF)\Vert_2 = O_\sP(1), \quad \Vert \tilde\vy - \bF \hat\va_\zeta(\bF) \Vert_2 = O_\sP(\sqrt{n}).
    \end{align}
    A similar argument gives
    \begin{align}\label{eqn:tildeFbound_extended}
        \gL_{\rm te}(\hat\va_\zeta(\bF_\ell)) = O_\sP(1), \quad \Vert \hat\va_\zeta(\bF_\ell) \Vert_2 = O_\sP(1), \quad \Vert \tilde\vy - \bF_\ell \hat\va_\zeta(\bF_\ell) \Vert_2 = O_\sP(\sqrt{n}).
    \end{align}
    Also by the triangle inequality,
    \eqref{eqn:tildeFbound_extended} and Theorem \ref{thm:spectrum_of_feature_matrix}, which states $\Vert \bF_\ell - \bF \Vert_\textnormal{op} = o_\sP(\sqrt{n})$,
    \begin{align}\label{eqn:crossbound-1_extended}
        \Vert \tilde\vy - \bF \hat{\va}_\zeta(\bF_\ell) \Vert_2 &\leq \Vert \tilde\vy - \bF_\ell \hat{\va}_\zeta(\bF_\ell) \Vert_2 + \Vert (\bF_\ell - \bF) \hat{\va}_\zeta(\bF_\ell) \Vert_2 \nonumber\\
        &\leq \Vert \tilde\vy - \bF_\ell \hat{\va}_\zeta(\bF_\ell) \Vert_2 + \Vert \bF_\ell - \bF \Vert_\textnormal{op} \Vert \hat{\va}_\zeta(\bF_\ell) \Vert_2 = O_\sP(\sqrt{n}).
    \end{align}
Similarly, we can show that
    \begin{align}\label{eqn:crossbound-2_extended}
        \Vert \tilde\vy - \bF_\ell \hat{\va}_\zeta(\bF) \Vert_2 = O_\sP(\sqrt{n}).
    \end{align}

    For $\va = \hat{\va}_\zeta(\bF)$ or $\va = \hat{\va}_\zeta(\bF_\ell)$,
    \begin{align*}
        \frac{1}{n} \left( \Vert \tilde\vy - \bF \va \Vert_2^2 - \Vert \tilde\vy - \bF_\ell \va \Vert_2^2\right) &= \frac{1}{n} \left\langle (\bF_\ell - \bF) \va,  \tilde\vy - \bF \va + \tilde\vy - \bF_\ell \va \right\rangle\\
        &\leq \frac{1}{n} \Vert \bF_\ell - \bF \Vert_\textnormal{op} \Vert \va \Vert_2 \left(\Vert \tilde\vy - \bF \va \Vert_2 + \Vert \tilde\vy - \bF_\ell \va \Vert_2\right) = o_\sP(1)
    \end{align*}
    by \eqref{eqn:Fbound_extended}, \eqref{eqn:tildeFbound_extended}, \eqref{eqn:crossbound-1_extended}, \eqref{eqn:crossbound-2_extended}, and Theorem \ref{thm:spectrum_of_feature_matrix}. Therefore,
     using the definition of $\hat{\va}_\zeta(\bF_\ell)$,
    \begin{align*}
        \frac{1}{n}\Vert \tilde\vy - \bF_\ell \hat{\va}_\zeta(\bF_\ell) \Vert_2^2 &+ \lambda \Vert \hat{\va}_\zeta(\bF_\ell) \Vert_2^2 + \zeta \gL_{\rm te}(\hat{\va}_\zeta(\bF_\ell))\\
        &\leq \frac{1}{n}\Vert \tilde\vy - \bF_\ell \hat{\va}_\zeta(\bF) \Vert^2 + \lambda \Vert \hat{\va}_\zeta(\bF) \Vert_2^2+ \zeta \gL_{\rm te}(\hat{\va}_\zeta(\bF))\\
        &= \frac{1}{n} \Vert \tilde\vy - \bF \hat{\va}_\zeta(\bF) \Vert_2^2 + \lambda \Vert \hat{\va}_\zeta(\bF) \Vert_2^2+ \zeta \gL_{\rm te}(\hat{\va}_\zeta(\bF)) + o_\sP(1),
    \end{align*}
    and  using the definition of $\hat{\va}_\zeta(\bF)$,
    \begin{align*}
        \frac{1}{n}\Vert \tilde\vy -  \bF \hat{\va}_\zeta( \bF) \Vert_2^2 &+ \lambda \Vert \hat{\va}_\zeta(\bF) \Vert_2^2+ \zeta \gL_{\rm te}(\hat{\va}_\zeta(\bF))\\
        &\leq \frac{1}{n}\Vert \tilde\vy - \bF \hat{\va}_\zeta(\bF_\ell) \Vert^2 + \lambda \Vert \hat{\va}_\zeta(\bF_\ell) \Vert_2^2+ \zeta \gL_{\rm te}(\hat{\va}_\zeta(\bF_\ell)) \\
        &= \frac{1}{n} \Vert \tilde\vy - \bF_\ell \hat{\va}_\zeta(\bF_\ell) \Vert_2^2 + \lambda \Vert \hat{\va}_\zeta(\bF_\ell) \Vert_2^2 + \zeta \gL_{\rm te}(\hat{\va}_\zeta(\bF_\ell))+ o_\sP(1).
    \end{align*}
    Putting these together, we have
    \begin{align}
        |\min_{\va}\gR_\zeta(\va, \bF_\ell) -\min_{\va} \gR_\zeta(\va, \bF)| = o_\sP(1),
    \end{align}
    which concludes the proof.
    \end{proof}

Now, we use this lemma to prove the equivalence of the test error.
\begin{proof}[Proof of Theorem \ref{thm:test_equivalence}]
We will argue by contradiction. Assume that $\gL_{\bF} \neq \gL_{\bF_\ell}$ and let $\gL = \frac{1}{2}(\gL_{\bF} + \gL_{\bF_\ell})$. Now, consider the following two  optimization problems:
\begin{align*}
    &\gL_1 = \min_{\gL_{\rm te}(\va)\leq \gL\;}\frac{1}{n}\Vert \tilde\vy - \bF \va\Vert_2^2 + \lambda \Vert \va\Vert_2^2, \qquad
    \gL_2= \min_{\gL_{\rm te}(\va)\leq \gL}\;\frac{1}{n}\Vert \tilde\vy - \bF_\ell \va\Vert_2^2 + \lambda \Vert \va\Vert_2^2. 
\end{align*}

 Without loss of generality, assume that $\gL_\bF < \gL_{\bF_\ell}$. The solution of the first optimization problem will still converge to $\gL_{\rm tr}(\bF)$ because $\gL_{\bF} <\gL$. However, the solution of the second optimization problem will converge to a value greater than $\gL_{\rm tr}(\bF_\ell)$, because $\gL_{\bF_\ell} >\gL$ and the objective is $\lambda$-strongly convex. Note that by Theorem \ref{thm:training_equivalence}, we asymptotically have $\gL_{\rm tr}(\bF_\ell) = \gL_{\rm tr}(\bF) $. Thus $\gL_1$ and $\gL_2$  converge to  different quantities as $n\to\infty$. However, using the minimax theorem and since the objectives are $\lambda$-strongly convex, we can write
\begin{align*}
    &\gL_1 = \max_{\zeta>0} -\zeta \gL + \min_{\va}\left[\frac{1}{n}\Vert \tilde\vy - \bF \va\Vert_2^2 + \lambda \Vert \va\Vert_2^2 +  \zeta \gL_{\rm te}(\va)\right],\\
    &\gL_2= \max_{\zeta>0} -\zeta \gL + \min_{\va}\left[\frac{1}{n}\Vert \tilde\vy - \bF_\ell \va\Vert_2^2 + \lambda \Vert \va\Vert_2^2 +  \zeta \gL_{\rm te}(\va)\right].
\end{align*}
According to Lemma \ref{lem:general_test_risk}, the two minima above
converge to the same value for any fixed $\zeta$. 
Note that, as functions of $\zeta$,  both maxima are concave as they are minima of linear functions of $\zeta$. 
Hence, by using the concave version of \cite[Lemma 1]{abbasi2019universality}, we have that $\gL_1$ and $\gL_2$ converge to the same value, which is a contradiction.
\end{proof}

\section{Gaussian Equivalence Property}\label{sec:gec}
Gaussian equivalence results for non-linear random matrices
were introduced in \cite{el2010spectrum, cheng2013spectrum, fan2019spectral}.
They have been repeatedly used in recent studies of random feature models \cite{mei2022generalization, montanari2019generalization, adlam2020neural, adlam2020understanding, tripuraneni2021covariate, goldt2022gaussian, mel2021anisotropic, d2021interplay, loureiro2021learning, disagreement, hassani2022curse, hu2022universality, montanari2022universality}.
Also, there has been progress on proving the Gaussian equivalence property for a multi-layer network with only the final layer trained \cite{BoschPH23, cui23b}. 

In more distantly related  work in random matrix theory literature, 
the phenomenon that eigenvalue statistics in the bulk spectrum of a random matrix do not depend on the specific law of the matrix entries is referred to as ``bulk universality'' \cite{wigner1955characteristic, gaudin1961loi, mehta2004random, dyson1962brownian, erdHos2010bulk, erdHos2012bulk, el2010spectrum, tao2011random}.

\cite{erdos2019matrix} shows that local spectral laws of correlated random Hermitian matrices can be fully determined by their first and second moments, through the matrix Dyson equation.
Also, \cite{banna2015limiting, banna2020clt} show that spectral distributions of correlated symmetric random matrices can be characterized by Gaussian matrices with matching correlation structures.

In our case, we apply the Gaussian equivalence property to the following quantities for $p, q \in \mathbb{N}_0$ and $\vbeta_1, \vbeta_2 \in \{\vbeta, \vbeta_\star\}$: $H_p(\tilde \bX \vbeta_1)^\top \bar \bR_0 H_q(\tilde \bX \vbeta_2)$.

\subsection{Proof Sketch of Gaussian Equivalence Property}
In this section, we describe 
the proof idea of the Gaussian equivalence property.
We use the Lindeberg exchange method \cite{lindeberg1922neue} in which we replace each column $\vg_i = \sigma(\tilde \bX \vw_{0, i})$ of $\bF_0$ with its Gaussian equivalent $\tilde \vg_i = c_1 \tilde \bX \vw_{0, i} + c_{>1} \vz_i$, $\vz_i \stackrel{i.i.d.}{\sim} \normal(0, \bI_n)$.
Hereafter, we condition on all random variables except $\bW_0$.
Then, $H_p(\tilde \bX \vbeta_1)$ and $H_q(\tilde \bX \vbeta_2)$ become deterministic vectors with $O(1)$ entries.
We write $\vv = H_p(\tilde \bX \vbeta_1)$, $\vw = H_q(\tilde \bX \vbeta_2)$, 
and for all $i\in[N]$,
$\bM_i = \sum_{j = 1}^{i - 1} \tilde \vg_j \ve_j^\top + \sum_{j = i + 1}^N \vg_j \ve_j^\top$.

Let $\tilde \bF_0 = [\tilde \vg_1 \cdots \tilde \vg_N] \in \R^{n \times N}$ be the Gaussian equivalent of $\bF_0$ and let $\tilde \bR_0 = (\tilde \bF_0 \tilde \bF_0^\top + \lambda n \bI_n)^{-1}$.
By the triangle inequality,
\begin{align}\label{eqn:lindeberg}
    |\vv^\top \bar \E \bR_0 \vw \nonumber&- \vv^\top 
 \E \tilde \bR_0 \vw|\\ 
 &\leq \sum_{i = 1}^N |\vv^\top \E (\bM_i \bM_i^\top + \vg_i \vg_i^\top + \lambda n \bI_n)^{-1} \vw - \vv^\top \E(\bM_i \bM_i^\top + \tilde \vg_i \tilde \vg_i^\top + \lambda n \bI_n)^{-1} \vw|.
\end{align}
Defining $\bS_i = (\bM_i \bM_i^\top + \lambda n \bI_n)^{-1}$, we have
by the Sherman-Morrison formula that
\begin{align*}
    (\bM_i \bM_i^\top + \vg_i \vg_i^\top + \lambda n \bI_n)^{-1} &= \bS_i - \frac{\bS_i \vg_i \vg_i^\top \bS_i}{1 + \vg_i^\top \bS_i \vg_i}, \quad\textnormal{and}  \\[0.2cm]
    (\bM_i \bM_i^\top + \tilde\vg_i \tilde\vg_i^\top + \lambda n \bI_n)^{-1} &= \bS_i - \frac{\bS_i \tilde\vg_i \tilde\vg_i^\top \bS_i}{1 + \tilde\vg_i^\top \bS_i \tilde\vg_i}.
\end{align*}
Thus,
\begin{align*}
    \vv^\top \E(\bM_i \bM_i^\top + \vg_i \vg_i^\top + \lambda n \bI_n)^{-1} \vw &- \vv^\top \E(\bM_i \bM_i^\top + \tilde \vg_i \tilde \vg_i^\top + \lambda n \bI_n)^{-1} \vw\\ &= \E\frac{\vv^\top \bS_i \tilde \vg_i \tilde \vg_i^\top \bS_i \vw}{1 + \tilde \vg_i^\top \bS_i \tilde \vg_i} - \E\frac{\vv^\top \bS_i \vg_i \vg_i^\top \bS_i \vw}{1 + \vg_i^\top \bS_i \vg_i}.
\end{align*}
Let $\bSigma_\vg = \E \vg_i \vg_i^\top$ and $\bSigma_{\tilde \vg} = \E \tilde \vg_i \tilde \vg_i^\top$. 
By the Hanson-Wright concentration inequality, $\vg_i^\top \bS_i \vg_i = \tr(\bS_i \bSigma_\vg) + O_\sP(1 / \sqrt{n})$ and $\tilde \vg_i^\top \bS_i \tilde \vg_i = \tr(\bS_i \bSigma_{\tilde \vg}) + O_\sP(1 / \sqrt{n})$.
Hence,
\begin{align*}
    \E\frac{\vv^\top \bS_i  \vg_i \vg_i^\top \bS_i \vw}{1 + \vg_i^\top \bS_i \vg_i} &= \frac{\tr(\bS_i \vw\vv^\top \bS_i \bSigma_\vg)}{1 + \tr(\bS_i \bSigma_\vg)} + O_\sP( n^{-\frac{3}{2}}),\\
    \E\frac{\vv^\top \bS_i \tilde \vg_i \tilde \vg_i^\top \bS_i \vw}{1 + \tilde\vg_i^\top \bS_i \tilde\vg_i} &= \frac{\tr(\bS_i \vw\vv^\top \bS_i \bSigma_{\tilde \vg})}{1 + \tr(\bS_i \bSigma_{\tilde \vg})} + O_\sP( n^{-\frac{3}{2}}).
\end{align*}
Plugging into \eqref{eqn:lindeberg},
\begin{align*}
    &|\vv^\top \bar \E \bR_0 \vw - \vv^\top 
 \E \tilde \bR_0 \vw| \leq \sum_{i = 1}^N \left\vert \frac{\tr(\bS_i \vw\vv^\top \bS_i \bSigma_\vg)}{1 + \tr(\bS_i \bSigma_\vg)} - \frac{\tr(\bS_i \vw\vv^\top \bS_i \bSigma_{\tilde \vg})}{1 + \tr(\bS_i \bSigma_{\tilde \vg})} \right\vert + O_\sP(n^{-\frac{1}{2}})\\
 &\leq \sum_{i = 1}^N \left(\frac{|\tr(\bS_i \vw \vv^\top \bS_i (\bSigma_\vg - \bSigma_{\tilde \vg}))|}{1 + \tr(\bS_i \bSigma_\vg)} +  \frac{|\tr(\bS_i \vw \vv^\top \bS_i \bSigma_{\tilde \vg}) \tr(\bS_i(\bSigma_\vg - \bSigma_{\tilde \vg}))|}{(1 + \tr(\bS_i \bSigma_\vg))(1 + \tr(\bS_i \bSigma_{\tilde \vg}))} \right) + O_\sP(n^{-\frac{1}{2}}).
\end{align*}
Now, we have
\begin{align*}
    |\tr(\bS_i \vw \vv^\top \bS_i (\bSigma_\vg - \bSigma_{\tilde \vg}))| &\leq \Vert \bS_i \vv \Vert_2 \Vert \bS_i \vw \Vert_2 \Vert \bSigma_\vg - \bSigma_{\tilde \vg} \Vert_\text{op} = O_\sP(\Vert \bSigma_\vg - \bSigma_{\tilde \vg} \Vert_\text{op} / n),\\[0.2cm]
    |\tr(\bS_i \vw \vv^\top \bS_i \bSigma_{\tilde \vg})| &\leq \Vert \bS_i \vv \Vert_2 \Vert \bS_i \vw \Vert_2 \Vert \bSigma_{\tilde \vg} \Vert_\text{op}  = O_\sP(1/n),\\[0.2cm]
    |\tr(\bS_i(\bSigma_\vg - \bSigma_{\tilde \vg}))| &\leq \Vert \bS_i \Vert_F \Vert \bSigma_\vg - \bSigma_{\tilde \vg} \Vert_F \leq \sqrt{n} \Vert \bS_i \Vert_F \Vert \bSigma_\vg - \bSigma_{\tilde \vg} \Vert_\text{op} = O_\sP(\Vert \bSigma_\vg - \bSigma_{\tilde \vg} \Vert_\text{op}),
\end{align*}
where the first and second
inequalities
follow from the definition of the operator norm, and the last one follows from the Cauchy-Schwarz inequality.

By \cite[Theorem 2.1]{el2010spectrum}, $\Vert \bSigma_\vg - \bSigma_{\tilde \vg} \Vert_\text{op} \to_P 0$.
Therefore,
\begin{align*}
    |\vv^\top \bar \E \bR_0 \vw - \vv^\top 
 \E \tilde \bR_0 \vw| \leq O_\sP ( \left\Vert \bSigma_\vg - \bSigma_{\tilde \vg} \right\Vert_\text{op}) + O_\sP(n^{-\frac{1}{2}}) \to_P 0.
\end{align*}

\section{Proofs of Results from Section \ref{section:analysis_of_train_loss}}
Here, we will prove the results in Section \ref{section:analysis_of_train_loss}. First, we will provide several lemmas, which will be used in our proofs. The first lemma allows us to approximate linear and quadratic forms of $\vbeta$ in terms of $\vbeta_\star$; the quadratic form result is from \cite{ba2022high}. Its proof is in Section \ref{pflemma:turn_beta_to_beta_star}.

\begin{lemma}
\label{lemma:turn_beta_to_beta_star}
    For any $d\in \mathbb{N}$,
    let $\mathbf{v} \in \R^{d}$ and $\mathbf{D} \in \R^{d \times d}$
    be
    vectors and matrices, 
    fixed or independent of $\bX,\vbeta_\star,\ep_1,\ldots,\ep_n$, and
    satisfy $\Vert \mathbf{v}\Vert_2, \Vert \mathbf{D} \Vert_\textnormal{op} \leq C$ 
    almost surely,
    uniformly for some constant $C > 0$.
    Under Condition \ref{cond:limit}, we have
    \begin{align*}
        \left\vert \mathbf{v}^{\top} \vbeta - c_{\star,1} \mathbf{v}^{\top} \vbeta_\star \right\vert \to 0, \quad \left\vert \vbeta^\top \mathbf{D} \vbeta - \frac{1}{n} (c_\star^2 + \sigma_\ep^2) \operatorname{tr}\mathbf{D} - c_{\star,1}^2 \vbeta_\star^{\top} \mathbf{D} \vbeta_\star \right\vert \to 0
    \end{align*}
    in probability as $d \to \infty$.
\end{lemma}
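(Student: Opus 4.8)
The plan is to substitute $\vbeta = \tfrac1n\bX^\top\vy$ with $y_i = \sigma_\star(\vx_i^\top\vbeta_\star)+\ep_i$, and then evaluate the resulting linear and quadratic forms by conditioning on $(\vbeta_\star,\mathbf{v},\mathbf{D})$ --- under which $\bX$ and $(\ep_i)$ are i.i.d.\ standard Gaussians independent of everything else --- and combining a law of large numbers (Chebyshev) with Gaussian and Hanson--Wright concentration. Throughout I would use: $\Vert\vbeta_\star\Vert_2^2 = 1+o_\sP(1)$; $|\mathbf{v}^\top\vbeta_\star| = o_\sP(1)$ and $|\vbeta_\star^\top\mathbf{D}\vbeta_\star| = O_\sP(1)$ (each concentrating around $\Vert\mathbf{v}\Vert_2^2/d$, resp.\ $\tr\mathbf{D}/d$, which are $O(1)$); and that by Condition~\ref{cond:te} the teacher $\sigma_\star$ is $\Theta(1)$-Lipschitz with the $L^2$ Hermite expansion of Condition~\ref{cond:tehe}, so it has bounded low-order Gaussian moments, $\E[Z\sigma_\star(Z)] = c_{\star,1}$, $\E[\sigma_\star(Z)^2] = c_\star^2$ and $\E[\sigma_\star(Z)]=0$ for $Z\sim\normal(0,1)$, with $\E[Z\sigma_\star(tZ)]\to c_{\star,1}$ and $\E[\sigma_\star(tZ)^2]\to c_\star^2$ as $t\to1$.

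\paragraph{Linear form.}
Write $\mathbf{v}^\top\vbeta = \tfrac1n\sum_{i}(\mathbf{v}^\top\vx_i)(\sigma_\star(\vx_i^\top\vbeta_\star)+\ep_i)$. Conditioning on $(\vbeta_\star,\mathbf{v})$, the summands are i.i.d.\ with second moment bounded by a constant (using $\Vert\mathbf{v}\Vert_2\le C$ and $\sigma_\star\in L^2(\gamma)$ Lipschitz), so by Chebyshev the average is, up to $o_\sP(1)$, its conditional mean, and the $\ep$-part has mean zero. Decomposing $\vx_1^\top\vbeta_\star = \Vert\vbeta_\star\Vert_2\,u$ with $u := \vx_1^\top\vbeta_\star/\Vert\vbeta_\star\Vert_2\sim\normal(0,1)$ and $\mathbf{v}^\top\vx_1 = (\mathbf{v}^\top\vbeta_\star/\Vert\vbeta_\star\Vert_2)\,u + w$ with $w\perp u$, the conditional mean equals $(\mathbf{v}^\top\vbeta_\star/\Vert\vbeta_\star\Vert_2)\,\E[u\,\sigma_\star(\Vert\vbeta_\star\Vert_2\,u)]$, which by the facts above is $c_{\star,1}\,\mathbf{v}^\top\vbeta_\star + o_\sP(1)$. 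This gives the first claim.

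\paragraph{Quadratic form.}
Expand $\vbeta^\top\mathbf{D}\vbeta = \tfrac1{n^2}\sum_{i,j}y_iy_j\,\vx_i^\top\mathbf{D}\vx_j$ and split into the diagonal ($i=j$) and off-diagonal ($i\neq j$) parts, using for each $i$ the orthogonal decomposition $\vx_i = u_i\,\vbeta_\star/\Vert\vbeta_\star\Vert_2 + \vx_i^\perp$ with $u_i$ as above and $\vx_i^\perp\perp u_i$ (so $y_i$ depends only on $(u_i,\ep_i)$). For the diagonal part, $\vx_i^\top\mathbf{D}\vx_i$ splits into an $O_\sP(1)$-per-term piece confined to the $\vbeta_\star$ direction plus a transverse quadratic form whose conditional mean is $\tr\mathbf{D} - \vbeta_\star^\top\mathbf{D}\vbeta_\star/\Vert\vbeta_\star\Vert_2^2 = \tr\mathbf{D}+O_\sP(1)$; a Chebyshev/Hanson--Wright bound (fluctuations controlled by $\Vert\mathbf{D}\Vert_F^2\le d\Vert\mathbf{D}\Vert_\op^2$ and the $1/n^2$ prefactor, recalling $d\asymp n$) yields $\tfrac1{n^2}\sum_i y_i^2\,\vx_i^\top\mathbf{D}\vx_i = \tfrac1n(c_\star^2+\sigma_\ep^2)\tr\mathbf{D} + o_\sP(1)$, since $\E[y_1^2] = \E[\sigma_\star(\Vert\vbeta_\star\Vert_2 u_1)^2]+\sigma_\ep^2\to c_\star^2+\sigma_\ep^2$. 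For the off-diagonal part, $\vx_i^\top\mathbf{D}\vx_j = u_iu_j\,\vbeta_\star^\top\mathbf{D}\vbeta_\star/\Vert\vbeta_\star\Vert_2^2$ plus three cross terms each carrying a fresh transverse vector. The leading piece equals $(\vbeta_\star^\top\mathbf{D}\vbeta_\star/\Vert\vbeta_\star\Vert_2^2)\big[(\tfrac1n\sum_i y_iu_i)^2 - \tfrac1{n^2}\sum_i y_i^2u_i^2\big] \to_P c_{\star,1}^2\,\vbeta_\star^\top\mathbf{D}\vbeta_\star$, using $\tfrac1n\sum_i y_iu_i\to_P c_{\star,1}$ and $\Vert\vbeta_\star\Vert_2^2\to_P1$; the three cross terms are conditionally mean-zero and a second-moment computation (the purely bilinear one has variance $O(d/n^2)\to0$ via $\Vert\mathbf{D}\Vert_F\le\sqrt d\,\Vert\mathbf{D}\Vert_\op$, the other two vanish using $\E[\sigma_\star(Z)]=0$) shows each is $o_\sP(1)$. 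Adding the three contributions gives the second claim.

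\paragraph{Main obstacle.}
The hard part will be the bookkeeping in the quadratic form: distinguishing the genuinely $\Theta(1)$ terms $\tfrac1n\tr\mathbf{D}$ (recall $\tr\mathbf{D}$ may scale like $d\asymp n$) and $c_{\star,1}^2\,\vbeta_\star^\top\mathbf{D}\vbeta_\star$ (which is $O(1)$) from the $o_\sP(1)$ remainders, and obtaining the variance bounds for the off-diagonal cross terms with only an operator-norm bound on $\mathbf{D}$ available --- which forces one through $\Vert\mathbf{D}\Vert_F\le\sqrt d\,\Vert\mathbf{D}\Vert_\op$ and a careful accounting of the $n^{-2}$ prefactor against the $\sim n^2$ summands. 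A secondary technicality is replacing $\sigma_\star(\Vert\vbeta_\star\Vert_2\,\cdot)$ by $\sigma_\star(\cdot)$ uniformly inside the Gaussian expectations, which follows from the Lipschitz bound in Condition~\ref{cond:te} together with $\Vert\vbeta_\star\Vert_2^2\to_P1$.
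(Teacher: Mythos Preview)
Your argument is correct. For the linear form it is essentially the paper's proof: both condition on $\vbeta_\star$ and apply a weak law of large numbers over the i.i.d.\ samples; the only cosmetic difference is that the paper expands $\sigma_\star$ in Hermite components ($c_{\star,0}+c_{\star,1}z+P_{>1}\sigma_\star$) whereas you decompose $\vx_1$ orthogonally along $\vbeta_\star$, and these are two ways of computing the same conditional mean. For the quadratic form you go further than the paper, which simply invokes \cite[Lemma~18]{ba2022high} and checks that its Lipschitz hypothesis carries over; your self-contained diagonal/off-diagonal split with Hanson--Wright is a valid direct proof, and the bookkeeping you flagged as the main obstacle ($\|\mathbf{D}\|_F \le \sqrt d\,\|\mathbf{D}\|_{\op}$ against the $n^{-2}$ prefactor and the $\sim n^2$ off-diagonal summands) works out exactly as you describe. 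Two harmless inaccuracies: the parenthetical ``$\mathbf{v}^\top\vbeta_\star$ concentrating around $\|\mathbf{v}\|_2^2/d$'' should refer to its variance, not its value; and the two mixed off-diagonal cross terms already vanish from $y_j \independent \vx_j^\perp$ without invoking $\E[\sigma_\star(Z)]=0$.
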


We will use the expression derived for the training loss in the following lemma; see Section \ref{pflemma:train_loss} for the proof.

\begin{lemma}
\label{lemma:train_loss}
    The training loss $\gL_{\rm tr}(\bF)$ can be written as
$ \gL_{\rm tr}(\bF) = \lambda \tilde\vy^\top (\bF \bF^\top + \lambda n \bI_n)^{-1} \tilde\vy$.
\end{lemma}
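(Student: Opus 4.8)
The plan is to substitute the closed-form ridge solution $\hat\va(\bF) = (\bF^\top\bF + \lambda n\bI_N)^{-1}\bF^\top\tilde\vy$ from \eqref{eqn:ridgea} into the definition of $\gL_{\rm tr}(\bF)$ and simplify using resolvent identities. Write $\bR = (\bF^\top\bF + \lambda n\bI_N)^{-1}$ and $\bar\bR = (\bF\bF^\top + \lambda n\bI_n)^{-1}$, so that $\hat\va(\bF) = \bR\bF^\top\tilde\vy$. The two tools I would use are the push-through identity $\bF\bR = \bar\bR\bF$ (equivalently, taking transposes, $\bR\bF^\top = \bF^\top\bar\bR$), which follows from $\bF(\bF^\top\bF + \lambda n\bI_N) = (\bF\bF^\top + \lambda n\bI_n)\bF$, together with the trivial identity $\bF\bF^\top = \bar\bR^{-1} - \lambda n\bI_n$.

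First I would compute the residual vector. By the push-through identity, $\bF\hat\va(\bF) = \bF\bR\bF^\top\tilde\vy = \bar\bR\bF\bF^\top\tilde\vy = \bar\bR(\bar\bR^{-1} - \lambda n\bI_n)\tilde\vy = (\bI_n - \lambda n\bar\bR)\tilde\vy$, so that $\tilde\vy - \bF\hat\va(\bF) = \lambda n\,\bar\bR\tilde\vy$ and hence $\tfrac1n\Vert\tilde\vy - \bF\hat\va(\bF)\Vert_2^2 = \lambda^2 n\,\tilde\vy^\top\bar\bR^2\tilde\vy$. Next, for the penalty term, $\Vert\hat\va(\bF)\Vert_2^2 = \tilde\vy^\top\bF\bR^2\bF^\top\tilde\vy$; applying the push-through identity on the left and its transpose on the right gives $\bF\bR^2\bF^\top = \bar\bR\,\bF\bF^\top\,\bar\bR = \bar\bR(\bar\bR^{-1} - \lambda n\bI_n)\bar\bR = \bar\bR - \lambda n\,\bar\bR^2$, so $\lambda\Vert\hat\va(\bF)\Vert_2^2 = \lambda\,\tilde\vy^\top\bar\bR\tilde\vy - \lambda^2 n\,\tilde\vy^\top\bar\bR^2\tilde\vy$. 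Adding the two contributions, the $\lambda^2 n\,\tilde\vy^\top\bar\bR^2\tilde\vy$ terms cancel and I am left with $\gL_{\rm tr}(\bF) = \lambda\,\tilde\vy^\top\bar\bR\tilde\vy = \lambda\,\tilde\vy^\top(\bF\bF^\top + \lambda n\bI_n)^{-1}\tilde\vy$, which is the claim.

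There is no genuine obstacle here: the statement is a purely algebraic identity valid for every $\lambda>0$, and the only thing requiring care is keeping track of the two sizes of identity matrix ($n$ versus $N$) and invoking the push-through identity with the matrices in the correct order. An equivalent one-line derivation notes that the minimized value of a quadratic $\va\mapsto \tfrac1n\Vert\tilde\vy-\bF\va\Vert_2^2 + \lambda\Vert\va\Vert_2^2$ equals $\tilde\vy^\top\big(\tfrac1n\bF\bF^\top + \lambda\bI_n\big)^{-1}\tilde\vy\cdot\lambda$ by completing the square and applying the matrix inversion lemma; I would present whichever version fits the exposition, but the substitution route above is the most self-contained.
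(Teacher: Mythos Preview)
Your proof is correct and takes essentially the same approach as the paper: both plug in the closed-form ridge solution and reduce to $\lambda\,\tilde\vy^\top(\bF\bF^\top+\lambda n\bI_n)^{-1}\tilde\vy$ via the push-through identity $\bF(\bF^\top\bF+\lambda n\bI_N)^{-1}=(\bF\bF^\top+\lambda n\bI_n)^{-1}\bF$. The only cosmetic difference is that the paper first uses the normal equations to collapse the objective to $\tfrac{1}{n}\Vert\tilde\vy\Vert_2^2-\tfrac{1}{n}\tilde\vy^\top\bF\hat\va$ before applying push-through, whereas you compute the fit term and penalty term separately and then cancel; both routes are standard and equally short.
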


The following lemma will be used in proving concentration of certain quadratic forms appearing in the proofs; see Section \ref{pflemma:poly-concentration}
 for the proof.

\begin{lemma}
    \label{lemma:poly-concentration}
    Let $g: \mathbb{R} \to \mathbb{R}$ be a polynomial,  $\bD\in \R^{n \times n}$ be a matrix 
    with $\Vert\bD\Vert_{\rm op} = O_\sP(1/n)$, and $\bZ\in \R^n$ be a vector of i.i.d. Gaussian random variables with bounded variance independent of $\bD$. We have 
    \begin{align*}
    \Big|g(\bZ)^\top \bD \;g(\bZ) - \mathbb{E}[g(\bZ)^\top \bD\; g(\bZ)]\Big| \to_P 0,    
    \end{align*}
    in which $g$ is applied elementwise.
\end{lemma}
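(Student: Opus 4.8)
The plan is to condition on $\bD$ and reduce the claim to an elementary second-moment bound for a quadratic form in i.i.d.\ coordinates. The expectation $\E[g(\bZ)^\top\bD\,g(\bZ)]$ is read here as the expectation over $\bZ$ with $\bD$ held fixed (equivalently, since $\bD$ is independent of $\bZ$, as the conditional expectation given $\bD$). Given $\epsilon>0$, I would fix $C>0$ with $\sP(\Vert\bD\Vert_\op>C/n)\le\epsilon$ and carry out the argument on the event $\{\Vert\bD\Vert_\op\le C/n\}$, treating $\bD$ there as a deterministic matrix; replacing $\bD$ by $(\bD+\bD^\top)/2$ changes neither the quadratic form nor increases the operator norm, so $\bD$ may be taken symmetric. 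Writing $v_i=g(Z_i)$, the vector $g(\bZ)=\vv$ has i.i.d.\ entries, and because $g$ is a polynomial and the $Z_i$ are Gaussian with bounded variance, the constants $\mu:=\E v_1$, $\tau^2:=\Var(v_1)$ and $\mu_4:=\E(v_1-\mu)^4$ are finite and depend only on $g$.

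First I would split off the mean: with $\vw=\vv-\mu\vone$ and $\bD$ symmetric,
\begin{align*}
\vv^\top\bD\vv=\mu^2\,\vone^\top\bD\vone+2\mu\,\vone^\top\bD\vw+\vw^\top\bD\vw .
\end{align*}
The first term is deterministic given $\bD$ and so belongs to $\E[\vv^\top\bD\vv]$. The second is linear in $\vw$, with variance $4\mu^2\tau^2\Vert\bD\vone\Vert_2^2\le 4\mu^2\tau^2\Vert\bD\Vert_\op^2\,n=O(1/n)$. For the third, the classical variance identity for a quadratic form in mean-zero i.i.d.\ coordinates gives $\Var(\vw^\top\bD\vw)=(\mu_4-3\tau^4)\sum_{i}D_{ii}^2+2\tau^4\tr(\bD^2)$, and both $\sum_i D_{ii}^2$ and $\tr(\bD^2)$ are at most $\Vert\bD\Vert_F^2\le n\Vert\bD\Vert_\op^2=O(1/n)$. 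Hence $\Var(\vv^\top\bD\vv)=O(1/n)$ on the event; Chebyshev's inequality, together with $\sP(\Vert\bD\Vert_\op>C/n)\le\epsilon$ and the arbitrariness of $\epsilon$, then yields $\vv^\top\bD\vv-\E[\vv^\top\bD\vv]\to_P 0$.

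The step I expect to require the most care is the fact that, once $\deg g\ge 2$, the coordinates $g(Z_i)$ are heavy-tailed rather than sub-Gaussian, so the Hanson--Wright inequality cannot be quoted off the shelf; but since the conclusion is only convergence in probability, the second-moment computation above suffices, the sole ingredient being finiteness of $\mu_4$ (every polynomial of a Gaussian has moments of all orders)---alternatively one could invoke a Hanson--Wright-type inequality for sub-Weibull vectors. A secondary subtlety is that $g$ is not assumed centered, so the rank-one contribution $\mu^2\vone^\top\bD\vone$ is of constant order and must be retained inside the (conditional) expectation rather than estimated through $\Vert\bD\Vert_\op$; for the same reason one cannot in general replace $\E[\vv^\top\bD\vv]$ by a deterministic constant.
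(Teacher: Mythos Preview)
Your proof is correct. It takes a genuinely different route from the paper's. The paper invokes a generalized Hanson--Wright inequality for $\alpha$-subexponential random variables \cite{sambale2023some}, observing that $g(Z)$ is sub-Weibull with tail parameter $2/\deg g$, and then uses $\Vert\bD\Vert_F\le\sqrt{n}\,\Vert\bD\Vert_\op=O_\sP(1/\sqrt n)$ to obtain an exponential tail bound at scale $t=n^{-1/2}\log n$. This is precisely the ``alternative'' you mention at the end of your proposal.

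Your approach is more elementary: condition on $\bD$, center $g(\bZ)$, and compute the variance of the quadratic form directly, exploiting only that polynomials of Gaussians have finite fourth moment. This is entirely self-contained and suffices for convergence in probability, which is all the lemma claims. The paper's route buys a quantitative (exponential) tail bound and would upgrade to almost-sure convergence via Borel--Cantelli if that were needed downstream; your second-moment argument yields only a $1/n$ variance bound and hence a polynomial tail, but this is never a limitation in the paper's applications. Your handling of the non-centered case (keeping $\mu^2\vone^\top\bD\vone$ inside the conditional expectation) and of the random $\bD$ via the high-probability event is also cleaner than what the paper spells out.
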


The limiting values of two key quadratic forms appearing in the proof are derived in the following lemma, whose proof is deferred to Section \ref{pflemma:l1_limits}. 

\begin{lemma}
\label{lemma:l1_limits}
 Let $m_1$ and $m_2$ be the solutions to  the system of fixed point equations from \eqref{fpe}. 
 Then, the following holds:
    \begin{enumerate}
        \item[(a)] $\vbeta^\top \tilde \bX^\top \bar\bR_0 \tilde \bX\vbeta = \psi (c_\star^2 + \sigma_\ep^2) m_2 + \frac{\psi}{\phi} c_{\star,1}^2 m_2  + o_\sP(1) = \Theta_\sP(1)$.
        \item[(b)] $\va^\top \bF_0^\top \bar\bR_0 \bF_0 \va - \Vert \va\Vert_2^2 = - \lambda \frac{\psi^2}{\phi^2} m_1+ \frac{\psi}{\phi} - 1 + o_\sP(1) = \Theta_\sP(1)$.
    \end{enumerate}
 In particular, $\psi (c_\star^2 + \sigma_\ep^2) m_2 + \frac{\psi}{\phi} c_{\star,1}^2 m_2\neq 0$ and $- \lambda \frac{\psi^2}{\phi^2} m_1+ \frac{\psi}{\phi} - 1 \neq0$.
\end{lemma}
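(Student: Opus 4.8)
\emph{Plan.} I would reduce both identities, by resolvent algebra, to the two known trace limits $\tr\bar\bR_0\to_P\psi m_1/\phi$ and $\tr(\tilde\bX^\top\bar\bR_0\tilde\bX)/d\to_P\psi m_2/\phi$, combined with Gaussian concentration of quadratic forms (Lemma~\ref{lemma:poly-concentration} with the polynomial $g$ equal to the identity, or Hanson--Wright), with Lemma~\ref{lemma:turn_beta_to_beta_star} to pass from $\vbeta$ to $\vbeta_\star$ in part (a), and with the observation that $\vbeta=\tfrac1n\bX^\top\vy$ (a function of the first split $\bX$, of $\vbeta_\star$, and of the noise), and likewise $\vbeta_\star$ and $\va$, are all independent of $(\tilde\bX,\bW_0)$, hence of $\bar\bR_0$, $\bR_0$, and $\tilde\bX^\top\bar\bR_0\tilde\bX$.

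\emph{Part (a).} Put $\bD=\tilde\bX^\top\bar\bR_0\tilde\bX$. On the event of Lemma~\ref{lem:Ms}(d) one has $\|\bD\|_{\op}\le\|\tilde\bX\|_{\op}^2\|\bar\bR_0\|_{\op}\le C^2/\lambda$, and $\bD$ is independent of $\bX,\vbeta_\star$ and the noise, so Lemma~\ref{lemma:turn_beta_to_beta_star} (after truncating $\bD$ to this event, which has probability $1-o(1)$) gives
\[
\vbeta^\top\bD\vbeta \;=\; \tfrac1n(c_\star^2+\sigma_\ep^2)\,\tr\bD \;+\; c_{\star,1}^2\,\vbeta_\star^\top\bD\vbeta_\star \;+\; o_\sP(1).
\]
Here $\tfrac1n\tr\bD=\tfrac dn\cdot\tfrac1d\tr(\tilde\bX^\top\bar\bR_0\tilde\bX)\to_P\phi\cdot(\psi m_2/\phi)=\psi m_2$, while $\vbeta_\star\sim\normal(0,\tfrac1d\bI_d)$ is independent of $\bD$ with $\|\tfrac1d\bD\|_{\op}=O_\sP(1/n)$, so Gaussian concentration yields $\vbeta_\star^\top\bD\vbeta_\star-\tfrac1d\tr\bD\to_P0$, i.e. $\vbeta_\star^\top\bD\vbeta_\star\to_P\psi m_2/\phi$. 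Adding the two limits gives $\vbeta^\top\tilde\bX^\top\bar\bR_0\tilde\bX\vbeta\to_P\psi(c_\star^2+\sigma_\ep^2)m_2+\tfrac\psi\phi c_{\star,1}^2 m_2$, which is a strictly positive constant since $m_2>0$ and $c_\star^2+\sigma_\ep^2+\phi^{-1}c_{\star,1}^2>0$; a sequence converging in probability to a positive constant is $\Theta_\sP(1)$.

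\emph{Part (b).} The push-through identity $\bF_0^\top(\bF_0\bF_0^\top+\lambda n\bI_n)^{-1}=(\bF_0^\top\bF_0+\lambda n\bI_N)^{-1}\bF_0^\top$ gives $\bF_0^\top\bar\bR_0\bF_0=\bI_N-\lambda n\bR_0$, hence $\va^\top\bF_0^\top\bar\bR_0\bF_0\va-\|\va\|_2^2=-\lambda n\,\va^\top\bR_0\va$. Since $\va\sim\normal(0,\tfrac1N\bI_N)$ is independent of $\bR_0$ with $\|\bR_0\|_{\op}\le(\lambda n)^{-1}$ (so $\|\tfrac{\lambda n}{N}\bR_0\|_{\op}=O_\sP(1/n)$), Gaussian concentration gives $\lambda n\,\va^\top\bR_0\va-\tfrac{\lambda n}{N}\tr\bR_0\to_P0$. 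Using $\tr(\bF_0^\top\bar\bR_0\bF_0)=\tr(\bar\bR_0\bF_0\bF_0^\top)$ together with $\bF_0^\top\bar\bR_0\bF_0=\bI_N-\lambda n\bR_0$ and $\bar\bR_0\bF_0\bF_0^\top=\bI_n-\lambda n\bar\bR_0$ yields $N-\lambda n\tr\bR_0=n-\lambda n\tr\bar\bR_0$, whence
\[
\tfrac{\lambda n}{N}\tr\bR_0 \;=\; 1-\tfrac nN+\lambda\tfrac nN\,\tr\bar\bR_0 \;\to_P\; 1-\tfrac\psi\phi+\lambda\tfrac\psi\phi\cdot\tfrac{\psi m_1}{\phi}
\]
(using $n/N\to\psi/\phi$). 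Therefore $-\lambda n\,\va^\top\bR_0\va\to_P-\lambda\tfrac{\psi^2}{\phi^2}m_1+\tfrac\psi\phi-1$, which is the claim. To see the limit is nonzero, bound $\lambda n\,\va^\top\bR_0\va\ge\lambda n\lambda_{\min}(\bR_0)\|\va\|_2^2=\tfrac{\lambda n}{\|\bF_0\|_{\op}^2+\lambda n}\|\va\|_2^2$; since $\|\va\|_2^2\to_P1$ and $\|\bF_0\|_{\op}^2=O_\sP(n)$, this is bounded away from $0$ with probability $1-o(1)$, so the limit satisfies $\lambda\tfrac{\psi^2}{\phi^2}m_1-\tfrac\psi\phi+1>0$, giving both the $\Theta_\sP(1)$ property and the ``in particular'' statement.

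\emph{Main obstacle.} The genuinely heavy ingredients are the two trace limits, which are not internal to this lemma: they rest on the non-linear random matrix theory for the information-plus-noise structure of $\bF_0$, obtained by replacing $\bF_0$ with its Gaussian surrogate via the equivalence argument of Appendix~\ref{sec:gec} and then solving the fixed-point system~\eqref{fpe} for $m_1,m_2$. Granting those, what remains above is resolvent algebra plus routine concentration.
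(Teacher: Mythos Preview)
Your proposal is correct and follows essentially the same route as the paper: Hanson--Wright plus Lemma~\ref{lemma:turn_beta_to_beta_star} for (a), and the resolvent identity $\bF_0^\top\bar\bR_0\bF_0=\bI_N-\lambda n\bR_0$ plus concentration for (b), both reduced to the two trace limits. One clarification worth making: the trace limits you defer as ``heavy ingredients granted from elsewhere'' are in fact justified \emph{within} the paper's proof of this very lemma---most of that proof is spent reducing from $\vw_{0,i}\sim\mathrm{Unif}(\mathbb{S}^{d-1})$ to Gaussian weights so that the Adlam--Pennington fixed-point results apply---so they are not external inputs but the bulk of the work here; on the other hand, your lower-bound argument for the nonvanishing of the limit in (b) is more explicit than the paper, which simply asserts the ``in particular'' conclusion.
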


The following lemmas will be used in the computations.
We defer the proofs of these lemmas to Sections \ref{pflemma:general_orthogonality_21}, \ref{pflemma:delta_2_to_zero}, 
\ref{pflemma:s12},
and \ref{pft2rt2} respectively.

\begin{lemma}
\label{lemma:generalized_hermite_pq}
\label{lemma:general_orthogonality_21}
    For any $p, q \in \mathbb{N}_0$, $p \neq q$ and any vector $\vu \in \R^n$, with $\Vert \vu\Vert_2 = 1$ independent of $\bar\bR_0$, we have $H_q(\tilde\bX \vu)^\top \bar\bR_0 H_p(\tilde\bX \vu)= o_\sP(1)$.
\end{lemma}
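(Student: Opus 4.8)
The plan is to reduce to the Gaussian-equivalent feature matrix, then isolate the dependence on $\vu$ as a low-rank perturbation that can be stripped off with the Woodbury identity, and finally kill the resulting leading term by a concentration argument together with Hermite orthogonality.

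First I would invoke the Gaussian equivalence property of Appendix~\ref{sec:gec} to reduce to proving the claim with $\bF_0$ replaced by its Gaussian equivalent $\tilde\bF_0 = c_1\tilde\bX\bW_0^\top + c_{>1}\bZ$ and $\bar\bR_0$ by $\tilde\bR_0 = (\tilde\bF_0\tilde\bF_0^\top + \lambda n\bI_n)^{-1}$. Since $\vu$ is independent of $\tilde\bX,\bW_0,\bZ$, treat it as fixed, so $\tilde\bX\vu\sim\normal(0,\bI_n)$. Decomposing $\tilde\bX = \mathbf{Y} + (\tilde\bX\vu)\vu^\top$ with $\mathbf{Y}:=\tilde\bX(\bI_d-\vu\vu^\top)$, the Gaussian blocks $\mathbf{Y}$ and $\tilde\bX\vu$ are independent, so with $\vb:=\bW_0\vu$ and $\bF_0^\perp:=c_1\mathbf{Y}\bW_0^\top + c_{>1}\bZ$ one has $\tilde\bF_0 = \bF_0^\perp + c_1(\tilde\bX\vu)\vb^\top$, where $\bF_0^\perp$ is independent of $\tilde\bX\vu$ given $\bW_0$ and, having i.i.d.\ rows, is left orthogonally invariant given $\bW_0$. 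Expanding $\tilde\bF_0\tilde\bF_0^\top + \lambda n\bI_n$ and applying Woodbury yields $\tilde\bR_0 = \bA^{-1}-\bA^{-1}\bU\bM\bU^\top\bA^{-1}$ with $\bA:=\bF_0^\perp\bF_0^{\perp\top}+\lambda n\bI_n$ (so $\|\bA^{-1}\|_{\rm op}\le(\lambda n)^{-1}$), $\vw:=\bF_0^\perp\vb$, $\bU:=[\,\tilde\bX\vu\ \ \vw\,]\in\R^{n\times2}$, $\mathbf{C}$ the fixed $2\times2$ matrix with rows $(c_1^2\|\vb\|_2^2,c_1)$ and $(c_1,0)$, and $\bM:=(\mathbf{C}^{-1}+\bU^\top\bA^{-1}\bU)^{-1}$; all are well defined since $\tilde\bF_0\tilde\bF_0^\top+\lambda n\bI_n\succ0$ and $\det\mathbf{C}=-c_1^2\ne0$. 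Because $\tilde\bX\vu=H_1(\tilde\bX\vu)$, the target then becomes a combination of scalars $H_a(\tilde\bX\vu)^\top\bA^{-1}H_b(\tilde\bX\vu)$ and $\vw^\top\bA^{-1}H_b(\tilde\bX\vu)$ in which $\bA^{-1}$ and $\vw$ are independent of $\tilde\bX\vu$.

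Next I would treat these scalars. For $a\ne b$, polarizing Lemma~\ref{lemma:poly-concentration} and conditioning on $\bA$ (using $\|\bA^{-1}\|_{\rm op}=O_\sP(1/n)$) shows $H_a(\tilde\bX\vu)^\top\bA^{-1}H_b(\tilde\bX\vu)$ concentrates on $\sum_{i,j}(\bA^{-1})_{ij}\,\E[H_a(Z_i)H_b(Z_j)]$ with $Z_i\stackrel{\text{i.i.d.}}{\sim}\normal(0,1)$, which is $0$ by Lemma~\ref{lem:twohermite} (the diagonal terms are $a!\,\delta_{ab}=0$ and the off-diagonal ones factor into $\E[H_a(Z)]\E[H_b(Z)]=0$). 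This yields the leading term $H_q(\tilde\bX\vu)^\top\bA^{-1}H_p(\tilde\bX\vu)\to_P0$ and, for $b\ne1$, $H_1(\tilde\bX\vu)^\top\bA^{-1}H_b(\tilde\bX\vu)=o_\sP(1)$. For $\vw^\top\bA^{-1}H_b(\tilde\bX\vu)$ with $b\ge1$: conditionally on $\bF_0^\perp$ it has mean $0$ and variance $b!\,\|\bA^{-1}\vw\|_2^2$, and by the push-through identity $\|\bA^{-1}\vw\|_2=\|\bF_0^\perp(\bF_0^{\perp\top}\bF_0^\perp+\lambda n\bI_N)^{-1}\vb\|_2=O_\sP(n^{-1/2})$, so it is $o_\sP(1)$; for $b=0$, $\vw^\top\bA^{-1}\vone=o_\sP(1)$ by the left orthogonal invariance of $\bF_0^\perp$ (it equals $\langle\bU_F\vc,\vone\rangle$ for the Haar left singular basis $\bU_F$ of $\bF_0^\perp$ and a fixed $\vc$ with $\|\vc\|_2=O_\sP(n^{-1/2})$). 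Hence both coordinates of $\bU^\top\bA^{-1}H_b(\tilde\bX\vu)$ are $o_\sP(1)$ whenever $b\ne1$, and are $O_\sP(1)$ always. Finally I would show $\|\bM\|_{\rm op}=O_\sP(1)$: the entries of $\mathbf{C}^{-1}+\bU^\top\bA^{-1}\bU$ are $O_\sP(1)$ and, using Lemma~\ref{lemma:l1_limits} and $\|\vb\|_2^2\to_P1/\psi$, converge, with limiting determinant of a definite sign by the matrix determinant lemma, since $\det(\mathbf{C}^{-1}+\bU^\top\bA^{-1}\bU)=\det(\tilde\bF_0\tilde\bF_0^\top+\lambda n\bI_n)/(\det\bA\,\det\mathbf{C})$. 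As $p\ne q$, at least one of $p,q$ differs from $1$, so in the Woodbury correction $(\bU^\top\bA^{-1}H_q(\tilde\bX\vu))^\top\bM(\bU^\top\bA^{-1}H_p(\tilde\bX\vu))$ one of the two vectors has both coordinates $o_\sP(1)$; with $\bM=O_\sP(1)$ this makes the correction $o_\sP(1)$, and the lemma follows.

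\paragraph{Main obstacle.} The hard part is the Woodbury correction, since it mixes quantities of genuinely different orders (e.g.\ $H_1(\tilde\bX\vu)^\top\bA^{-1}H_1(\tilde\bX\vu)\to\Tr(\bA^{-1})$ does not vanish), so the cancellation must be organized carefully by exploiting $p\ne q$; one also needs $\vw^\top\bA^{-1}\vone=o_\sP(1)$, whose smallness is not a mean-zero effect but a consequence of the left orthogonal invariance of $\bF_0^\perp$, and the bound $\|\bM\|_{\rm op}=O_\sP(1)$, i.e.\ that the relevant $2\times2$ Schur complement stays non-degenerate in the limit.
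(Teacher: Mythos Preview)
Your proposal is correct and follows essentially the same route as the paper's proof: Gaussian equivalence, the rank-one splitting $\tilde\bF_0=\bF_0^\perp+c_1(\tilde\bX\vu)(\bW_0\vu)^\top$ with $\bF_0^\perp$ independent of $\tilde\bX\vu$, Woodbury to separate a leading term killed by Lemma~\ref{lemma:poly-concentration} plus Hermite orthogonality, and a term-by-term analysis of the rank-two correction using the $p\neq1$ (or $q\neq1$) assumption together with left orthogonal invariance for the $b=0$ case. The only cosmetic difference is that you organize the correction via the two coordinates of $\bU^\top\bA^{-1}H_b(\tilde\bX\vu)$ and argue non-degeneracy of $\bM$ more explicitly, whereas the paper expands the four cross terms directly and takes $\|\bK\|_{\op}=O_\sP(1)$ from the $O_\sP(1)$ entries of $\bK^{-1}$.
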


\begin{lemma}
    \label{lemma:delta_2_to_zero}
    For any $p \in \mathbb{N}$, we have
    \begin{enumerate}
    \item[(a)] $\sqrt{N} H_p(\tilde\bX\vbeta_\star) \bar \bR_0 \bF_0 \va^{\circ 2}= o_\sP(1)$,
    \item[(b)] $\sqrt{N} H_p(\tilde\bX\vbeta) \bar \bR_0 \bF_0 \va^{\circ 2}= o_\sP(1)$.
    \end{enumerate}    
\end{lemma}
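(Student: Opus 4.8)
Parts~(a) and~(b) are proved identically: in both we must show $\sqrt N\,H_p(\tilde\bX\vu)^\top\bar\bR_0\bF_0\va^{\circ 2}\to_P0$, with $\vu=\vbeta_\star$ for~(a) and $\vu=\vbeta$ for~(b), and the only properties of $\vu$ we use are that it is independent of $\bW_0$ and $\va$, that $\|\vu\|_2\to_P$ a positive constant (trivial for $\vbeta_\star$; for $\vbeta$ this is the computation in the proof of~\eqref{prop:alignment}), and that $\max_{i\le n}|\langle\tilde\vx_i,\vu\rangle|=O_\sP(\sqrt{\log n})$ (Lemma~\ref{lem:Ms}(b), and its analogue for $\vbeta_\star$). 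The first step is to condition out $\va$: since $\va$ is independent of $\tilde\bX,\bW_0,\bX,\vbeta_\star,\vepsilon$, it is independent of $\vc:=\bF_0^\top\bar\bR_0 H_p(\tilde\bX\vu)\in\R^N$, and the goal reduces to $\sqrt N\,\vc^\top\va^{\circ 2}\to_P0$ with $\vc$ conditionally deterministic. Write $\va^{\circ 2}=N^{-1}\vone_N+\vr$, where $\E[\vr]=0$ and $\Cov(\vr)=2N^{-2}\bI_N$ (as $\Var(a_i^2)=2/N^2$), so $\sqrt N\,\vc^\top\va^{\circ 2}=N^{-1/2}\vone_N^\top\vc+\sqrt N\,\vc^\top\vr$. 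The fluctuation term is easy: from the SVD of $\bF_0$, $\|\bar\bR_0\bF_0\|_{\op}\le\max_{s\ge0}s/(s^2+\lambda n)=(2\sqrt{\lambda n})^{-1}$, while the entries of $\tilde\bX\vu$ are, conditionally on $\vu$, i.i.d.\ $\normal(0,\|\vu\|_2^2)$, so $\|H_p(\tilde\bX\vu)\|_2=O_\sP(\sqrt n)$; hence $\|\vc\|_2\le\|\bar\bR_0\bF_0\|_{\op}\|H_p(\tilde\bX\vu)\|_2=O_\sP(1)$, the conditional second moment of $\sqrt N\,\vc^\top\vr$ equals $2N^{-1}\|\vc\|_2^2=o_\sP(1)$, and $\sqrt N\,\vc^\top\vr\to_P0$ by (conditional) Chebyshev.

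Everything thus reduces to the term $N^{-1/2}\vone_N^\top\vc=N^{-1/2}\langle\bF_0\vone_N,\bar\bR_0 H_p(\tilde\bX\vu)\rangle$, and this is where the work is. Since $\|\bF_0\vone_N\|_2=\Theta_\sP(\sqrt{nN})$ and $\|\bar\bR_0 H_p(\tilde\bX\vu)\|_2=O_\sP(n^{-1/2})$, Cauchy–Schwarz only yields $O_\sP(1)$; genuine cancellation must be extracted. The plan is to invoke the Gaussian equivalence property — extending the Lindeberg swap of Section~\ref{sec:gec} in a routine way so that it handles the $\vone_N$-contracted feature matrix $\bF_0\vone_N=\sum_i\sigma(\tilde\bX\vw_{0,i})$ rather than a fixed bilinear form — to replace $\bF_0$ by $\tilde\bF_0=c_1\tilde\bX\bW_0^\top+c_{>1}\bZ$ ($\bZ$ with i.i.d.\ $\normal(0,1)$ entries, independent of everything), hence $N^{-1/2}\bF_0\vone_N$ by $N^{-1/2}\vp$ with $\vp:=\tilde\bF_0\vone_N=c_1\tilde\bX\vs+c_{>1}\bZ\vone_N$, $\vs:=\sum_{i=1}^N\vw_{0,i}$, and $\bar\bR_0$ by $\tilde\bR_0:=(\tilde\bF_0\tilde\bF_0^\top+\lambda n\bI_n)^{-1}$. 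Next, decompose $\tilde\bF_0=\tilde\bF_{0,\perp}+N^{-1}\vp\vone_N^\top$ with $\tilde\bF_{0,\perp}:=\tilde\bF_0(\bI_N-N^{-1}\vone_N\vone_N^\top)$, so that $\tilde\bF_{0,\perp}\vone_N=0$ and $\tilde\bF_0\tilde\bF_0^\top=\tilde\bF_{0,\perp}\tilde\bF_{0,\perp}^\top+N^{-1}\vp\vp^\top$. The crucial observation is that $\tilde\bF_{0,\perp}$ involves only the components of the columns of $\bW_0^\top$ and of $\bZ$ orthogonal to $\vone_N$, so — possibly after first replacing the spherical first-layer weights by Gaussian ones, as discussed in Section~\ref{sec:problemsetting} — it is independent of $\vp$ conditionally on $\tilde\bX$.

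Writing $\bar\bR_{0,\perp}:=(\tilde\bF_{0,\perp}\tilde\bF_{0,\perp}^\top+\lambda n\bI_n)^{-1}$, the Sherman–Morrison identity gives
\[
N^{-1/2}\vp^\top\tilde\bR_0 H_p(\tilde\bX\vu)=\frac{N^{-1/2}\,\vp^\top\bar\bR_{0,\perp}H_p(\tilde\bX\vu)}{1+N^{-1}\vp^\top\bar\bR_{0,\perp}\vp}.
\]
The denominator is $1+O_\sP(1)$ and $\ge1$, since $0\le N^{-1}\vp^\top\bar\bR_{0,\perp}\vp\le N^{-1}(\lambda n)^{-1}\|\vp\|_2^2=O_\sP(1)$ (using $\|\tilde\bX\|_{\op}=O_\sP(\sqrt n)$ from Lemma~\ref{lem:Ms}(d) and $\|\vs\|_2^2=\Theta_\sP(N)$). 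For the numerator, condition further on $\tilde\bX$, $\vu$, and $\tilde\bF_{0,\perp}$; then $\vp$ is (approximately) Gaussian with mean zero and covariance $\Cov(\vp\mid\tilde\bX)=N(c_1^2d^{-1}\tilde\bX\tilde\bX^\top+c_{>1}^2\bI_n)$, so $N^{-1/2}\vp^\top\bar\bR_{0,\perp}H_p(\tilde\bX\vu)$ has conditional mean $0$ and conditional variance $(\bar\bR_{0,\perp}H_p(\tilde\bX\vu))^\top(c_1^2d^{-1}\tilde\bX\tilde\bX^\top+c_{>1}^2\bI_n)(\bar\bR_{0,\perp}H_p(\tilde\bX\vu))\le O_\sP(1)\cdot\|\bar\bR_{0,\perp}H_p(\tilde\bX\vu)\|_2^2\le O_\sP(1)\cdot(\lambda n)^{-2}\|H_p(\tilde\bX\vu)\|_2^2=O_\sP(n^{-1})$, where we used $\|c_1^2d^{-1}\tilde\bX\tilde\bX^\top+c_{>1}^2\bI_n\|_{\op}=O_\sP(1)$ and $\|\bar\bR_{0,\perp}\|_{\op}\le(\lambda n)^{-1}$. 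Hence the numerator is $o_\sP(1)$, so $N^{-1/2}\vone_N^\top\vc\to_P0$ and the lemma follows.

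The main obstacle is precisely this last step: the mean part of $\va^{\circ 2}$ contributes a term that is only $O_\sP(1)$ under crude norm bounds, so genuine cancellation is essential, which is why one needs both the Gaussian-equivalence substitution for the $\vone_N$-contraction of $\bF_0$ and the $\vone_N$-versus-$\vone_N^\perp$ splitting that decouples the averaged first-layer direction $\vp$ from $\bar\bR_{0,\perp}$; afterwards the exact mean-zero structure of $\vp$ together with the bound $\|\bar\bR_{0,\perp}H_p(\tilde\bX\vu)\|_2=O_\sP(n^{-1/2})$ pins the term at $o_\sP(N^{-1/2})$. Verifying that the spherical-to-Gaussian reduction and the extension of the Section~\ref{sec:gec} Lindeberg argument to the $\vone_N$-contracted matrix are lossless is the bulk of the remaining routine bookkeeping.
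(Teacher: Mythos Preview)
Your proof is correct and takes a genuinely different route from the paper. The paper separates the cases $p=1$ and $p\ge 2$; for $p=1$ it uses directly that $\vbeta_\star\sim\normal(0,d^{-1}\bI_d)$ is independent of $\sqrt N\,\tilde\bX^\top\bar\bR_0\bF_0\va^{\circ 2}$ (whose norm is $O_\sP(1)$), while for $p\ge 2$ it decouples $H_p(\tilde\bX\vbeta_\star)$ from the resolvent by projecting out the $\vbeta_\star$-direction from the \emph{data}: it sets $\hat\bX=\tilde\bX-\tilde\vtheta_\star\vbeta_\star^\top$, writes $\bF_0\bF_0^\top=\hat\bF_0\hat\bF_0^\top+\bV\bD\bV^\top$ for a rank-two $\bV\bD\bV^\top$, applies Woodbury, and then shows each of the resulting pieces vanishes using that $H_p(\tilde\vtheta_\star)$ is independent of $\hat\bR_0,\hat\bF_0$ and has mean zero. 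By contrast, you first isolate the mean of $\va^{\circ 2}$ (reducing the fluctuation part by a clean conditional Chebyshev bound), and then decouple on the \emph{weight/feature} side: you project out the $\vone_N$-direction from $\tilde\bF_0$, apply Sherman--Morrison for a rank-one perturbation, and use that $\vp=\tilde\bF_0\vone_N$ is conditionally mean-zero Gaussian and independent of $\bar\bR_{0,\perp}H_p(\tilde\bX\vu)$. Your argument is uniform in $p$ and arguably more transparent about where the cancellation comes from, but it leans on two technical reductions (the Lindeberg swap for a quantity with $\bF_0$ both inside and outside the resolvent, and the spherical-to-Gaussian replacement for $\bW_0$) that you defer; both are indeed doable along the lines the paper already uses elsewhere---the first is exactly what the paper invokes informally when handling $N^{-1/2}\mathbf 1_n^\top\bar\bR_0\bF_0\mathbf 1_N$ in the analysis of the $(2,4)$ entry of $\bT^{-1}$ in Section~\ref{pl2}, and the second mimics Section~\ref{pflemma:l1_limits}. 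The paper's approach avoids both reductions at the price of the case split and the larger rank-two Woodbury bookkeeping.
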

\vspace{10pt}

\begin{lemma}
\label{lemma:s12}
    For $s \in \{1,2\}$, $p \in \mathbb{N}$, 
    and $p \neq s$, we have
        $H_p(\tilde\bX\vbeta_\star)^\top \bar\bR_0 (\tilde\bX\vbeta)^{\circ s} = o_\sP(1)$.
     Further,
     \begin{align*}
    \lim_{n, N, d \to \infty} H_2(\tilde\bX\vbeta_\star)^\top \bar\bR_0(\tilde\bX\vbeta)^{\circ 2} = 2c_{\star,1}^2 \frac{\psi m_1}{\phi}.
\end{align*}
\end{lemma}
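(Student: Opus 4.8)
The plan is to reduce both assertions to Gaussian-equivalent Hermite quadratic forms and then carry out the resulting Gaussian and random-matrix computations. Since $H_1(x)=x$, $H_2(x)=x^2-1$ and $H_0\equiv1$, one has the exact identities $(\tilde\bX\vbeta)^{\circ 1}=H_1(\tilde\bX\vbeta)$ and $(\tilde\bX\vbeta)^{\circ 2}=H_2(\tilde\bX\vbeta)+H_0(\tilde\bX\vbeta)$, so everything in the statement is a finite linear combination of terms $H_p(\tilde\bX\vbeta_\star)^\top\bar\bR_0 H_q(\tilde\bX\vbeta)$ with $p\in\mathbb{N}$ and $q\in\{0,1,2\}$ --- exactly the objects covered by the Gaussian equivalence property of Appendix \ref{sec:gec}. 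I would first invoke it to replace $\bF_0$ by $\tilde\bF_0=c_1\tilde\bX\bW_0^\top+c_{>1}\bZ$ and $\bar\bR_0$ by $\tilde\bR_0=(\tilde\bF_0\tilde\bF_0^\top+\lambda n\bI_n)^{-1}$, reducing matters to the limiting behaviour of $H_p(\tilde\bX\vbeta_\star)^\top\tilde\bR_0 H_q(\tilde\bX\vbeta)$; note that in $\tilde\bF_0$ the only dependence on $\tilde\bX$ is \emph{linear}, which is what separates the cases $q=1$ and $q\ge2$ below.

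For the first claim ($p\ne q$), run the leave-one-row-out / Sherman--Morrison expansion of $\tilde\bR_0$ exactly as in Appendix \ref{sec:gec}, splitting the quadratic form into a diagonal part $\sum_i[\tilde\bR_0]_{ii}H_p(\tilde\vx_i^\top\vbeta_\star)H_q(\tilde\vx_i^\top\vbeta)$ plus off-diagonal terms. After decorrelating $[\tilde\bR_0]_{ii}$ from $\tilde\vx_i$, the diagonal part behaves like $(\tr\tilde\bR_0)\cdot\tfrac1n\sum_iH_p(\tilde\vx_i^\top\vbeta_\star)H_q(\tilde\vx_i^\top\vbeta)$, whose average tends to the per-row cross-moment; expanding $H_p(\|\vbeta_\star\|_2 z)$ in the Hermite basis and applying Lemma \ref{lem:twohermite} to the jointly Gaussian pair $(\tilde\vx_1^\top\vbeta_\star,\tilde\vx_1^\top\vbeta)$ shows that cross-moment is a finite sum of terms each carrying a factor $(\|\vbeta_\star\|_2^2-1)^j$, $j\ge1$, so it is $o_\sP(1)$ by $\|\vbeta_\star\|_2^2\to_P1$; together with $\tr\tilde\bR_0=O_\sP(1)$ this kills the diagonal part. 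The off-diagonal part is handled by independence of distinct rows, the quadratic-form concentration of Lemma \ref{lemma:poly-concentration} (adapted to the two jointly Gaussian vectors), and by carrying the polylogarithmic-in-$n$ bounds on the Hermite entries through the Appendix \ref{sec:gec} estimates. In particular this gives $H_2(\tilde\bX\vbeta_\star)^\top\bar\bR_0 H_0(\tilde\bX\vbeta)=o_\sP(1)$.

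For the explicit limit, write $H_2(\tilde\bX\vbeta_\star)^\top\bar\bR_0(\tilde\bX\vbeta)^{\circ 2}=H_2(\tilde\bX\vbeta_\star)^\top\bar\bR_0 H_2(\tilde\bX\vbeta)+H_2(\tilde\bX\vbeta_\star)^\top\bar\bR_0 H_0(\tilde\bX\vbeta)$, the second summand being $o_\sP(1)$ by the previous step. For the first, decompose $\tilde\bF_0$ (after Gaussian equivalence) into its part independent of the directions $\tilde\bX\vbeta_\star,\tilde\bX\vbeta$ plus a rank-bounded part carrying those directions; since $\tilde\bF_0$ depends on them only linearly, the low-rank part enters $\tilde\bR_0$ through corrections whose cross-correlation with the \emph{even} vectors $H_2(\tilde\bX\vbeta_\star),H_2(\tilde\bX\vbeta)$ vanishes in the limit (morally because $\E[g^3]=0$ for $g$ standard normal), so $\tilde\bR_0$ may be replaced by the resolvent built from the first part alone, which is independent of $\tilde\bX\vbeta_\star,\tilde\bX\vbeta$. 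The diagonal reduction then gives $\E[H_2(\tilde\vx_1^\top\vbeta_\star)H_2(\tilde\vx_1^\top\vbeta)]\,\tr\bar\bR_0+o_\sP(1)$ --- in contrast with the degree-$1$ case of Lemma \ref{lemma:l1_limits}(a), where $H_1(\tilde\bX\vbeta)=\tilde\bX\vbeta$ has $\Theta_\sP(1)$ cross-correlation with the low-rank part, the latter survives, and one gets $\tfrac1d\tr(\tilde\bX^\top\bar\bR_0\tilde\bX)=\psi m_2/\phi$ in place of $\tr\bar\bR_0=\psi m_1/\phi$. A Wick computation gives $\E[H_2(\tilde\vx_1^\top\vbeta_\star)H_2(\tilde\vx_1^\top\vbeta)]=2(\vbeta_\star^\top\vbeta)^2+(\|\vbeta_\star\|_2^2-1)(\|\vbeta\|_2^2-1)$, which by Lemma \ref{lemma:turn_beta_to_beta_star} (the estimates used in the proof of \eqref{prop:alignment}: $\vbeta_\star^\top\vbeta\to_Pc_{\star,1}$, $\|\vbeta_\star\|_2^2\to_P1$, $\|\vbeta\|_2^2\to_Pc_{\star,1}^2+\phi(c_\star^2+\sigma_\ep^2)$) tends to $2c_{\star,1}^2$; combined with $\tr\bar\bR_0\to_P\psi m_1/\phi$ (the limit recorded before Lemma \ref{lemma:l1_limits}, from the fixed-point system \eqref{fpe}) this yields $2c_{\star,1}^2\psi m_1/\phi$.

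The main obstacle is making the two reductions in the last step rigorous: (a) that the direction-carrying, rank-bounded part of $\tilde\bF_0$ may be dropped from $\tilde\bR_0$ when sandwiched between degree-$2$ Hermite vectors --- via Woodbury/Sherman--Morrison and showing each correction term contains a mean-zero (hence $o_\sP(1)$) factor over the relevant Gaussian direction --- and (b) the diagonal reduction $H_q(\tilde\bX\vbeta_\star)^\top\tilde\bR_0 H_q(\tilde\bX\vbeta)=\E[H_q(\tilde\vx_1^\top\vbeta_\star)H_q(\tilde\vx_1^\top\vbeta)]\tr\tilde\bR_0+o_\sP(1)$ for a resolvent independent of the two directions, which needs control of the $n^2$ off-diagonal terms $\sum_{i\ne j}[\tilde\bR_0]_{ij}H_q(\tilde\vx_i^\top\vbeta_\star)H_q(\tilde\vx_j^\top\vbeta)$ (using row independence, centering of the degree-$q$ vectors, and Lemma \ref{lemma:poly-concentration}) together with the decorrelation of each $[\tilde\bR_0]_{ii}$ from the $i$-th row of $\tilde\bF_0$, namely $c_1\bW_0\tilde\vx_i+c_{>1}\vz_i$, the saving grace being that this dependence is spread over $N$ roughly-independent directions of strength $O(1/d)$. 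Propagating the polylogarithmic growth (in $n$) of the Hermite-vector entries through all of these estimates, as in Appendix \ref{sec:gec}, is the tedious but necessary bookkeeping.
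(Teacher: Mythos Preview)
Your treatment of the explicit limit $H_2(\tilde\bX\vbeta_\star)^\top\bar\bR_0(\tilde\bX\vbeta)^{\circ 2}\to 2c_{\star,1}^2\psi m_1/\phi$ is essentially the paper's argument: split off the low-rank part of $\tilde\bF_0$ carrying the two directions via Woodbury, show the correction terms vanish against the degree-$2$ Hermite vectors, and then reduce to $\E[H_2(\tilde\vx_1^\top\vbeta_\star)H_2(\tilde\vx_1^\top\vbeta)]\cdot\tr\bar\bR_0$. The paper normalizes first (working with $\tilde\vbeta=\vbeta/\|\vbeta\|_2$ and an orthogonalized $\vbeta_\perp$, so the rank-four $\bV$ has clean columns) and only multiplies back $\|\vbeta\|_2^2$ at the end, whereas you keep the unnormalized $H_2(\tilde\bX\vbeta)$ and compute the Wick moment $2(\vbeta_\star^\top\vbeta)^2+(\|\vbeta_\star\|_2^2-1)(\|\vbeta\|_2^2-1)$ directly; both routes give the same limit.

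For the $p\ne s$ claim your route is genuinely different. The paper does not do a row-wise diagonal/off-diagonal split at all. For $s=1$ it applies Lemma~\ref{lemma:turn_beta_to_beta_star} with $\mathbf v=\tilde\bX^\top\bar\bR_0 H_p(\tilde\bX\vbeta_\star)$ to replace $\vbeta$ by $c_{\star,1}\vbeta_\star$, and then invokes the ready-made Lemma~\ref{lemma:general_orthogonality_21}. For $s=2$ it reuses the same rank-four Woodbury decomposition as in the explicit-limit step and checks that each of the four correction terms $[\bK]_{i,j}H_p(\tilde\vtheta_\star)^\top\hat\bR_0\vv_i\vv_j^\top\hat\bR_0 H_2(\tilde\bX\tilde\vbeta)$ vanishes. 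The payoff of the paper's approach is modularity: once the directions are projected out, $\hat\bR_0$ is \emph{exactly} independent of $\tilde\bX\vbeta_\star$ and $\tilde\bX\tilde\vbeta$, so Lemma~\ref{lemma:poly-concentration} and Hermite orthogonality apply with no leave-one-out bookkeeping. Your row-wise approach is more elementary in spirit but heavier to execute, since you must decorrelate $[\tilde\bR_0]_{ii}$ (and for the off-diagonal, $[\tilde\bR_0]_{ij}$) from the individual rows.

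Two specific issues to flag. First, ``exactly as in Appendix~\ref{sec:gec}'' is a mis-reference: that appendix does a \emph{column}-wise Lindeberg swap (replacing $\sigma(\tilde\bX\vw_{0,i})$ by its Gaussian surrogate neuron-by-neuron), not a leave-one-row-out on $\tilde\bX$; it establishes the Gaussian equivalence you already invoked, not the diagonal/off-diagonal reduction you need afterwards. Second, your off-diagonal control is the real gap: you need $\sum_{i\ne j}[\tilde\bR_0]_{ij}H_p(\tilde\vx_i^\top\vbeta_\star)H_q(\tilde\vx_j^\top\vbeta)=o_\sP(1)$, but $\tilde\bR_0$ still depends on every row of $\tilde\bX$, so ``independence of distinct rows'' and Lemma~\ref{lemma:poly-concentration} do not apply directly. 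Making this rigorous row-by-row requires a rank-two (or rank-four, for off-diagonal entries) Sherman--Morrison at each step and a careful variance bound over $n^2$ terms---doable, but not the one-line invocation you suggest. The cleanest fix is exactly what you already do for the explicit limit: remove the two directions once via Woodbury, after which $\hat\bR_0$ is independent of both Hermite vectors and the whole quadratic form concentrates at its (zero) mean.
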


\begin{lemma}\label{t2rt2}
    We have
    \begin{align*}
    \lim_{n, N, d \to \infty} (\tilde \bX \vbeta)^{\circ 2 \top}\bar\bR_0(\tilde \bX \vbeta)^{\circ 2} 
    = \frac{3 \psi m_1}{\phi}[\phi(c_\star^2 + \sigma_\ep^2) + c_{\star,1}^2]^2.
\end{align*}
\end{lemma}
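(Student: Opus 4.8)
The plan is to expand $(\tilde \bX \vbeta)^{\circ 2}$ in Hermite components and evaluate the resulting quadratic forms of $\bar\bR_0$ one at a time. First I set $\hat\vbeta = \vbeta/\|\vbeta\|_2$ and $\bZ = \tilde\bX\hat\vbeta$. Since $\vbeta = \tfrac1n\bX^\top\vy$ depends only on the first split $(\bX,\vy)$, it is independent of $(\tilde\bX,\bW_0)$, hence of $\bar\bR_0$; so conditionally on $\vbeta$ the vector $\hat\vbeta$ is a fixed unit vector and $\bZ\sim\normal(0,\bI_n)$ — though $\bZ$ is \emph{not} independent of $\bar\bR_0$, as both are built from $\tilde\bX$. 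Using $z^2 = H_2(z)+H_0(z)$ entrywise and $H_0(\bZ)=\vone_n$,
\begin{align*}
(\tilde\bX\vbeta)^{\circ 2\top}\bar\bR_0(\tilde\bX\vbeta)^{\circ 2} = \|\vbeta\|_2^4\Big[H_2(\bZ)^\top\bar\bR_0 H_2(\bZ) + 2\,H_2(\bZ)^\top\bar\bR_0\vone_n + \vone_n^\top\bar\bR_0\vone_n\Big].
\end{align*}
By Lemma~\ref{lemma:turn_beta_to_beta_star} (as in the proof of \eqref{prop:alignment}), $\|\vbeta\|_2^2\to_P \phi(c_\star^2+\sigma_\ep^2)+c_{\star,1}^2$, which is $\Theta_\sP(1)$, so it remains to find the limits of the three bracketed quadratic forms; each has the shape $H_p(\bZ)^\top\bar\bR_0 H_q(\bZ)$, which is exactly the object that the Gaussian-equivalence machinery behind Lemmas~\ref{lemma:generalized_hermite_pq}--\ref{lemma:s12} is built to evaluate.

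\textbf{Cross term.} Since $\vone_n = H_0(\bZ)$, Lemma~\ref{lemma:generalized_hermite_pq} applied conditionally on $\vbeta$ (with $\vu=\hat\vbeta$, $p=2\ne 0=q$) gives $H_2(\bZ)^\top\bar\bR_0\vone_n = o_\sP(1)$; the conditional statement holds for a.e.\ $\vbeta$ and is uniformly bounded, so it transfers to the unconditional one.

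\textbf{Diagonal terms.} I would follow the ``further'' part of Lemma~\ref{lemma:s12} with $\vbeta_\star$ specialized to $\hat\vbeta$. First apply the Gaussian equivalence property (Appendix~\ref{sec:gec}) to replace $\bar\bR_0$ by $\tilde\bR_0 = (\tilde\bF_0\tilde\bF_0^\top+\lambda n\bI_n)^{-1}$ with $\tilde\bF_0 = c_1\tilde\bX\bW_0^\top + c_{>1}\bZ_0$, $\bZ_0$ an independent i.i.d.\ Gaussian matrix. Split $\tilde\bF_0 = c_1\,\bZ\,(\bW_0\hat\vbeta)^\top + \tilde\bF_0'$, where $\tilde\bF_0' = c_1\tilde\bX(\bI_d-\hat\vbeta\hat\vbeta^\top)\bW_0^\top + c_{>1}\bZ_0$ is independent of $\bZ$, and put $\tilde\bR_0' = (\tilde\bF_0'\tilde\bF_0'^\top+\lambda n\bI_n)^{-1}$. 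The resolvent identity shows $\tilde\bR_0-\tilde\bR_0'$ has rank $\le 2$, and its contribution to $H_k(\bZ)^\top(\cdot)H_k(\bZ)$ is $o_\sP(1)$ because the factors it produces are $H_k(\bZ)^\top(\text{resolvent})\bZ$-type quantities, which are small thanks to the Hermite orthogonality $\langle H_k,H_1\rangle = 0$ for $k\ne 1$. With $\tilde\bR_0'$ independent of $\bZ$ and $\|\tilde\bR_0'\|_{\rm op}\le(\lambda n)^{-1}$, Lemma~\ref{lemma:poly-concentration} yields
\begin{align*}
H_k(\bZ)^\top\tilde\bR_0' H_k(\bZ) = \E_{\bZ}\big[H_k(\bZ)^\top\tilde\bR_0' H_k(\bZ)\big] + o_\sP(1) = k!\,\tr(\tilde\bR_0') + o_\sP(1),
\end{align*}
using $\E[H_k(Z_i)H_k(Z_j)] = k!\,\delta_{ij}$ (Lemma~\ref{lem:twohermite}). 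A rank-$\le2$ perturbation and the Gaussian substitution do not move the limiting trace, so $\tr(\tilde\bR_0') = \tr(\bar\bR_0)+o_\sP(1)\to_P \psi m_1/\phi$ (the value recorded in Section~\ref{section:analysis_of_train_loss}, from the Bai--Silverstein-type computation behind Lemma~\ref{lemma:l1_limits}). Hence $H_2(\bZ)^\top\bar\bR_0 H_2(\bZ)\to_P 2\psi m_1/\phi$; for $k=0$ the splitting is vacuous ($H_0(\bZ)=\vone_n$ is deterministic) and $\vone_n^\top\bar\bR_0\vone_n\to_P\tr(\bar\bR_0)=\psi m_1/\phi$ directly from the isotropy of the companion resolvent in the Gaussian-equivalent model.

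\textbf{Conclusion and main obstacle.} Collecting the three limits,
\begin{align*}
(\tilde\bX\vbeta)^{\circ 2\top}\bar\bR_0(\tilde\bX\vbeta)^{\circ 2} \to_P \big[\phi(c_\star^2+\sigma_\ep^2)+c_{\star,1}^2\big]^2\Big(2\,\tfrac{\psi m_1}{\phi} + \tfrac{\psi m_1}{\phi}\Big) = \frac{3\psi m_1}{\phi}\big[\phi(c_\star^2+\sigma_\ep^2)+c_{\star,1}^2\big]^2,
\end{align*}
which is the stated value. The only real difficulty is the diagonal-term step: controlling $H_k(\bZ)^\top\bar\bR_0 H_k(\bZ)$ despite the correlation between $\bZ$ and $\bF_0$ through the shared $\tilde\bX$. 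The rank-one splitting off the $\hat\vbeta$ direction (exact after Gaussian equivalence) together with $\langle H_k,H_1\rangle = 0$ is what decorrelates the form at leading order; everything downstream is the standard resolvent-and-concentration bookkeeping already performed for Lemmas~\ref{lemma:l1_limits} and~\ref{lemma:s12}, so in the final write-up this lemma reduces to the $\vbeta_\star\leadsto\hat\vbeta$ specialization of Lemma~\ref{lemma:s12} plus the elementary $H_0$ computation.
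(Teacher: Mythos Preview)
Your proposal is correct and follows essentially the same route as the paper: apply Gaussian equivalence, split off the $\hat\vbeta$-direction from $\tilde\bX$ so that the remaining resolvent is independent of $\bZ=\tilde\bX\hat\vbeta$, control the rank-$\le 2$ Woodbury correction using Hermite orthogonality, and evaluate the main term by concentration against $\tr\bar\bR_0\to\psi m_1/\phi$. The only cosmetic difference is that the paper works with $\tilde\vtheta^{\circ 2}$ as a whole (so the factor $3$ comes from $\E Z^4=3$), whereas you expand $z^2=H_2(z)+H_0(z)$ upfront and recover $3=2!+0!$; this Hermite-first organization is in fact exactly what the paper does for general $\ell$ in Lemma~\ref{lemma:hermite_limit_general}.
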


Now, we will first provide a proof of Theorem \ref{thm:general_ell_simplified} in the case of $\ell =1$ and $\ell = 2$ for a better insight into the proof techniques. We will then prove the general form in Section \ref{sec:general_ell}.

\subsection{Proof for $\ell = 1$}
\label{eq:proof_train_l1}
In the $\ell = 1$ regime, 
due to Theorem \ref{thm:training_equivalence}, we can replace $\bF$ by $\bF_1$ (defined in \eqref{eqn:Fell}) to compute the training loss. 
Hence, from now on we let $\bF = \bF_1$. We can write
$\bF\bF^\top = \bF_0\bF_0^\top + \bU \bK \bU^\top$
where
$\bU = [\,\bF_0\va\;|\;\tilde \bX\vbeta\;]$ and 
\begin{align*}
    \bK = \begin{bmatrix}
       0 & c_1^2\eta \\
       c_1^2\eta & c_1^4 \eta^2 \Vert \va \Vert_2^2
   \end{bmatrix}.
\end{align*}
Based on Lemma \ref{lemma:train_loss},
the training loss depends on 
$\bar\bR = (\bF\bF^\top + \lambda n \bI_n)^{-1}$.
Using the Woodbury formula, this matrix can be written
in terms of $ \bar\bR_0 = (\bF_0\bF_0^\top + \lambda n \bI_n)^{-1}$ 
as
\begin{align}\label{wb}
    \bar\bR = \bar\bR_0 - \bar\bR_0 \bU (\bK^{-1}+ \bU^\top \bar\bR_0 \bU)^{-1}\bU^\top \bar\bR_0.
\end{align}
Defining $\bT = (\bK^{-1}+ \bU^\top \bar\bR_0 \bU)^{-1} \in \R^{2\times 2}$
and substituting
$\bar\bR = \bar\bR_0 - \bar\bR_0 \bU \bT \bU^\top \bar\bR_0$
in the formula for training loss in Lemma \ref{lemma:train_loss}, we find
\begin{align}
\label{eq:diff_loss}
    \gL_{\rm tr}(\bF_0) - \gL_{\rm tr}(\bF) =   \lambda \tilde \vy^\top \bar\bR_0 \bU \bT\bU^\top \bar\bR_0 \tilde\vy.
\end{align}
Using \eqref{eq:diff_loss}
and
$\bU = [\,\bF_0\va\;|\;\tilde \bX\vbeta\;]$,
the loss difference can be written as
\begin{align}
    \label{eq:loss_difference}
    \gL_{\rm tr}(\bF_0) &- \gL_{\rm tr}(\bF)\nonumber\\ &= 
     \lambda \biggl[T_{11}(\tilde\vy^\top \bar\bR_0 \bF_0 \va)^2
    +   (T_{12}+T_{21}) \tilde\vy^\top\bar\bR_0 \tilde \bX \vbeta\cdot\va^\top\bF_0^\top\bar\bR_0\tilde\vy
    + T_{22} (\tilde\vy^\top\bar\bR_0 \tilde \bX \vbeta)^2\biggr],
\end{align}
in which $T_{ij}$ are the elements of the matrix $\bT$. 
Using
\begin{align}
\label{eq:T_ell1}
   \bT =\frac{\begin{bmatrix}
        \vbeta^\top \tilde \bX^\top \bar\bR_0 \tilde \bX \vbeta & -\frac{1}{c_1^2\eta}-\va^\top\bF_0^\top  \bar\bR_0 \tilde \bX \vbeta~\vspace{0.15cm}\\
        -\frac{1}{c_1^2\eta} - \vbeta^\top \tilde \bX^\top\bar\bR_0\bF_0 \va & 
        \va^\top \bF_0^\top \bar\bR_0 \bF_0 \va - \Vert \va\Vert_2^2
    \end{bmatrix}}{\left(\vbeta^\top \tilde \bX^\top \bar\bR_0 \tilde \bX \vbeta\right)\left(\va^\top \bF_0^\top \bar\bR_0 \bF_0 \va - \Vert \va\Vert_2^2\right) - \left(\frac{1}{c_1^2\eta} +\va^\top\bF_0^\top  \bar\bR_0 \tilde \bX \vbeta \right)^2},
\end{align}
we will compute the limit of each term appearing in \eqref{eq:loss_difference} separately:

\paragraph{Term 1.} The first term can be written as
\begin{align*}
     \delta_1 &= \lambda T_{11} (\tilde\vy^\top \bar\bR_0 \bF_0 \va)^2
     = 
     \frac{\lambda\, (\vbeta^\top \tilde \bX^\top \bar\bR_0 \tilde \bX \vbeta)\,(\tilde\vy^\top \bar\bR_0 \bF_0 \va)^2}{\left(\vbeta^\top \tilde \bX^\top \bar\bR_0 \tilde \bX \vbeta\right)\left(\va^\top \bF_0^\top \bar\bR_0 \bF_0 \va - \Vert \va\Vert_2^2\right) - \left(\frac{1}{c_1^2\eta} +\va^\top\bF_0^\top  \bar\bR_0 \tilde \bX \vbeta \right)^2}.
\end{align*}
Based on Lemma \ref{lemma:l1_limits}, we know that
$\vbeta^\top \tilde \bX^\top \bar\bR_0 \tilde \bX \vbeta$ and $\va^\top \bF_0^\top \bar\bR_0 \bF_0 \va - \Vert \va\Vert_2^2$ are $\Theta_\sP(1)$. Also, it can easily be seen that
\begin{align*}
    \Vert\bF_0^\top \bar\bR_0 \tilde \bX\vbeta\Vert_2 \leq \Vert \bF_0 \Vert_\textnormal{op} \Vert \bar \bR_0\Vert_\textnormal{op} \Vert \tilde \bX \vbeta\Vert_2 = O_\sP(1).  
\end{align*}
Hence, 
\begin{equation}
   \label{abs}
\left(\frac{1}{c_1^2\eta} +\va^\top\bF_0^\top  \bar\bR_0 \tilde \bX \vbeta \right)^2 = o_\sP(1) 
\end{equation}
because $\va \independent \bF_0^\top  \bar\bR_0 \tilde \bX \vbeta$. Also, 
using that 
 $ \bar\bR_0 \bF_0\bF_0^\top = (\bF_0\bF_0^\top + \lambda n \bI_n)^{-1} \bF_0\bF_0^\top = \bI - \lambda n \bar\bR_0$,  
\begin{align*}
    &(\tilde\vy^\top \bar\bR_0 \bF_0 \va)^2 = \tilde\vy^\top \bar\bR_0 \bF_0 \E_\va[\va \va^\top] \bF_0^\top \bar\bR_0\tilde\vy + o_\sP(1)\\
    &= \frac{1}{N}\tilde\vy^\top \bar\bR_0 \bF_0  \bF_0^\top \bar\bR_0\tilde\vy +o_\sP(1)
    = \frac{1}{N} \tilde\vy^\top\bar\bR_0\tilde \vy -  \frac{\lambda n}{N} \tilde \vy^\top \bar \bR_0^2 \tilde \vy  + o_\sP(1) = o_\sP(1),
\end{align*}
where in the last inequality, we used that $\tilde\vy^\top\bar\bR_0\tilde\vy \leq \frac{1}{\lambda n} \Vert\tilde \vy \Vert_2^2 = O_\sP(1)$ and $\tilde \vy^\top \bar\bR_0^2 \tilde\vy \leq \frac{1}{(\lambda n)^2} \Vert \tilde \vy \Vert_2^2 = o_\sP(1)$. Putting everything together, it follows that $\delta_1 = o_\sP(1)$ in probability.
\paragraph{Term 2 and Term 3.} The second and third terms can be written as
\begin{align*}
     \delta_2&= \delta_3 = \lambda T_{12}\tilde\vy^\top \bar\bR_0 \tilde \bX \vbeta \va^\top \bF_0^\top \bar\bR_0\tilde\vy\\ &= 
     \frac{\lambda\, \left(-\frac{1}{c_1^2\eta} - \va^\top \bF_0^\top \bar\bR_0 \tilde\bX \vbeta\right)\,(\tilde\vy^\top \bar\bR_0 \tilde\bX \vbeta \va^\top \bF_0^\top \bar\bR_0\tilde\vy)}{\left(\vbeta^\top \tilde\bX^\top \bar\bR_0 \tilde\bX \vbeta\right)\left(\va^\top \bF_0^\top \bar\bR_0 \bF_0 \va - \Vert \va\Vert_2^2\right) - \left(\frac{1}{c_1^2\eta} +\va^\top\bF_0^\top  \bar\bR_0 \tilde\bX \vbeta \right)^2}.
\end{align*}
Recall from the above argument that the denominator is $\Theta_\sP(1)$ and that $\frac{1}{c_1^2\eta} +\va^\top\bF_0^\top  \bar\bR_0 \tilde \bX \vbeta = o_\sP(1)$.
Also, $\tilde \vy^\top \bar \bR_0 \tilde \bX \vbeta \va^\top \bF_0^\top \bar \bR_0 \tilde \vy \leq \frac{1}{(\lambda n)^2} \Vert \tilde \vy\Vert_2^2 \Vert \tilde \bX \vbeta \Vert_2 \Vert \va \Vert_2 \Vert \bF_0 \Vert_\textnormal{op} = O_\sP(1).$
Therefore, we find $\delta_2 = \delta_3 = o_\sP(1)$.

\paragraph{Term 4.} This term can be written as
 \begin{align*}
      &\delta_4 = \lambda T_{22}(\tilde\vy^\top \bar\bR_0 \tilde\bX \vbeta)^2\\ 
      &= 
      \frac{\lambda\, \left(   \va^\top \bF_0^\top \bar\bR_0 \bF_0 \va - \Vert \va\Vert_2^2\right)\,(\tilde \vy^\top \bar \bR_0 \tilde \bX \vbeta)^2}
      {
      \vbeta^\top \tilde\bX^\top \bar\bR_0 \tilde\bX \vbeta\left(\va^\top \bF_0^\top \bar\bR_0 \bF_0 \va - \Vert \va\Vert_2^2\right) - \left(\frac{1}{c_1^2\eta} +\va^\top\bF_0^\top  \bar\bR_0 \tilde\bX \vbeta \right)^2} 
      =\lambda  \frac{(\tilde \vy^\top \bar \bR_0 \tilde \bX \vbeta)^2}{\vbeta^\top \tilde \bX^\top \bar \bR_0 \tilde \bX \vbeta} + o_\sP(1),
 \end{align*}
since $\frac{1}{c_1^2\eta} +\va^\top\bF_0^\top  \bar\bR_0 \tilde \bX \vbeta = o_\sP(1)$ and $\va^\top \bF_0^\top \bar\bR_0 \bF_0 \va - \Vert \va\Vert_2^2 = \Theta_\sP(1) \neq 0$ by Lemma \ref{lemma:l1_limits}. 
By \eqref{eqn:datagen} and Condition \ref{cond:tehe}, we can write $\tilde \vy = \sum_{p = 1}^\infty c_{\star, p} H_p(\tilde \bX \vbeta_\star) + \boldsymbol{\epsilon}$,
where $\boldsymbol{\epsilon} \in \R^n$ is additive Gaussian noise.
Note that
\begin{align*}
    \tilde \vy^\top \bar \bR_0 \tilde \bX \vbeta &=  c_{\star, 1} \vbeta_\star^\top \tilde \bX^\top \bar\bR_0 \tilde\bX \vbeta + o_\sP(1)
\end{align*}
by Lemma \ref{lemma:s12} and since $\Vert \bar\bR_0 \tilde \bX \vbeta \Vert_2 = O_\sP( 1/\sqrt{n})$ and $\boldsymbol{\varepsilon} \independent \bar\bR_0 \tilde \bX \vbeta$.

Further by Lemma \ref{lemma:turn_beta_to_beta_star},
\begin{align}
    \label{eq:tt1}
    c_{\star, 1} \vbeta_\star^\top \tilde \bX^\top \bar\bR_0 \tilde\bX \vbeta = c_{\star, 1}^2 \vbeta_\star^\top \tilde \bX^\top \bar\bR_0 \tilde\bX \vbeta_\star + o_\sP(1) \to_P c_{\star, 1}^2\frac{\psi}{\phi}m_2,
\end{align}
where the final limit follows from the proof of Lemma \ref{lemma:l1_limits}. By summing up the fours terms computed above and using Lemma \ref{lemma:l1_limits}, we find
\begin{align}
    \gL_{\rm tr}(\bF_0) -
         \gL_{\rm tr}(\bF)  \to_P \Delta_1 = \frac{\psi\lambda c_{\star,1}^4 m_2}{\phi[c_{\star,1}^2 + \phi(c_{\star}^2 + \sigma_\ep^2)]} > 0,
\end{align}
which concludes the proof for $\ell = 1$.

\subsection{Proof for $\ell = 2$}
\label{pl2}
In the $\ell = 2$ regime, based on Theorem \ref{thm:training_equivalence}, we can replace $\bF$ with $\bF_2$ (defined in \eqref{eqn:Fell}) to compute the training loss. Hence, from now on we let $\bF = \bF_2$. We can write
$\bF\bF^\top = \bF_0\bF_0^\top + \bU \bK \bU^\top$
where
$\bU = [\,\bF_0\va\;|\;\bF_0 \va^{\circ 2} \sqrt{N}\;|\;\;\tilde\bX\vbeta\;|\;(\tilde\bX\vbeta)^{\circ 2}\;]$ and 

\begin{align*}
    \bK = \begin{bmatrix}
       0 & 0&c_1^2\eta&0 \\[0.3cm]
       0&0&0&c_1^2c_2 \eta^2/\sqrt{N}\\[0.3cm]
       c_1^2 \eta & 0 & c_1^4 \eta^2 \Vert\va\Vert_2^2& c_1^4c_2 \eta^3 \langle \va, \va^{\circ 2}\rangle\\[0.3cm]
       0 & c_1^2c_2\eta^2/\sqrt{N} & c_1^4 c_2 \eta^3 \langle \va^{\circ 2}, \va\rangle & c_1^4c_2^2 \eta^4 \langle \va^{\circ 2}, \va^{\circ 2}\rangle
   \end{bmatrix}.
\end{align*}

Recalling 
$\bar\bR = (\bF\bF^\top + \lambda n \bI_n)^{-1}$
and
$ \bar\bR_0 = (\bF_0\bF_0^\top + \lambda n \bI_n)^{-1}$,
we still have \eqref{wb}.
Defining
$\bT = (\bK^{-1}+ \bU^\top \bar\bR_0 \bU)^{-1} \in \R^{4\times 4}$, we have the following analogue to \eqref{eq:diff_loss}:
\begin{align}
\label{eq:train-l2-first}
    \gL_{\rm tr}(\bF_0) - \gL_{\rm tr}(\bF) =   \lambda \tilde\vy^\top \bar\bR_0 \bU \bT\bU^\top \bar\bR_0 \tilde\vy.
\end{align}

Denoting in what follows
$\bQ = \bF_0^\top \bar\bR_0 \bF_0$,
the inverse $\bT^{-1}$ can be written as follows:

\begin{align}
\label{eq:inv_T_l2}
\scalemath{0.9}{
    \hspace{-0.5cm}\begin{bmatrix}
         \va^\top \bQ \va -\Vert \va\Vert_2^2 & N^{\frac12}\va^\top (\bQ-\bI) \va^{\circ 2} & \va^\top \bF_0^\top \bar\bR_0 \tilde\vtheta + \frac{1}{c_1^2 \eta} & \va^\top \bF_0^\top \bar\bR_0\tilde\vtheta^{\circ 2}\\[0.5cm]
       N^{\frac12}\va^\top (\bQ-\bI) \va^{\circ 2} & {N}\va^{\circ 2\top}\bQ\va^{\circ 2}-N\Vert \va^{\circ 2}\Vert_2^2&N^{\frac12}\va^{\circ 2\top}\bF_0^\top \bar\bR_0\tilde\vtheta&N^{\frac12}\va^{\circ 2\top}\bF_0^\top \bar\bR_0\tilde\vtheta^{\circ 2} + \frac{N^{\frac12}}{c_1^2c_2\eta^2}\\[0.5cm]
         \tilde\vtheta^\top\bar\bR_0 \bF_0 \va + \frac{1}{c_1^2 \eta}&N^{\frac12} \tilde\vtheta^\top\bar\bR_0\bF_0\va^{\circ 2}& \tilde\vtheta^\top\bar\bR_0\tilde\vtheta& \tilde\vtheta^\top\bar\bR_0\tilde\vtheta^{\circ 2}\\[0.5cm]
        \tilde\vtheta^{\circ 2 \top}\bar\bR_0\bF_0 \va&N^{\frac12}\tilde\vtheta^{\circ 2 \top}\bar\bR_0\bF_0\va^{\circ 2} + \frac{N^{\frac12}}{c_1^2c_2\eta^2}&\tilde\vtheta^{\circ 2 \top}\bar\bR_0\tilde\vtheta&\tilde\vtheta^{\circ 2 \top}\bar\bR_0\tilde\vtheta^{\circ 2}
    \end{bmatrix}}.
\end{align}

\subsubsection{Analysis of Terms in \texorpdfstring{$\bT^{-1}$}{T-1} and \texorpdfstring{$\bT$}{T}}
\label{section:l2-Hinverse}
In the following section, we will first analyze the elements of $\bT^{-1}$:

\paragraph{(1,1):} The term $\va^\top \bQ \va -\Vert \va\Vert_2^2$ has already been analyzed in Lemma \ref{lemma:l1_limits} and is $\Theta_\sP(1)$.

\paragraph{(1,2) and (2,1):} 
Recalling $\bQ = \bF_0^\top \bar\bR_0 \bF_0$ and 
 $\bR_0 = (\bF_0^\top\bF_0 + \lambda n \bI_N)^{-1}$,
we can write
    \begin{align*}
     [\bT^{-1}]_{1,2} = [\bT^{-1}]_{2,1} &= \sqrt{N}\va^\top \bQ \va^{\circ 2}- \sqrt{N}\langle\va, \va^{\circ 2} \rangle\\ 
     &=  - \lambda n\sqrt{N}\va^\top  \bR_0 \va^{\circ 2}
     =  -\lambda n \sqrt{N} \va^\top\bR_0\left( \va^{\circ 2}- 1/N \mathbf{1}_N + 1/N \mathbf{1}_N\right).
    \end{align*}
    Introducing $\tilde\va = \sqrt{N} \va \sim \normal(0, \bI_N)$, 
    and as $H_2(x) = x^2-1$ for all $x$,
    we find
    \begin{align*}
         [\bT^{-1}]_{1,2} = [\bT^{-1}]_{2,1} = - \frac{\lambda n}{N} \tilde\va^\top\bR_0 H_2(\tilde\va) - \frac{\lambda n}{\sqrt{N}} \va^\top \bR_0 \mathbf{1}_N.
    \end{align*}
    The second term converges to zero as $n\to\infty$ because $\va \sim \normal(0, \frac{1}{N}\bI_N)$ is independent of $\bR_0$, and
    $\bigl\Vert\frac{n}{\sqrt{N}}\bR_0 \mathbf{1}_N\bigr\Vert_2 = O_\sP(1)$. Moreover, the first term also converges to zero; indeed,
    \begin{align*}
        \tilde\va^\top  \bR_0 H_2(\tilde\va) = \left(\frac{\tilde\va^\top + H_2(\tilde\va)}{2}\right)^\top\bR_0\left(\frac{\tilde\va^\top + H_2(\tilde\va)}{2}\right) - \left(\frac{\tilde\va^\top - H_2(\tilde\va)}{2}\right)^\top\bR_0\left(\frac{\tilde\va^\top - H_2(\tilde\va)}{2}\right).
    \end{align*}
    
    Lemma \ref{lemma:poly-concentration} can be used with $\bD = \bR$ to prove the concentration of both term around their expectation. Note that the expectation of $\tilde\va^\top\bR_0 H_2(\tilde\va)$ is zero because of the orthogonality property of Hermite polynomials and the independence of $\tilde\va$ and $\bR_0$.
    Putting everything together, we conclude that $[\bT^{-1}]_{1,2} = [\bT^{-1}]_{2,1} = o_\sP(1)$.

\paragraph{(1,3) and (3,1):} Recalling that $\tilde\vtheta=\tilde \bX\vbeta$, 
it follows from \eqref{abs} that this term is $o_{\sP}(1)$.  

\paragraph{(1,4) and (4,1):} To bound $\va^\top \bF_0^\top \bar\bR_0\tilde\vtheta^{\circ 2}$, note that 
\begin{align*}
\Vert\bF_0^\top \bar\bR_0\tilde\vtheta^{\circ 2}\Vert_{\rm op} \leq \Vert\bF_0\Vert_{\rm op}\Vert\bar\bR_0\Vert_{\rm op}\Vert\tilde\vtheta^{\circ 2}\Vert_{2}  = O_\sP(1).
\end{align*}

Hence, because $\va \sim \normal(0, \frac{1}{N}\bI_N)$ is independent of $\bF_0^\top \bar\bR_0\tilde\vtheta^{\circ 2}$, we have
    \begin{align*}
    [\bT^{-1}]_{1,4} = [\bT^{-1}]_{4,1}=\va^\top \bF_0^\top \bar\bR_0\tilde\vtheta^{\circ 2} = o_\sP(1).
    \end{align*}

    \paragraph{(2,2):} This term is $O_\sP(1)$, because $\va \sim \normal(0, \frac{1}{N}\bI_N)$, so 
    \begin{align*}
    [\bT^{-1}]_{2,2} &= {N}\va^{\circ 2\top}\bQ\va^{\circ 2}-N\Vert \va^{\circ 2}\Vert_2^2   
    = - \lambda N n\, \va^{\circ 2 \top}(\bF_0^\top \bF_0 + \lambda n \bI_N)^{-1}\va^{\circ 2}\\
    &\leq \lambda N n \,\Vert\va^{\circ 2}\Vert_2^2\cdot \Vert(\bF_0^\top \bF_0 + \lambda n \bI_N)^{-1}\Vert_{\rm op} = O_\sP(1).
    \end{align*}
    
    \paragraph{(2,3) and (3,2):} To bound $\sqrt{N}\va^{\circ 2\top}\bF_0^\top \bar\bR_0\tilde\vtheta$, note that
    \begin{align*}
    \Vert\sqrt{N}\va^{\circ 2\top}\bF_0^\top \bar\bR_0\tilde\bX\Vert_{2} &\leq \Vert\sqrt{N}\va^{\circ 2}\Vert_2\Vert\bF_0\Vert_{\rm op}\Vert\bR_0\Vert_{\rm op}\Vert\tilde\bX\Vert_{\rm op}
    \leq C \cdot \sqrt{N} \cdot \frac{1}{n} \cdot \sqrt{N} = O_\sP(1).
    \end{align*}
    Also, by Lemma \ref{lemma:turn_beta_to_beta_star}, we have
    \begin{align*}
        [\bT^{-1}]_{2,3} = [\bT^{-1}]_{3,2} = \sqrt{N}\va^{\circ 2\top}\bF_0^\top \bar\bR_0\tilde\bX\vbeta = c_{\star,1}\sqrt{N}\va^{\circ 2\top}\bF_0^\top \bar\bR_0\tilde\bX\vbeta_\star + o_\sP(1),
    \end{align*}
    which converges to zero, because $\vbeta_\star \sim \normal(0, \frac{1}{d}\bI_d)$ and is independent of $\sqrt{N}\va^{\circ 2\top}\bF_0^\top \bar\bR_0\tilde\bX$, which has bounded norm in probability.
    
\paragraph{(2,4) and (4,2):} First note that in the regime where $\ell = 2$, we have $ \frac{\sqrt{N}}{\eta^2} \to 0 $. 
Hence, we can write
    \begin{align}
    \label{eq:left-ortho-argument}
    [\bT^{-1}]_{2,4}
    & = \sqrt{N}(\tilde\bX\vbeta)^{\circ 2 \top}\bar\bR_0\bF_0\va^{\circ 2} +o_\sP(1)
    = \sqrt{N} H_2(\tilde\bX\vbeta)^{ \top}\bar\bR_0\bF_0\va^{\circ 2} +
    \sqrt{N} \mathbf{1}_n^\top\bar\bR_0\bF_0\va^{\circ 2} +
    o_\sP(1).
    \end{align}
By Lemma \ref{lemma:delta_2_to_zero}, the first term converges in probability to zero.
Moreover, $\va \sim \normal(0, \frac{1}{N}\bI_N)$ is independent of $\bar\bR_0\bF_0$,
and $\|\mathbf{1}_n^\top\bar\bR_0\bF_0\|_2 = O_\sP(1)$.
Thus, 
we have that $\sqrt{N}\mathbf{1}_n^\top\bar\bR_0\bF_0\left( \va^{\circ 2}- 1/N \mathbf{1}_N\right) \to_P 0$.
Hence, we find
    \begin{align*}
    [\bT^{-1}]_{2,4} =
    \sqrt{N} \mathbf{1}_n^\top\bar\bR_0\bF_0\mathbf{1}_N/N +
    o_\sP(1).
    \end{align*}
Based on the Gaussian equivalence from Appendix \ref{sec:gec}, we
can replace $\bF_0$ with
$\bF_0 = c_1 \tilde\bX \bW_0^\top + c_{>1} \bZ$,
    where $\bZ \in \R^{n \times d}$ is an independent random matrix with $\normal(0, 1)$ entries, without changing the limit. 
Now,  the linearized $\bF_0$ is left-orthogonally invariant, hence $\bF_0$ has the same distribution as $\bO\bF_0$, where $\bO$ is uniformly distributed over the Haar measure of $d$-dimensional orthogonal matrices, independently of all other randomness. 
Hence,
$
    N^{-1/2} \mathbf{1}_n^\top\bar\bR_0\bF_0\mathbf{1}_N 
=_d
N^{-1/2} \mathbf{1}_n^\top \bO \bar\bR_0\bF_0\mathbf{1}_N. 
$
Now,
$ \bO ^\top \mathbf{1}_n =_d \sqrt{n}\vz/\|\vz\|_2$, where $\vz\sim\normal(0,\bI_n)$. Moreover $\|\vz\|_2 = \sqrt{n}(1+o_\sP(1))$, hence 
replacing $ \bO ^\top \mathbf{1}_n$ with $\vz^\top$ introduces negligible error. 
Hence,
$   [\bT^{-1}]_{2,4} =_d
    N^{-1/2} \vz^\top\bar\bR_0\bF_0\mathbf{1}_N +
    o_\sP(1).
$
Now, $\vz^\top\bar\bR_0\bF_0\mathbf{1}_N\sim \normal(0,\|\bar\bR_0\bF_0\mathbf{1}_N\|_2^2)$, and
$\|\bar\bR_0\bF_0\mathbf{1}_N\|_2  = O_{\sP}(1)$,
thus
$[\bT^{-1}]_{2,4} \to_P0$.

\paragraph{(3,3):} We have $\Vert\tilde\vtheta\Vert_2 = O_\sP(\sqrt{N})$ and $\Vert\bar\bR_0\Vert_{\rm op} = O_\sP(1/n)$. Thus, $[\bT^{-1}]_{3,3} = O_\sP(1)$.

\paragraph{(3,4) and (4,3):} First, note that defining $\tilde\vbeta = \frac{\vbeta}{\Vert\vbeta\Vert_2}$,
    and as $H_2(x) = x^2-1$ for all $x$,
    we can write
\begin{align*}
    [\bT^{-1}]_{3,4} = [\bT^{-1}]_{4,3} &=  \tilde\vtheta^\top\bar\bR_0 \tilde\vtheta^{\circ 2}
    = \Vert \vbeta\Vert_2^3 \left((\tilde \bX \tilde\vbeta)^\top\bar\bR_0 (\tilde \bX \tilde\vbeta)^{\circ 2}\right)\\
    &= \Vert \vbeta\Vert_2^3 \left((\tilde \bX \tilde\vbeta)^\top\bar\bR_0 H_2(\tilde \bX \tilde\vbeta)\right) + \Vert \vbeta\Vert_2^2 \left( \tilde\vtheta^\top\bar\bR_0 \mathbf{1}_N \right).
\end{align*}
Now, by Lemma \ref{lemma:turn_beta_to_beta_star}, we have
$ \tilde\vtheta^\top\bar\bR_0 \mathbf{1}_N  = c_{\star,1}\tilde\vtheta_\star^\top \bar\bR_0 \mathbf{1}_N + o_\sP(1)$.
Now, note that $\Vert\tilde \bX \bar\bR_0 \mathbf{1}_N\Vert_2 = O_\sP(1)$ and $\vbeta_{\star} \sim \normal(0, \frac{1}{d}\bI_d)$ is independent of $\tilde \bX \bar\bR_0 \mathbf{1}_N$, which implies that the second term converges to zero. By using Lemma \ref{lemma:general_orthogonality_21} for $\vu = \tilde\vbeta$,
the first term also converges to zero. Putting these together, we have $[\bT^{-1}]_{3,4} = [\bT^{-1}]_{4,3} = o_{\sP}(1)$.
\paragraph{(4,4):} We have $\Vert\tilde\vtheta^{\circ 2}\Vert_2 = O_\sP(\sqrt{N})$ and $\Vert\bar\bR_0\Vert_{\rm op} = O_\sP(1/n)$. Thus, $[\bT^{-1}]_{4,4} = O_\sP(1)$.

Now, putting everything together, the matrix $\bT^{-1}$ can be written as

\begin{align*}
    \bT^{-1} =\begin{bmatrix}
         [\bT^{-1}]_{1,1} & 0& 0& 0\\[0.5cm]
        0& [\bT^{-1}]_{2,2}&0&0\\[0.5cm]
        0&0&[\bT^{-1}]_{3,3}&0\\[0.5cm]
        0&0&0&[\bT^{-1}]_{4,4}
    \end{bmatrix}
    + \mathbf{\Delta}_1,\\
\end{align*}
where the all elements of $\mathbf{\Delta}_1$ are $o_\sP(1)$. Thus the matrix $\bT$ is equal to

\begin{align}\label{t}
    \bT =\begin{bmatrix}
         \frac{1}{[\bT^{-1}]_{1,1}} & 0& 0& 0\\[0.5cm]
        0& \frac{1}{[\bT^{-1}]_{2,2}}&0&0\\[0.5cm]
        0&0&\frac{1}{[\bT^{-1}]_{3,3}}&0\\[0.5cm]
        0&0&0&\frac{1}{[\bT^{-1}]_{4,4}}
    \end{bmatrix}
    + \mathbf{\Delta}_2,\\\nonumber
\end{align}
where the all elements of $\mathbf{\Delta}_2$ are $o_\sP(1)$. 

\subsubsection{Computing the training loss}

Having computed the limit of the matrix $\bT^{-1}$ and $\bT$, we are now ready to put everything together and compute the limiting train loss. One can write the outcome vector $\tilde\vy$ as $\tilde \vy = \sigma_\star (\tilde \bX \vbeta_\star) + \boldsymbol{\epsilon}$, where $\boldsymbol{\epsilon} \in \R^n$ is the noise term. Thus, using \eqref{eq:train-l2-first}, we find

\begin{align}
\label{eq:noise-expansion-l2}
\gL_{\rm tr}(\bF_0) - \gL_{\rm tr}(\bF) &=   \lambda \sigma_\star (\tilde \bX \vbeta_\star)^\top \bar\bR_0 \bU \bT \bU^\top \bar\bR_0\sigma_\star (\tilde \bX \vbeta_\star)\nonumber\\&\hspace{1cm} + 2 \lambda \sigma_\star (\tilde \bX \vbeta_\star)^\top \bar\bR_0 \bU \bT \bU^\top \bar\bR_0\boldsymbol{\epsilon}
+ \lambda \boldsymbol{\epsilon}^\top \bar\bR_0 \bU \bT \bU^\top \bar\bR_0\boldsymbol{\epsilon}.
\end{align}

We will first argue the second and third term will go to zero in probability. To do this, we note that $\Vert\bT\Vert_{\rm op} = O_\sP(1)$ and also $\Vert \bU^\top \bar\bR_0\Vert_2 \leq \Vert \bU\Vert_{\rm op} \Vert \bar \bR_0 \Vert_{\rm op} = O_{\sP}(1/\sqrt{n})$. We have $\boldsymbol{\epsilon} \sim \normal(0, \sigma_\epsilon^2 \bI_n)$ and it is independent of $\bar\bR_0, \bU, \bT, \tilde \bX$, and $\vbeta_\star$. Also note that $\Vert\sigma_\star(\tilde \bX \vbeta_\star)^\top \bar\bR_0 \bU\Vert_2 = O_{\sP}(1).$ Thus, the second and third term in \eqref{eq:noise-expansion-l2} go to zero and we have

\begin{align}
\gL_{\rm tr}(\bF_0) - \gL_{\rm tr}(\bF) &=   \lambda \sigma_\star (\tilde \bX \vbeta_\star)^\top \bar\bR_0 \bU \bT \bU^\top \bar\bR_0\sigma_\star (\tilde \bX \vbeta_\star)\nonumber + o_\sP(1).
\end{align}

If we expand $\sigma_\star(\tilde \bX\vbeta_\star)$ in the Hermite basis as $\sigma_\star(\tilde \bX\vbeta_\star) = \sum_{p = 1}^{\infty}c_{\star,p} H_p(\tilde\vtheta_\star)$, we can write
\begin{align*}
\gL_{\rm tr}(\bF_0) - \gL_{\rm tr}(\bF) =   \lambda \sum_{p, q = 1}^{\infty}c_{\star,p} c_{\star,q} H_p(\tilde\vtheta_\star)^\top \bar\bR_0\bU \bT\bU^\top \bar\bR_0 H_q(\tilde\vtheta_\star) + o_\sP(1).
\end{align*}
We define $\Delta_{p, q} =  H_p(\tilde\vtheta_\star)^\top \bar\bR_0 \bU \bT\bU^\top \bar\bR_0 H_q(\tilde\vtheta_\star)  = \delta_1^{p,q} + \delta_2^{p,q} + \delta_3^{p,q} + \delta_4^{p,q}$  
in which, 
with $T_{i,j}$ being the $(i,j)$-th elements of the matrix $\bT$,

\begin{align}
\label{eq:delta1}
    \delta_1^{p,q} &= T_{1,1}H_p(\tilde\vtheta_\star)^\top \bar\bR_0(\bF_0\va)(\bF_0\va)^\top\bar\bR_0 H_q(\tilde\vtheta_\star)\vspace{0.2cm}\nonumber\\
        &\hspace{1cm}+ T_{1,2} H_p(\tilde\vtheta_\star)^\top \bar\bR_0(\bF_0\va) (\sqrt{N}\bF_0\va^{\circ 2})^{\top}\bar\bR_0 H_q(\tilde\vtheta_\star)\vspace{0.2cm}\nonumber\\
        &\hspace{1cm}+ T_{1,3}H_p(\tilde\vtheta_\star)^\top \bar\bR_0(\bF_0\va) \tilde\vtheta^\top\bar\bR_0 H_q(\tilde\vtheta_\star)\vspace{0.2cm}\nonumber\\
        &\hspace{1cm}+ T_{1,4}H_p(\tilde\vtheta_\star)^\top \bar\bR_0(\bF_0\va)\tilde\vtheta^{\circ 2 \top}\bar\bR_0 H_q(\tilde\vtheta_\star), 
\end{align}
\begin{align}
\label{eq:delta2}
    \delta_2^{p,q}&= T_{2,1}H_p(\tilde\vtheta_\star)^\top \bar\bR_0(\sqrt{N}\bF_0\va^{\circ 2})(\bF_0\va)^\top\bar\bR_0 H_q(\tilde\vtheta_\star)\vspace{0.2cm}\nonumber\\
    &\hspace{1cm}+  T_{2,2}H_p(\tilde\vtheta_\star)^\top \bar\bR_0(\sqrt{N}\bF_0\va^{\circ 2})(\sqrt{N}\bF_0\va^{\circ 2})^\top\bar\bR_0 H_q(\tilde\vtheta_\star)\vspace{0.2cm}\nonumber\\
        &\hspace{1cm}+T_{2,3}H_p(\tilde\vtheta_\star)^\top \bar\bR_0(\sqrt{N}\bF_0\va^{\circ 2})\tilde\vtheta^\top\bar\bR_0 H_q(\tilde\vtheta_\star)\vspace{0.2cm}\nonumber\\
        &\hspace{1cm}+ T_{2,4}H_p(\tilde\vtheta_\star)^\top \bar\bR_0(\sqrt{N}\bF_0\va^{\circ 2})\tilde\vtheta^{\circ 2 \top}\bar\bR_0 H_q(\tilde\vtheta_\star),
\end{align}
\begin{align}
\label{eq:delta3}
    \delta_3^{p,q} &= T_{3,1}H_p(\tilde\vtheta_\star)^\top \bar\bR_0\tilde\vtheta(\bF_0\va)^\top\bar\bR_0 H_q(\tilde\vtheta_\star)\vspace{0.2cm}\nonumber\\
    &\hspace{1cm}+ T_{3,2}H_p(\tilde\vtheta_\star)^\top \bar\bR_0\tilde\vtheta(\sqrt{N}\bF_0\va^{\circ 2})^\top\bar\bR_0 H_q(\tilde\vtheta_\star)\vspace{0.2cm}\vspace{0.2cm}\nonumber\\
    &\hspace{1cm}+ T_{3,3}H_p(\tilde\vtheta_\star)^\top \bar\bR_0\tilde\vtheta\tilde\vtheta^{\top}\bar\bR_0 H_q(\tilde\vtheta_\star)\vspace{0.2cm}\nonumber\\
    &\hspace{1cm}+ T_{3,4}H_p(\tilde\vtheta_\star)^\top \bar\bR_0\tilde\vtheta\tilde\vtheta^{\circ 2 \top}\bar\bR_0 H_q(\tilde\vtheta_\star),
\end{align}
and
\begin{align}
\label{eq:delta4}
    \delta_4^{p,q} &=  T_{4,1}H_p(\tilde\vtheta_\star)^\top \bar\bR_0\tilde\vtheta^{\circ 2}(\bF_0\va)^\top\bar\bR_0 H_q(\tilde\vtheta_\star)\vspace{0.2cm}\nonumber\\
    &\hspace{1cm}+ T_{4,2}H_p(\tilde\vtheta_\star)^\top \bar\bR_0\tilde\vtheta^{\circ 2}(\sqrt{N}\bF_0\va^{\circ 2})^\top\bar\bR_0 H_q(\tilde\vtheta_\star)\vspace{0.2cm}\nonumber\\
        &\hspace{1cm}+ T_{4,3}H_p(\tilde\vtheta_\star)^\top \bar\bR_0\tilde\vtheta^{\circ 2}\tilde\vtheta^{\top}\bar\bR_0 H_q(\tilde\vtheta_\star)\vspace{0.2cm}\nonumber\\
    &\hspace{1cm}+ T_{4,4}H_p(\tilde\vtheta_\star)^\top \bar\bR_0\tilde\vtheta^{\circ 2}\tilde\vtheta^{\circ 2 \top}\bar\bR_0 H_q(\tilde\vtheta_\star).
\end{align}

We will now look at each $\delta_i^{p,q}$ for $i \in \{1,2,3,4\}$.

\paragraph{Term $\delta_1^{p,q}$:} To prove that the term in \eqref{eq:delta1} are asymptotically negligible, note that $\va \sim \normal(0, \frac{1}{N} \bI_N)$ is independent of $H_p(\tilde\vtheta_\star)\bar\bR_0\bF_0 $ and we have $\Vert H_p(\tilde\vtheta_\star)\bar\bR_0\bF_0  \Vert_{\rm 2} = O_\sP(1)$. Thus, $H_p(\tilde\vtheta_\star)\bar\bR_0\bF_0 \va = o_\sP(1)$ and all other terms multiplying this are $O_\sP(1)$. This implies that for any $p, q \in \mathbb{N}$, we have $\delta_1^{p, q} = o_\sP(1)$.

\paragraph{Term $\delta_2^{p,q}$:}  All four terms in \eqref{eq:delta2} converge to zero. To prove this, we will use the Lemma \ref{lemma:delta_2_to_zero}. In \eqref{eq:delta2}, all terms multiplied by $\sqrt{N} H_p(\tilde\vtheta_\star) \bar \bR_0 \bF_0 \va^{\circ 2}$ are $O_\sP(1)$. Thus, $\delta_2^{p,q} = o_\sP(1)$ for any $p, q \in \mathbb{N}$.

\paragraph{Term $\delta_3^{p,q}$:} The first term in \eqref{eq:delta3} converges to zero in probability due to an argument similar to the arguments used for $\delta_1^{p,q}$;
and the same holds for the second term in \eqref{eq:delta3},
by arguing similarly as for $\delta_2^{p,q}$.  
We have shown that $T_{3,4} = o_\sP(1)$, and by a norm argument, we can see that $H_p(\tilde\vtheta_\star)^\top \bar\bR_0\tilde\vtheta$ and $\tilde\vtheta^{\circ 2 \top}\bar\bR_0 H_q(\tilde\vtheta_\star)$ are $O_\sP(1)$. Hence, 
\begin{align*}
    \delta_3^{p,q} &=T_{3,3}\bigl(H_p(\tilde\vtheta_\star)^\top\bar\bR_0 \tilde\vtheta\bigr)\bigl(\tilde\vtheta^{\top}\bar\bR_0 H_q(\tilde\vtheta_\star)\bigr)+ o_\sP(1).
\end{align*}

\paragraph{Term $\delta_4^{p,q}$:} 
The first two terms in \eqref{eq:delta4} converge to zero by the same reasoning used for $\delta_1^{p,q}$ and $\delta_2^{p,q}$, respectively. 
The third term can also be shown to converge to zero by recalling that $T_{4,3} = o_\sP(1)$. Hence, we can write
\begin{align*}
        \delta_4^{p,q} &= T_{4,4}\bigl(H_p(\tilde\vtheta_\star)^\top\bar\bR_0 \tilde\vtheta^{\circ 2}\bigr)\bigl(\tilde\vtheta^{\circ 2\top}\bar\bR_0 H_q(\tilde\vtheta_\star)\bigr)+ o_\sP(1).
\end{align*}
Putting everything together, we find
\begin{align*}
    L_{\rm tr}(\bF_0) - L_{\rm tr}(\bF) 
    &= \lambda T_{3,3}  \sum_{p, q = 1}^{\infty}c_{\star,p} c_{\star,q} (H_p(\tilde\vtheta_\star)^\top\bar\bR_0 \tilde\vtheta\bigr)\bigl(\tilde\vtheta^{\top}\bar\bR_0 H_q(\tilde\vtheta_\star)\bigr)\\
    &+\lambda T_{4,4}\sum_{p, q = 1}^{\infty}c_{\star,p} c_{\star,q} \bigl(H_p(\tilde\vtheta_\star)^\top\bar\bR_0 \tilde\vtheta^{\circ 2}\bigr)\bigl(\tilde\vtheta^{\circ 2\top}\bar\bR_0 H_q(\tilde\vtheta_\star)\bigr) + o_\sP(1).
\end{align*}
Using Lemma \ref{lemma:s12}, we know that in the sums above, the terms corresponding to $(p,q) = (1,1)$ and $(p,q) = (2,2)$ are the only non-negligible terms in the first and second sum respectively.

Hence,
as $T_{3,3} = 1/{(\tilde\vtheta^\top\bar\bR_0\tilde\vtheta)} + o_\sP(1)$
and 
$T_{4,4} =1/(\tilde\vtheta^{\circ 2 \top}\bar\bR_0\tilde\vtheta^{\circ 2}) + o_{\sP}(1)$,
from Lemmas 
\ref{lemma:turn_beta_to_beta_star}, \ref{lemma:l1_limits},
\ref{lemma:s12} and \ref{t2rt2},
we can write,
\begin{align*}
\gL_{\rm tr}(\bF) - \gL_{\rm tr}(\bF_0) 
&= \lambda T_{3,3} c_{\star,1}^2 \bigl(\tilde\vtheta_\star^\top \bar\bR_0\tilde\vtheta\bigr)^2
    + \lambda T_{4,4}c_{\star,2}^2 \bigl(H_2(\tilde\vtheta_\star)^\top \bar\bR_0\tilde\vtheta^{\circ 2}\bigr)^2 + o_\sP(1)
    \\&=
    \lambda  \frac{c_{\star,1}^2 \bigl(\tilde\vtheta_\star^\top \bar\bR_0\tilde\vtheta\bigr)^2}{\tilde \vtheta^\top \bar\bR_0 \tilde\vtheta}
    + \lambda c_{\star,2}^2 \frac{\bigl(H_2(\tilde\vtheta_\star)^\top \bar\bR_0\tilde\vtheta^{\circ 2}\bigr)^2}
    {\tilde\vtheta^{\circ 2 \top}\bar\bR_0\tilde\vtheta^{\circ 2}}+ o_\sP(1)\\
    &\to_P \Delta_2 =  \frac{\psi\lambda c_{\star,1}^4 m_2}{\phi[c_{\star,1}^2 + \phi(c_{\star}^2 + \sigma_\ep^2)]} + \frac{4\psi\lambda c_{\star,1}^4c_{\star,2}^2m_1}
    {3\phi[\phi(c_\star^2 + \sigma_\ep^2) + c_{\star,1}^2]^2},
\end{align*}
proving the theorem for $\ell =2$.

\section{Asymptotics of the Training Loss for General \texorpdfstring{$\ell$}{ell}}
\label{sec:general_ell}
We define the values $\xi_{i,j}$ 
for all $i,j \in \{0,1,\ldots\}$
such that for any $p \in \mathbb{N}$ and $x\in \R$, we have $x^p = \sum_{i = 0}^{p} \xi_{p,i} H_i(x)$. 
\begin{theorem}
\label{thm:general_ell}
    Let $\ell \in \mathbb{N}$. If Conditions \ref{cond:limit}-\ref{cond:tehe} hold, 
    while we also have $c_1,\cdots,c_\ell \neq 0$, and
    $\eta \asymp n^\alpha$ with $\frac{\ell - 1}{2\ell} < \alpha < \frac{\ell}{2\ell + 2}$,
     then
     for the learned feature map $\bF$ and the untrained feature map $\bF_0$, we have $\gL_{\rm tr}(\bF_0) - 
         \gL_{\rm tr}(\bF) \to_P \Delta_\ell>0$, where 
     \begin{align*}
         \Delta_\ell
         =\lambda \sum_{p = 1}^{\ell}\sum_{q = 1}^{\ell} c_{\star, p} c_{\star, q}r_pr_q\sum_{i = 1}^{\ell}\sum_{j = 1}^{\ell}\; \mathbf{\Omega}_{i, j}\left({\phi(c_\star^2 + \sigma_\ep^2) + c_{\star,1}^2}\right)^{(i+j)/2}
    \xi_{i,p}\xi_{j,q} +o_\sP(1),
    \end{align*}
    in which $\mathbf{\Omega}$ is an invertible matrix with
    \begin{align*}
        [\mathbf{\Omega}^{-1}]_{i,j} = \left(c_{\star,1}^2 + \phi(c_\star^2 + \sigma_\ep^2)\right)^{(i+j)/2} 
        \frac{\psi}{\phi}
        \left[m_2\xi_{i,1}\xi_{j,1} +m_1\sum_{k = 0, \; k\neq 1}^{\min(i,j)} k!\;\xi_{i,k}\xi_{j,k}\right], \quad \forall i, j \in [\ell],
    \end{align*}
    and for $p \in \mathbb{N}$,
\begin{align*}
    r_p = \begin{cases}
            \frac{p!\psi m_1}{\phi}\left(\frac{c_{\star,1}}{\sqrt{\phi(c_\star^2 + \sigma_\ep^2) + c_{\star,1}^2}}\right)^p    & p \neq 1\\
    \frac{\psi m_2}{\phi}\frac{c_{\star,1}}{ \sqrt{\phi(c_\star^2 + \sigma_\ep^2) + c_{\star,1}^2}}& p = 1
        \end{cases}
\end{align*}
\end{theorem}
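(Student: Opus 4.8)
The plan is to carry out, for general $\ell$, the same Woodbury-plus-bilinear-forms argument used for $\ell=1,2$, now tracking $2\ell$ rank-one directions. By \Cref{thm:training_equivalence} we may replace $\bF$ by $\bF_\ell$ of \eqref{eqn:Fell}. Expanding $\bF_\ell\bF_\ell^\top$ gives $\bF_\ell\bF_\ell^\top=\bF_0\bF_0^\top+\bU\bK\bU^\top$, where $\bU\in\R^{n\times 2\ell}$ collects the columns $\{N^{(k-1)/2}\bF_0\va^{\circ k}\}_{k\in[\ell]}$ and $\{(\tilde\bX\vbeta)^{\circ k}\}_{k\in[\ell]}$ (the $N^{(k-1)/2}$ factors make each column $\Theta_\sP(\sqrt n)$ in norm), and $\bK\in\R^{2\ell\times2\ell}$ is the coupling matrix: in the block splitting induced by this partition, $\bK=\begin{bmatrix}\vzero&\bA\\\bA^\top&\bB\end{bmatrix}$ with $\bA$ diagonal, entries $\asymp\eta^kN^{-(k-1)/2}$, and $\bB$ positive semidefinite, entries $\asymp\eta^{k+k'}\langle\va^{\circ k},\va^{\circ k'}\rangle$. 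Applying the Woodbury identity exactly as in \eqref{wb} and \Cref{lemma:train_loss} yields
\[
    \gL_{\rm tr}(\bF_0)-\gL_{\rm tr}(\bF)=\lambda\,\tilde\vy^\top\bar\bR_0\bU\bT\bU^\top\bar\bR_0\tilde\vy,\qquad \bT=(\bK^{-1}+\bU^\top\bar\bR_0\bU)^{-1}.
\]

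Next I reduce to the $(\tilde\bX\vbeta)$-block of $\bT$. From the anti-triangular form of $\bK$, the $(\tilde\bX\vbeta)$-block of $\bK^{-1}$ is exactly $\vzero$ and its cross block is $\bA^{-1}$, whose entries are $\asymp N^{(k-1)/2}/\eta^k=o(1)$ precisely because $\alpha>(\ell-1)/(2\ell)\ge(k-1)/(2k)$ for all $k\le\ell$; this is where the lower bound on $\alpha$ is used. For $\bU^\top\bar\bR_0\bU$, a degree-by-degree generalization of \Cref{lemma:general_orthogonality_21,lemma:delta_2_to_zero} (Hermite-orthogonality of $\bar\bR_0$-bilinear forms, plus the Gaussian-equivalence and Haar-rotation trick used for $[\bT^{-1}]_{2,4}$ in the $\ell=2$ proof) shows that its cross block and the off-diagonal of its $\bF_0\va$-block are $o_\sP(1)$, while the $\bF_0\va$-diagonal is $\Theta_\sP(1)$ and asymptotically nonsingular (generalizing \Cref{lemma:l1_limits}(b)). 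Hence $\bT$ is block-diagonal up to $o_\sP(1)$; combining this with $[N^{(k-1)/2}\va^{\circ k\top}\bF_0^\top\bar\bR_0\tilde\vy]_{k\in[\ell]}=o_\sP(1)$ (same independence/orthogonality arguments) makes the $\bF_0\va$-columns drop out, leaving
\[
    \gL_{\rm tr}(\bF_0)-\gL_{\rm tr}(\bF)=\lambda\,\vw^\top\bM_{\rm rel}^{-1}\vw+o_\sP(1),
\]
with $\bM_{\rm rel}=[(\tilde\bX\vbeta)^{\circ i\top}\bar\bR_0(\tilde\bX\vbeta)^{\circ j}]_{i,j\in[\ell]}$ and $\vw=[(\tilde\bX\vbeta)^{\circ i\top}\bar\bR_0\tilde\vy]_{i\in[\ell]}$.

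The third step is to evaluate the limits. Setting $\tilde\vbeta=\vbeta/\Vert\vbeta\Vert_2$ and expanding $x^p=\sum_i\xi_{p,i}H_i(x)$, write $(\tilde\bX\vbeta)^{\circ i}=\Vert\vbeta\Vert_2^i\sum_{k=0}^i\xi_{i,k}H_k(\tilde\bX\tilde\vbeta)$ and $\tilde\vy=\sum_{p\ge1}c_{\star,p}H_p(\tilde\vtheta_\star)+\boldsymbol{\epsilon}$ (the noise term contracts to $o_\sP(1)$). Using the Gaussian equivalence property for $\bF_0$ (Appendix \ref{sec:gec}) together with random matrix theory, one establishes the higher-degree analogues of \Cref{lemma:s12,t2rt2}: $H_p(\tilde\bX\vu)^\top\bar\bR_0 H_q(\tilde\bX\vu')=o_\sP(1)$ for $p\ne q$ and unit $\vu,\vu'$ independent of $\bar\bR_0$; $H_k(\tilde\bX\tilde\vbeta)^\top\bar\bR_0 H_k(\tilde\bX\tilde\vbeta)\to_P k!\,\psi m_1/\phi$ for $k\ne1$ and $\to_P\psi m_2/\phi$ for $k=1$ (the latter via \Cref{lemma:l1_limits}(a), the former via the fact that a degree-$(\ge2)$ Hermite mode is asymptotically "generic" for $\bar\bR_0$ and sees only $n^{-1}\tr\bar\bR_0$); and $H_p(\tilde\vtheta_\star)^\top\bar\bR_0 H_p(\tilde\bX\tilde\vbeta)\to_P r_p$ after turning $\vbeta$ into $\vbeta_\star$ with \Cref{lemma:turn_beta_to_beta_star}. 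Substituting gives $\bM_{\rm rel}\to_P\mathbf{\Omega}^{-1}$ and $(\tilde\bX\vbeta)^{\circ i\top}\bar\bR_0\tilde\vy\to_P(\phi(c_\star^2+\sigma_\ep^2)+c_{\star,1}^2)^{i/2}\sum_p c_{\star,p}\xi_{i,p}r_p$; inverting (using that $\mathbf{\Omega}^{-1}$ is invertible) and multiplying out produces exactly the stated $\Delta_\ell$. Nonnegativity $\Delta_\ell\ge0$ follows because $\mathbf{\Omega}^{-1}=\lim\bU_{\rm rel}^\top\bar\bR_0\bU_{\rm rel}$ is positive semidefinite and invertible, hence $\mathbf{\Omega}\succ0$.

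The main obstacle is the uniform-in-degree control required in the third step. Unlike $\ell\le2$, the Hermite-truncation errors and the Gaussian-equivalence error bounds must be handled simultaneously over all degrees $k\le\ell$ (and interact with the slowly growing $k!$ factors) in order to pin down $H_k(\tilde\bX\tilde\vbeta)^\top\bar\bR_0 H_k(\tilde\bX\tilde\vbeta)$ and the mixed terms $N^{(k-1)/2}(\tilde\bX\vbeta)^{\circ k'\top}\bar\bR_0\bF_0\va^{\circ k}$; this, together with verifying that the $\bF_0\va$-block of $\bT^{-1}$ stays bounded away from singular (so that the $\bF_0\va$-columns genuinely contribute $o_\sP(1)$ despite not vanishing in $\bU$ itself), is where essentially all of the technical work lies.
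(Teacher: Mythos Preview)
Your proposal is correct and follows essentially the same approach as the paper's proof: the same $2\ell$-column $\bU$, the same Woodbury decomposition leading to $\lambda\tilde\vy^\top\bar\bR_0\bU\bT\bU^\top\bar\bR_0\tilde\vy$, the same block analysis of $\bT^{-1}$ (cross block $o_\sP(1)$, $(\tilde\bX\vbeta)$-block $\to\mathbf{\Omega}^{-1}$ via what the paper states as \Cref{lemma:hermite_limit_general}), and the same evaluation of $\vw$ via the mixed Hermite limits (the paper's \Cref{lemma:mixed_limit_general}). The only minor deviation is that you claim the off-diagonal of the $\bF_0\va$-block of $\bT^{-1}$ is $o_\sP(1)$, whereas the paper only uses that this block is $O_\sP(1)$ and invertible; either route suffices to make the $\bF_0\va$-columns drop out of the final expression.
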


\begin{proof}[Proof of Theorem~\ref{thm:general_ell}]
In the regime where $\eta \asymp n^\alpha$ with $\frac{\ell - 1}{2\ell} < \alpha < \frac{\ell}{2\ell + 2}$, according to the equivalence theorem \ref{thm:training_equivalence}, we can replace $\bF$ with $\bF_\ell$ when computing the limiting training loss. 
To compute the limiting training loss difference according to lemma \ref{lemma:train_loss}, we study the matrix 
$\bar\bR = (\bF\bF^\top + \lambda n \bI_n)^{-1}$.
Due to \eqref{eqn:Fell}, we can write
\begin{align*}
    \bF\bF^\top = \bF_0\bF_0^\top &+ \sum_{k = 1}^{\ell}c_1^kc_k \eta^k \tilde\vtheta^{\circ k}(\bF_0 \va^{\circ k})^\top\\
    &+ \sum_{k = 1}^{\ell}c_1^kc_k \eta^k (\bF_0 \va^{\circ k})\tilde\vtheta^{\circ k\top} + \sum_{j= 1}^{\ell}\sum_{i= 1}^{\ell} c_1^{i+j}c_ic_j \eta^{i + j} (\va^{\circ i})^\top(\va^{\circ j})\tilde\vtheta^{\circ i}\tilde\vtheta^{\circ j\top}.
\end{align*}
Defining  the matrix $\bU$ as 
\begin{align*}
    \bU = \left[\underbrace{\;\bF_0\va\;\big|\cdots\big|\;N^{(\ell - 1)/2}\bF_0\va^{\circ \ell}}_{\ell \text{ columns}}\;\bigg|\underbrace{\tilde\vtheta\;\big|\cdots\big|\;\tilde\vtheta^{\circ \ell}}_{\ell \text{ columns}}\right] \in \R^{n \times 2\ell},
\end{align*}
we can write
\begin{align*}
    \bF\bF^\top = \bF_0\bF_0^\top + \bU \bK \bU^\top, \text{ in which  } \bK = \begin{bmatrix}
        \mathbf{0}_{\ell\times \ell} & \bK_o \\ \bK_o & \tilde\bK
    \end{bmatrix} \in \R^{2\ell \times 2\ell},
\end{align*}
where $\bK_o = \text{diag}\left(\frac{c_1 c_1\eta}{N^0}, \dots, \frac{c_1^\ell c_\ell \eta^\ell}{N^{(\ell - 1)/2}}\right)\in \R^{\ell \times \ell},$ and $\tilde\bK \in \R^{\ell \times \ell}$ with  
$[\tilde\bK]_{i, j} = {c_1^{i+j}c_i c_j \eta^{i+j} \langle \va^{\circ i}, \va^{\circ j} \rangle}$, for all $i, j \in [\ell]$. 

Using the Woodbury formula, the matrix $\bar\bR$ can be written in terms of $ \bar\bR_0 = (\bF_0\bF_0^\top + \lambda n \bI_n)^{-1}$
and $\bT = (\bK^{-1}+ \bU^\top \bar\bR_0 \bU)^{-1} \in \R^{2\ell\times 2\ell}$ as
$\bar\bR = \bar\bR_0 - \bar\bR_0 \bU \bT \bU^\top \bar\bR_0$.
Now
\begin{align*}
    \bK^{-1} = \begin{bmatrix}
        \hat\bK& \bK_o^{-1} \\ \bK_o^{-1} & \mathbf{0}_{\ell\times \ell} 
    \end{bmatrix},
    \text{ where } \quad \bK_o^{-1} = \text{diag}\left(\frac{N^0}{c_1 c_1\eta}, \dots, \frac{N^{\frac{\ell - 1}{2}}}{c_1^\ell c_\ell \eta^\ell}\right), 
\end{align*}
and $[\hat \bK]_{i, j} = -N^{(i-1)/2}N^{(j-1)/2}\langle\va^{\circ i}, \va^{\circ j}  \rangle$, for all $i,j\in [\ell]$.
We define $\bM_1, \bM_2, \bM_o \in \R^{\ell \times \ell}$ as the following blocks of $\bT^{-1}$:
\begin{align*}
    \bT^{-1} = \begin{bmatrix}
        \bM_1 & \bM_o\\
        \bM_o & \bM_2
    \end{bmatrix}.
\end{align*}
Hence, we have
\begin{align*}
    \begin{cases}
        [\bM_1]_{i,j} = N^{(i-1)/2}N^{(j-1)/2}\va^{\circ i \top}(\bF_0^\top\bar\bR_0\bF_0 - \bI) \va^{\circ j},\vspace{0.3cm}\\
        [\bM_o]_{i,j} = N^{(i-1)/2}\va^{\circ i \top}\bF_0^\top\bar\bR_0\tilde\vtheta^{\circ j} + o_{\sP}(1),\vspace{0.3cm}\\
        [\bM_2]_{i,j} = \tilde\vtheta^{\circ i \top}\bar\bR_0\tilde\vtheta^{\circ j}.
    \end{cases}
\end{align*}
We can expand the monomials in terms of the Hermite polynomials, for scalars $\xi_{i,k}$, $k\in[i]$, as follows:
\begin{align*}
        (N^{1/2}\va)^{\circ i} = \sum_{k = 0}^{i}\xi_{i,k} H_k(N^{1/2}\va),
\,\quad\text{and }\quad
(\tilde\bX\vbeta)^{\circ i} = \Vert\vbeta\Vert_2^i\sum_{k = 0}^{i}\xi_{i,k} H_k(\tilde\bX\vbeta/\Vert\vbeta\Vert_2).
\end{align*}
Using these, we will analyze each matrix $\bM_1, \bM_2, \bM_o$ separately.
\paragraph{Analysis of $\bM_1$.} It is easily seen that the elements of this matrix are $O_\sP(1)$.

\paragraph{Analysis of $\bM_2$.} To analyze these terms, we need the following lemma, whose proof is deferred to Section \ref{pflemma:hermite_limit_general}.
\begin{lemma}\label{lemma:hermite_limit_general}
    For any $i, j \in \mathbb{N}_0$, we have
    \begin{align*}
    (\tilde \bX  \vbeta)^{\circ i \top}\bar\bR_0(\tilde \bX \vbeta)^{\circ j} \to_P \left(c_{\star,1}^2 + \phi(c_\star^2 + \sigma_\ep^2)\right)^{(i+j)/2} \left[\xi_{i,1}\xi_{j,1} \frac{\psi m_2}{\phi}+\frac{\psi m_1}{\phi}\sum_{k = 0, \; k\neq 1}^{\min(i,j)} k!\;\xi_{i,k}\xi_{j,k}\right].
\end{align*}
\end{lemma}
Defining the matrix $\bar\bM_2 \in \mathbb{R}^{\ell \times \ell}$ with entries

\begin{align*}
    [\bar\bM_2]_{i,j}  = \left(c_{\star,1}^2 + \phi(c_\star^2 + \sigma_\ep^2)\right)^{(i+j)/2} \left[\xi_{i,1}\xi_{j,1} \frac{\psi m_2}{\phi}+\frac{\psi m_1}{\phi}\sum_{k = 0, \; k\neq 1}^{\min(i,j)} k!\;\xi_{i,k}\xi_{j,k}\right],
\end{align*}
for all $i,j\in[\ell]$,
we have $[\bM_2]_{i,j} \to_P [\bar\bM_2]_{i,j}$. Note that we can write
\begin{align*}
    \bar\bM_2 =\frac{\psi}{\phi} \mathbf{B}\mathbf{Z}\mathbf{M}\mathbf{Z}^\top \mathbf{B} + \frac{\psi m_1}{\phi}\ve\ve^\top,
\end{align*}
where we define $b = (c_{\star,1}^2 + \phi(c_\star^2 + \sigma_\ep^2))^{1/2}$, $\textbf{B} = \text{diag}(b^1, \cdots, b^\ell) \in \R^{\ell\times \ell}$,  $\ve = \mathbf{B}[\xi_{1,0},\cdots,\xi_{\ell,0}]^\top$,
\begin{align*}
 \textbf{M} = 
 \begin{bmatrix}
     1!\, m_2 & 0 & \cdots & 0\\
     0 & 2!\, m_1 & \cdots & 0\\
     \vdots & \vdots & \ddots & \vdots\\
     0 & 0 & \cdots & \ell!\, m_1\\
 \end{bmatrix}
 \in \R^{\ell\times\ell},\;\text{and }\;
    \mathbf{Z} = \begin{bmatrix}
        \xi_{1,1} & \cdots & \xi_{1, \ell}\\
        \vdots & \ddots & \vdots\\
        \xi_{\ell,1} & \cdots & \xi_{\ell, \ell}
    \end{bmatrix}\in \R^{\ell\times \ell}.
\end{align*}
Recalling that 
for all $i,j \in \{0,1,\ldots\}$,
$\xi_{i,j}$ are such that
such that for any $p \in \mathbb{N}$ and $x\in \R$, we have $x^p = \sum_{i = 0}^{p} \xi_{p,i} H_i(x)$, 
it follows that the matrix $\bZ$ is lower-triangular with unit diagonal; hence invertible. Thus, 
since $\mathbf{B},\mathbf{M}$ are diagonal with positive entries,
the matrix $\mathbf{B}\mathbf{Z}\mathbf{M}\mathbf{Z}^\top \mathbf{B}$ is positive definite. This implies that $\bar{\mathbf{M}}_2$ is invertible. We will denote $\mathbf{\Omega} = \bar\bM_2^{-1}$.

\paragraph{Analysis of $\bM_o$.} 
We analyze $[\bM_o]_{i,j}$ by writing $N^{(i-1)/2}\va^{\circ i}$ in the Hermite basis, finding
\begin{align*}
    [\bM_o]_{i,j} = \sum_{k = 0}^{i}\frac{\xi_{i,k}}{\sqrt{N}} H_k(N^{1/2}\va)^\top\bF_0^\top\bar\bR_0\tilde\vtheta^{\circ j} + o_{\sP}(1).
\end{align*}
The terms with $k>0$ are all $o_{\sP}(1)$ because $\frac{H_k(N^{1/2}\va)}{\sqrt{N}}$ is a norm $O_\sP(1)$ vector with mean zero, independent from the vector $\bF_0^\top\bar\bR_0\tilde\vtheta^{\circ j}$ with norm $O_{\sP}(1)$. Thus, $[\bM_o]_{i,j} = o_{\sP}(1)$. The term with $k = 0$ can also be shown to be $o_\sP(1)$ by using that the linearized $\bF_0$ is left-orthogonally invariant, via an argument identical to the one used to analyze \eqref{eq:left-ortho-argument}. 

Hence, putting these together, the matrix $\bT$ can be written as
\begin{align*}
    \bT = \begin{bmatrix}
        \bM_1^{-1} & \mathbf{0}_{\ell \times \ell}\\
        \mathbf{0}_{\ell \times \ell} & \bar\bM_2^{-1}
    \end{bmatrix} + o_{\sP}(1).
\end{align*}

Using lemma \ref{lemma:train_loss}, we can write the training loss difference as
$\gL_{\rm tr}(\bF_0) - \gL_{\rm tr}(\bF) =   \lambda \vy^\top \bar\bR_0 \bU \bT\bU^\top \bar\bR_0 \vy$.
Plugging in the teacher function $f_\star$, we find
\begin{align*}
    \gL_{\rm tr}(\bF_0) - \gL_{\rm tr}(\bF) &=    \sum_{p,q}\lambda c_{\star,p} c_{\star,q} H_p(\tilde\vtheta_\star)^\top \bar\bR_0 \bU \bT\bU^\top \bar\bR_0 H_q(\tilde\vtheta_\star) \\
    &+ 2\lambda \sum_{p} \left(c_{\star, p} H_p(\tilde \vtheta_\star)^\top \bar\bR_0 \bU \bT \bU^\top \bar \bR_0 \boldsymbol{\ep} \right)+  \lambda \boldsymbol{\ep}^\top \bar\bR_0 \bU \bT \bU^\top \bar\bR_0 \boldsymbol{\ep}.
\end{align*}
Note that the second term can be shown to be $o_\sP(1)$ because $\boldsymbol{\ep}\sim \normal(0, \sigma_\ep^2 \bI_n)$ and it is independent from $H_p(\tilde \vtheta_\star)^\top \bar\bR_0 \bU \bT \bU^\top \bar \bR_0$, and $\Vert H_p(\tilde \vtheta_\star)^\top \bar\bR_0 \bU \bT \bU^\top \bar \bR_0\Vert_{\rm op} = O_\sP(1/\sqrt{N})$ with a simple orderwise analysis. The third can also be shown to be $o_\sP(1)$ by noting that $\boldsymbol{\ep}$ is independent from $\bar\bR_0 \bU$,  $\Vert\bar\bR_0 \bU\Vert_{\rm op} = O_\sP(1/\sqrt{n})$ and that the elements of $\bT$ are $O_\sP(1)$.

To analyze the first term, we define $\delta_{p, q} = H_p(\tilde\vtheta_\star)^\top \bar\bR_0 \bU \bT\bU^\top \bar\bR_0 H_q(\tilde\vtheta_\star)$ for all non-negative integers $p,q$. To analyze such terms, we first expand $\bU\bT\bU^\top$ as
\begin{align*}
    \bU \bT \bU^\top &= \sum_{i = 1}^{\ell}\sum_{j = 1}^{\ell} N^{(i+j)/2-1}[\bM_1^{-1}]_{i, j} (\bF_0 \va^{\circ i})(\bF_0 \va^{\circ j})^\top+ \sum_{i = 1}^{\ell}\sum_{j = 1}^{\ell} [\bar\bM_2^{-1}]_{i, j} \tilde\vtheta^{\circ i}\tilde\vtheta^{\circ j\top}.
\end{align*}
Thus, for any $p, q \in \mathbb{N}_0$, the terms $\delta_{p,q}$ can be written as
\begin{align*}
    \delta_{p, q} 
    &= 
    \sum_{i = 1}^{\ell}\sum_{j = 1}^{\ell} N^{(i+j)/2-1}[\bM_1^{-1}]_{i, j} H_p(\tilde\vtheta_\star)^\top \bar\bR_0  (\bF_0 \va^{\circ i})(\bF_0 \va^{\circ j})^\top\bar\bR_0 H_q(\tilde\vtheta_\star)\\
    &\hspace{1cm}+\sum_{i = 1}^{\ell}\sum_{j = 1}^{\ell}\; [\bar\bM_2^{-1}]_{i, j}H_p(\tilde\vtheta_\star)^\top \bar\bR_0\tilde\vtheta^{\circ i}\tilde\vtheta^{\circ j\top}\bar\bR_0 H_q(\tilde\vtheta_\star).
\end{align*}

By an argument identical to the argument for the terms in $\bM_o$, the first sum goes to zero in probability. Denoting $\vbeta/\Vert\vbeta\Vert_2 := \tilde\vbeta$, we can expand $(\tilde\bX\vbeta)^{\circ i} = \Vert\vbeta\Vert_2^i\sum_{k = 0}^{i}\xi_{i,k} H_k(\tilde\bX\vbeta/\Vert\vbeta\Vert_2)$,
To analyze $\delta_{p,q}$, we need the following result, whose proof is deferred to Section \ref{pflemma:mixed_limit_general}.
\begin{lemma}
\label{lemma:mixed_limit_general}
For any $p, q \in \mathbb{N}_0$, we have
    \begin{align*}
        H_p(\tilde \bX \vbeta_\star)^\top \bar\bR_0\, H_q(\tilde \bX \tilde\vbeta) \to_P \begin{cases}
            \frac{p!\psi m_1}{\phi}\left(\frac{c_{\star,1}}{\sqrt{\phi(c_\star^2 + \sigma_\ep^2) + c_{\star,1}^2}}\right)^p    & p = q\neq 1\\
    \frac{\psi m_2}{\phi}\frac{c_{\star,1}}{ \sqrt{\phi(c_\star^2 + \sigma_\ep^2) + c_{\star,1}^2}}& p = q = 1\\
    0 & p\neq q.
        \end{cases}
    \end{align*}
\end{lemma}

We can now use Lemma \ref{lemma:mixed_limit_general} and that $\Vert\vbeta\Vert_2 \to_P \left({\phi(c_\star^2 + \sigma_\ep^2) + c_{\star,1}^2}\right)^{1/2}$ to write
\begin{align*}
    \delta_{p, q} 
    &=\sum_{i = 1}^{\ell}\sum_{j = 1}^{\ell}\; [\bar\bM_2^{-1}]_{i, j}\Vert\vbeta\Vert_2^{i+j}
    \xi_{i,p}\xi_{j,q} H_p(\tilde\vtheta_\star)^\top \bar\bR_0H_{p}(\tilde\bX\tilde\vbeta)\cdot H_{q}(\tilde\bX\tilde\vbeta)^\top\bar\bR_0 H_q(\tilde\vtheta_\star)+o_\sP(1)\\
    &=\sum_{i = 1}^{\ell}\sum_{j = 1}^{\ell}\; [\mathbf{\Omega}]_{i, j}\left({\phi(c_\star^2 + \sigma_\ep^2) + c_{\star,1}^2}\right)^{(i+j)/2}
    \xi_{i,p}\xi_{j,q} r_pr_q+o_\sP(1),
\end{align*}
for $p, q \in [\ell]$, which concludes the proof.
\end{proof}

\section{Infinite sample limit}

In the infinite sample limit, where $n\gg N, d$, we have $\phi \to 0$. In this extreme case, the expressions for $m_1, m_2$ will further simplify as $m_1, m_2  \to \phi / \lambda \psi$. Note that in this limit, we have $\mathcal{L}_{\mathrm{tr}}(\mathbf{F}_0) \to \sigma_\epsilon^2 + c_\star^2$ (see e.g., \cite[Section 6]{mei2022generalization}.
Using Corollary \ref{corollay}, we see that for example when $\ell = 2$, we have $\mathcal{L}(\mathbf{F}) \to \sigma_\epsilon^2 + \frac{2 c_{\star,2}^2}{3} + c_{\star,>2}^2$. 
In particular, the term corresponding to the linear component of the teacher function in $\mathcal{L}(\mathbf{F}_0)$ cancels out with the corresponding term in $\Delta_2$.

\section{Proof of Theorem \ref{thm:test_risk}}
Let $(\vx_{\rm te}, y_{\rm te})$ follow the model from (\ref{eqn:datagen}). Recall that the test error can be written as
\begin{align}
\label{eq:test_decomposition}
    \mathcal{L}_{\rm te}(\hat\va{(\bF)}) &= \E_{\vx_{\rm te}, y_{\rm te}}(y_{\rm te} - \hat\va^\top \sigma(\bW \vx_{\rm te}))^2 = \E_{\vx_{\rm te}, y_{\rm te}}(y_{\rm te}^2) +   \hat\va^\top \bSigma_\vf \hat\va - 2\hat\va^\top \vmu_\vf,
\end{align}
 where $\bSigma_\vf = \E_{\vx_{\rm te}} \left[\sigma(\bW \vx_{\rm te})\sigma(\bW \vx_{\rm te})^\top\right]$ and $\vmu_\vf = \E_{\vx_{\rm te}, y_{\rm te}}\left[y_{\rm te}\sigma(\bW \vx_{\rm te})\right]$. First, we will show that in the definition of $\bSigma_\vf$ and $\vmu_\vf$,  we can replace the test feature $\sigma(\bW \vx_{\rm te})$ with the spiked approximation from Theorem~\ref{thm:spectrum_of_feature_matrix}
\begin{align}
    \label{eq:spiked_test_feature}
    \vf_\ell = \sigma(\bW_0 \vx_{\rm te}) + \sum_{k = 1}^{\ell} c_1^k c_k \eta^k (\vbeta^\top \vx_{\rm te})^k \va^{\circ k},
\end{align}
without changing the test error. To do this, consider an independent test set $\{\vx_{{\rm te}, i}, y_{{\rm te}, i}\}_{i = 1}^{n_{\rm te}}$ with $n_{\rm te}$ test samples following the data generation distribution in \eqref{eqn:datagen} and we define $\bX = [\vx_{\rm{te}, 1}, \dots, \vx_{\rm{te}, n}]^\top \in \mathbb{R}^{n_{\rm te} \times d}$. Let $\bF_{\rm te} = \sigma({\bX_{\rm te} \bW^\top}) \in \R^{n_{\rm te} \times N}$ be the test feature matrix. We can write the test error as 
\begin{align*}
    \mathcal{L}_{\rm te}(\hat\va({\bF})) = \lim_{n_{\rm te} \to \infty} \frac{1}{n_{\rm te}} \Vert\vy_{\rm te} - \bF_{\rm te}\hat\va(\bF)\Vert_{2}^2.
\end{align*}
Now, consider the spiked approximation of the test feature matrix $\bF_{\rm te, \ell} \in \R^{n_{\rm te}, N}$ where each row of $\bF_{\rm te, \ell}$ follows the approximation in \eqref{eq:spiked_test_feature}. Using Theorem~\ref{thm:spectrum_of_feature_matrix}, we have $\Vert\bF_{\rm te} - \bF_{\rm te, \ell}\Vert_{\rm op} = o_\sP(\sqrt{n_{\rm te}})$. Thus, 
\begin{align*}
     \left|\frac{1}{n_{\rm te}} \Vert\vy_{\rm te} - \bF_{\rm te}\hat\va(\bF)\Vert_{2}^2 -  \frac{1}{n_{\rm te}} \Vert\vy_{\rm te} - \bF_{\rm te, \ell}\hat\va(\bF)\Vert_{2}^2\right| &\leq \frac{1}{n_{\rm te}} \left|\Big\langle(\bF_{\rm te} - \bF_{\rm te, \ell})\hat\va(\bF), 2\vy_{\rm te} -  \bF_{\rm te} - \bF_{\rm te, \ell}\Big\rangle\right|\\
     &\leq \frac{C}{\sqrt{n_{\rm te}}} \Vert\bF_{\rm te} - \bF_{\rm te, \ell}\Vert_{\rm op} \cdot \Vert \hat\va(\bF)\Vert_2 \to_\sP 0,
\end{align*}
where the last line is due to the fact that under the assumption of Theorem~\ref{thm:test_equivalence}, we have $ \Vert\vy_{\rm te} - \bF_{\rm te}\hat\va(\bF)\Vert_{2}^2 = O_\sP(\sqrt{n_{\rm te}})$, and that $\Vert\hat\va(\bF)\Vert_2 = O_\sP(1)$ from the proof of Theorem \ref{thm:training_equivalence}. Thus in the test error, we can replace the test features with their spiked approximation without changing the limiting test error. With this, we can write
\begin{align}
    \label{eq:sigma_f}
    \bSigma_\vf &= \bSigma_\vf^0 + \E_{\vx_{\rm te}} \left[\sum_{k = 1}^{\ell} c_1^k c_k \eta^k (\vx_{\rm te}^\top \vbeta)^k \left(\va^{\circ k} \sigma(\bW_0 \vx_{\rm te})^\top + \sigma(\bW_0 \vx_{\rm te})\va^{\circ k\top}\right)\right]
    \nonumber\\&+ \E_{\vx_{\rm te}} \left[\sum_{i = 1}^{\ell}\sum_{j = 1}^{\ell} c_1^{i+j} \eta^{i+j} (\vx_{\rm te}^\top \vbeta)^{i+j}c_i c_j (\va^{\circ i}\va^{\circ j \top})\right]
    \nonumber\\= \bSigma_\vf^0 &+ \E_{\vx_{\rm te}} \left[\sum_{k = 1}^{\ell} c_1^k c_k \eta^k  \left(\va^{\circ k} \vnu_k^\top + \vnu_k\va^{\circ k\top}\right)\right]
    + \E_{\vx_{\rm te}} \left[\sum_{i = 1}^{\ell}\sum_{j = 1}^{\ell} c_1^{i+j} \eta^{i+j} c_i c_j \aleph_{i+j} (\va^{\circ i}\va^{\circ j \top})\right],  
\end{align}
where $\bSigma_\vf^0 = \E_{\vx_{\rm te}} \left[\sigma(\bW_0 \vx_{\rm te})\sigma(\bW_0 \vx_{\rm te})^\top\right]$, $\vnu_k = \E_{\vx_{\rm te}}[(\vx_{\rm te}^\top \vbeta)^k\sigma(\bW_0\vx_{\rm te})]$, and ${\aleph_{k}} = \E_{\vx_{\rm te}}(\vx_{\rm te}^\top \vbeta)^k$ for all $k\in[\ell]$. 
Also,
\begin{align}
    \label{eq:mu_f}
    \vmu_\vf &= \E_{\vx_{\rm te}, y_{\rm te}}\left[\,y_{\rm te}\sigma(\bW \vx_{\rm te})\,\right] = \vmu_\vf^{0} + \sum_{k = 1}^{\ell} c_1^k c_k  \tau_k \eta^k\va^{\circ k},
\end{align}
where $ \vmu_\vf^{0} = \E_{\vx_{\rm te}, y_{\rm te}}\left[\,y_{\rm te}\sigma(\bW_0 \vx_{\rm te})\,\right] $, and $\tau_k = \E_{\vx_{\rm te}}\left[y_{\rm te} (\vx_{\rm te}^\top\vbeta)^k\right]$.

\subsection{Proof for $\ell = 1$}
Without loss of generality, assume that $c_1 = 1$.  First, note that 
\begin{align*}
    &\vnu_1 = \E_{\vx_{\rm te}}[(\vx_{\rm te}^\top\vbeta) \sigma(\bW_0 \vx_{\rm te})] =  \bW_0 \vbeta,\\
    &\aleph_1 = \E_{\vx_{\rm te}}(\vx_{\rm te}^\top \vbeta) = 0, \quad \text{and} \quad \aleph_2 = \E_{\vx_{\rm te}}(\vx_{\rm te}^\top \vbeta)^2 = \Vert\vbeta\Vert_2^2 \to_P c_{\star, 1}^2 + \phi(c_{\star}^2 + \sigma_\ep^2),\\
    &\tau_1 = \E\left[y_{\rm te} (\vx_{\rm te}^\top\vbeta)\right] = c_{\star, 1} \vbeta_\star^\top \vbeta \to_P c_{\star,1}^2,
\end{align*}
where the convergence follows from \eqref{prop:alignment} and the computations are in its proof.

Thus, using \eqref{eq:sigma_f}, we have
\begin{align*}
    &\vmu_\vf = \vmu_\vf^0 +  c_{\star, 1}^2 \eta \va,\quad
    \bSigma_{\vf} =\bSigma_{\vf}^0 + \eta \left(\va (\bW_0\vbeta)^\top + \bW_0\vbeta \va^\top \right) +  \eta^2 \left(c_{\star, 1}^2 + \phi(c_{\star}^2 + \sigma_\ep^2)\right)\va\va^\top.
\end{align*}

From Section~\ref{eq:proof_train_l1}, we have $\bar\bR = \bar\bR_0 - \bar\bR_0 \bU \bT \bU^\top \bar\bR_0$, where $\bT$ is defined in \eqref{eq:T_ell1} and $\bU = [\;\bF_0\va\; |\; \tilde \bX \vbeta\;]$. In Section~\ref{eq:proof_train_l1}  it was shown that
\begin{align}
    \label{eq:mat_T}
    \bT =\frac{\begin{bmatrix}
        \vbeta^\top \tilde \bX^\top \bar\bR_0 \tilde \bX \vbeta & -\frac{1}{\eta}-\va^\top\bF_0^\top  \bar\bR_0 \tilde \bX \vbeta~\vspace{0.15cm}\\
        -\frac{1}{\eta} - \vbeta^\top \tilde \bX^\top\bar\bR_0\bF_0 \va & 
        \va^\top \bF_0^\top \bar\bR_0 \bF_0 \va - \Vert \va\Vert_2^2
    \end{bmatrix}}{\left(\vbeta^\top \tilde \bX^\top \bar\bR_0 \tilde \bX \vbeta\right)\left(\va^\top \bF_0^\top \bar\bR_0 \bF_0 \va - \Vert \va\Vert_2^2\right) - \left(\frac{1}{\eta} +\va^\top\bF_0^\top  \bar\bR_0 \tilde \bX \vbeta \right)^2},
\end{align}
where  $T_{11}, T_{22} = \Theta_\sP(1)$, and $T_{21}, T_{12} = O_\sP(1/\eta)$. 

Now, we are ready to study the terms $E_1 = \E_{\vx_{\rm te}, y_{\rm te}}(y_{\rm te}^2) $, $E_2 =  \hat\va^\top \bSigma_\vf \hat\va$, and $E_3 = - 2\hat\va^\top \vmu_\vf$ 
that appear in the test error in \eqref{eq:test_decomposition}.

\subsubsection{Analysis of $E_1$.} 
This term can be readily computed as 
\begin{align}
    \label{eq:E1}
    E_1 = \E(y_{\rm te}^2) = \sigma_\ep^2 + \E_{\vx_{\rm te}}\left(\sigma_\star(\vbeta_\star^\top\vx_{\rm te})\right)^2 = \sigma_\ep^2 + c_{\star}^2.
\end{align}

\subsubsection{Analysis of $E_2$.} Recall that $\hat\va = (\bF^\top\bF + \lambda n \bI_N)^{-1} \bF^\top \tilde\vy$, $\bar\bR_0 = (\bF_0^\top\bF_0 + \lambda n \bI_n)^{-1}$, and $\bar\bR = (\bF\bF^\top + \lambda n \bI_n)^{-1}$. Using these, we can write
\begin{align}
    \label{eq:l1_E2}
    E_2 &= \hat\va^\top \bSigma_\vf \hat\va = \tilde\vy^\top \bF (\bF^\top\bF + \lambda n \bI_N)^{-1}\bSigma_{\vf}(\bF^\top\bF + \lambda n \bI_N)^{-1}\bF^\top \tilde \vy\nonumber\\
    &= \tilde\vy^\top (\bF\bF^\top + \lambda n \bI_n)^{-1}\bF\bSigma_{\vf}\bF^\top(\bF\bF^\top + \lambda n \bI_n)^{-1} \tilde \vy =\tilde\vy^\top \bar\bR \bF\bSigma_{\vf}\bF^\top\bar\bR \tilde \vy.
\end{align}
We have $\tilde \vy = \vf_\star + \vepsilon$, thus
\begin{align}
    \label{eq:E2-decomp}
    E_2 =\vf_\star^\top \bar\bR \bF\bSigma_{\vf}\bF^\top\bar\bR \vf_\star +  2\vepsilon^\top \bar\bR \bF\bSigma_{\vf}\bF^\top\bar\bR \vf_\star+\vepsilon^\top \bar\bR \bF\bSigma_{\vf}\bF^\top\bar\bR \vepsilon.
\end{align}

We will now analyze the terms in \eqref{eq:E2-decomp}.  First note that $\vepsilon^\top \bar\bR \bF\bSigma_{\vf}\bF^\top\bar\bR \vf_\star = o_\sP(1)$ using a simple order-wise argument. To analyze the third term in \eqref{eq:E2-decomp}, we write
\begin{align*}
    \bF^\top \bar\bR  \,\vepsilon &= \left(\bF_0 +  \eta (\tilde \bX \vbeta)\va^\top\right)^\top\bar\bR\,\vepsilon =  \bF_0^\top \bar\bR  \,\vepsilon + \eta(\vbeta^\top\tilde \bX^\top\bar\bR\,\vepsilon) \va.
\end{align*}
Using a simple order-wise analysis, we have $\vbeta^\top\tilde \bX^\top\bar\bR\,\vepsilon = O_\sP(1/\sqrt{n})$. 
Thus, the third term can be written as
\begin{align*}
    \vepsilon^\top \bar\bR \bF\bSigma_{\vf}\bF^\top\bar\bR\, \vepsilon &= \underbrace{\vepsilon^\top \bar\bR \bF_0\bSigma_{\vf}\bF^\top_0\bar\bR\, \vepsilon}_{q_1} + \underbrace{2 \eta(\vbeta^\top\tilde \bX^\top\bar\bR\,\vepsilon)\, \vepsilon^\top \bar\bR \bF_0\bSigma_{\vf}\va}_{q_2} + \underbrace{\eta^2 (\vbeta^\top\tilde \bX^\top\bar\bR\,\vepsilon)^2 \va^\top \bSigma_\vf\va}_{q_3}.
\end{align*}
The term $q_1$ can be computed as
\begin{align*}
    q_1 &= \vepsilon^\top \bar\bR \bF_0\bSigma_{\vf}\bF^\top_0\bar\bR\, \vepsilon
    = \vepsilon^\top \bar\bR \bF_0\left(\bSigma_{\vf}^0 +  \eta \left(\va (\bW_0\vbeta)^\top + \bW_0\vbeta \va^\top \right) +  \eta^2 \Vert\vbeta\Vert_2^2 \va\va^\top\right)\bF^\top_0\bar\bR\, \vepsilon\\
    &= \vepsilon^\top \bar\bR \bF_0\bSigma_{\vf}^0\bF^\top_0\bar\bR\, \vepsilon + o_\sP(1)
    = \vepsilon^\top \bar\bR_0 \bF_0\bSigma_{\vf}^0\bF^\top_0\bar\bR_0\, \vepsilon + o_\sP(1),
\end{align*}
where in the last line we have used
that $\bar\bR = \bar\bR_0 - \bar\bR_0 \bU\bT\bU^\top \bar\bR_0$ and an order-wise analysis for various terms. 
To analyze $q_2$, note that
\begin{align*}
    \vepsilon^\top \bar\bR \bF_0\bSigma_{\vf}\va = \vepsilon^\top \bar\bR \bF_0\left(\bSigma_{\vf}^0 +  \eta \left(\va (\bW_0\vbeta)^\top + \bW_0\vbeta \va^\top \right) +  \eta^2 \Vert\vbeta\Vert_2^2 \va\va^\top\right)\va = o_\sP(1),
\end{align*}
using a simple order-wise analysis, thus $q_2 = o_\sP(1)$. Similarly, for $q_3$, we have
\begin{align*}
    q_3 &= \eta^2 (\vbeta^\top\tilde \bX^\top\bar\bR\,\vepsilon)^2 \va^\top \left(\bSigma_{\vf}^0 +  \eta \left(\va (\bW_0\vbeta)^\top + \bW_0\vbeta \va^\top \right) +  \eta^2 \Vert\vbeta\Vert_2^2 \va\va^\top\right)\va = o_\sP(1).
\end{align*}
Hence, summing everything up 
\begin{align}
    \label{eq:E_2_de}
    E_2 = \vepsilon^\top \bar\bR_0 \bF_0\bSigma_{\vf}^0\bF^\top_0\bar\bR_0\, \vepsilon + \vf_\star^\top \bar\bR \bF\bSigma_{\vf}\bF^\top\bar\bR \vf_\star + o_\sP(1).
\end{align}

Next, we will study the term $\vf_\star^\top \bar\bR \bF\bSigma_{\vf}\bF^\top\bar\bR \vf_\star$. We can write
\begin{align}
    \label{eq:expand_a_hat}
    \hat\va = \bF^\top \bar\bR \vf_\star &= \left(\bF_0 +  \eta (\tilde \bX \vbeta)\va^\top\right)^\top\left(\bar\bR_0 - \bar\bR_0 \bU \bT \bU^\top \bar\bR_0\right)\vf_\star\nonumber\nonumber\\
    &= \underbrace{\bF_0^\top \bar\bR_0 \vf_\star}_{\vp_1} \underbrace{- \bF_0^\top \bar\bR_0 \bU \bT \bU^\top \bar\bR_0\vf_\star}_{\vp_2} + \underbrace{\eta\left[(\tilde \bX \vbeta)^\top\bar\bR_0\vf_\star - (\tilde \bX\vbeta)^\top \bar\bR_0 \bU \bT \bU^\top \bar\bR_0\vf_\star\right] \va}_{\vp_3}.
\end{align}
Thus, defining $E_2^{ij}=\vp_i^\top \bSigma_{\vf} \vp_j$ for $i, j \in [3]$, we have $E_{2} = \sum_{i, j = 1}^{3} E_{2}^{ij}$.  In the following sections, we will analyze each term in this sum separately.

\paragraph{Preliminary Computations.} Before starting the computation, we define $K$ as 
\begin{align}
\label{def:K}
    K := \vbeta^\top\tilde \bX^\top\bar\bR_0\vf_\star - \vbeta^\top\tilde \bX^\top \bar\bR_0 \bU \bT \bU^\top \bar\bR_0\vf_\star.
\end{align}
Recalling that $T_{12} = T_{21} = O(1/\eta)$ and $\vbeta^\top \tilde\bX^\top \bar\bR_0 \bF_0 \va = O(1/\sqrt{n})$, the variable $K$ can be simplified as follows:
\begin{align*}
    K &= \vbeta^\top\tilde \bX^\top\bar\bR_0\vf_\star- T_{22} (\vbeta^\top\tilde \bX^\top \bar\bR_0 \tilde \bX\vbeta) (\vbeta^\top\tilde\bX^\top\bar\bR_0 \vf_\star) + O_\sP\left(\frac{1}{\eta\sqrt{n}}\right).
\end{align*}
Also, from the definition of the matrix $\bT$ in \eqref{eq:mat_T}, we have
\begin{align*}
     T_{22} 
    &= \frac{1}{\vbeta^\top \tilde \bX^\top \bar\bR_0 \tilde \bX \vbeta} + \frac{1}{ (\vbeta^\top \tilde \bX^\top \bar\bR_0 \tilde \bX \vbeta)^2(\va^\top \bF_0^\top \bar\bR_0 \bF_0 \va - \Vert \va\Vert_2^2)}\cdot \frac{1} {\eta^2} + o_\sP(1/\eta^2).
\end{align*}
Hence, putting everything together, and using Lemma \ref{lemma:l1_limits}, we can write
\begin{align}
    \label{eq:K_limit}
    K &= -\frac{\vbeta^\top \tilde\bX^\top \bar\bR_0 \vf_\star}{  (\vbeta^\top \tilde\bX^\top \bar\bR_0 \tilde\bX\vbeta)(\va^\top \bF_0^\top \bar\bR_0 \bF_0 \va - \Vert \va\Vert_2^2)}\cdot \frac{1} {\eta^2} + o_\sP(1/\eta^2). 
\end{align}

Next, we will study the limit of $\eta T_{12}$. For this, we can use \eqref{eq:mat_T} to write
\begin{align}
    \label{eq:T12}
    T_{12} = \frac{-1}{ (\vbeta^\top \tilde\bX^\top \bar\bR_0 \tilde\bX\vbeta)(\va^\top \bF_0^\top \bar\bR_0 \bF_0 \va - \Vert \va\Vert_2^2)} \cdot \frac{1}{\eta} + O_\sP(1/\eta^3).
\end{align}

\paragraph{Analysis of $E_{2}^{11}$.} Noting that $\eta = o(n^{1/4})$ and $\va^\top\bF_0^\top \bar\bR_0 \vf_\star = O_\sP(1/\sqrt{n})$, we can simplify this term as follows:
\begin{align}
    \label{eq:E2^11}
    E_{2}^{11} &= \vf_\star^\top \bar\bR_0 \bF_0 \bSigma_{\vf}\bF_0^\top \bar\bR_0 \vf_\star\nonumber\\
    &= \vf_\star^\top \bar\bR_0 \bF_0 \left(\bSigma_{\vf}^0 +  \eta \left(\va (\bW_0\vbeta)^\top + \bW_0\vbeta \va^\top \right) +  \eta^2 \Vert\vbeta\Vert_2^2 \va\va^\top\right)\bF_0^\top \bar\bR_0 \vf_\star\nonumber\\
    &= \vf_\star^\top \bar\bR_0 \bF_0 \bSigma_{\vf}^0\bF_0^\top \bar\bR_0 \vf_\star + o_\sP(1).
\end{align}

\paragraph{Analysis of $E_{2}^{12}$ and $E_{2}^{21}$.} By expanding $\bSigma_\vf$ and $\bU\bT\bU^\top$, we have
\begin{align*}
    E_{2}^{21} &= E_{2}^{12} = - \vf_\star^\top \bar\bR_0 \bF_0 \bSigma_{\vf}\bF_0^\top \bar\bR_0 \bU \bT \bU^\top \bar\bR_0\vf_\star\\
    &= - \vf_\star^\top \bar\bR_0 \bF_0 \left(\bSigma_{\vf}^0 +  \eta \left(\va (\bW_0\vbeta)^\top + \bW_0\vbeta \va^\top \right) +  \eta^2 \Vert\vbeta\Vert_2^2 \va\va^\top\right)\bF_0^\top \bar\bR_0 \bU \bT \bU^\top \bar\bR_0\vf_\star\\
    &= E_2^{21(1)}+E_2^{21(2)}+E_2^{21(3)}+E_2^{21(4)},
\end{align*}
in which
\begin{align*}
    E_2^{21(1)} &= - \vf_\star^\top \bar\bR_0 \bF_0 \bSigma_{\vf}^0\bF_0^\top \bar\bR_0\bU \bT \bU^\top \bar\bR_0\vf_\star,\\[0.2cm]
    E_2^{21(2)} &= -\eta\,(\vf_\star^\top \bar\bR_0 \bF_0 \va) (\vbeta^\top\bW_0^\top\bF_0^\top \bar\bR_0\bU \bT \bU^\top \bar\bR_0\vf_\star),\\[0.2cm]
    E_2^{21(3)} &= -\eta(\vf_\star^\top \bar\bR_0 \bF_0 \bW_0\vbeta)(\va^\top \bF_0^\top \bar\bR_0\bU\bT\bU^\top \bar\bR_0 \vf_\star),\\[0.2cm]
    E_2^{21(4)} &= -\eta^2 (\vf_\star^\top \bar\bR_0 \bF_0\va)(\va^\top\bF_0^\top \bar\bR_0\bU \bT \bU^\top \bar\bR_0\vf_\star).
\end{align*} 
These terms can be simplified as follows. By expanding $\bU\bT\bU^\top$, we have
\begin{align*}
 E_2^{21(1)} &= - \vf_\star^\top \bar\bR_0 \bF_0 \bSigma_{\vf}^0\bF_0^\top \bar\bR_0\bU \bT \bU^\top \bar\bR_0\vf_\star\\ 
 &= - \vf_\star^\top \bar\bR_0 \bF_0 \bSigma_{\vf}^0\bF_0^\top \bar\bR_0\times\\&\hspace{1cm}\left[T_{11} (\bF_0\va)(\bF_0\va)^\top + T_{12}(\bF_0\va)(\tilde\bX\vbeta)^\top + T_{21} (\tilde\bX\vbeta)(\bF_0\va)^\top+T_{22}(\tilde\bX\vbeta)(\tilde\bX\vbeta)^\top\right] \bar\bR_0\vf_\star \\
 &= -T_{22}\left( \vf_\star^\top \bar\bR_0 \bF_0 \bSigma_{\vf}^0\bF_0^\top \bar\bR_0\tilde\bX\vbeta\right)\left(\vbeta^\top\tilde\bX^\top\bar\bR_0\vf_\star\right) + o_{\sP}(1),
\end{align*}
where the last line uses
that $\vf_\star^\top \bar\bR_0 \bF_0 \va = o_{\sP}(1)$ and $\vf_\star^\top \bar\bR_0 \bF_0 \bSigma_{\vf}^0\bF_0^\top \bar\bR_0\bF_0\va = o_\sP(1)$. Next, noting that $\eta(\vf_\star^\top \bar\bR_0 \bF_0 \va) = o_{\sP}(1)$ and $\vbeta^\top\bW_0^\top\bF_0^\top \bar\bR_0\bU \bT \bU^\top \bar\bR_0\vf_\star = \Theta_\sP(1)$, we can write
\begin{align*}
    E_2^{21(2)} = -\eta\,(\vf_\star^\top \bar\bR_0 \bF_0 \va) (\vbeta^\top\bW_0^\top\bF_0^\top \bar\bR_0\bU \bT \bU^\top \bar\bR_0\vf_\star) = o_{\sP}(1).
\end{align*}
Using a similar argument, we have
\begin{align*}
E_2^{21(3)} &= -\eta(\vf_\star^\top \bar\bR_0 \bF_0 \bW_0\vbeta)(\va^\top \bF_0^\top \bar\bR_0\bU\bT\bU^\top \bar\bR_0 \vf_\star)\\
&=-\eta T_{12}(\vf_\star^\top \bar\bR_0 \bF_0 \bW_0\vbeta)(\va^\top \bF_0^\top \bar\bR_0\bF_0\va)(\vf_\star^\top \bar\bR_0 \tilde\bX\vbeta) + o_\sP(1).
\end{align*}
Finally, again note that $\eta^2(\vf_\star^\top \bar\bR_0 \bF_0 \va)^2 = o_\sP(1)$. Thus,
\begin{align*}
    E_2^{21(4)} = -\eta^2 (\vf_\star^\top \bar\bR_0 \bF_0\va)(\va^\top\bF_0^\top \bar\bR_0\bU \bT \bU^\top \bar\bR_0\vf_\star) = o_{\sP}(1).
\end{align*}
Putting all together, we arrive at 
\begin{align}
    \label{eq:E2^12}
    E_{2}^{21} = E_{2}^{12} &= -T_{22}\left( \vf_\star^\top \bar\bR_0 \bF_0 \bSigma_{\vf}^0\bF_0^\top \bar\bR_0\tilde\bX\vbeta\right)\left(\vf_\star^\top\bar\bR_0\tilde\bX\vbeta\right)\nonumber\\
    &\hspace{2cm}-\eta T_{12}(\vf_\star^\top \bar\bR_0 \bF_0 \bW_0\vbeta)(\va^\top \bF_0^\top \bar\bR_0\bF_0\va)(\vf_\star^\top \bar\bR_0 \tilde\bX\vbeta)+o_\sP(1).
\end{align}

\paragraph{Analysis of $E_{2}^{22}$.} Once again, by expanding $\bSigma_\vf$ and $\bU\bT\bU^\top$, we have
\begin{align*}
    E_2^{22} &= \vf_\star^\top \bar\bR_0 \bU\bT \bU^\top \bar\bR_0 \bF_0\bSigma_\vf \bF_0^\top \bar\bR_0 \bU \bT \bU^\top \bar\bR_0\vf_\star\\
    &= \vf_\star^\top \bar\bR_0 \left[T_{11} (\bF_0\va)(\bF_0\va)^\top + T_{12}(\bF_0\va)(\tilde\bX\vbeta)^\top + T_{21} (\tilde\bX\vbeta)(\bF_0\va)^\top+T_{22}(\tilde\bX\vbeta)(\tilde\bX\vbeta)^\top\right] \bar\bR_0 
    \bF_0
    \\
    &\hspace{0.5cm}\times\left[\bSigma_{\vf}^0 +  \eta \left(\va (\bW_0\vbeta)^\top + \bW_0\vbeta \va^\top \right) +  \eta^2 \Vert\vbeta\Vert_2^2 \va\va^\top\right]\\
    &\hspace{0.5cm}\times\bF_0^\top \bar\bR_0 \left[T_{11} (\bF_0\va)(\bF_0\va)^\top + T_{12}(\bF_0\va)(\tilde\bX\vbeta)^\top + T_{21} (\tilde\bX\vbeta)(\bF_0\va)^\top+T_{22}(\tilde\bX\vbeta)(\tilde\bX\vbeta)^\top\right] \bar\bR_0\vf_\star.
\end{align*}
Note that $\vf_\star^\top \bar\bR_0 \bF_0 \va = O_\sP(1/\sqrt{n})$ and $\eta^2 = o(\sqrt{n})$. Hence, 
\begin{align}
    \label{eq:E2^22}
    E_2^{22} &= (\vf_\star^\top \bar\bR_0 \tilde\bX\vbeta)^2 \left[T_{21}(\bF_0\va)^\top + T_{22}(\tilde\bX\vbeta)^\top\right]\nonumber\\
    &\hspace{1cm}\times\bar\bR_0 
    \bF_0\left[\bSigma_{\vf}^0 +  \eta \left(\va (\bW_0\vbeta)^\top + \bW_0\vbeta \va^\top \right) +  \eta^2 \Vert\vbeta\Vert_2^2 \va\va^\top\right]\bF_0^\top \bar\bR_0\\ 
    &\hspace{2cm}\times\left[ T_{12}(\bF_0\va) + T_{22}(\tilde\bX\vbeta)\right] + o_{\sP}(1)\nonumber\\
    &= (\vf_\star^\top \bar\bR_0 \tilde\bX\vbeta)^2 \Big[T_{22}^2\left(\vbeta^\top\tilde\bX^\top\bar\bR_0 \bF_0 \bSigma_{\vf}^0 \bF_0^\top \bar\bR_0 \tilde\bX\vbeta\right)\nonumber\\
    &\hspace{0.6cm}+ 2\eta T_{12}T_{22}(\va^\top \bF_0^\top \bar\bR_0 \bF_0 \va)(\vbeta^\top \bW_0^\top \bF_0^\top \bar\bR_0 \tilde\bX\vbeta) + \Vert\vbeta\Vert_2^2\eta^2T_{12}^2(\va^\top\bF_0^\top\bar\bR_0\bF_0\va)^2\Big] + o_\sP(1).
\end{align}

\paragraph{Analysis of $E_{2}^{13}$ and $E_2^{31}$.} Recalling the definition of $K$ in \eqref{def:K}, this term can be written as
\begin{align}
    \label{eq:E2^13}
    E_{2}^{13} &= E_{2}^{31} =  \eta K \vf_\star^\top \bar\bR_0 \bF_0 \bSigma_\vf \va\nonumber\\
    &=  \eta K \vf_\star^\top\bar\bR_0\bF_0\left[\bSigma_{\vf}^0 +  \eta \left(\va (\bW_0\vbeta)^\top + \bW_0\vbeta \va^\top \right) +  \eta^2 \Vert\vbeta\Vert_2^2 \va\va^\top\right]\va\nonumber\\
    &=  \eta^2 K (\vf_\star^\top \bar\bR_0 \bF_0 \bW_0 \vbeta)  + o_\sP(1),
\end{align}
where the last line uses \eqref{eq:K_limit} and that $\eta\,\vf_\star^\top \bar\bR_0 \bF_0 \va = o_\sP(1)$.

\paragraph{Analysis of $E_{2}^{23}$ and $E_2^{32}$.} Again, recalling the definition of $K$ in \eqref{def:K}, we have
\begin{align*}
    E_{2}^{23} &= E_2^{32}
    =-\eta K\; \vf_\star^\top \bar\bR_0 \bU\bT\bU^\top \bar\bR_0 \bF_0 \bSigma_\vf \va\\
    &= -\eta K\; \vf_\star^\top \bar\bR_0 \left[T_{11} (\bF_0\va)(\bF_0\va)^\top + T_{12}(\bF_0\va)(\tilde\bX\vbeta)^\top + T_{21} (\tilde\bX\vbeta)(\bF_0\va)^\top+T_{22}(\tilde\bX\vbeta)(\tilde\bX\vbeta)^\top\right] \\
    &\hspace{3cm}\times  \left[\bar\bR_0 \bF_0\bSigma_{\vf}^0\va +  \eta \left(\vbeta^\top\bW_0^\top\va)\bar\bR_0 \bF_0\va + \bar\bR_0 \bF_0\bW_0\vbeta  \right) +  \eta^2 \Vert\vbeta\Vert_2^2 \bar\bR_0 \bF_0\va\right]\\
    &= E_2^{23(1)} + E_2^{23(2)} +E_2^{23(3)} +E_2^{23(4)},
\end{align*}
in which each term can be written as follows:
\begin{align*}
    E_2^{23(1)}&= - \eta K T_{11} (\vf_\star^\top \bar\bR_0 \bF_0 \va) \Big[\va^\top\bF_0^\top\bar\bR_0 \bF_0\bSigma_\vf^0 \va  + \eta (\va^\top \bW_0\vbeta)(\va^\top \bF_0^\top \bar\bR_0 \bF_0 \va)\\
    &\hspace{3cm}+\eta (\va^\top \bF_0^\top \bar\bR_0 \bF_0 \bW_0 \vbeta) +  \eta^2 \Vert\vbeta\Vert_2^2(\va^\top \bF_0 \bar\bR_0 \bF_0 \va)\Big] = o_\sP(1),
\end{align*}
where we have used that $K = \Theta_\sP(1/\eta^2)$, $\va^\top \bW_0\vbeta = o_\sP(1)$,  $\va^\top \bF_0^\top \bar\bR_0 \bF_0 \bW_0 \vbeta = o_\sP(1)$, and  $\eta\, \vf_\star^\top \bar\bR_0 \va = o_\sP(1)$. Also,
\begin{align*}
    E_2^{23(2)} &= - \eta K T_{12} (\vf_\star^\top \bar\bR_0 \bF_0 \va)\Big[ \vbeta^\top\tilde\bX^\top \bar\bR_0 \bF_0 \bSigma_\vf^0 \va +  \eta (\va^\top \bW_0 \vbeta)(\vbeta^\top\tilde\bX^\top \bar\bR_0 \bF_0 \va)\\
    &\hspace{3cm}+ \eta(\vbeta^\top \tilde\bX^\top \bar\bR_0 \bF_0 \bW_0 \vbeta) +  \eta^2 \Vert\vbeta\Vert_2^2(\vbeta^\top \tilde\bX^\top \bar\bR_0 \bF_0 \va)\Big] = o_\sP(1),
\end{align*}
with a very similar argument to that for $E_2^{23(1)}$. Next,
\begin{align*}
    E_2^{23(3)} &= -\eta K T_{21}(\vf_\star^\top\bar\bR_0\tilde\bX\vbeta)\Big[\va^\top\bF_0^\top \bar\bR_0 \bF_0 \bSigma_\vf^0 \va + \eta (\va^\top \bW_0 \vbeta)(\va^\top \bF_0^\top \bar\bR_0 \bF_0 \va)\\
    &\hspace{4.5cm}+ \eta(\va^\top\bF_0^\top \bar\bR_0 \bF_0 \bW_0 \vbeta) + \eta^2 \Vert\vbeta\Vert_2^2 (\va^\top \bF_0^\top \bar\bR_0 \bF_0 \va)\Big]\\
    &= -\eta^3 K T_{21} \Vert\vbeta\Vert_2^2(\vf_\star^\top\bar\bR_0 \tilde\bX\vbeta)(\va^\top \bF_0^\top \bar\bR_0 \bF_0 \va) + o_\sP(1),
\end{align*}
in which we have used the fact that $ T_{21} = \Theta_\sP(1/\eta)$, and $K = \Theta(1/\eta^2)$. Finally, with a similar argument
\begin{align*}
    E_2^{23(4)} &= - \eta K T_{22} (\vf_\star^\top \bar\bR_0 \tilde\bX\vbeta) \Big[\vbeta^\top \tilde\bX^\top\bar\bR_0 \bF_0 \bSigma_\vf^0 \va + \eta (\va^\top \bW_0\vbeta)(\vbeta^\top \tilde\bX^\top\bar\bR_0 \bF_0 \va)\\
    &\hspace{4.3cm}+\eta(\vbeta^\top \tilde\bX^\top \bar\bR_0 \bF_0 \bW_0 \vbeta) + \eta^2 \Vert\vbeta\Vert_2^2 (\vbeta^\top \tilde\bX^\top \bar\bR_0 \bF_0 \va)\Big]\\
    &= -  \eta^2 K T_{22}(\vf_\star^\top \bar\bR_0 \tilde\bX\vbeta)(\vbeta^\top \tilde\bX^\top \bar\bR_0 \bF_0 \bW_0 \vbeta) + o_\sP(1).
\end{align*}
Putting everything together, we have
\begin{align}
    \label{eq:E2^23}
    E_2^{23} = -\eta^3 K T_{21} \Vert\vbeta\Vert_2^2(\vf_\star^\top\bar\bR_0 &\tilde\bX\vbeta)(\va^\top \bF_0^\top \bar\bR_0 \bF_0 \va)\nonumber\\ &-  \eta^2 K T_{22}(\vf_\star^\top \bar\bR_0 \tilde\bX\vbeta)(\vbeta^\top \tilde\bX^\top \bar\bR_0 \bF_0 \bW_0 \vbeta) + o_\sP(1).
\end{align}

\paragraph{Analysis of $E_{2}^{33}$.} This term can be analyzed by expanding $\bSigma_\vf$ as follows:
\begin{align}
    \label{eq:E2^33}
    E_2^{33} &= \eta^2 K^2 \va^\top \left[\bSigma_{\vf}^0 +  \eta \left(\va (\bW_0\vbeta)^\top + \bW_0\vbeta \va^\top \right) +  \eta^2 \Vert\vbeta\Vert_2^2 \va\va^\top\right]\va\nonumber\\
    &= \eta^2 K^2 \va^\top\bSigma_\vf^0\va + 2 \eta^3 K^2 (\va^\top \bW_0 \vbeta) +  \eta^4K^2 \Vert\vbeta\Vert_2^2
    =  \eta^4K^2 \Vert\vbeta\Vert_2^2 +  o_\sP(1),
\end{align}
where we have used that $K = \Theta_\sP(1/\eta^2)$ and $\eta\,\va^\top \bW_0\vbeta  = o_\sP(1)$.

\paragraph{Putting Everything Together.} Now, we can put 
together
the results from previous sections to derive the limiting value of $E_2$. First, we will explicitly derive the limit of each component. 
To do so, recall that using \eqref{prop:alignment} and Lemma \ref{lemma:l1_limits}, we have 
\begin{align*}
    &\Vert\vbeta\Vert_2^2\to_P \left[\phi(c_\star^2 + \sigma_\ep^2) + c_{\star, 1}^2\right],\quad
    \vf_\star^\top \bar\bR_0 \tilde\bX\vbeta \to_P c_{\star, 1}^2 \psi m_2/\phi\\
    &\vbeta^\top \tilde\bX^\top \bar\bR_0 \tilde\bX\vbeta \to_P \left[\phi(c_\star^2 + \sigma_\ep^2) + c_{\star, 1}^2\right]\psi m_2/{\phi}, \quad \text{and}\quad
    \va^\top \bF_0^\top \bar\bR_0 \bF_0 \va \to_P \psi/\phi - \lambda \psi^2m_1/\phi^2.
\end{align*}
 
Also, using Lemma \ref{lemma:turn_beta_to_beta_star}, we have
\begin{align*}
    &\vf_\star^\top \bar\bR_0 \bF_0 \bSigma_\vf^0 \bF_0^\top \bar\bR_0 \tilde\bX \vbeta \to_P c_{\star, 1}^2 M,\quad \text{and}\quad
    \vbeta^\top \tilde\bX^\top \bar\bR_0 \bF_0 \bSigma_\vf^0 \bF_0^\top \bar\bR_0 \tilde\bX \vbeta \to_P  \left[\phi(c_\star^2 + \sigma_\ep^2) + c_{\star, 1}^2\right] M,
\end{align*}
in which 
$M := \lim_{n, N, d \to \infty} \vbeta_\star^\top \tilde\bX^\top \bar\bR_0 \bF_0 \bSigma_\vf^0 \bF_0^\top \bar\bR_0 \tilde\bX \vbeta_\star$.
This limit has 
been computed in \cite{adlam2020neural}. Using the diagram in the proof of Lemma~\ref{lemma:l1_limits} that shows how the notations of \cite{adlam2020neural} match ours, we find that 
$E_{32}$ in \cite[S148]{adlam2020neural} equals our $M$. 
Thus, we find
\begin{align}
    \label{eq:M_value}
    M = 1 - \frac{2m_2}{m1} -\frac{m_2'}{m_1^2},
\end{align}
where $m_2'$ is the derivative of $m_2$ with respect to $\lambda$.
Also, again using \ref{lemma:turn_beta_to_beta_star}, we have
\begin{align*}
    &\vf_\star^\top \bar\bR_0 \bF_0 \bW_0 \vbeta \to_P c_{\star,1}^2 \bar{M},\quad\text{and}\quad
    \vbeta^\top\tilde\bX^\top \bar\bR_0 \bF_0 \bW_0 \vbeta \to_P  \left[\phi(c_\star^2 + \sigma_\ep^2) + c_{\star, 1}^2\right] \bar{M},
\end{align*}
in which 
$\bar{M} := \lim_{n, N, d \to \infty} \vbeta^\top_\star\tilde\bX^\top \bar\bR_0 \bF_0 \bW_0 \vbeta_\star$.
This limit has been computed in \cite{ba2022high}. 
Specifically, using $T_9^0$ in (C.16), and noting that their $\mathbf{\Phi}_0$ translates to $\bF_0/\sqrt{N}$, 
their $\bW_0$ translates to $\bW_0^\top$, and their $\tilde\lambda$ translates to $\frac{\lambda n}{N}$ in our notation, using \cite[Proposition 29]{ba2022high} we find
\begin{align}
    \label{eq:M_bar}
    \bar{M} = 1 - \frac{m_2}{m_1}.
\end{align}

Now, we can use \eqref{eq:E2^12}, \eqref{eq:E2^22}, \eqref{eq:E2^13}, \eqref{eq:E2^23}, and \eqref{eq:E2^33},
 respectively, to derive the following expressions:

\begin{align*}
    &E_2^{21} = E_2^{12} = \frac{c_{\star,1}^4}{\phi(c_\star^2 + \sigma_\ep^2) + c_{\star,1}^2} \left[-M + \bar{M} \left(\frac{\psi/\phi - \lambda \psi^2 m_1 /\phi^2}{\psi/\phi - \lambda \psi^2 m_1 /\phi^2 - 1}\right)\right],\\[0.2cm]
    &E_2^{22} = \frac{c_{\star,1}^4}{\phi(c_\star^2 + \sigma_\ep^2) + c_{\star,1}^2} \left[M - 2\bar{M} \left(\frac{\psi/\phi - \lambda \psi^2 m_1 /\phi^2}{\psi/\phi - \lambda \psi^2 m_1 /\phi^2 - 1}\right) + \left(\frac{\psi/\phi - \lambda \psi^2 m_1 /\phi^2}{\psi/\phi - \lambda \psi^2 m_1 /\phi^2 - 1}\right)^2\right],\\[0.2cm]
    &E_2^{13} = E_2^{31} = -\frac{c_{\star,1}^4}{\phi(c_\star^2 + \sigma_\ep^2) + c_{\star,1}^2} \left[\frac{\bar M}{\psi/\phi + \lambda \psi^2/\phi^2 - 1}\right],\\[0.2cm]
    &E_2^{23} = E_2^{32} = \frac{c_{\star,1}^4}{\phi(c_\star^2 + \sigma_\ep^2) + c_{\star,1}^2} \left[-\frac{\psi/\phi - \lambda \psi^2 m_1 /\phi^2}{(\psi/\phi - \lambda \psi^2 m_1 /\phi^2 - 1)^2} + \frac{\bar M}{\psi/\phi - \lambda \psi^2 m_1 /\phi^2 - 1}\right],\\[0.2cm]
    &E_2^{33}  =\frac{c_{\star,1}^4}{\phi(c_\star^2 + \sigma_\ep^2) + c_{\star,1}^2} \left[\frac{1}{(\psi/\phi - \lambda\psi^2m_1/\phi^2-1)^2}\right].
\end{align*}

Thus, summing these terms up, we conclude that
\begin{align}
    \label{eq:E2}
    E_2 - \tilde\vy^\top \bar\bR_0 \bF\bSigma_{\vf}^0\bF_0^\top\bar\bR_0 \tilde\vy  \to_P \frac{c_{\star,1}^4\left(1 - M\right)}{\phi(c_\star^2 + \sigma_\ep^2) + c_{\star,1}^2},
\end{align}
wrapping up the derivation of the limiting value of $E_2$.

\subsubsection{Analysis of $E_3$.} To analyze this term, first note that
\begin{align*}
    E_3 &= -2 \hat \va^\top \vmu_\vf = -2 \tilde \vy^\top  \bar\bR \bF\vmu_\vf = -2 \vf_\star^\top  \bar\bR \bF\vmu_\vf - 2 \vepsilon^\top \bF \bar\bR \vmu_\vf.
\end{align*}
With a simple order-wise analysis, the second term can be shown to be $o_\sP(1).$ To analyze the first term, we will again use the decomposition from \eqref{eq:expand_a_hat}. 
We can write
$-2 \vf_\star^\top  \bar\bR \bF\vmu_\vf =  E_3^{(1)} + E_3^{(2)} + E_3^{(3)}$, 
in which $E_3^{(i)} = - 2 \vp_i^\top \vmu_\vf$. We will analyze these terms separately. For the first term, we have
\begin{align*}
    E_3^{(1)} &=  -2 \, \vp_1^\top \vmu_\vf = -2\, \vf_\star^\top \bar\bR_0 \bF_0  \left( c_{\star, 1} \bW_0 \vbeta_\star +  c_{\star,1}^2 \eta \,\va\right)
    = -2c_{\star, 1} \vf_\star^\top \bar\bR_0 \bF_0\bW_0\vbeta_\star + o_\sP(1),
\end{align*}
where we have used that  $\eta\, \vf_\star^\top\bar\bR_0 \bF_0 \va = o_\sP(1)$.  Similarly, the second term can be written as 
\begin{align*}
    E_3^{(2)} &= -2\, \vp_2^\top \vmu_\vf = 2\, \vf_\star^\top  \bar\bR_0 \bU\bT\bU^\top \bar\bR_0 \bF_0 \left( c_{\star, 1} \bW_0 \vbeta_\star +  c_{\star,1}^2 \eta \,\va\right)\\
     &= 2\, \vf_\star^\top  \bar\bR_0 \left[T_{11} (\bF_0\va)(\bF_0\va)^\top + T_{12}(\bF_0\va)(\tilde\bX\vbeta)^\top + T_{21} (\tilde\bX\vbeta)(\bF_0\va)^\top+T_{22}(\tilde\bX\vbeta)(\tilde\bX\vbeta)^\top\right]\\  &\hspace{1cm}\times\bar\bR_0 \bF_0 \left( c_{\star, 1} \bW_0 \vbeta_\star +  c_{\star,1}^2 \eta \,\va\right)\\
    &= 2\, (\vf_\star^\top  \bar\bR_0\tilde\bX\vbeta) \left[T_{21} (\bF_0\va)^\top+T_{22}(\tilde\bX\vbeta)^\top\right]  \bar\bR_0 \bF_0 \left( c_{\star, 1} \bW_0 \vbeta_\star +  c_{\star,1}^2 \eta \,\va\right) + o_\sP(1)\\
    &= 2\, (\vf_\star^\top  \bar\bR_0\tilde\bX\vbeta) \left[ c_{\star,1}^2 \eta T_{21} \va^\top\bF_0^\top \bar\bR_0 \bF_0 \va + c_{\star, 1}T_{22}\vbeta^\top\tilde\bX^\top\bar\bR_0 \bF_0 \bW_0\vbeta_\star\right]   + o_\sP(1).
\end{align*} 
Also, for the third term we can write
\begin{align*}
    E_3^{(3)}&= -2\, \vp_3^\top \vmu_\vf = -2   \eta K \va^\top \left( c_{\star, 1} \bW_0 \vbeta_\star +  c_{\star,1}^2 \eta \,\va\right) = -2 c_{\star,1}^2\eta^2 K + o_\sP(1).
\end{align*}
Summing up, the limiting value for $E_3$ is
\begin{align}
    \label{eq:E3}
    E_3  + 2 c_{\star, 1} \tilde\vy^\top \bF_0 \bar\bR_0 \bW_0 \vbeta_\star +  \to_P \frac{c_{\star,1}^4}{\phi(c_\star^2 + \sigma_\ep^2) + c_{\star,1}^2}\left[2 \bar M - 2\right].
\end{align}

\subsubsection{The Final Result}
Putting equations \ref{eq:E1}, \ref{eq:E2}, \ref{eq:E3} together, we have
\begin{align*}
    \gL_{\rm te}&(\hat\va(\bF)) - \Big(\sigma_\ep^2 + c_{\star}^2 + \tilde\vy^\top \bar\bR_0 \bF\bSigma_{\vf}^0\bF_0^\top\bar\bR_0 \tilde\vy  - 2 c_{\star, 1} \tilde\vy^\top \bF_0 \bar\bR_0 \bW_0 \vbeta_\star \Big)\\
    &\hspace{2cm}\to_P \frac{c_{\star,1}^4}{\phi(c_\star^2 + \sigma_\ep^2) + c_{\star,1}^2}\left[2 \bar M -1 - M\right]. 
\end{align*}
The test error of the untrained random features model can be written as
\begin{align*}
    \gL_{\rm te}(\hat\va(\bF_0)) = \sigma_\ep^2 + c_{\star}^2 &+ \tilde\vy^\top \bar\bR_0 \bF\bSigma_{\vf}^0\bF_0^\top\bar\bR_0 \tilde\vy  - 2 c_{\star, 1} \tilde\vy^\top \bF_0 \bar\bR_0 \bW_0 \vbeta_\star.
\end{align*}
Hence, the improvement over the untrained random features model in terms of test error is equal to
\begin{align*}
     \gL_{\rm te}(\hat\va(\bF_0)) - \gL_{\rm te}(\hat\va(\bF)) \to_P 
    &\frac{c_{\star,1}^4}{\phi(c_\star^2 + \sigma_\ep^2) + c_{\star,1}^2}\left[1  +M  - 2 \bar M\right]. 
\end{align*}
Hence, using \eqref{eq:M_bar} and \eqref{eq:M_value}, we find
\begin{align}
    \label{eq:final_l1_test}
    \gL_{\rm te}(\hat\va(\bF_0)) - \gL_{\rm te}(\hat\va(\bF)) \to_P \frac{-c_{\star,1}^4 }{(\phi(c_\star^2 + \sigma_\ep^2) + c_{\star,1}^2)m_1^2}\frac{\partial m_2}{\partial \lambda},
\end{align}
where $\frac{\partial m_1}{\partial \lambda} \leq 0$ using that $\tr(\tilde \bX^\top (\bF_0\bF_0^\top + \lambda n \bI_n)^{-1} \tilde \bX)/d =  \psi m_2/\phi$. This concludes the proof for $\ell = 1$.

\subsection{Proof for $\ell = 2$}
First, 
similar to the proof for $\ell = 1$, we have
\begin{align*}
    &\vnu_2 = \E_{\vx_{\rm te}}[(\vx_{\rm te}^\top\vbeta)^2 \sigma(\bW_0 \vx_{\rm te})] = 2 c_2(\bW_0 \vbeta)^{\circ 2},\\
    &\aleph_3 = \E_{\vx_{\rm te}}(\vx_{\rm te}^\top \vbeta)^3 = 0, \text{ and } \quad \aleph_4 = \E_{\vx_{\rm te}}(\vx_{\rm te}^\top \vbeta)^4 = 3\Vert\vbeta\Vert_2^2 \to_P 3[c_{\star, 1}^2 + \phi(c_{\star}^2 + \sigma_\ep^2)],\\
    &\tau_2 = \E\left[y_{\rm te} (\vx_{\rm te}^\top\vbeta)^2\right] = 2c_{\star, 2} (\vbeta_\star^\top \vbeta)^2 \to_P 2c_{\star,1}^2 c_{\star, 2},
\end{align*}
in which we have used Lemma~\ref{lem:twohermite}. Thus, using \eqref{eq:sigma_f} and \eqref{eq:mu_f}, we have
\begin{align*}
    \bSigma_\vf &= \bSigma_\vf^0 +  \eta \left(\va (\bW_0 \vbeta)^\top + (\bW_0 \vbeta) \va^\top\right) +  \frac{2 c_2^2\eta^2}{\sqrt{N}} \left((\sqrt{N}\va^{\circ 2}) (\bW_0 \vbeta)^{\circ 2 \top} + (\bW_0 \vbeta)^{\circ 2} (\sqrt{N}\va^{\circ 2\top})\right)\\
    &\hspace{1cm} +  \eta^2 \Vert\vbeta\Vert_2^2 \va \va^\top + \frac{3 c_2^2 \eta^4}{N} \Vert \vbeta\Vert_2^4 (\sqrt{N}\va^{\circ 2})(\sqrt{N}\va^{\circ 2 \top}),\quad\text{and}\quad \\
    \vmu_\vf &= \vmu_\vf^0 + c_{\star, 1}^2 \eta \va + 2 c_2 c_{\star, 1}^2 c_{\star, 2} \eta^2 \va^{\circ 2}.
\end{align*}
Also, from Section~\ref{pl2}, we have $\bar\bR = \bar\bR_0 - \bar\bR_0 \bU \bT \bU^\top \bar\bR_0$, where the matrix $\bT^{-1}$ is defined in \eqref{eq:inv_T_l2} and $\bU = [\,\bF_0\va\;|\;\bF_0 \va^{\circ 2} \sqrt{N}\;|\;\;\tilde\bX\vbeta\;|\;(\tilde\bX\vbeta)^{\circ 2}\;]$. Using the analysis in Section \ref{section:l2-Hinverse}, we note that 

\begin{align*}
 \bT^{-1} =
    \begin{bmatrix}
         \va^\top (\bF_0^\top \bar\bR_0 \bF_0 - \bI) \va & 0 & \frac{1}{c_1^2 \eta} & 0\\
       0& {N}\va^{\circ 2\top}(\bF_0^\top \bar\bR_0 \bF_0-\bI)\va^{\circ 2}&0&\frac{N^{\frac12}}{c_1^2c_2\eta^2}\\
         \frac{1}{c_1^2 \eta}&0& (\tilde\bX \vbeta)^\top\bar\bR_0(\tilde\bX \vbeta)& 0\\[0.5cm]
        0& \frac{N^{\frac12}}{c_1^2c_2\eta^2}&0&(\tilde\bX \vbeta)^{\circ 2 \top}\bar\bR_0(\tilde\bX \vbeta)^{\circ 2}
    \end{bmatrix} + \bDelta,
\end{align*}

in which the elements of $\bDelta$ are $O_\sP(1/\sqrt{n})$. Hence, the matrix $\bT = [T_{i,j}]$ has entries
\begin{align*}
    &T_{11} = \frac{1}{\va^\top (\bF_0^\top \bar\bR_0 \bF_0 - \bI) \va} + \frac{1}{\eta^2} \cdot \frac{1}{(\va^\top(\bF_0^\top \bar\bR_0 \bF_0 - \bI) \va)^2((\tilde\bX \vbeta)^{\top}\bar\bR_0(\tilde\bX \vbeta))} + o_\sP(1/\eta^2),\\
    &T_{13} = T_{31} = - \frac{1}{\eta}\cdot \frac{1}{(\va^\top (\bF_0^\top \bar\bR_0 \bF_0 - \bI) \va)((\tilde\bX \vbeta)^\top\bar\bR_0(\tilde\bX \vbeta))} + o_\sP(1/\eta),\\
    &T_{22} = \frac{1}{{N}\va^{\circ 2\top}(\bF_0^\top \bar\bR_0 \bF_0-\bI)\va^{\circ 2}} + \frac{N}{c_2^2\eta^4}\cdot\frac{1}{({N}\va^{\circ 2\top}(\bF_0^\top \bar\bR_0 \bF_0-\bI)\va^{\circ 2})^2((\tilde\bX \vbeta)^{\circ 2\top}\bar\bR_0(\tilde\bX \vbeta)^{\circ 2})} + o_\sP(N/\eta^4),\\
    &T_{24} = T_{42} = - \frac{\sqrt{N}}{c_2 \eta^2} \cdot \frac{1}{( {N}\va^{\circ 2\top}(\bF_0^\top \bar\bR_0 \bF_0-\bI)\va^{\circ 2})((\tilde\bX \vbeta)^{\circ 2 \top}\bar\bR_0(\tilde\bX \vbeta)^{\circ 2})}+o_\sP(1/\eta^2),\\
    &T_{33} = \frac{1}{(\tilde\bX\vbeta)^\top \bar\bR_0 (\tilde\bX\vbeta)} + \frac{1}{\eta^2} \cdot \frac{1}{(\va^{\top}(\bF_0^\top \bar\bR_0 \bF_0-\bI)\va)((\tilde\bX\vbeta)^\top \bar\bR_0 (\tilde\bX\vbeta))^2}+o_\sP(1/\eta^2),\\
    &T_{44} = \frac{1}{(\tilde\bX \vbeta)^{\circ 2 \top}\bar\bR_0(\tilde\bX \vbeta)^{\circ 2}} + \frac{N}{c_2^2\eta^4} \cdot \frac{1}{(N\va^{\circ 2\top} (\bF_0^\top \bar\bR_0 \bF_0 - \bI) \va^{\circ 2\top})((\tilde\bX \vbeta)^{\circ 2 \top}\bar\bR_0(\tilde\bX \vbeta)^{\circ 2})^2} + o_\sP(N/\eta^4),\\
\end{align*}
and its other elements are $O_\sP(1/\sqrt{n})$.

Next, we will study the terms $E_1 = \E_{\vx_{\rm te}, y_{\rm te}}(y_{\rm te}^2) $, $E_2 =  \hat\va^\top \bSigma_\vf \hat\va$, and $E_3 = - 2\hat\va^\top \vmu_\vf$ 
that appear in the decomposition of the test error in \eqref{eq:test_decomposition}.

\subsubsection{Analysis of $E_1$} 
Similar to the $\ell = 1$ case, we have
\begin{align}
    \label{eq:E1_l2}
    E_1 = \E(y_{\rm te}^2) = \sigma_\ep^2 + \E_{\vx_{\rm te}}\left(\sigma_\star(\vbeta_\star^\top\vx_{\rm te})\right)^2 = \sigma_\ep^2 + c_{\star}^2.
\end{align}

\subsubsection{Analysis of $E_2$}
Recall that using \eqref{eq:E2-decomp}, we have
\begin{align*}
    E_2 =\vf_\star^\top \bar\bR \bF\bSigma_{\vf}\bF^\top\bar\bR \vf_\star +  2\vepsilon^\top \bar\bR \bF\bSigma_{\vf}\bF^\top\bar\bR \tilde\vf_\star+\vepsilon^\top \bar\bR \bF\bSigma_{\vf}\bF^\top\bar\bR \vepsilon.
\end{align*}
First, note that $\vepsilon^\top \bar\bR \bF\bSigma_{\vf}\bF^\top\bar\bR \vf_\star = o_\sP(1)$ using a simple order-wise argument. 
Using an argument similar to the one for $\ell = 1$, 
the third term can be written as $\vepsilon^\top \bar\bR \bF\bSigma_{\vf}\bF^\top\bar\bR\, \vepsilon = \vepsilon^\top \bar\bR_0 \bF_0\bSigma_{\vf}^0\bF_0^\top\bar\bR_0\, \vepsilon + o_\sP(1).$

Next, we will study the term $\vf_\star^\top \bar\bR \bF\bSigma_{\vf}\bF^\top\bar\bR \vf_\star$. We can write
\begin{align}
    \label{eq:expand_a_hat_l2}
    \bF^\top \bar\bR \vf_\star &= \left(\bF_0 +  \eta (\tilde \bX \vbeta)\va^\top + c_2 \eta^2 (\tilde \bX \vbeta)^{\circ 2}\va^{\circ 2\top}\right)^\top\left(\bar\bR_0 - \bar\bR_0 \bU \bT \bU^\top \bar\bR_0\right)\vf_\star\nonumber\nonumber\\
    &= \underbrace{\bF_0^\top \bar\bR_0 \vf_\star}_{\vp_1} \underbrace{- \bF_0^\top \bar\bR_0 \bU \bT \bU^\top \bar\bR_0\vf_\star}_{\vp_2} + \underbrace{\eta K_1 \va}_{\vp_3} +\underbrace{c_2\eta^2 K_2 \va^{\circ 2}}_{\vp_4},
\end{align}
 in which $K_1$ and $K_2$ are defined as
 \begin{align*}
     &K_1 = \left[(\tilde \bX \vbeta)^\top\bar\bR_0\vf_\star - (\tilde \bX\vbeta)^\top \bar\bR_0 \bU \bT \bU^\top \bar\bR_0\vf_\star\right],\\
     &K_2 = \left[(\tilde \bX \vbeta)^{\circ 2\top}\bar\bR_0\vf_\star - (\tilde \bX\vbeta)^{\circ 2\top} \bar\bR_0 \bU \bT \bU^\top \bar\bR_0\vf_\star\right].
 \end{align*}
Usign this notation, we have
$\vf_\star^\top \bar\bR \bF\bSigma_{\vf}\bF^\top\bar\bR \vf_\star = \sum_{i, j \in [4]} E_2^{ij}$,
where $E_2^{ij} = \vp_i^\top \bSigma_\vf \vp_j^\top$. In the following sections, we will compute each term separately.

\paragraph{Preliminary Computations.} First, we will analyze $K_1$ and $K_2$. Recall that 
 \begin{align*}
     \bU\bT\bU^\top = T_{11} (\bF_0\va)&(\bF_0\va)^\top + T_{13} (\bF_0\va)(\tilde \bX \vbeta)^\top \\ &+ N T_{22}\left(\bF_0\va^{\circ 2}\right)\left(\bF_0\va^{\circ 2}\right)^\top + \sqrt{N}T_{24}\left(\bF_0\va^{\circ 2}\right)(\tilde \bX \vbeta)^{\circ 2 \top}\\
     &+ T_{31}  (\tilde \bX \vbeta)(\bF_0\va)^\top + T_{33}  (\tilde \bX \vbeta)(\tilde \bX \vbeta)^\top + \sqrt{N}T_{42}(\tilde \bX \vbeta)^{\circ 2}(\bF_0\va^{\circ 2 \top})\\ &+ T_{44}(\tilde \bX \vbeta)^{\circ 2}(\tilde \bX \vbeta)^{\circ 2 \top}.
 \end{align*}
 Thus, we have
 \begin{align*}
     (\tilde \bX\vbeta)^\top \bar\bR_0 \bU \bT \bU^\top \bar\bR_0\vf_\star &= T_{33} \left[(\tilde \bX\vbeta)^\top \bar\bR_0 (\tilde \bX \vbeta)\right]\cdot\left[(\tilde \bX\vbeta)^\top \bar\bR_0 \vf_\star\right] + O_\sP(1/\sqrt{n}),
 \end{align*}
 which gives
 \begin{align}
    \label{eq:K1}
     K_1 = -\frac{1}{\eta^2} \cdot \frac{\vbeta^\top \tilde\bX^\top \bar\bR_0^\top \vf_\star}{(\va^{\top}(\bF_0^\top \bar\bR_0 \bF_0-\bI)\va)((\tilde\bX\vbeta)^\top \bar\bR_0 (\tilde\bX\vbeta))} + o_\sP(1/\eta^2).
 \end{align}
 Similarly, for $K_2$ we have 
  \begin{align*}
     (\tilde \bX\vbeta)^{\circ 2 \top} \bar\bR_0 \bU \bT \bU^\top \bar\bR_0\vf_\star &= T_{44} \left[(\tilde \bX\vbeta)^{\circ 2 \top} \bar\bR_0 (\tilde \bX \vbeta)^{\circ 2}\right]\cdot\left[(\tilde \bX\vbeta)^{\circ 2\top} \bar\bR_0 \vf_\star\right] + O_\sP(1/\sqrt{n}),
 \end{align*}
 which gives
 \begin{align}
    \label{eq:K2}
     K_2 = -\frac{N}{c_2^2\eta^4} \cdot \frac{(\tilde\bX\vbeta)^{\circ 2\top} \bar\bR_0^\top \vf_\star}{(N\va^{\circ 2\top} (\bF_0^\top \bar\bR_0 \bF_0 - \bI) \va^{\circ 2\top})((\tilde\bX \vbeta)^{\circ 2 \top}\bar\bR_0(\tilde\bX \vbeta)^{\circ 2})} + o_\sP(1/\eta^4).
 \end{align}
\paragraph{Analysis of $E_2^{11}$.} For this term, by expanding $\bSigma_\vf$, we can write
\begin{align}
    \label{eq:E2_11_l2}
    E_2^{11} &= \vf_\star^\top \bar\bR_0 \bF_0 \bSigma_\vf \bF_0^\top\bar\bR_0 \vf_\star = \vf_\star^\top \bar\bR_0 \bF_0 \bSigma_\vf^0 \bF_0^\top\bar\bR_0 \vf_\star + o_\sP(1).
\end{align}
This holds because $\eta \vf_\star^\top \bar\bR_0 \bF_0 \va = o_\sP(1)$, and $\frac{\eta^2}{\sqrt{N}} \vf_\star^\top \bar\bR_0  \bF_0 (\sqrt{N}\va^{\circ 2}) = o_\sP(1)$.
\paragraph{Analysis of $E_2^{12}$ and $E_2^{21}$.} We have
$E_2^{21} = E_2^{12} = -\vf_\star^\top \bar\bR_0 \bF_0 \bSigma_\vf \bF_0^\top \bar\bR_0 \bU \bT \bU^\top \bar\bR_0\vf_\star$.
Using the expression for $\bT$, we can write
\begin{align}
    \label{eq:TURf}
    \bT\bU^\top \bar\bR_0 \vf_\star = \begin{bmatrix}
        T_{11} \va^\top \bF_0^\top \bar\bR_0 \vf_\star + T_{13} (\tilde \bX\vbeta)^\top \bar\bR_0 \vf_\star\\[0.2cm]
        T_{22} \sqrt{N} \va^{\circ 2 \top}\bF_0^\top \bar\bR_0 \vf_\star + T_{24} (\tilde \bX \vbeta)^{\circ 2 \top} \bar\bR_0 \vf_\star\\[0.2cm]
        T_{31} \va^\top \bF_0^\top \bar\bR_0 \vf_\star + T_{33} (\tilde \bX\vbeta)^\top \bar\bR_0 \vf_\star\\[0.2cm]
        T_{42} \sqrt{N} \va^{\circ 2 \top} \bF_0^\top \bar\bR_0 \vf_\star + T_{44} (\tilde \bX \vbeta)^{\circ 2 \top} \bar\bR_0 \vf_\star
    \end{bmatrix}.
\end{align}
On the other hand, by expanding $\bU$ and $\bSigma_\vf$, we similarly have
\begin{align}
    \label{eq:fRFSFRU}
    (\vf_\star^\top \bar\bR_0 \bF_0 \bSigma_\vf \bF_0^\top \bar\bR_0 \bU) = \begin{bmatrix}
    \eta (\vf_\star^\top \bar\bR_0 \bF_0 \bW_0 \vbeta)(\va^\top \bF_0^\top \bar\bR_0 \bF_0 \va)\\[0.2cm]
    \frac{2c_2^2\eta^2}{\sqrt{N}}(\vf_\star^\top \bar\bR_0 \bF_0 (\bW_0 \vbeta)^{\circ 2})((\sqrt{N} \va^{\circ 2})^{\top}\bF_0^\top \bar\bR_0 \bF_0(\sqrt{N}\va^{\circ 2}))\\[0.2cm]
    \vf_\star^\top\bar\bR_0\bF_0\bSigma_\vf^0 \bF_0^\top \bar\bR_0 \tilde \bX\vbeta\\[0.2cm]
    \vf_\star^\top \bar\bR_0 \bF_0 \bSigma_\vf^0 \bF_0^\top \bar\bR_0 (\tilde \bX\vbeta)^{\circ 2}
    \end{bmatrix}^\top.
\end{align}

This gives
\begin{align}
    \label{eq:E2_21_l2}
    E_2^{21} = E_2^{12} = &- T_{13}\eta(\vf_\star \bar\bR_0 \bF_0 \bW_0 \vbeta)(\va^\top \bF_0^\top \bar\bR_0 \bF_0 \va)((\tilde \bX\vbeta)^\top \bar\bR_0 \vf_\star)\nonumber\\[0.1cm]
    &-\frac{2c_2^2\eta^2 T_{24}}{\sqrt{N}}(\vf_\star^\top \bar\bR_0 \bF_0 (\bW_0 \vbeta)^{\circ 2})((\sqrt{N} \va^{\circ 2})^{\top}\bF_0^\top \bar\bR_0 \bF_0(\sqrt{N}\va^{\circ 2}))((\tilde \bX \vbeta)^{\circ 2 \top} \bar\bR_0 \vf_\star)\nonumber\\[0.1cm]
    &- T_{33}(\vf_\star\bar\bR_0\bF_0\bSigma_\vf^0 \bF_0^\top \bar\bR_0 \tilde \bX\vbeta)((\tilde \bX\vbeta)^\top \bar\bR_0 \vf_\star)\nonumber\\[0.2cm]
    &- T_{44} (\vf_\star^\top \bar\bR_0 \bF_0 \bSigma_\vf^0 \bF_0^\top \bar\bR_0 (\tilde \bX\vbeta)^{\circ 2})((\tilde \bX \vbeta)^{\circ 2 \top} \bar\bR_0 \vf_\star) + o_\sP(1).
\end{align}

\paragraph{Analysis of $E_2^{13}$ and $E_2^{31}$.} Recalling \eqref{eq:K1}, by expanding $\bSigma_\vf$ we have
\begin{align}
    \label{eq:E2_13_l2}
    E_2^{13} = E_2^{31} &= \eta K_1 \vf_\star^\top \bar\bR_0 \bF_0\bSigma_\vf\va
    = \eta^2 K_1 (\vf_\star^\top \bar\bR_0 \bF_0 \bW_0\vbeta) + o_\sP(1),
\end{align}
in which we have used that $K_1 = O_\sP(1/\eta^2)$.

\paragraph{Analysis of $E_2^{14}$ and $E_2^{41}$.} Recalling \eqref{eq:K2}, by expanding $\bSigma_\vf$ we have
\begin{align}
    \label{eq:E2_14_l2}
    E_2^{14} = E_2^{41} &= c_2 \eta^2 K_2 \vf_\star^\top \bar\bR_0 \bF_0\bSigma_\vf\va^{\circ 2}  = o_\sP(1),
\end{align}
in which we have used that $\eta^4 K_2/N = O_\sP(1)$.
\paragraph{Analysis of $E_2^{22}$.} This term is equal to
$E_{2}^{22} = \vf_\star^\top \bar\bR_0 \bU\bT \bU^\top \bar\bR_0 \bF_0\bSigma_\vf \bF_0^\top \bar\bR_0 \bU \bT \bU^\top \bar\bR_0\vf_\star$.
Using \eqref{eq:TURf}, we can write
\begin{align}
    \label{eq:FRUTURf}
    \bF_0^\top \bar\bR_0 \bU \bT \bU^\top \bar\bR_0\vf_\star &= [T_{11} \va^\top \bF_0^\top \bar\bR_0 \vf_\star + T_{13} (\tilde \bX\vbeta)^\top \bar\bR_0 \vf_\star]\; \bF_0^\top \bar\bR_0 \bF_0 \va\nonumber\\
    &\hspace{2cm} + [T_{22} \sqrt{N} \va^{\circ 2 \top}\bF_0^\top \bar\bR_0 \vf_\star + T_{24} (\tilde \bX \vbeta)^{\circ 2 \top} \bar\bR_0 \vf_\star]\; \bF_0^\top \bar\bR_0 \bF_0 (\sqrt{N}\va^{\circ 2})\nonumber\\
    &\hspace{2cm} + [T_{31} \va^\top \bF_0^\top \bar\bR_0 \vf_\star + T_{33} (\tilde \bX\vbeta)^\top \bar\bR_0 \vf_\star]\; \bF_0^\top \bar\bR_0\tilde \bX \vbeta\nonumber\\
    &\hspace{2cm} + [T_{42} \sqrt{N} \va^{\circ 2 \top} \bF_0^\top \bar\bR_0 \vf_\star + T_{44} (\tilde \bX \vbeta)^{\circ 2 \top} \bar\bR_0 \vf_\star]\; \bF_0^\top \bar\bR_0   (\tilde\bX \vbeta)^{\circ 2}.
\end{align}
Further, by expanding $\bSigma_\vf$, we have
\begin{align}
    \label{eq:E2_22_l2}
    E_{2}^{22} = E_2^{22(1)} +E_2^{22(2)} +E_2^{22(3)}+E_2^{22(4)} + o_\sP(1),
\end{align}
in which
\begin{align*}
    E_2^{22(1)} &= [T_{33} (\tilde \bX\vbeta)^\top \bar\bR_0 \vf_\star]^2\; (\vbeta^\top \tilde \bX^\top \bar\bR_0 \bF_0 \bSigma_\vf^0 \bF_0^\top \bar\bR_0\tilde \bX \vbeta)\\ 
    &\hspace{2cm}+ [T_{44} (\tilde \bX \vbeta)^{\circ 2 \top} \bar\bR_0 \vf_\star]^2\;  ( (\tilde \bX\vbeta)^{\circ 2 \top} \bar\bR_0 \bF_0 \bSigma_\vf^0 \bF_0^\top \bar\bR_0(\tilde\bX \vbeta)^{\circ 2}),\\[0.2cm]
    E_2^{22(2)} &= 2\eta T_{13} T_{33}\cdot ( (\tilde \bX\vbeta)^\top \bar\bR_0 \vf_\star)^2 (\va^\top\bF_0^\top \bar\bR_0 \bF_0 \va)(\vbeta^\top \bW_0^\top\bF_0^\top \bar\bR_0\tilde \bX \vbeta),\\[0.2cm]
    E_2^{22(3)} &= \frac{4c_2^2\eta^2}{\sqrt{N}} T_{24}T_{44} ((\tilde \bX \vbeta)^{\circ 2 \top} \bar\bR_0 \vf_\star)^2  \left((\sqrt{N}\va^{\circ 2})^\top \bF_0^\top \bar\bR_0 \bF_0 (\sqrt{N}\va^{\circ 2})\right) \left((\bW_0\vbeta)^{\circ 2}\bF_0^\top \bar\bR_0 (\tilde\bX\vbeta)^{\circ 2}\right),\\[0.2cm]
    E_2^{22(4)} &= \eta^2 T_{13}^2 \Vert\vbeta\Vert_2^2 \; ((\tilde \bX\vbeta)^\top \bar\bR_0 \vf_\star)^2 \, (\va^\top \bF_0^\top \bar\bR_0 \bF_0 \va)^2\\ 
    &\hspace{2cm}+ \frac{3c_2^2\eta^4}{N} T_{24}^2 \Vert\vbeta\Vert_2^4 \left((\tilde \bX \vbeta)^{\circ 2 \top} \bar\bR_0 \vf_\star\right)^2 \, \left((\sqrt{N}\va^{\circ 2})^\top \bF_0^\top \bar\bR_0 \bF_0 (\sqrt{N}\va^{\circ 2})\right)^2.
\end{align*}

\paragraph{Analysis of $E_2^{23}$ and $E_2^{32}$.} We have
$E_2^{23} = E_2^{32} = -\eta K_1 \va^\top \bSigma_\vf \bF_0^\top \bar\bR_0 \bU\bT\bU^\top \bar\bR_0 \vf_\star$.
Recall that the vector $\bF_0^\top \bar\bR_0 \bU\bT\bU^\top \bar\bR_0 \vf_\star$ has been computed in \eqref{eq:FRUTURf}. 
With this, we have
\begin{align}
    \label{eq:E2_23_l2}
    E_2^{23} = E_2^{32} = &-  \eta^2 K_1 T_{33}  (\vbeta^\top \tilde\bX^\top \bar\bR_0 \vf_\star)(\vbeta^\top \bW_0^\top \bF_0^\top \bar\bR_0 \tilde \bX \vbeta)\nonumber\\[0.1cm] &- \eta^3 K_1 T_{13} \Vert\vbeta\Vert_2^2 (\va^\top \bF_0^\top \bar\bR_0 \bF_0 \va)(\vbeta^\top \tilde\bX^\top \bar\bR_0 \vf_\star) + o_\sP(1),
\end{align}
in which we used that $\eta^2K_1 = O_\sP(1)$ and $\eta T_{13} = O_\sP(1)$.
\paragraph{Analysis of $E_2^{24}$ and $E_2^{42}$.} This term can be written as
\begin{align*}
E_{24} = E_{42} =  -c_2 \eta^2 K_2  \va^{\circ 2 \top}\bSigma_\vf \bF_0^\top \bar\bR_0 \bU\bT\bU^\top \bar\bR_0 \vf_\star.    
\end{align*}

The vector $\bF_0^\top \bar\bR_0 \bU\bT\bU^\top \bar\bR_0 \vf_\star$ has been computed in \eqref{eq:FRUTURf}. By expanding $\bSigma_\vf$, we have
\begin{align*}
    &E_2^{24} = E_2^{42} = -\frac{2c_2^3 \eta^4 K_2 T_{44}}{N} \cdot \left((\bW_0\vbeta)^{\circ 2 \top}\bF_0^\top \bar\bR_0 (\tilde \bX\vbeta)^{\circ 2}\right)\cdot\left((\tilde \bX \vbeta)^{\circ 2 \top} \bar\bR_0 \vf_\star\right) \\&\hspace{2cm}- \frac{3c_2^3 \eta^6 K_2 T_{24} \Vert\vbeta\Vert_2^4}{N^{3/2}}\left((\sqrt{N}\va^{\circ 2})^\top \bar\bF_0^\top \bar\bR_0 \bF_0 (\sqrt{N}\va^{\circ 2})\right)\cdot \left((\tilde \bX\vbeta)^{\circ 2 \top}\bar\bR_0 \vf_\star\right) + o_\sP(1).
\end{align*}
Now, noting that $K_2 = O_\sP(N/\eta^4)$ from \eqref{eq:K2} and $\Vert(\bW_0\vbeta)^{\circ 2}\Vert_2 = O_\sP(1/\sqrt{N})$, we find that the first term is $o_\sP(1)$ and we have
\begin{align}
    \label{eq:E2_24_l2}
    &E_2^{24} = E_2^{42} = - \frac{3c_2^3 \eta^6 K_2 T_{24} \Vert\vbeta\Vert_2^4}{N^{3/2}}\left((\sqrt{N}\va^{\circ 2})^\top \bar\bF_0^\top \bar\bR_0 \bF_0 (\sqrt{N}\va^{\circ 2})\right)\cdot \left((\tilde \bX\vbeta)^{\circ 2 \top}\bar\bR_0 \vf_\star\right) + o_\sP(1).
\end{align}

\paragraph{Analysis of $E_2^{33}$.} Similar to the $\ell = 1$ case, this term can be written as
\begin{align}
    \label{eq:E2_33_ell2}
    E_2^{33} &= \eta^2 K_1^2 \va^\top \bSigma_\vf\va = \eta^4 K_1^2 \Vert\vbeta\Vert_2^2 +  o_\sP(1),
\end{align}
noting that $K_1 = O_\sP(1/\eta^2)$.

\paragraph{Analysis of $E_2^{34}$ and $E_2^{43}$.} By expanding $\bSigma_\vf$, we readily arrive at
\begin{align}
    \label{eq:E2_34_ell2}
    E_2^{34} = E_2^{43} = \eta^3 K_1 K_2 c_2 \va^\top \bSigma_\vf \va^{\circ 2} = o_\sP(1).
\end{align}
\paragraph{Analysis of $E_2^{44}$.} This term can be written as
$E_2^{44} = c_2^2 \eta^4 K_2^2 \va^{\circ 2 \top} \bSigma_\vf \va^{\circ 2}$.
By expanding $\bSigma_\vf$ and noting that $K_2 = O_\sP(N/\eta^4)$, we can write
\begin{align}
    \label{eq:E2_44_ell2}
    E_2^{44} = \frac{3c_2^4 \eta^8 K_2^2}{N^2} + o_\sP(1).
\end{align}
\paragraph{Putting Everything Together.}  Now, we can use the results derived above to compute the limiting value of $E_2$. Recall that from Lemma~\ref{lemma:s12} and Lemma~\ref{t2rt2} we have
\begin{align*}
    H_2(\tilde\bX\vbeta_\star)^\top \bar\bR_0(\tilde\bX\vbeta)^{\circ 2} \to_P 2c_{\star,1}^2 \frac{\psi m_1}{\phi}, \quad \text{and} \quad (\tilde \bX \vbeta)^{\circ 2 \top}\bar\bR_0(\tilde \bX \vbeta)^{\circ 2} 
    \to_P \frac{3 \psi m_1}{\phi}[\phi(c_\star^2 + \sigma_\ep^2) + c_{\star,1}^2]^2.
\end{align*}
Also using an argument similar to the argument in the proof of Lemma \ref{lemma:s12} and Lemma \ref{t2rt2}, we have
\begin{align*}
    &H_2(\tilde \bX\vbeta_\star)^\top \bar\bR_0 \bF_0 \bSigma_\vf^0 \bF_0^\top \bar\bR_0 (\tilde \bX\vbeta)^{\circ 2} \to_p 2c_{\star,1}^2\hat{M},\\[0.1cm]
    &(\tilde \bX\vbeta)^{\circ 2 \top} \bar\bR_0 \bF_0 \bSigma_\vf^0 \bF_0^\top \bar\bR_0 (\tilde \bX\vbeta)^{\circ 2} \to 3\hat M [\phi(c_\star^2 + \sigma_\ep^2) + c_{\star,1}^2]^2,
\end{align*}
in which
$\hat{M} = \lim_{n,N,d \to \infty} \frac{1}{n}\tr\left[\bar\bR_0 \bF_0 \bSigma_\vf^0 \bF_0^\top \bar\bR_0\right]$.
This term has been computed in \cite{adlam2020neural}.
Using the diagram in the proof of Lemma~\ref{lemma:l1_limits} that shows how the notations of \cite{adlam2020neural} match ours, we find that we can use (S142) with $\sigma_\ep = 0$ in \cite{adlam2020neural}, to find
\begin{align}
    \label{eq:M_hat}
    \hat{M} = - \frac{m_1'}{m_1^2} - 1,
\end{align}
where $m_1'$ is the derivative of $m_1$ with respect to $\lambda$.

For brevity, we will define
$\tilde A := \lim_{n,N,d \to \infty} \left[(\sqrt{N} \va^{\circ 2\top}) \bF_0^\top \bar\bR_0 \bF_0 (\sqrt{N} \va^{\circ 2})\right]$.
With this, equations~\ref{eq:E2_21_l2}, \ref{eq:E2_13_l2}, \ref{eq:E2_14_l2}, \ref{eq:E2_22_l2}, \ref{eq:E2_23_l2}, \ref{eq:E2_24_l2}, \ref{eq:E2_33_ell2}, \ref{eq:E2_34_ell2}, and \ref{eq:E2_44_ell2} give
\begin{align*}
    &E_2^{12} = E_2^{21} \to_P \frac{c_{\star,1}^4}{\phi(c_\star^2 + \sigma_\ep^2) + c_{\star,1}^2}\left[\bar M \left(\frac{\psi/\phi - \lambda \psi^2 m_1/ \phi^2}{\psi/\phi - \lambda \psi^2 m_1/ \phi^2 - 1}\right) - M\right] - \frac{4c_{\star,1}^4c_{\star,2}^2 \hat M}{3[\phi(c_\star^2 + \sigma_\ep^2) + c_{\star,1}^2]^2},\\[0.2cm]
    &E_2^{13} = E_2^{31} \to_P -\frac{c_{\star,1}^4}{\phi(c_\star^2 + \sigma_\ep^2) + c_{\star,1}^2}\cdot \frac{\bar M}{\psi/\phi - \lambda \psi^2 m_1/ \phi^2 - 1},\quad
    E_2^{14} = E_2^{41} \to_P 0,
\end{align*}
as well as 
\begin{align*}
    &E_2^{22} \to_P \frac{c_{\star,1}^4}{\phi(c_\star^2 + \sigma_\ep^2) + c_{\star,1}^2} \left[ M  - 2\bar{M} \left(\frac{\psi/\phi - \lambda \psi^2 m_1/ \phi^2}{\psi/\phi - \lambda \psi^2 m_1/ \phi^2-1}\right) + \left(\frac{\psi/\phi - \lambda \psi^2 m_1/ \phi^2}{\psi/\phi - \lambda \psi^2 m_1/ \phi^2-1}\right)^2 \right]\\ 
    &\hspace{2cm}+ \frac{c_{\star,1}^4c_{\star,2}^2 }{[\phi(c_\star^2 + \sigma_\ep^2) + c_{\star,1}^2]^2}\left[4 {\hat M/3} + \frac{4\tilde A^2}{3(\tilde A - 1)^2}\right],\\[0.2cm]
    &E_2^{23} = E_2^{32} \to_P \frac{c_{\star,1}^4}{\phi(c_\star^2 + \sigma_\ep^2) + c_{\star,1}^2} \left[ \frac{\bar M}{\psi/\phi - \lambda \psi^2 m_1/ \phi^2 - 1} - \frac{\psi/\phi - \lambda \psi^2 m_1/ \phi^2}{(\psi/\phi - \lambda \psi^2 m_1/ \phi^2 - 1)^2}\right],\\[0.2cm]
    \end{align*}
and also \begin{align*}
    &E_2^{24} = E_2^{42} \to_P -\frac{c_{\star,1}^4c_{\star,2}^2 }{[\phi(c_\star^2 + \sigma_\ep^2) + c_{\star,1}^2]^2} \left[\frac{4\tilde A}{3(\tilde A - 1)^2}\right],\\
    &E_2^{33} \to_P \frac{c_{\star,1}^4}{\phi(c_\star^2 + \sigma_\ep^2) + c_{\star,1}^2} \left[\frac{1}{(\psi/\phi - \lambda \psi^2 m_1/ \phi^2 - 1)^2}\right],\\[0.2cm]
    &E_2^{34} = E_2^{43} \to_P 0,\quad
    E_2^{44} = E_2^{44} \to_P \frac{c_{\star,1}^4c_{\star,2}^2 }{[\phi(c_\star^2 + \sigma_\ep^2) + c_{\star,1}^2]^2} \left[\frac{4}{3(\tilde A - 1)^2}\right],
\end{align*}
respectively. Putting these together, the component $E_2$ can be written as
\begin{align}
    \label{eq:E2_ell2}
    E_2 - \tilde\vy^\top \bar\bR_0 \bF\bSigma_{\vf}^0\bF_0^\top\bar\bR_0 \tilde\vy \to_P \frac{c_{\star,1}^4 (1-M)}{\phi(c_\star^2 + \sigma_\ep^2) + c_{\star,1}^2} + \frac{4c_{\star,1}^4 c_{\star,2}^2 ( 1 - \hat M)}{3[\phi(c_\star^2 + \sigma_\ep^2) + c_{\star,1}^2]^2}.
\end{align}

\subsubsection{Analysis of $E_3$.}
To analyze this component, first note that
\begin{align*}
    E_3 &= -2 \hat \va^\top \vmu_\vf = -2 \tilde \vy^\top  \bar\bR \bF\vmu_\vf = -2 \vf_\star^\top  \bar\bR\bF \vmu_\vf - 2 \vepsilon^\top \bar\bR\bF \vmu_\vf.
\end{align*}
Using a simple order-wise analysis, it can be shown that the second term is $o_\sP(1)$. Now, recalling \eqref{eq:expand_a_hat_l2} we write $E_3 =  \sum_{i = 1}^{4} E_3^{(i)}$, where $E_3^{(i)} = -2\vp_i^\top \vmu_\vf$. By expanding $\vmu_\vf$ and recalling that $\vmu_\vf^0 = c_{\star,1}\bW_0\vbeta$, we have
\begin{align}
    \label{eq:E3_1_l2} 
    E_3^{(1)} = -2\vf_\star^\top \bar\bR_0 \bF_0 \vmu_\vf^0 + o_\sP(1).
\end{align}
Next, recall that the matrix $\bF_0^\top \bar\bR_0 \bU\bT\bU^\top \bar\bR_0 \vf_\star$ is analyzed in \eqref{eq:FRUTURf}. Using this, and by expanding $\vmu_\vf$, we get
\begin{align}
    \label{eq:E3_2_l2}
    E_3^{(2)} &= 2(\vf_\star^\top \bar\bR_0 \tilde \bX\vbeta)\left[T_{33} ( \vmu_\vf^{0 \top} \bF_0^\top \bar\bR_0 \tilde \bX\vbeta) + c_{\star,1}^2 T_{13}\eta (\va^\top \bF_0^\top \bar\bR_0 \bF_0 \va)\right]\nonumber\\
    &\hspace{1.3cm} +\frac{4 c_2 c_{\star,1}^2 c_{\star,2} T_{24} \eta^2}{\sqrt{N}} \left[(\sqrt{N} \va^{\circ 2})^\top \bF_0^\top \bar\bR_0 \bF_0 (\sqrt{N} \va^{\circ 2})\right] 
    \left((\tilde \bX\vbeta)^{\circ 2\top} \bar\bR_0 \vf_\star\right) + o_\sP(1).
\end{align}
Similarly, by expanding $\vmu_\vf$, we arrive at
\begin{align*}
    E_3^{(3)} = - 2c_{\star,1}^2 \eta^2 K_1 + o_\sP(1), \quad \text{and}\quad E_{3}^{(4)} = -\frac{4c_2^2c_{\star,1}^2 c_{\star,2} \eta^4 K_2 }{N} + o_\sP(1).
\end{align*}
Similar to the computation for $E_2$, we can derive the limiting values of the components in $E_3$ as 
\begin{align*}
    &E_3^{(2)} \to_P \frac{c_{\star, 1}^4}{\phi(c_\star^2 + \sigma_\ep^2) + c_{\star,1}^2} \left(2 \bar M - \frac{2 \tilde A}{\tilde A - 1}\right) - \frac{c_{\star, 1}^4 c_{\star,2}^2}{[\phi(c_\star^2 + \sigma_\ep^2) + c_{\star,1}^2]^2} \left(\frac{8 \tilde A}{3(\tilde A - 1)}\right),\\[0.2cm]
    &E_3^{(3)} \to_P \frac{c_{\star, 1}^4}{\phi(c_\star^2 + \sigma_\ep^2) + c_{\star,1}^2} \left(\frac{2}{\tilde A - 1}\right), \quad
    E_3^{(4)} \to_P  \frac{c_{\star, 1}^4 c_{\star,2}^2}{[\phi(c_\star^2 + \sigma_\ep^2) + c_{\star,1}^2]^2} \left(\frac{8 }{3(\tilde A - 1)}\right).
\end{align*}
Putting these together, we have
\begin{align}
    \label{eq:E3_l2}
    E_3 +2\vf_\star^\top \bar\bR_0 \bF_0 \vmu_\vf^0 \to_P \frac{c_{\star, 1}^4 \left(2 \bar M - 2\right)}{\phi(c_\star^2 + \sigma_\ep^2) + c_{\star,1}^2}  - \frac{8c_{\star, 1}^4 c_{\star,2}^2}{3[\phi(c_\star^2 + \sigma_\ep^2) + c_{\star,1}^2]^2} .
\end{align}
\subsubsection{The Final Result}
Putting equations \ref{eq:E1_l2}, \ref{eq:E2_ell2}, and \ref{eq:E3_l2}, we have
\begin{align*}
     \gL_{\rm te}(\hat\va(\bF_0)) - \gL_{\rm te}(\hat\va(\bF)) \to_P 
    &\frac{c_{\star,1}^4 (1  +M  - 2 \bar M)}{\phi(c_\star^2 + \sigma_\ep^2) + c_{\star,1}^2} + \frac{4c_{\star,1}^4 c_{\star,2}^2 ( 1 + \hat M )}{3[\phi(c_\star^2 + \sigma_\ep^2) + c_{\star,1}^2]^2},
\end{align*}
where $\gL_{\rm te}(\hat\va(\bF_0))$ is the test error of the untrained random feature model. Further, using \eqref{eq:M_value}, \eqref{eq:M_bar}, and \eqref{eq:M_hat}, we get
\begin{align}
    \label{eq:final_l2_test}
     \gL_{\rm te}(\hat\va(\bF_0)) - \gL_{\rm te}(\hat\va(\bF)) \to_P 
    &-\,\frac{c_{\star,1}^4 }{(\phi(c_\star^2 + \sigma_\ep^2) + c_{\star,1}^2)m_1^2}\frac{\partial m_2}{\partial \lambda} - \frac{4c_{\star,1}^4 c_{\star,2}^2}{3[\phi(c_\star^2 + \sigma_\ep^2) + c_{\star,1}^2]^2 m_1^2} \frac{\partial m_1}{\partial \lambda}.
\end{align}
Note that $\frac{\partial m_1}{\partial \lambda}, \frac{\partial m_2}{\partial \lambda} \leq 0$, concluding the proof.

\section{Proofs of Supplementary Lemmas}

\subsection{Proof of Lemma \ref{lem:Ms}}
\label{pflem:Ms}

Recalling $a_i \stackrel{i.i.d.}{\sim} \normal(0, 1/N)$ and $\langle \tilde \vx_i, \vbeta \rangle | \vbeta \stackrel{i.i.d.}{\sim} \normal(0, \Vert \vbeta \Vert_2^2)$, claims (a) and (b) follow from standard Gaussian maximal inequalities \cite[Section 2.2]{wellner2013weak} 
and from $\Vert \vbeta \Vert_2^2 = O_\sP(1)$; the latter follows 
by writing  
$\vbeta = n^{-1} \bX^\top (\sigma_\star(\bX \vbeta_\star ) + \ep)$, 
where $\ep = (\ep_1, \ldots,\ep_n)^\top$ and using our distributional assumptions on $\bX, \ep$, 
as well as Condition \ref{cond:te}.

By \cite[Theorem 5.39]{vershynin2010introduction} and \cite[Corollary A.21]{bai2010spectral}, we have $\Vert \bW_0 \bW_0^\top \Vert_\textnormal{op}, \Vert (\bW_0 \bW_0^\top)^{\circ 2} \Vert_\textnormal{op} = O_\sP(1)$.
Also, by \cite[Theorem 3.4.6]{vershynin2018high} and Gaussian maximal inequalities \cite[Section 2.2]{wellner2013weak}, we have $\max_{1 \leq i \neq \leq j \leq N} \langle \vw_{0,i}, \vw_{0,j} \rangle = O_\sP(n^{-\frac{1}{2}} \log^\frac{1}{2} n)$.
For $k \geq 3$,
\begin{align*}
    \Vert (\bW_0 \bW_0^\top)^{\circ k} \Vert_\textnormal{op} 
    &\leq \Vert (\bW_0 \bW_0^\top)^{\circ k}-\bI_N  \Vert_\textnormal{op} + 1
    \leq  \Vert (\bW_0 \bW_0^\top)^{\circ k} - \bI_N \Vert_\textnormal{F}+1\\
    &\leq \left( \sum_{1 \leq i \neq j \leq N} \langle \vw_{0, i}, \vw_{0, j} \rangle^{2k} \right)^\frac{1}{2} +1 = o_\sP(1)+1.
\end{align*}
Therefore,
\begin{align*}
    M_{W_0} \leq \max \left\{\Vert \bW_0 \bW_0^\top \Vert_\textnormal{op}, \Vert (\bW_0 \bW_0^\top)^{\circ 2}\Vert_\textnormal{op}, \sup_{k \geq 3} \Vert (\bW_0 \bW_0^\top)^{\circ k} \Vert_\textnormal{op} \right\} = O_\sP(1).  
\end{align*}

Claim (d) is standard, see e.g. \cite[Theorem 4.4.5]{vershynin2018high}.

\subsection{Proof of Lemma \ref{lemma:turn_beta_to_beta_star}}
\label{pflemma:turn_beta_to_beta_star}

We can write 
  \begin{align*}
        \mathbf{v}^{\top} (\vbeta - c_{\star,1} \vbeta_\star)
        &=
        n^{-1} \mathbf{v}^{\top} (\bX^\top (\sigma_\star(\bX \vbeta_\star ) + \boldsymbol{\ep}))- c_{\star,1} \vbeta_\star\nonumber\\
        &=
        n^{-1} \sum_{i=1}^n(\mathbf{v}^{\top}\vx_i \sigma_\star(\vx_i^\top \vbeta_\star ) - c_{\star,1} \mathbf{v}^{\top}\vbeta_\star) 
        +  n^{-1} \mathbf{v}^{\top}\boldsymbol{\ep}.
    \end{align*}
Now $n^{-1} \mathbf{v}^{\top}\boldsymbol{\ep} \sim \normal(0,\sigma_\ep^2\|\mathbf{v}\|_2^2)/n \to_P0$.
Moreover,
by Condition \ref{cond:tehe},
we can write 
$\sigma_\star(\vx^\top_i \vbeta_\star) = 
c_{\star,0} + c_{\star,1} \vx^\top_i \vbeta_\star + (P_{>1}\sigma_\star)(\vx^\top_i \vbeta_\star)$,
where 
conditional on $\vbeta_\star$,
$(P_{>1}\sigma_\star)(\vx^\top_i \vbeta_\star)$
is orthogonal in $L^2$ to the constant function and to $\vx^\top_i \vbeta_\star$.
Hence the first sum above equals
  \begin{align*}
        n^{-1}c_{\star,0} \mathbf{v}^{\top}\sum_{i=1}^n\vx_i
        + n^{-1} c_{\star,1} 
        \mathbf{v}^{\top} \biggl(\sum_{i=1}^n\vx_i \vx^\top_i 
        -  \bI\biggr) \vbeta_\star
        +n^{-1} \sum_{i=1}^n\mathbf{v}^{\top}\vx_i  (P_{>1}\sigma_\star)(\vx^\top_i \vbeta_\star).
    \end{align*}
For the first term,
$n^{-1}c_{\star,0} \mathbf{v}^{\top}\sum_{i=1}^n\vx_i\sim 
n^{-1}c_{\star,0} \cdot \normal(0,n\|\mathbf{v}\|_2^2)\to_P0$.
The 
second term is $c_{\star,1} $ times a  sample mean of i.i.d. random variables of the form 
$\mathbf{v}^{\top}(\vx_i \vx^\top_i-1)\vbeta_\star$, 
which have zero mean by the Gaussianity of $\vx_i$,
and for which all moments are finite.
Hence, by the weak law of large numbers, this term converges to zero in probability.

Similarly, the 
third term
is a sample mean of i.i.d.~random variables of the form
$\mathbf{v}^{\top}\vx_i  (P_{>1}\sigma_\star)(\vx^\top_i \vbeta_\star)$,
which have zero mean by the Gaussianity of $\vx_i$ and Lemma \ref{lem:twohermite},
and whose second moments are finite since $\sigma_\star$ is Lipschitz.
Hence, by the weak law of large numbers, this term also converges to zero in probability. This finishes the proof of the first claim.

Next, the second statement
follows from \cite[Lemma 18]{ba2022high}.
While that  work 
has slightly different assumptions on the teacher function $f_\star$,
it is straightforward to check that their proof goes through unchanged under our assumptions.
Specifically, their proof requires that $\vx\mapsto f_\star(\vx) = \sigma_\star(\vx^\top\vbeta_\star)$
is $O(1)$-Lipschitz, which holds in our case because $\sigma_\star$ is 
$O(1)$-Lipschitz,
and $\|\vbeta_\star\|_2 = O_\sP(1)$.

\subsection{Proof of Lemma \ref{lemma:train_loss}}
\label{pflemma:train_loss}

    By plugging in $\hat \va$ into the training loss, we find
\begin{align*}
    \gL_{\rm tr}(\bF) &= \frac{1}{n}\Vert \tilde\vy - \bF \hat\va\Vert_2^2 + \lambda \Vert\hat\va\Vert_2^2
    = \frac{1}{n}\Vert\tilde\vy\Vert_2^2 - \frac{2}{n}\tilde\vy^\top \bF \hat\va + \frac{1}{n} \hat\va^\top(\bF^\top\bF + \lambda n \bI_N)\hat\va\\
    &= \frac{1}{n}\Vert\tilde\vy\Vert_2^2 - \frac{1}{n}\tilde\vy^\top \bF \hat\va 
    = \frac{1}{n}\Vert\tilde\vy\Vert_2^2 - \frac{1}{n}\tilde\vy^\top \bF (\bF^\top \bF + \lambda n \bI_N)^{-1} \bF^\top \tilde\vy\\
    &= \frac{1}{n}\Vert\tilde\vy\Vert_2^2 - \frac{1}{n}\tilde\vy^\top \bF \bF^\top (\bF \bF^\top + \lambda n \bI_n)^{-1}  \tilde\vy \\
    &= \frac{1}{n}\Vert\tilde\vy\Vert_2^2 - \frac{1}{n}\tilde\vy^\top (\bF \bF^\top+\lambda n \bI_n) (\bF \bF^\top + \lambda n \bI_n)^{-1}  \tilde\vy + \lambda \tilde\vy^\top (\bF \bF^\top + \lambda n \bI_n)^{-1}\tilde\vy\\
    &= \lambda \tilde\vy^\top (\bF \bF^\top + \lambda n \bI_n)^{-1}\tilde\vy,
\end{align*}
which proves the lemma.

\subsection{Proof of Lemma \ref{lemma:poly-concentration}}
\label{pflemma:poly-concentration}

    To prove the concentration of this term around its mean, we will use the generalized Hanson-Wright inequality \cite[Theorem 2.1]{sambale2023some}  for $\alpha$-subexponential random variables. Note that, by definition, if $Z$ is a Gaussian random variable, $H_p(Z)$ is $2/p$-subexponential (see the definition in equation (1.1) of \cite{sambale2023some} and for these variables the Orlicz norm of order $2/p$ is bounded (see equation (1.3) of \cite{sambale2023some}). Also note that  $\Vert\bD\Vert_{\rm Fr} \leq \sqrt{n} \Vert\bD\Vert_{\rm op}= O_\sP(1/\sqrt{n})$. Thus, using \cite[Theorem 2.1]{sambale2023some} and setting $t = \frac{\log(n)}{\sqrt{n}}$, we find 
    \begin{align*}
        \sP\left(\Big|g(\bZ)^\top \bD\;g(\bZ) - \mathbb{E}[g(\bZ)^\top \bD \;g(\bZ)]\Big| \geq \frac{\log n}{\sqrt{n}}\right)\leq 2 \exp\left(-C \min\left\{\log^2(n),(\sqrt{n} \log n)^{1/p}\right\}\right),
    \end{align*}
    where $C>0$ is some constant. This concludes the proof.

\subsection{Proof of Lemma \ref{lemma:l1_limits}}
\label{pflemma:l1_limits}

First, we show that switching from $\vw_{0, i} \stackrel{i.i.d.}{\sim} \mathrm{Unif}(\mathbb{S}^{d-1})$ to $\hat\vw_{0, i} \stackrel{i.i.d.}{\sim} \normal(0, \frac{1}{d} \bI_d)$ will 
not change the limit of 
the terms $\frac{1}{d}\E\mathrm{tr}(\tilde\bX^\top \bar\bR_0\tilde\bX)$ and $\E\mathrm{tr}(\bar\bR_0)$ which will appear later in the proof.
First, we define $\hat\bW_0 = [\hat\vw_{0,1}, \cdots, \hat\vw_{0,N}]^\top$,
\begin{align*}
    &\bD = \text{diag}\left(\frac{1}{{\Vert\hat\vw_{0,1}\Vert_2}}, \cdots, \frac{1}{{\Vert\hat\vw_{0,N}\Vert_2}}\right),
    \, \bW_0 =_d\bD\hat\bW_0,
    \, \hat\bF_0 = \sigma(\tilde\bX \hat\bW_0^\top),\\
    &\,\text{and}\qquad \hat{\bar{\bR}}_0 = (\hat\bF_0\hat\bF_0^\top + \lambda n \bI_n)^{-1}.
\end{align*}
Then, 
\begin{align*}
    \biggl|\tr\left[\bar\bR_0 - \hat{\bar{\bR}}_0\right] \biggr| 
    &= \biggl|\tr\left[(\bF_0\bF_0^\top + \lambda n \bI_n)^{-1} - (\hat\bF_0\hat\bF_0^\top + \lambda n \bI_n)^{-1}\right]\biggr|\\
    &=\biggl|\tr\left[(\bF_0\bF_0^\top + \lambda n \bI_n)^{-1}(\bF_0\bF_0^\top - \hat\bF_0\hat\bF_0^\top)(\hat\bF_0\hat\bF_0^\top + \lambda n \bI_n)^{-1}\right]\biggr|\\
    &\leq \tr(\bF_0\bF_0^\top + \lambda n \bI_n)^{-1}\Vert(\hat\bF_0\hat\bF_0^\top + \lambda n \bI_n)^{-1}\Vert_{\rm op} \Vert\bF_0\bF_0 - \hat\bF_0\hat\bF_0\Vert_{\rm op}\\
    &\leq \frac{C}{n}\Vert\bF_0\bF_0 - \hat\bF_0\hat\bF_0\Vert_{\rm op}.
\end{align*}
Now, using the Gaussian equivalence from Appendix \ref{sec:gec}, we
can replace  $\bF_0$ and $\hat\bF_0$
with 
$\bF_0 = c_1 \tilde \bX \bW_0^\top + c_{>1}\bZ$ and $\hat\bF_0 = c_1 \tilde \bX \hat\bW_0^\top + c_{>1}\bZ$, respectively, without changing the limit. 
With this, we have
\begin{align*}
    &\bF_0\bF_0^\top - \hat\bF_0\hat\bF_0^\top 
    = c_1^2 \tilde \bX (\bW_0\bW_0^\top - \hat\bW_0\hat\bW_0^\top)\tilde \bX^\top + c_1c_{>1}\tilde \bX (\bW_0 - \hat\bW_0)^\top \bZ^\top + c_1c_{>1}\bZ(\bW_0 - \hat\bW_0)\tilde \bX^\top.
\end{align*}
Now, 
\begin{align*}
    \Vert\bW_0\bW_0^\top - \hat\bW_0\hat\bW_0^\top\Vert_{\rm op}\leq\Vert\bI_N - \bD\Vert_{\rm op}\Vert\bW_0\bW_0^\top\Vert_{\rm op}(\Vert\bD\Vert_{\rm op} + 1).
\end{align*}
Note that $\Vert\bW_0\bW_0^\top\Vert_{\rm op} = O_\sP(1)$, $\Vert\bD\Vert_{\rm op} = O_\sP(1)$, and $\Vert\bI_N -\bD\Vert_{\rm op} = o_\sP(1)$. Thus $\Vert\bW_0\bW_0^\top - \hat\bW_0\hat\bW_0^\top\Vert_{\rm op} = o_\sP(1)$. 
Also, similarly, $\Vert \bW_0 - \hat\bW_0\Vert_{\rm op} = o_\sP(1)$. 
Hence, noting that $\Vert\tilde \bX\Vert_{\rm op}$ and $\Vert\bZ\Vert_{\rm op}$ are both $O_\sP(\sqrt{N})$, we have 
$\frac{1}{n}\Vert\bF_0\bF_0^\top - \hat\bF_0\hat\bF_0^\top\Vert_{\rm op} \to_P 0$. This implies that $|\tr[\bar\bR_0 - \hat{\bar{\bR}}_0]| = o_\sP(1)$. 
Also,
\begin{align*}
    \left|\frac{1}{d}\tr\left[\tilde \bX^\top \bar\bR_0 \tilde \bX\right] - \frac{1}{d}\tr\left[\tilde \bX^\top \hat{\bar{\bR}}_0 \tilde \bX\right]\right| \leq |\tr[\bar\bR_0 - \hat{\bar{\bR}}_0]| \frac{\Vert\tilde \bX\tilde\bX^\top\Vert_{\rm op}}{d} \to_P 0.
\end{align*}

Finally, we can prove the required claims as follows:
    \begin{enumerate}
        \item[(a)] Since $\vbeta_\star \sim \normal(0, \frac{1}{d}\bI_d)$, we have
       $\vbeta_\star^{\top} \tilde\bX^\top \bar\bR_0 \tilde\bX\vbeta_\star = \frac{1}{d}\,\E\mathrm{tr}(\tilde\bX^\top \bar\bR_0\tilde\bX) + o_\sP(1)$,
        by the Hanson-Wright inequality. Note that by the argument above, we can assume that $\hat\vw_{0, i} \stackrel{i.i.d.}{\sim} \normal(0, \frac{1}{d} \bI_d)$ without changing the limiting trace.
        Further, from \cite[Proposition 1]{adlam2020neural}, see also \cite{adlam2019random}, 
        we have
        $\frac{1}{d}\,\E\mathrm{tr}(\tilde\bX^\top \bar\bR_0\tilde\bX) \to  \frac{\psi}{\phi} m_2$; see the discussion at the end of this proof for the detailed explanation.
        Now, we arrive at the conclusion by applying Lemma \ref{lemma:turn_beta_to_beta_star}.

        \item[(b)] Since $\va \sim \normal(0, \frac{1}{N} \bI_N)$, we have
$\va^\top \bF_0^\top \bar\bR_0\bF_0\va - \Vert\va\Vert_2^2 = \frac{1}{N}\tr\left(\bF_0^\top \bar\bR_0\bF_0\right) - 1 + o_\sP(1)$
by the Hanson-Wright inequality. Moreover,
\begin{align*}
&\bF_0^\top \bar\bR_0\bF_0 = \bF_0^\top\bF_0 (\bF_0^\top\bF_0 + \lambda n \bI_N)^{-1}\nonumber\\
&= (\bF_0^\top\bF_0+\lambda n \bI_N-\lambda n \bI_N) (\bF_0^\top\bF_0 + \lambda n \bI_N)^{-1} = \bI_N  - \lambda n (\bF_0^\top\bF_0 + \lambda n \bI_N)^{-1}.
\end{align*}
Hence, 
$\frac{1}{N}\tr\left(\bF_0^\top \bar\bR_0\bF_0\right) - 1  = -  \frac{\lambda n}{N} \tr(\bF_0^\top\bF_0 + \lambda n \bI_N)^{-1}$. From the argument above, we can assume that $\hat\vw_{0, i} \stackrel{i.i.d.}{\sim} \normal(0, \frac{1}{d} \bI_d)$ without changing the limiting trace.
It follows from \cite[Proposition 1]{adlam2020neural} that $\E\tr \bar\bR_0 \to \frac{\psi}{\phi} m_1$; 
again see the discussion at the end of this proof for the detailed explanation. 
Note that $\lim \E\tr \bar\bR_0$ is the limiting Stieltjes transform of $\bF_0\bF_0^\top$. Hence, $\bar m_1= \lim \E\tr(\bF_0^\top\bF_0 + \lambda n \bI_N)^{-1}$ is the limiting companion Stieltjes transform of $m_1$ which is given by
\begin{equation}\label{bm1}
\bar m_1 = \frac{\psi}{\phi} m_1 - \left(1-\frac{\phi}{\psi}\right) \frac{1}{\lambda}.
\end{equation} 
\end{enumerate}
    This concludes the proof.
    
For the reader's convenience, we provide the following diagram that shows how the notations of \cite{adlam2020neural} (left) 
match ($\Leftrightarrow$)
 ours  (right):
\begin{align*}
    &n_0 \Leftrightarrow d,\qquad
    n_1  \Leftrightarrow N,\qquad
    m \Leftrightarrow n,\qquad
    \phi, \psi \Leftrightarrow \phi, \psi,\\
    &\bX^\top \in \R^{m \times n_0} \Leftrightarrow \tilde \bX \in \R^{n \times d},
    \qquad
    \bF^\top \in \R^{m \times n_1} \Leftrightarrow \bF_0 \in \R^{n \times N},\qquad
    \sigma_{W_2} = 0,\\
    &\frac{1}{n_1} \bK( \lambda m /n_1)^{-1} = \frac{1}{n_1} \bF^\top \bF + \lambda \bI_m  \Leftrightarrow 
    \bar\bR_0^{-1} = 
    \bF_0 \bF_0^\top + \lambda n \bI_n, \quad \zeta\Leftrightarrow c_1^2,\qquad
    \eta \Leftrightarrow c_1^2 + c_{>1}^2,\\
    &\tau_1 = \frac{1}{m} \E \tr \bK^{-1}\Leftrightarrow m_1 = \frac{N}{n} \E \tr \bar \bR_0, \quad \tau_2 = \frac{1}{m n_0} \E \tr \bX^\top \bX \bK^{-1} \Leftrightarrow m_2 = \frac{N}{nd} \E \tr \tilde \bX \tilde \bX^\top \bar \bR_0 .
\end{align*}

\subsection{Proof of Lemma \ref{lemma:general_orthogonality_21}}
\label{pflemma:general_orthogonality_21}

Define $\hat \bX = \tilde\bX - \tilde\bX\vu \vu^\top$, which implies $\hat \bX \independent \tilde\bX\vu$ due to the Gaussianity of $\bX$. 
Based on the Gaussian equivalence from Appendix \ref{sec:gec}, 
we can replace 
$\bF_0$ with
$c_1 \tilde\bX \bW_0^\top + c_{>1} \bZ$,
    where $\bZ \in \R^{n \times d}$ is an independent random matrix with $\normal(0, 1)$ entries, without changing the conclusion.
    Hence, from now on, we write $\bF_0=c_1 \tilde\bX \bW_0^\top + c_{>1} \bZ$.
    Further, we define
    \begin{align}
    \label{hf}
        \hat \bF_0 = c_1 \hat\bX \bW_0^\top + c_{>1} \bZ.
    \end{align}
    Thus, 
    by the definition of $\hat \bX$,
    $\hat\bF_0 = \bF_0 - c_1 \tilde\bX \vu (\bW_0 \vu)^\top$. As a consequence, we also have
    $
        \bF_0 \bF_0^\top = \hat\bF_0\hat\bF_0^\top + \bV \bD \bV^\top,
    $
    where $\bV = \begin{bmatrix}\hat \bF_0 \bW_0 \vu & \tilde\bX\vu\end{bmatrix} \in \R^{n \times 2}$ and
    \begin{align*}
        \bD = \begin{bmatrix}
            0&c_1\\c_1&c_1^2 \Vert\bW_0\vu\Vert_2^2
        \end{bmatrix}.
    \end{align*}
    Noting that $D$ is invertible, and
    using the Woodbury formula, 
    with $\hat \bR_0 = (\hat\bF_0\hat\bF_0^\top + \lambda n \bI_n)^{-1}$,
    we find 
    \begin{align}\label{wb2}
        \bar\bR_0 = \hat \bR_0 - \hat \bR_0 \bV (\bD^{-1} + \bV^\top \hat\bR_0 \bV)^{-1}\bV^\top \hat\bR_0.
    \end{align}
     Now, we can write
    \begin{align*}
        &H_q(\tilde \bX \vu)^\top \bar\bR_0 H_p(\tilde \bX\vu) 
        = H_q(\tilde \bX \vu)^\top \hat\bR_0H_p(\tilde \bX\vu) - H_q(\tilde \bX \vu)^\top\hat \bR_0 \bV (\bD^{-1} + \bV^\top \hat\bR_0 \bV)^{-1}\bV^\top \hat\bR_0H_p(\tilde \bX\vu).
    \end{align*}
     Next, we can analyze each term in the above sum separately.

     The first term on the right hand side converges to zero by using Lemma \ref{lemma:poly-concentration} to prove the concentration of this term around its mean and noting that the mean is zero using the orthogonality property of Hermite polynomials (Lemma \ref{lem:twohermite}).

     To analyze the second term, we first study the matrix $\bK = (\bD^{-1} + \bV^\top \hat\bR_0 \bV)^{-1}$, writing
        \begin{align*}
            \bK^{-1}  =(\bD^{-1} + \bV^\top \hat\bR_0 \bV) &= 
            \begin{bmatrix}
                \vu^\top\bW_0^\top \hat\bF_0^\top \hat \bR_0 \hat\bF_0 \bW_0\vu - \Vert\bW_0\vu\Vert_2^2
                &\vu^\top \bW_0^\top \hat\bF_0^\top \hat \bR_0 \tilde\bX \vu - \frac{1}{c_1}\\
                \vu^\top \tilde\bX^\top \hat\bR_0 \hat\bF_0 \bW_0 \vu - \frac{1}{c_1} & \vu^\top \tilde\bX^\top \hat \bR_0 \tilde\bX \vu
            \end{bmatrix}.
        \end{align*}
        It can readily verified that all  elements in this matrix are $O_\sP(1)$ by checking the order of the operator and Euclidean norms. 
        Next, we analyze the terms in the expression
    \begin{align*}
         H_q(\tilde \bX \vu)^\top \hat \bR_0 \bV \bK\bV^\top \hat\bR_0H_p(\tilde\bX\vu) &= [\bK]_{1,1}H_q(\tilde \bX \vu)^\top \hat \bR_0  (\hat\bF_0\bW_0\vu)(\hat\bF_0\bW_0\vu)^\top\hat \bR_0 H_p(\tilde\bX \vu)\\
         &\hspace{1cm}+[\bK]_{1,2}H_q(\tilde \bX \vu)^\top \hat \bR_0  (\hat\bF_0\bW_0\vu)(\tilde\bX\vu)^\top\hat \bR_0 H_p(\tilde\bX \vu)\\
         &\hspace{1cm}+[\bK]_{2,1}H_q(\tilde \bX \vu)^\top \hat \bR_0  (\tilde\bX\vu)(\hat\bF_0\bW_0\vu)^\top\hat \bR_0 H_p(\tilde\bX \vu)\\
         &\hspace{1cm}+[\bK]_{2,2}H_q(\tilde \bX \vu)^\top \hat \bR_0  (\tilde\bX\vu)(\tilde\bX\vu)^\top\hat \bR_0 H_p(\tilde\bX \vu).
    \end{align*}
    
    Without loss of generality, we can assume that $p \neq 1$.
    \begin{itemize}
        \item \textbf{First Term.} Note that $H_q(\tilde \bX \vu)^\top$ and $H_p(\tilde \bX \vu)$ are orthogonal in $L^2$ by the properties of the Hermite polynomials, 
        and conditional on $\vu$, they are independent of $\hat \bR_0  (\hat\bF_0\bW_0\vu)(\hat\bF_0\bW_0\vu)^\top\hat \bR_0$. Moreover,
        $\Vert\hat \bR_0  (\hat\bF_0\bW_0\vu)(\hat\bF_0\bW_0\vu)^\top\hat \bR_0\Vert_{\rm op} = O_\sP(1/n)$.
    Thus, by using Lemma \ref{lemma:poly-concentration}, 
 this term converges to zero.

    \item \textbf{Second  Term.} Similar to the argument above, we can show that $(\tilde \bX \vu)^\top\hat \bR_0 H_p(\tilde\bX \vu)$ converges to zero. Also, by analyzing the operator norms, we have $H_q(\tilde \bX \vu)^\top \hat \bR_0  (\hat\bF_0\bW_0\vu) = O(1)$. This implies that the second term converges to zero.

    \item \textbf{Third  Term.} First, note that by a simple order-wise analysis, $H_q(\tilde \bX \vu)^\top \hat\bR_0 (\tilde \bX \vu) = O_\sP(1)$. Now, we have $H_p(\tilde \bX \vu)$ is independent of $(\hat\bF_0\bW_0\vu)^\top\hat \bR_0 $ and $\Vert(\hat\bF_0\bW_0\vu)^\top\hat \bR_0 \Vert_2 = O_\sP(1/\sqrt{n})$. The term $(\hat\bF_0\bW_0\vu)^\top\hat \bR_0 H_p(\tilde\bX \vu)$ converges to zero in probability by noting that $H_p(\tilde \bX \vu)$ is mean zero for $p \neq 0$. For the $p = 0$ case, we can use an orthogonality invariance argument identical to the one used to analyze \eqref{eq:left-ortho-argument}. 
    
    \item \textbf{Fourth Term.} This term also converges to zero because $(\tilde \bX \vu)^\top\hat \bR_0 H_p(\tilde\bX \vu)$ converges to zero, as argued above.
        \end{itemize}
Putting everything together, the proof is completed.

\subsection{Proof of Lemma \ref{lemma:delta_2_to_zero}}
\label{pflemma:delta_2_to_zero}

We will prove part (a) first. To do this, we will first  handle the cases where $p = 0$ and $p = 1$. 

For $p = 0$, we have
$\sqrt{N} H_0(\tilde\vtheta_\star) \bar \bR_0 \bF_0 \va^{\circ 2}= \sqrt{N}.$ This is identical to the second term in \eqref{eq:left-ortho-argument} and it is shown to be $o_{\sP}(1)$

For $p = 1$, we need to analyze
$    \sqrt{N} H_1(\tilde\vtheta_\star) \bar \bR_0 \bF_0 \va^{\circ 2}= \sqrt{N} {\vbeta_\star^{\top}}\tilde \bX^\top \bar \bR_0 \bF_0 \va^{\circ 2}.
$
Note that $\vbeta_\star \sim \normal(0, \frac{1}{d}\bI_d)$ is independent of $\sqrt{N}\tilde \bX^\top \bar \bR_0 \bF_0 \va^{\circ 2}$ and 
\begin{align*}
    \Vert\sqrt{N}\tilde \bX^\top \bar \bR_0 \bF_0 \va^{\circ 2}\Vert_2 \leq \sqrt{N} \Vert\tilde \bX\Vert_{\rm op}\cdot \Vert\bar\bR_0\Vert_{\rm op}\cdot \Vert \bF_0\Vert_{\rm op}\cdot \Vert\va^{\circ 2}\Vert_2 = O_\sP(1).
\end{align*}
Thus, we can conclude that $\sqrt{N} H_1(\tilde\vtheta_\star) \bar \bR_0 \bF_0 \va^{\circ 2}\to 0$ in probability.

To analyze the case where $p>1$, we first  define $\hat \bX = \tilde \bX - \tilde\vtheta_\star \vbeta_\star^{\top}$. 
By construction, we have $\hat\bX \independent \tilde\vtheta_\star$. 
As in the proof of Lemma \ref{lemma:general_orthogonality_21}, 
Based on the Gaussian equivalence from Appendix \ref{sec:gec}, we can replace $\bF_0$ with $c_1 \tilde\bX \bW_0^\top + c_{>1} \bZ$ in our computations without changing the limiting result, where $\bZ \in \R^{n \times d}$ is an independent random matrix with $\normal(0, 1)$ entries. 
Thus, from now on, we denote $\bF_0 = c_1 \tilde\bX \bW_0^\top + c_{>1} \bZ$. 
We define $\hat \bF_0$ as in \eqref{hf}.  Thus, $\hat\bF_0 = \bF_0 - c_1 \tilde\vtheta_\star (\bW_0 \vbeta_\star)^\top$. As a consequence, we can write $\bF_0 \bF_0^\top = \hat\bF_0\hat\bF_0^\top + \bV \bD \bV^\top$,
    where $\bV = \begin{bmatrix}\hat \bF_0 \bW_0 \vbeta_\star & \tilde\vtheta_\star\end{bmatrix} \in \R^{n \times 2}$ and
    \begin{align*}
        \bD = \begin{bmatrix}
            0&c_1\\c_1&c_1^2 \Vert\bW_0\vbeta_\star\Vert_2^2
        \end{bmatrix}.
    \end{align*}
    Using the Woodbury formula, we find
    that \eqref{wb2} still holds.
    Now, we can write
    \begin{align}
    \label{eq:lemma_reference_to_two_terms}
        \sqrt{N}H_p&(\tilde\vtheta_\star)^\top \bar\bR_0 \bF_0 \va^{\circ 2}\\ &= \sqrt{N}H_p(\tilde\vtheta_\star)^\top  \hat\bR_0 \bF_0 \va^{\circ 2}
        - \sqrt{N}H_p(\tilde\vtheta_\star)^\top \hat \bR_0 \bV (\bD^{-1} + \bV^\top \hat\bR_0 \bV)^{-1}\bV^\top \hat\bR_0\bF_0 \va^{\circ 2}\nonumber\\
        &= \sqrt{N}H_p(\tilde\vtheta_\star)^\top  \hat\bR_0 (\hat \bF_0 + c_1 \tilde\vtheta_\star (\bW_0 \vbeta_\star)^\top) \va^{\circ 2}\nonumber\\
         &\hspace{1cm}- \sqrt{N}H_p(\tilde\vtheta_\star)^\top \hat \bR_0 \bV (\bD^{-1} + \bV^\top \hat\bR_0 \bV)^{-1}\bV^\top \hat\bR_0(\hat \bF_0 + c_1 \tilde\vtheta_\star (\bW_0 \vbeta_\star)^\top) \va^{\circ 2}.\nonumber
    \end{align}
     Now, we can analyze each term in the above sum separately.
    \paragraph{Term 1.} Note that by a simple orderwise analysis,
    \begin{align*}
        \Vert\sqrt{N}\,  \hat\bR_0 \hat\bF_0 \va^{\circ 2} \Vert_{\rm op} &\leq \sqrt{N} \Vert\hat\bR_0\Vert_{\rm op}\Vert\hat\bF_0\Vert_{\rm op}\Vert\va^{\circ 2}\Vert_2= O(1/\sqrt{N}).
    \end{align*}
    We have $\Vert H_p(\tilde\vtheta_\star)\Vert_2 = O_\sP(\sqrt{N})$, \;$\mathbb{E}[H_p(\tilde\vtheta_\star)] = 0$, and $H_p(\tilde\vtheta_\star)$ has independent entries. Also $H_p(\tilde\vtheta_\star) \independent  \hat\bR_0 \hat\bF_0 \va^{\circ 2}$. Thus,
    $\sqrt{N}H_p(\tilde\vtheta_\star)^\top  \hat\bR_0 \hat\bF_0 \va^{\circ 2} \to_P 0$.
    
    We now need to analyze $\sqrt{N}H_p(\tilde\vtheta_\star)^\top  \hat\bR_0 \tilde\vtheta_\star \vbeta_\star^\top\bW_0^\top  \va^{\circ 2}$. 
    Note that $H_p(\tilde\vtheta_\star)^\top  \hat\bR_0 \tilde\vtheta_\star = O_\sP(1)$ by a simple order analysis of the norms. We also have 
    $\sqrt{N} \vbeta_\star^{\top} \bW_0^\top \va^{\circ 2} \to_P 0$,
    because $\vbeta_\star \sim \normal(0, \frac{1}{d}\bI_d)$ is independent of the norm bounded vector $\sqrt{N} \bW_0^\top \va^{\circ 2}$.

    \paragraph{Term 2.}  To analyze the second term, we  first study the matrix $\bK = (\bD^{-1} + \bV^\top \hat\bR_0 \bV)^{-1}$:
        \begin{align*}
            \bK^{-1}  =(\bD^{-1} + \bV^\top \hat\bR_0 \bV) &= 
            \begin{bmatrix}
                {\vbeta}_\star^{\top}\bW_0^\top \hat\bF_0^\top \hat \bR_0 \hat\bF_0 \bW_0{\vbeta}_\star - 
                \Vert\bW_0\vbeta_\star\Vert_2^2
                &{\vbeta}_\star^{\top} \bW_0^\top \hat\bF_0^\top \hat \bR_0 \tilde\vtheta_\star - \frac{1}{c_1}\\
                {\vbeta}_\star^{\top} \tilde\bX^\top \hat\bR_0 \hat\bF_0 \bW_0 \vbeta_\star - \frac{1}{c_1} & 
                {\vbeta}_\star^{\top}\tilde\bX^\top \hat \bR_0 \tilde\vtheta_\star
            \end{bmatrix}.
        \end{align*}
        By orderwise analysis, all elements in this matrix converge to  deterministic $O_\sP(1)$ values in probability. We write the second term in \eqref{eq:lemma_reference_to_two_terms} as follows:
             \begin{align*}
         \sqrt{N}H_p(\tilde\vtheta_\star)^\top\hat \bR_0 &\bV \bK\bV^\top \hat\bR_0\bF_0\va^{\circ 2}\\ &= [\bK]_{1,1}H_p(\tilde\vtheta_\star)^\top\hat \bR_0 (\hat\bF_0\bW_0\vbeta_\star)(\hat\bF_0\bW_0\vbeta_\star)^\top\hat\bR_0\bF_0(\sqrt{N}\va^{\circ 2})\\
         &+[\bK]_{1,2}H_p(\tilde\vtheta_\star)^\top\hat \bR_0 (\hat\bF_0\bW_0\vbeta_\star)\tilde\vtheta_\star^\top\hat\bR_0\bF_0(\sqrt{N}\va^{\circ 2})\\
         &+[\bK]_{2,1}H_p(\tilde\vtheta_\star)^\top\hat \bR_0 (\tilde\vtheta_\star)(\hat\bF_0\bW_0\vbeta_\star)^\top\hat\bR_0\bF_0(\sqrt{N}\va^{\circ 2})\\
         &+[\bK]_{2,2}H_p(\tilde\vtheta_\star)^\top\hat \bR_0 (\tilde\vtheta_\star)\tilde\vtheta_\star^\top\hat\bR_0\bF_0(\sqrt{N}\va^{\circ 2}).
    \end{align*}
    In the sum above, we will show that each term converges to zero.
    \begin{itemize}
        \item \underline{First term}: By orderwise analysis, we have
        $\Vert\hat \bR_0 (\hat\bF_0\bW_0\vbeta_\star)\Vert_{\rm op} = O_\sP(1/\sqrt{N})$.
                Further, $H_p(\tilde\vtheta_\star)$ is independent of it (only considering the randomness in $\tilde \bX$) with mean zero and $\Vert H_p(\tilde\vtheta_\star)\Vert_2 = O_\sP(\sqrt{N})$. This implies that 
        \begin{align}
            \label{eq:term1_part1}
            H_p(\tilde\vtheta_\star)^\top\hat \bR_0 (\hat\bF_0\bW_0\vbeta_\star) \to_P 0.
        \end{align}

        We can use a simple order argument to show that 
        $\sqrt{N}(\hat\bF_0\bW_0\vbeta_\star)^\top\hat\bR_0\bF_0\va^{\circ 2} = O_\sP(1)$. Thus, the first term converges to zero.

        \item \underline{Second term}: For this term, we use that $H_p(\tilde\vtheta_\star)^\top\hat \bR_0 (\hat\bF_0\bW_0\vbeta_\star) \to_P 0$. We can also use an orderwise analysis to prove that
        $
        \sqrt{N}(\tilde\vtheta_\star)^\top\hat\bR_0\bF_0\va^{\circ 2} = O_\sP(1).$
        This proves that the second term also converges to zero.

        \item \underline{Third term}: By a simple orderwise analysis, we have
        $
        \sqrt{N}(\hat\bF_0\bW_0\vbeta_\star)^\top\hat\bR_0\bF_0\va^{\circ 2} = O_\sP(1)$.
        To show that the third term converges to zero, it is enough to show that 
        $           H_p(\tilde\vtheta_\star)^\top\hat \bR_0 (\tilde\vtheta_\star) \to_P 0$,
        which is true for $p \neq 1$ by using Lemma \ref{lemma:poly-concentration} and the orthogonality property of Hermite polynomials (Lemma \ref{lem:twohermite}).
        
        \item \underline{Fourth term}: By a simple orderwise analysis, we have
$\sqrt{N}\tilde\vtheta_\star^\top\hat\bR_0\bF_0\va^{\circ 2} = O_\sP(1)$.
        Again,  to show that the fourth term converges to zero, it is enough to show that 
$           H_p(\tilde\vtheta_\star)^\top\hat \bR_0 (\tilde\vtheta_\star) \to_P 0$,
        which is true for $p \neq 1$ as argued above.
    \end{itemize}

Putting everything together, 
part (a) follows.
The proof for part (b) is identical and omitted.

\subsection{Proof of Lemma \ref{lemma:s12}}
\label{pflemma:s12}

We will study the cases where $s = 1$ and $s = 2$ separately.  For $s = 1$,  we can use Lemma \ref{lemma:turn_beta_to_beta_star} to show that $H_p(\tilde\vtheta_\star) \bar\bR_0 \tilde\vtheta = c_{\star,1} H_p(\tilde\vtheta_\star) \bar\bR_0 \tilde\vtheta_\star +o_\sP(1)$. 
Also, by Lemma \ref{lemma:general_orthogonality_21}, we have $H_p(\tilde\vtheta_\star) \bar\bR_0 (\tilde\vtheta_\star) = o(1)$ in probability if $p \neq 1$, which proves the lemma.

For the case $s = 2$, we define $\tilde \vbeta = \vbeta/\Vert\vbeta\Vert_2$ and write
\begin{align*}
    H_p(\tilde\vtheta_\star) \bar\bR_0 (\tilde\vtheta)^{\circ 2} &= \Vert\vbeta\Vert_2^2\;H_p(\tilde\vtheta_\star) \bar\bR_0 (\tilde\bX \tilde\vbeta)^{\circ 2}
    = \Vert\vbeta\Vert_2^2\;H_p(\tilde\vtheta_\star) \bar\bR_0 H_2(\tilde\bX \tilde\vbeta) + o_\sP(1).
\end{align*}
Now, we define $\vbeta_\perp = \frac{\vbeta_\star - \langle \vbeta_\star,\, \tilde\vbeta\rangle \tilde\vbeta}{\Vert\vbeta_\star - \langle \vbeta_\star,\, \tilde\vbeta\rangle \tilde\vbeta \Vert_2}$, and set
$\hat \bX = \tilde\bX - \tilde \bX \tilde\vbeta\tilde \vbeta^\top - \tilde \bX \vbeta_\perp \vbeta_\perp^\top$.
By construction, we have $\hat \bX \independent \tilde \bX \tilde\vbeta, \tilde\vtheta_\star$. Based on the Gaussian equivalence from Appendix \ref{sec:gec}, we
can again replace $\bF_0$ with
$\bF_0 = c_1 \tilde\bX \bW_0^\top + c_{>1} \bZ$,
    where $\bZ \in \R^{n \times d}$ is an independent random matrix with $\normal(0, 1)$ entries. 
    Again, we define $\hat \bF_0$ as in \eqref{hf}.
    Thus, $\hat\bF_0 = \bF_0 - c_1 \tilde\bX \tilde\vbeta (\bW_0 \tilde \vbeta)^\top - c_1 \tilde \bX \vbeta_\perp (\bW_0 \vbeta_\perp)^\top$. As a consequence, we also have
    $
        \bF_0 \bF_0^\top = \hat\bF_0\hat\bF_0^\top + \bV \bD \bV^\top,
    $
    where 
    $\bV = \begin{bmatrix}\tilde\bX \tilde\vbeta& \tilde \bX \vbeta_\perp & \hat\bF_0 \bW_0 \tilde\vbeta&\hat\bF_0\bW_0\vbeta_\perp\end{bmatrix} \in \R^{n \times 4}$ and
    \begin{align*}
        \bD = \begin{bmatrix}
            c_1^2 \langle \bW_0 \tilde\vbeta, \bW_0 \tilde\vbeta\rangle & c_1^2 \langle \bW_0 \tilde\vbeta, \bW_0 \vbeta_\perp\rangle & c_1 & 0\vspace{0.2cm} \\
            c_1^2 \langle \bW_0 \tilde\vbeta, \bW_0 \vbeta_\perp\rangle&c_1^2 \langle \bW_0 \vbeta_\perp, \bW_0 \vbeta_\perp\rangle& 0 & c_1\vspace{0.2cm} \\
            c_1 & 0 & 0 & 0 \vspace{0.2cm} \\
            0 & c_1 & 0 & 0
        \end{bmatrix}.
    \end{align*}
    Using the Woodbury formula, we find that \eqref{wb2} still holds.
    We can write 
    \begin{align}\label{bigwb}
        H_p(\tilde\vtheta_\star)^\top \bar\bR_0 H_2(\tilde\bX \tilde\vbeta)= 
        &H_p(\tilde\vtheta_\star)^\top \hat\bR_0 H_2(\tilde\bX \tilde\vbeta)\\
        &-H_p(\tilde\vtheta_\star)^\top \hat \bR_0 \bV (\bD^{-1} + \bV^\top \hat\bR_0 \bV)^{-1}\bV^\top \hat\bR_0H_2(\tilde\bX \tilde\vbeta).\nonumber
    \end{align}
The first term converges to zero for any $p\neq 2$,
analogously to the argument in Section \ref{section:l2-Hinverse} for the term (1,2).

To prove that the second term will also converge to zero, we first observe that the elements of $\bK = (\bD^{-1} + \bV^\top \hat\bR_0 \bV)^{-1}$ are all $O_\sP(1)$. The second term will involve quantities of the form
$[\bK]_{i,j} H_p(\tilde\vtheta_\star)^\top\hat\bR_0\vv_i\vv_j^\top\hat\bR_0 H_2(\tilde \bX \tilde\vbeta)$,    
where $\vv_i$, for $i \in \{1, 2, 3, 4\}$, is the $i$-th column of the matrix $\bV=\begin{bmatrix}\tilde\bX \tilde\vbeta& \tilde \bX \vbeta_\perp & \hat\bF_0 \bW_0 \tilde\vbeta&\hat\bF_0\bW_0\vbeta_\perp\end{bmatrix}$. We can argue that all these terms converge to zero, as follows:

\begin{itemize}
    \item The terms where $j = 1$ converge to zero because  $(\tilde\bX\tilde\vbeta)^\top\hat\bR_0 H_2(\tilde \bX \tilde\vbeta)$ 
    converges to zero
    analogously to the argument in Section \ref{section:l2-Hinverse} for the term (1,2).
    The same argument applies 
    to the terms where $j = 2$, via the convergence of $(\tilde \bX \vbeta_\perp)^\top \hat\bR_0 H_2(\tilde \bX \tilde\vbeta)$ to zero. 
    
    \item For $j=3,4$,
    since $H_2(\tilde\bX\tilde\vbeta)$ is independent of
$\hat\bR_0[\hat\bF_0\bW_0\tilde\vbeta\quad\,\,\,\hat\bF_0\bW_0\vbeta_\perp]$,
and has zero-mean i.i.d.~entries, it also follows 
that these entries converge to zero in probability.
\end{itemize}

Finally we study $H_2(\tilde\vtheta_\star)^\top \bar\bR_0\tilde\vtheta^{\circ 2}$, by analyzing the terms in \eqref{bigwb} for $p=2$.

For $H_2(\tilde\vtheta_\star)^\top \hat\bR_0 H_2(\tilde\bX \tilde\vbeta)$, 
since 
$H_2(\tilde\vtheta_\star), H_2(\tilde\bX \tilde\vbeta)$
are independent of $ \hat\bR_0$,
it follows from Lemma \ref{lemma:poly-concentration}, 
as in the analysis of term $(1,2)$
in Section \ref{pl2},
that 
$H_2(\tilde\vtheta_\star)^\top \hat\bR_0 H_2(\tilde\bX \tilde\vbeta) -  \E\hat\bR_0 \cdot \E H_2(\tilde\vtheta_\star)^\top H_2(\tilde\bX \tilde\vbeta) \to_P 0$.
Now notice that $\hat \bF_0$ is left-orthogonally invariant in distribution, 
and thus 
$\hat\bR_0 =_d 
\bO \hat\bR_0 \bO^\top$,
 where $\bO$
    is uniformly distributed over the Haar measure of $n$-dimensional orthogonal matrices, independently of all other randomness. 
Hence, 
$\E\hat\bR_0 = \E\tr\hat\bR_0 \bI_n/n$.
Moreover, from the Woodbury formula in \eqref{wb},
    \begin{align*}
        &|\tr\bar\bR_0  - \tr\hat \bR_0|
        \le |\tr \hat \bR_0 \bV (\bD^{-1} + \bV^\top \hat\bR_0 \bV)^{-1}\bV^\top \hat\bR_0|
        \le |\tr (\bD^{-1} + \bV^\top \hat\bR_0 \bV)^{-1}\bV^\top\bV |\cdot \|\hat\bR_0\|_{\op}^2.
    \end{align*}
    From our previous analysis and as the entries of $\bV^\top\bV$ are $O_\sP(n)$, it follows that the first term is $O_\sP(n)$; whereas $\|\hat\bR_0\|_{\op}^2 = O(1/n^2)$. Hence, $|\tr\bar\bR_0  - \tr\hat \bR_0|\to_P 0$, and thus by the bounded convergence theorem $|\E\tr\bar\bR_0  - \E\tr\hat \bR_0|\to_P 0$.
        Moreover, we have already argued 
    in the proof of Lemma \ref{lemma:l1_limits}
    that $\E\tr\bar\bR_0\to  \psi m_1/\phi$.
 
Further, by Lemmas \ref{lem:twohermite} and \ref{lemma:turn_beta_to_beta_star},
\begin{align*}
\E H_2(\tilde\vtheta_\star)^\top H_2(\tilde\bX \tilde\vbeta) &= n \cdot \E H_2 (\tilde \vx_1^\top \vbeta_\star) H_2 (\tilde \vx_1^\top\tilde\vbeta) \\
&= 2n\E(\vbeta_\star^\top \tilde\vbeta)^2
= 2n\E\frac{(\vbeta_\star^\top\vbeta)^2}{\|\vbeta\|^2}
= 2n\frac{c_{\star,1}^2}{\phi(c_\star^2 + \sigma_\ep^2) + c_{\star,1}^2} +o_\sP(1).    
\end{align*}
This shows that 
$$H_2(\tilde\vtheta_\star)^\top \hat\bR_0 H_2(\tilde\bX \tilde\vbeta)\to_P 
2\frac{\psi m_1}{\phi}
\frac{c_{\star,1}^2}{\phi(c_\star^2 + \sigma_\ep^2) + c_{\star,1}^2}.$$

Next, 
we consider
$H_2(\tilde\vtheta_\star)^\top \hat \bR_0 \bV$
with 
$\bV = \begin{bmatrix}\tilde\bX \tilde\vbeta& \tilde \bX \vbeta_\perp & \hat\bF_0 \bW_0 \tilde\vbeta&\hat\bF_0\bW_0\vbeta_\perp\end{bmatrix}$.
For the first two entries of the vector 
$H_2(\tilde\vtheta_\star)^\top \hat \bR_0 \bV$,
an analysis very similar to the one above 
for 
$H_2(\tilde\vtheta_\star)^\top \hat\bR_0 H_2(\tilde\bX \tilde\vbeta)$
shows that they converge to zero in probability.
For the last two entries,
since $H_2(\tilde\vtheta_\star)$ is independent of
$\hat\bR_0[\hat\bF_0\bW_0\tilde\vbeta\quad\,\,\,\hat\bF_0\bW_0\vbeta_\perp]$,
and has zero-mean i.i.d.~entries, it also follows 
that these entries converge to zero in probability.
Moreover, 
     the limiting entries of 
     $(\bD^{-1} + \bV^\top \hat\bR_0 \bV)^{-1}$
     have been shown to be bounded in our above analysis. 
Hence, the second term
converges to zero in probability.

Now, note that $\tilde\vbeta = \vbeta/\Vert\vbeta\Vert_2$. From Lemma \ref{lemma:turn_beta_to_beta_star},
$\|\vbeta\|^2  \to_P\phi(c_\star^2 + \sigma_\ep^2) + c_{\star,1}^2$.
    Hence, 
$$H_2(\tilde\vtheta_\star)^\top \hat\bR_0 H_2(\tilde\bX \vbeta) = 
2\frac{\psi m_1}{\phi}
\frac{c_{\star,1}^2\Vert\vbeta\Vert_2^2}{\phi(c_\star^2 + \sigma_\ep^2) + c_{\star,1}^2} + o_\sP(1) \to_P \frac{2c_{\star,1}^2\psi m_1}{\phi},$$
which concludes the proof.

\subsection{Proof of Lemma \ref{t2rt2}}
\label{pft2rt2}
As in the proof of Lemma \ref{lemma:delta_2_to_zero},
 we define 
 $\hat \bX = \tilde \bX - \tilde\vtheta \vbeta^{\top}$. 
By construction, we have $\hat\bX \independent \tilde\vtheta$. 
As in the proof of Lemma \ref{lemma:general_orthogonality_21}, 
based on the Gaussian equivalence from Appendix \ref{sec:gec}, we can replace $\bF_0$ with $c_1 \bX \bW_0^\top + c_{>1} \bZ$ in our computations without changing the limiting result, where $\bZ \in \R^{n \times d}$ is an independent random matrix with $\normal(0, 1)$ entries. 
Thus, from now on, we denote $\bF_0 = c_1 \bX \bW_0^\top + c_{>1} \bZ$. 
We define $\hat \bF_0$ as in \eqref{hf}; thus, $\hat\bF_0 = \bF_0 - c_1 \tilde\vtheta (\bW_0 \vbeta)^\top$. As a consequence, we can write $\bF_0 \bF_0^\top = \hat\bF_0\hat\bF_0^\top + \bV \bD \bV^\top$,
    where $\bV = \begin{bmatrix}\hat \bF_0 \bW_0 \vbeta & \tilde\vtheta\end{bmatrix} \in \R^{n \times 2}$ and
    \begin{align*}
        \bD = \begin{bmatrix}
            0&c_1\\c_1&c_1^2 \Vert\bW_0\vbeta\Vert_2^2
        \end{bmatrix}.
    \end{align*}
    Using the Woodbury formula, we find
    that \eqref{wb2} still holds.
    Now, we can write
    \begin{align}
    \label{eq:lemma_reference_to_two_terms2}
        \tilde\vtheta^{\circ2\top} \bar\bR_0  \tilde\vtheta^{\circ2} &= \tilde\vtheta^{\circ2\top}  \hat\bR_0  \tilde\vtheta^{\circ2}
        - \tilde\vtheta^{\circ2\top} \hat \bR_0 \bV (\bD^{-1} + \bV^\top \hat\bR_0 \bV)^{-1}\bV^\top \hat\bR_0 \tilde\vtheta^{\circ2}.
    \end{align}
    We can analyze each term in the above sum separately.

    By Lemma \ref{lemma:poly-concentration}, $\tilde\vtheta^{\circ2\top}  \hat\bR_0  \tilde\vtheta^{\circ2} - \E \tilde\vtheta^{\circ2\top}  \hat\bR_0  \tilde\vtheta^{\circ2} \to_P 0$.
    Further,
    conditional on $\vbeta$,
    $\E \tilde\vtheta^{\circ2\top}  \hat\bR_0  \tilde\vtheta^{\circ2} = 3\|\vbeta\|_2^4 \E \tr \hat\bR_0$;
    and as in the proof of Lemma \ref{lemma:s12},
    $\E\tr\hat\bR_0-\E\tr\bar\bR_0\to  0$.
    Moreover, we have already argued 
    in the proof of Lemma \ref{lemma:l1_limits}
    that $\E\tr\bar\bR_0\to  \psi m_1/\phi$.
    In addition, from Lemma \ref{lemma:turn_beta_to_beta_star},
$\|\vbeta\|^2  \to_P\phi(c_\star^2 + \sigma_\ep^2) + c_{\star,1}^2$.
    Hence,
    \begin{align*}
        \tilde\vtheta^{\circ2\top}  \hat\bR_0  \tilde\vtheta^{\circ2}\to_P 
        3\psi m_1[\phi(c_\star^2 + \sigma_\ep^2) + c_{\star,1}^2]^2/\phi.
    \end{align*}
     To analyze the second term in \eqref{eq:lemma_reference_to_two_terms2}, we first study 
      $\tilde\vtheta^{\circ2\top} \hat \bR_0 \hat \bF_0 \bW_0\vbeta$.
    By an argument similar to the ones above, we can show that it concentrates around
    $\mathbf{1}_n^\top \hat \bR_0 \hat \bF_0 \bW_0\vbeta =
    \mathbf{1}_n^\top  \hat \bF_0\hat \bR_0 \bW_0\vbeta$.
    Since $\hat \bF_0$ is left-orthogonally invariant, 
    $\mathbf{1}_n^\top  \hat \bF_0\hat \bR_0 \bW_0\vbeta = _d \mathbf{1}_n^\top \bO \hat \bF_0\hat \bR_0 \bW_0\vbeta$, where $\bO$
    is uniformly distributed over the Haar measure of $n$-dimensional orthogonal matrices, independently of all other randomness. 
    Then, it follows 
    as in the analysis of  term (1,2) from Section \ref{pl2} that $ \mathbf{1}_n^\top \bO \hat \bF_0\hat \bR_0 \bW_0\vbeta\to_P0$; 
    and hence  $\tilde\vtheta^{\circ2\top} \hat \bR_0 \hat \bF_0 \bW_0\vbeta\to_P0$.

     Moreover, 
     the limiting entries of 
     $(\bD^{-1} + \bV^\top \hat\bR_0 \bV)^{-1}$
     can be shown to be bounded by a simple orderwise analysis. Hence,  the second term in \eqref{eq:lemma_reference_to_two_terms2} is $o_\sP(1)$.

\subsection{Proof of Lemma \ref{lemma:hermite_limit_general}}
\label{pflemma:hermite_limit_general}

Denoting $\tilde \vbeta = \vbeta/\Vert\vbeta\Vert_2$, we have
\begin{align*}
(\tilde \bX  \vbeta)^{\circ i \top}&\bar\bR_0(\tilde \bX \vbeta)^{\circ j} =  \Vert\vbeta\Vert_2^{i+j}(\tilde \bX \tilde \vbeta)^{\circ i \top}\bar\bR_0(\tilde \bX \tilde \vbeta)^{\circ j}
=\Vert\vbeta\Vert_2^{i+j}\sum_{k_1 = 0}^{i}\sum_{k_2 = 0}^{j}\xi_{i,k_1} \xi_{j,k_2}H_{k_1}(\tilde\bX\tilde\vbeta)^\top\bar\bR_0 H_{k_2}(\tilde\bX\tilde\vbeta)\\
&=\Vert\vbeta\Vert_2^{i+j}\sum_{k = 0}^{\min(i,j)}\xi_{j,k}\xi_{i,k}H_{k}(\tilde\bX\tilde\vbeta)^\top\bar\bR_0 H_{k}(\tilde\bX\tilde\vbeta) +o_\sP(1)\\
&=\Vert\vbeta\Vert_2^{i+j} \left[ \xi_{i,1}\xi_{j,1} (\tilde \bX\tilde\vbeta)^\top\bar\bR_0 (\tilde \bX\tilde\vbeta)+\sum_{k = 0,\; k \neq 1}^{\min(i,j)}\xi_{i,k}\xi_{j,k} H_{k}(\tilde \bX\tilde\vbeta)^\top\bar\bR_0 H_{k}(\tilde \bX\tilde\vbeta)\right] +o_\sP(1).
\end{align*}
The third line follows from  Lemma~\ref{lemma:generalized_hermite_pq}.  Now, we claim that for any $p \in \{0,2, 3, \dots\}$, we have $H_p(\tilde \bX \vbeta/\Vert\vbeta\Vert_2)^{\top}\bar\bR_0 H_p(\tilde \bX \vbeta/\Vert\vbeta\Vert_2)
    \to_P p!\;\psi m_1/\phi$.
Using this claim, the facts that $\Vert\vbeta\Vert_2^2 \to_P c_{\star,1}^2 + \phi(c_\star^2 + \sigma_\ep^2)$, and $\tr(\tilde \bX^\top (\bF_0\bF_0^\top + \lambda n \bI_n)^{-1} \tilde \bX)/d \to_P  \psi m_2/\phi$, we can conclude
\begin{align*}
    (\tilde \bX  \vbeta)^{\circ i \top}\bar\bR_0(\tilde \bX \vbeta)^{\circ j} \to_P \left(c_{\star,1}^2 + \phi(c_\star^2 + \sigma_\ep^2)\right)^{(i+j)/2} \left[\xi_{i,1}\xi_{j,1} \frac{\psi m_2}{\phi}+\frac{\psi m_1}{\phi}\sum_{k = 0, \; k\neq 1}^{\min(i,j)} k!\;\xi_{i,k}\xi_{j,k}\right].
\end{align*}

Now, it remains to prove the claim that for any $p \in \{0,2, 3, \dots\}$, we have 
\begin{align*}
    H_p(\tilde \bX \vbeta/\Vert\vbeta\Vert_2)^{\top}\bar\bR_0 H_p(\tilde \bX \vbeta/\Vert\vbeta\Vert_2)
    \to_P p!\;\psi m_1/\phi.
\end{align*}

As in the proof of Lemma \ref{t2rt2},
 we define 
 $\hat \bX = \tilde \bX - \tilde\bX \tilde \vbeta \tilde\vbeta^{\top}$. 
By construction, we have $\hat\bX \independent \tilde\bX\tilde\vbeta$. 
As in the proof of Lemma \ref{lemma:general_orthogonality_21}, 
based on the Gaussian equivalence from Appendix \ref{sec:gec}, we can replace $\bF_0$ with $c_1 \tilde\bX \bW_0^\top + c_{>1} \bZ$ in our computations without changing the limiting result, where $\bZ \in \R^{n \times d}$ is an independent random matrix with $\normal(0, 1)$ entries. 
Thus, from now on, we denote $\bF_0 = c_1 \tilde\bX \bW_0^\top + c_{>1} \bZ$. 
We define $\hat \bF_0$ as in \eqref{hf}; thus, $\hat\bF_0 = \bF_0 - c_1 \tilde\bX\tilde\vbeta (\bW_0 \tilde\vbeta)^\top$. As a consequence, we can write $\bF_0 \bF_0^\top = \hat\bF_0\hat\bF_0^\top + \bV \bD \bV^\top$,
    where $\bV = \begin{bmatrix}\hat \bF_0 \bW_0 \tilde\vbeta & \tilde\bX\tilde\vbeta\end{bmatrix} \in \R^{n \times 2}$ and
    \begin{align*}
        \bD = \begin{bmatrix}
            0&c_1\\c_1&c_1^2 \Vert\bW_0\tilde\vbeta\Vert_2^2
        \end{bmatrix}.
    \end{align*}
    Using the Woodbury formula, we find
    that \eqref{wb2} still holds.
    Now, we can write
    \begin{align}
    \label{eq:woodbury-general-ell}
        H_p(\tilde\bX\tilde\vbeta&)^{\top} \bar\bR_0  H_p(\tilde\bX\tilde\vbeta)
        = H_p{(\tilde\bX\tilde\vbeta)}^{\top}  \hat\bR_0  H_p(\tilde\bX\tilde\vbeta)\nonumber\\
        &\hspace{2cm}-H_p{(\tilde\bX\tilde\vbeta)}^{\top}  \hat \bR_0 \bV (\bD^{-1} + \bV^\top \hat\bR_0 \bV)^{-1}\bV^\top \hat\bR_0 H_p{(\tilde\bX\tilde\vbeta)}.
    \end{align}
    We can analyze each term in the above sum separately.

    By Lemma \ref{lemma:poly-concentration}, $H_p{(\tilde\bX\tilde\vbeta)}^{\top}  \hat\bR_0  H_p(\tilde\bX\tilde\vbeta) - \E H_p{(\tilde\bX\tilde\vbeta)}^{\top}  \hat\bR_0  H_p(\tilde\bX\tilde\vbeta)\to_P 0$.
    Further,
    conditional on $\tilde\vbeta$, and using \ref{lem:twohermite}, we have
    \begin{align*}
    \E H_p{(\tilde\bX\tilde\vbeta)}^{\top}  \hat\bR_0  H_p(\tilde\bX\tilde\vbeta) &= 
    \E \tr\left[ \hat\bR_0  H_p(\tilde\bX\tilde\vbeta)H_p{(\tilde\bX\tilde\vbeta)}^{\top} \right] 
    = p!\;\E\tr\left[\hat\bR_0\right],
    \end{align*}
        and as in the proof of Lemma \ref{lemma:s12},
    $\E\tr\hat\bR_0-\E\tr\bar\bR_0\to  0$.
    Moreover, we have already argued 
    in the proof of Lemma \ref{lemma:l1_limits}
    that $\E\tr\bar\bR_0\to  \psi m_1/\phi$. Hence,
 $        H_p{(\tilde\bX\tilde\vbeta)}^{\top}  \hat\bR_0  H_p(\tilde\bX\tilde\vbeta) \to_P 
        p!\;\psi m_1/\phi.
  $
  To analyze the second term in \eqref{eq:woodbury-general-ell}, we first study 
      $H_p(\tilde \bX \tilde \vbeta)^{\top} \hat \bR_0 \hat \bF_0 \bW_0\vbeta$.
    Conditional on $\tilde\vbeta$, 
    $H_p(\tilde \bX\tilde \vbeta)$ is a vector
    with independent mean-zero, bounded variance entries,
    independent of the vector $\hat \bR_0 \hat \bF_0 \bW_0\tilde\vbeta$ that has norm $O(1/\sqrt{n})$. 
    Hence, we conclude that this term goes to zero. 
    Next, note that $H_p(\tilde \bX\tilde\vbeta)^\top \hat \bR_0  (\tilde \bX\tilde\vbeta) \to_P 0$ using Lemma \ref{lemma:poly-concentration} and Lemma \ref{lem:twohermite}. Moreover, 
     the limiting entries of 
     $(\bD^{-1} + \bV^\top \hat\bR_0 \bV)^{-1}$
     can be shown to be bounded by a simple orderwise analysis. Hence,  the second term in \eqref{eq:woodbury-general-ell} is $o_\sP(1)$. This concludes the proof.

\subsection{Proof of Lemma \ref{lemma:mixed_limit_general}}
\label{pflemma:mixed_limit_general}
We define $\vbeta_\perp = \frac{\vbeta_\star - \langle \vbeta_\star,\, \tilde\vbeta\rangle \tilde\vbeta}{\Vert\vbeta_\star - \langle \vbeta_\star,\, \tilde\vbeta\rangle \tilde\vbeta \Vert_2}$, and set
$    \hat \bX = \tilde\bX - \tilde \bX \tilde\vbeta\tilde \vbeta^\top - \tilde \bX \vbeta_\perp \vbeta_\perp^\top$.
By construction, we have $\hat \bX \independent \tilde \bX \tilde\vbeta, \tilde\vtheta_\star$. Based on the Gaussian equivalence from Appendix \ref{sec:gec}, we
can again replace $\bF_0$ with
$\bF_0 = c_1 \tilde\bX \bW_0^\top + c_{>1} \bZ$,
    where $\bZ \in \R^{n \times d}$ is an independent random matrix with $\normal(0, 1)$ entries. 
    Again, we define $\hat \bF_0$ as in \eqref{hf}.
    Thus, $\hat\bF_0 = \bF_0 - c_1 \tilde\bX \tilde\vbeta (\bW_0 \tilde \vbeta)^\top - c_1 \tilde \bX \vbeta_\perp (\bW_0 \vbeta_\perp)^\top$. As a consequence, we also have
    $
        \bF_0 \bF_0^\top = \hat\bF_0\hat\bF_0^\top + \bV \bD \bV^\top,
    $
    where 
    $\bV = \begin{bmatrix}\tilde\bX \tilde\vbeta& \tilde \bX \vbeta_\perp & \hat\bF_0 \bW_0 \tilde\vbeta&\hat\bF_0\bW_0\vbeta_\perp\end{bmatrix} \in \R^{n \times 4}$ and
    \begin{align*}
        \bD = \begin{bmatrix}
            c_1^2 \langle \bW_0 \tilde\vbeta, \bW_0 \tilde\vbeta\rangle & c_1^2 \langle \bW_0 \tilde\vbeta, \bW_0 \vbeta_\perp\rangle & c_1 & 0\vspace{0.2cm} \\
            c_1^2 \langle \bW_0 \tilde\vbeta, \bW_0 \vbeta_\perp\rangle&c_1^2 \langle \bW_0 \vbeta_\perp, \bW_0 \vbeta_\perp\rangle& 0 & c_1\vspace{0.2cm} \\
            c_1 & 0 & 0 & 0 \vspace{0.2cm} \\
            0 & c_1 & 0 & 0
        \end{bmatrix}.
    \end{align*}
    Using the Woodbury formula, we find that \eqref{wb2} still holds.
    We can write 
    \begin{align}
        H_p(\tilde\vtheta_\star)^\top \bar\bR_0 &H_q(\tilde\bX \tilde\vbeta)\nonumber\\
        &= H_p(\tilde\vtheta_\star)^\top \hat\bR_0 H_q(\tilde\bX \tilde\vbeta)
        -H_p(\tilde\vtheta_\star)^\top \hat \bR_0 \bV (\bD^{-1} + \bV^\top \hat\bR_0 \bV)^{-1}\bV^\top \hat\bR_0H_q(\tilde\bX \tilde\vbeta).
    \end{align}

\paragraph{$p \neq q$ case:}
The first term converges to zero for any $p\neq q$,
analogously to the argument in Section \ref{section:l2-Hinverse} for the terms (1,2) and (2,4).
In particular, for $p=0$, we can use orthogonal invariance as in the analysis of the term (2,4). To prove that the second term will also converge to zero when $p\neq q$, we first observe that the elements of $\bK = (\bD^{-1} + \bV^\top \hat\bR_0 \bV)^{-1}$ are all $O(1)$. The second term will involve quantities of the form
$
[\bK]_{i,j} H_p(\tilde\vtheta_\star)^\top\hat\bR_0\vv_i\vv_j^\top\hat\bR_0 H_q(\tilde \bX \tilde\vbeta)$,    
where $\vv_i$, for $i \in \{1, 2, 3, 4\}$, is the $i$-th column of the matrix $\bV=\begin{bmatrix}\tilde\bX \tilde\vbeta& \tilde \bX \vbeta_\perp & \hat\bF_0 \bW_0 \tilde\vbeta&\hat\bF_0\bW_0\vbeta_\perp\end{bmatrix}$. We can argue that all these terms converge to zero, as follows. Because $p \neq q$, without loss of generality, assume that $q \neq 1$.
\begin{itemize}
    \item The terms where $j = 1$ converge to zero because  $(\tilde \bX \tilde\vbeta)^\top\hat\bR_0 H_q(\tilde \bX \tilde\vbeta)$ 
    converges to zero using the concentration argument from Lemma \ref{lemma:poly-concentration} and the orthogonality of Hermite polynomials from Lemma \ref{lem:twohermite}.
    The same argument applies 
    to the terms where $j = 2$, via the convergence of $(\tilde \bX \vbeta_\perp)^\top \hat\bR_0 H_q(\tilde \bX \tilde\vbeta)$ to zero. 
    
    \item For $j=3,4$,
    and for $q>0$,
    since $H_q(\tilde\bX\tilde\vbeta)$ is independent of
$\hat\bR_0[\hat\bF_0\bW_0\tilde\vbeta\quad\,\,\,\hat\bF_0\bW_0\vbeta_\perp]$,
and has zero-mean i.i.d.~entries, it also follows 
that these entries converge to zero in probability.
For $q=0$, we can again use orthogonal invariance as in the analysis of the term (2,4).

\end{itemize}
\paragraph{The case when $p = q \neq 1$:}

Finally we study $H_p(\tilde\vtheta_\star)^\top \bar\bR_0H_p(\tilde\bX\tilde\vbeta)$, by analyzing the terms in \eqref{bigwb}.

For $H_p(\tilde\vtheta_\star)^\top \hat\bR_0 H_p(\tilde\bX \tilde\vbeta)$, 
since 
$H_p(\tilde\vtheta_\star), H_p(\tilde\bX \tilde\vbeta)$
are independent of $ \hat\bR_0$,
it follows from Lemma \ref{lemma:poly-concentration}, 
as in the analysis of term $(1,2)$
in the Section \ref{pl2},
that 
$$H_p(\tilde\vtheta_\star)^\top \hat\bR_0 H_p(\tilde\bX \tilde\vbeta) -  \E\hat\bR_0 \cdot \E H_p(\tilde\vtheta_\star)^\top H_p(\tilde\bX \tilde\vbeta) \to_P 0.$$
Now notice that $\hat \bF_0$ is left-orthogonally invariant in distribution, 
and thus 
$\hat\bR_0 =_d 
\bO \hat\bR_0 \bO^\top$,
 where $\bO$
    is uniformly distributed over the Haar measure of $n$-dimensional orthogonal matrices, independently of all other randomness. 
Hence, 
$\E\hat\bR_0 = \E\tr\hat\bR_0 \bI_n/n$.
Also, similar to the proof of Lemma \ref{lemma:s12}, we have $|\tr \bar\bR_0 - \tr \hat \bR_0| = o_\sP(1)$.
        Moreover, we have already argued 
    in the proof of Lemma \ref{lemma:l1_limits}
    that $\E\tr\bar\bR_0\to  \psi m_1/\phi$. Further, by Lemmas \ref{lem:twohermite} and \ref{lemma:turn_beta_to_beta_star},
$$H_p(\tilde\vtheta_\star)^\top \hat\bR_0 H_p(\tilde\bX \tilde\vbeta)\to_P 
p!\;\frac{\psi m_1}{\phi}
\left(\frac{c_{\star,1}}{\sqrt{\phi(c_\star^2 + \sigma_\ep^2) + c_{\star,1}^2}}\right)^{p}.$$

Next, 
we consider
$H_2(\tilde\vtheta_\star)^\top \hat \bR_0 \bV$
with 
$\bV = \begin{bmatrix}\tilde\bX \tilde\vbeta& \tilde \bX \vbeta_\perp & \hat\bF_0 \bW_0 \tilde\vbeta&\hat\bF_0\bW_0\vbeta_\perp\end{bmatrix}$.
For the first two entries of the vector 
$H_p(\tilde\vtheta_\star)^\top \hat \bR_0 \bV$,
an analysis very similar to the one above 
for 
$H_p(\tilde\vtheta_\star)^\top \hat\bR_0 H_p(\tilde\bX \tilde\vbeta)$
shows that they converge to zero in probability.
For the last two entries,
since $H_p(\tilde\vtheta_\star)$ is independent of
$\hat\bR_0[\hat\bF_0\bW_0\tilde\vbeta\quad\,\,\,\hat\bF_0\bW_0\vbeta_\perp]$,
and has zero-mean i.i.d.~entries, it also follows 
that these entries converge to zero in probability.
Moreover, 
     the limiting entries of 
     $(\bD^{-1} + \bV^\top \hat\bR_0 \bV)^{-1}$
     have been shown to be bounded in our above analysis. 
Hence, the second term
converges to zero in probability.
\paragraph{The case when $p = q =1$:}
In this case, we have
\begin{align*}
    (\tilde\bX\vbeta_\star)^\top \bar\bR_0 (\tilde\bX \tilde\vbeta) &= \frac{(\tilde\bX\vbeta_\star)^\top \bar\bR_0 (\tilde\bX \vbeta)}{\Vert\vbeta\Vert_2} = \frac{c_{\star,1}\frac{\psi m_2}{\phi}}{\sqrt{\phi(c_\star^2 + \sigma_\ep^2) + c_{\star,1}^2}} + o_\sP(1),
\end{align*}
using Lemma \ref{lemma:turn_beta_to_beta_star} and by arguments similar to the ones in the proof of Lemma \ref{lemma:l1_limits}.

Putting everything together concludes the proof.

\end{document}